\newtheorem{assumption}{Assumption}
\newtheorem{theorem}{Theorem}
\newtheorem{lemma}[theorem]{Lemma}
\newtheorem{proposition}[theorem]{Proposition}
\newtheorem{definition}[theorem]{Definition}
\DeclareRobustCommand{\stirling}{\genfrac\{\}{0pt}{}}
\newcommand\bdot\bullet
\DeclareMathOperator{\argmin}{argmin}
\renewcommand{\leq}{\leqslant}
\renewcommand{\geq}{\geqslant}
\let\epsilon=\varepsilon
\numberwithin{equation}{section}
\newcommand\MYcurrentlabel{xxx}
\newcommand{\MYstore}[2]{%
  \global\expandafter \def \csname MYMEMORY #1 \endcsname{#2}%
  %{#2}
}
\newcommand{\MYload}[1]{%
  \csname MYMEMORY #1 \endcsname%
}
\newcommand{\MYnewlabel}[1]{%
  \renewcommand\MYcurrentlabel{#1}%
  \MYoldlabel{#1}%
}
\newcommand{\MYdummylabel}[1]{}
\newcommand{\torestate}[1]{%
  % overwrite label command
  \let\MYoldlabel\label%
  \let\label\MYnewlabel%
  #1%
  \MYstore{\MYcurrentlabel}{#1}%
  % restore old label command
  \let\label\MYoldlabel%
}
\newcommand{\restatetheorem}[1]{%
  % overwrite label command with dummy
  \let\MYoldlabel\label
  \let\label\MYdummylabel
  \begin{theorem*}[Restatement of \cref{#1}]
    \MYload{#1}
  \end{theorem*}
  \let\label\MYoldlabel
}
\newcommand{\restatelemma}[1]{%
  % overwrite label command with dummy
  \let\MYoldlabel\label
  \let\label\MYdummylabel
  \begin{lemma*}[Restatement of \cref{#1}]
    \MYload{#1}
  \end{lemma*}
  \let\label\MYoldlabel
}
\newcommand{\restateprop}[1]{%
  % overwrite label command with dummy
  \let\MYoldlabel\label
  \let\label\MYdummylabel
  \begin{proposition*}[Restatement of \cref{#1}]
    \MYload{#1}
  \end{proposition*}
  \let\label\MYoldlabel
}
\newcommand{\restatefact}[1]{%
  % overwrite label command with dummy
  \let\MYoldlabel\label
  \let\label\MYdummylabel
  \begin{fact*}[Restatement of \cref{#1}]
    \MYload{#1}
  \end{fact*}
  \let\label\MYoldlabel
}
\newcommand{\restate}[1]{%
  % overwrite label command with dummy
  \let\MYoldlabel\label
  \let\label\MYdummylabel
  \MYload{#1}
  \let\label\MYoldlabel
}
\newcommand{\Snote}[1]{}
\newcommand{\DDnote}[1]{}
\newcommand{\Rnote}[1]{}
\newcommand{\fy}[1]{}
\newcommand{\Gnote}[1]{}
\newcommand{\as}[1]{}
\newcommand{\AWnote}[1]{}
\title{A Classical View on Benign Overfitting: The Role of Sample Size}
\author[1]{Junhyung Park}
\author[2]{Shiva Prasad Kasiviswanathan}
\author[2]{Patrick Bl\"obaum}
\affil[1]{\small ETH Z\"urich}
\affil[2]{\small Amazon}
\date{}
\begin{document}
\maketitle
\begin{abstract}

Benign overfitting is a phenomenon in machine learning where a model perfectly fits (interpolates) the training data, including noisy examples, yet still generalizes well to unseen data. Understanding this phenomenon has attracted considerable attention in recent years. In this work, we introduce a conceptual shift, by focusing on almost benign overfitting, where models simultaneously achieve both arbitrarily small training and test errors. This behavior is characteristic of neural networks, which often achieve low (but non-zero) training error while still generalizing well. We hypothesize that this almost benign overfitting can emerge even in classical regimes, by analyzing how the interaction between sample size and model complexity enables larger models to achieve both good training fit but still approach  Bayes-optimal generalization. We substantiate this hypothesis with theoretical evidence from two case studies: (i) kernel ridge regression, and (ii) least-squares regression using a two-layer fully connected ReLU neural network trained via gradient flow. In both cases, we overcome the strong assumptions often required in prior work on benign overfitting.

Our results on neural networks also provide the first generalization result in this setting that does not rely on any assumptions about the underlying regression function or noise, beyond boundedness. Our analysis introduces a novel proof technique based on decomposing the excess risk into estimation and approximation errors, interpreting gradient flow as an implicit regularizer, that helps avoid uniform convergence traps.  This analysis idea could be of independent interest.

\end{abstract}
\clearpage
\newpage

{\hypersetup{linkcolor=black}
\parskip=0em
\renewcommand{\contentsname}{Table of Contents}
\tableofcontents
}

\clearpage
\newpage
%\vspace*{-3ex}
\section{Introduction}\label{sec:introduction}
% \shiva{Introduce what we mean by achieving Benign overfitting: Is there a setting of parameters, sample size, etc. where both training and test error is less than $\epsilon$. }
Traditional statistical learning theory posits that overfitting impairs generalization, advocating for models with capacity balanced between under- and overfitting, as illustrated by the U-shaped excess risk curve \citep{gyorfi2006distribution,hastie2009elements}. However, recent observations—particularly in overparameterized neural networks that interpolate noisy data yet generalize well—have challenged this view, giving rise to the \say{benign overfitting} phenomenon and spurring significant theoretical interest. A related trend is the \textit{double descent} effect, where the excess risk decreases again as model complexity increases beyond the interpolation threshold, see e.g.,~\citet{belkin2019reconciling}.

In this paper, we investigate whether models can simultaneously achieve vanishing empirical risk (i.e., overfit to the noisy training data) while also attaining vanishing excess risk (i.e., generalize well). 
Departing from prior works that focus on exact interpolation, we consider models that nearly interpolate—training error is arbitrarily small but non-zero.
% Our work departs  from prior studies on benign overfitting, which have primarily focused on {\em interpolating} models that fit the noisy training data exactly (achieve zero training error). In our case, we focus on models that are arbitrarily close to interpolation (training error is arbitrarly small, but non-zero). 
This setting better reflects practical scenarios, where neural network training typically results in small, but non-zero, training error. 
Throughout this paper, we adopt this {\em broader} interpretation of the term benign overfitting to refer to scenarios where both the empirical risk and excess risk are arbitrarily small (see Definition~\ref{def:benign}), rather than requiring exact interpolation.

We operate in the \say{classical regime} in the risk vs.\ model complexity plot, and provide theoretical evidence that benign overfitting can, in fact, occur even in the classical regime, represented by the U-shaped curve. This serves as a counterpoint to the predominant view in the literature that benign overfitting is a phenomenon that occurs outside the classical regime.
% In this paper, we hypothesize, and provide theoretical evidence, that benign overfitting can, in fact, occur even in the classical regime.  
% It is generally believed that benign overfitting is a phenomenon that occurs outside the \say{classical regime}, represented by the U-shaped curve. In our case,
% we don’t interpolate exactly, but we are arbitrarily close to interpolation.
% In this paper, we hypothesize, and provide theoretical evidence, that benign overfitting can, in fact, occur even in the classical regime. T
The key insight is that the risk versus model capacity plots are, to our knowledge, almost always plotted \textit{for fixed sample size}\footnote{Some exceptions exist, for example,~\citet[Figures 11 \& 12]{nakkiran2021deep}.}, whether it is the classical U-shaped curve, or the double (or indeed multiple) descent curves proposed in recent years, or the multidimensional curves of \citep{curth2023u}. This omission is somewhat surprising, as the
sample size is a crucial element in assessing the ability of a model to fit the training data and to generalize to unseen data.  By carefully analyzing the relationship between sample size, model complexity, and the nature of their effect on the empirical and excess risks, we prove that, with some commonly used ML models, benign overfitting can occur in what is considered the classical regime. 
% This addresses a key limitation of prior work in benign overfitting, which often 
This allows us to avoid the assumptions commonly made in prior works on benign overfitting—such as high input dimensionality, specific structural properties of the regression function, or prescribed eigenvalue decay patterns of the feature covariance matrix, see e.g. the survey by~\citet{bartlett2021deep}.

% The specific cases that we study are (i) kernel ridge regression (KRR), and (ii) regression with two-layer fully connected ReLU neural network trained by gradient flow. In both cases, we prove that, if the sample size and model complexity are increased appropriately, the empirical and excess risks can simultaneously be made arbitrarily small, thereby ensuring benign overfitting, without relying on heavy assumptions that existing works on benign overfitting require, such as high dimensionality, the structure of the regression function or certain eigenvalue structures of the feature covariance matrix \citep{bartlett2021deep}. All our results are non-asymptotic, high-probability bounds. Another interesting feature of our results is that they hold even with low input dimensions.

\paragraph{Our Contributions.} We start with an in-depth investigation into the risk versus model capacity plots. Unlike previous works, we explicitly add sample size into the picture, and study the nature of the joint effect of the model complexity and sample size on the risks. We hypothesize that benign overfitting can occur in the classical regime, i.e., the trough of the U-shaped curve. We provide evidence supporting this hypothesis by theoretically establishing benign overfitting in two foundational cases: i) kernel ridge regression (KRR), and (ii) regression with two-layer fully connected ReLU neural network trained by gradient flow. All of our results are non-asymptotic and hold with high probability. Notably, they also hold on low-dimensional inputs.

% The key takeaway is that, by appropriately scaling the sample size and model complexity, both the empirical and excess risks can be made arbitrarily small—thereby achieving benign overfitting—without relying on the strong assumptions typically required in prior work, such as high input dimensionality, specific structural properties of the regression function, or particular eigenvalue decay of the feature covariance matrix \citep{bartlett2021deep}.

As an initial illustration, we theoretically validate this hypothesis in the case of kernel KRR, in which the model complexity is given by the reproducing kernel Hilbert space (RKHS) norm, which in turn is controlled by a single regularization parameter. Our proof is based on {\em integral operator techniques}~\citep{caponnetto2007optimal,park2020regularised}, and does not rely on uniform convergence. Also, unlike previous results on benign overfitting with KRR (e.g.,~\citet{liang2020just, barzilai2024generalization} who impose heavy assumptions on the spectral decomposition of the regression function), we impose minimal assumptions on the true regression function and the noise -- just that they are both bounded. 

Our main technical contribution is the analysis of least-square regression using two-layer ReLU neural networks trained via gradient flow, wherein we establish the first benign overfitting result in this setting.\!\footnote{The use of ReLU activations introduces additional challenges due to the non-differentiability of the resulting loss function. In contrast, extending our approach to smooth activations would yield simpler proofs.} We make no assumptions on the underlying regression function or the noise, other than that they are both bounded. Establishing benign overfitting requires understanding generalization. We provide high-probability generalization guarantees for arbitrary regression functions, addressing a fundamental open question in the theory. We impose assumptions that the network width as well as the sample size are sufficiently large (but still finite), which, together with the fact that we are doing gradient flow, means that we are in the NTK regime~\citep{jacot2018neural}.\!\footnote{This regime (a.k.a. lazy training regime) informally refers to the behavior that network parameters experience minimal change (in the Frobenius norm) from their random initialization throughout training (Razborov, 2022; Montanari and Zhong, 2022). Refer Appendix~\ref{sec:related_works_appendix} for discussion on additional NTK-related work.} Here, the model complexity has two dimensions: the network width and the duration of gradient flow. The proof contains multiple novelties. (i) Decomposition of the excess risk into approximation and estimation errors, inspired by the integral operator technique in KRR, with gradient flow viewed as implicit regularization. (ii)  Extension of a bound on the Hadamard product of matrices to integral operators for the approximation error proof. (iii) Side-stepping uniform convergence in the estimation error proof by concentrating only at initialization, using novel results on concentration of vector-valued U- and V-statistics \citep{lee2019u}, and using repeated integration to obtain bounds at later times. Furthermore, we show that under the same high-probability event, under the same set of assumptions on the relative scaling of input size, dimension, and network width, these networks also exhibit overfitting behavior, thus establishing benign overfitting. We validate these results through experiments on both real and synthetic datasets.

Finally, we stress that, due to technical challenges, we did not optimize bounds on various parameters like sample size, and we believe tighter bounds are possible with refined analysis. We also like to point that several novel tools in our proofs may independently interest the community.

% \subsection{Related Works}\label{subsec:related_works}
\paragraph{Related Works.} Benign overfitting is a challenging phenomenon to analyze theoretically, and therefore researchers took to analyzing it in simple models, such as linear regression \citep{bartlett2020benign,muthukumar2020harmless,zou2021benign,koehler2021uniform,chinot2022robustness}, kernel regression \citep{ghorbani2020neural,liang2020just,liang2020multiple,montanari2022interpolation,mallinar2022benign,xiao2022precise,zhou2024agnostic,barzilai2024generalization,cheng2024characterizing} or random feature regression \citep{ghorbani2021linearized,li2021towards,hastie2022surprises,mei2022generalization}. 
Extensions to neural network classifiers have emerged \citep{frei2022benign,cao2022benign,frei2023benign,xu2023benign,kou2023benign,kornowski2023tempered,zhu2023benign,harel2024provable,xu2025rethinking,wang2024benign}, though these often rely on margin-based techniques specific to classification. \citet{zhu2023benign} study benign overfitting of deep networks in the NTK regime for the classification problem. They also discuss the regression problem, but the result is an expectation bound of the excess risk rather than a high-probability bound, and their solution is not explicitly shown to overfit that we do. Additionally, as with some prior works, they also rely on an assumption that the regression function lives in the RKHS of the NTK, that we do not make here.
% , some of which even go beyond the NTK regime. However, the proof techniques, in particular those based on margins, are specific to the classification problem, and do not seem to carry over to the regression setting. 
% \citet{zhu2023benign} study benign overfitting of deep networks in the NTK regime for the classification problem. They also discuss the regression problem, but the result is an expectation bound of the excess risk rather than a high-probability bound, and their solution is not explicitly shown to overfit that we do. Additionally, as with some prior works, they also rely on an assumption that the regression function lives in the RKHS of the NTK, that we do not make here.
The concept of overfitting was recently categorized as \say{benign}, \say{tempered}, or \say{catastrophic} based on the behavior of the excess risk in the limit of infinite data~\citep{mallinar2022benign}. 
 
While prior non-asymptotic analyses of KRR provide sharp excess risk bounds under weak assumptions \citep{caponnetto2007optimal,rudi2017generalization,mourtada2022elementary}, they do not address the simultaneous minimization of empirical and excess risks in noisy settings—except under strong spectral assumptions \citep{liang2020just, barzilai2024generalization}. In contrast, we show benign overfitting with minimal assumption on the regression function and noise, even in low dimensions.

As noted, existing proofs of benign overfitting typically rely on strong assumptions and high-dimensional settings. In contrast, numerous negative results rule it out in fixed dimensions, particularly for kernel methods \citep{rakhlin2019consistency,buchholz2022kernel,haas2023mind,beaglehole2023inconsistency,li2024kernel,medvedev2024overfitting,yang2025sobolev} and interpolating neural networks \citep{joshi2024noisy}. We address these apparent contradictions in Section~\ref{sec:sample_size}.

A more in-depth discussion of several additional related works is postponed to Appendix~\ref{sec:related_works_appendix}.

% \subsection{Notations}\label{subsec:notations}
\paragraph{Notations.}
Let \(\mathbf{x}\in\mathbb{R}^d\) and \(y\in\mathbb{R}\) be random variables\footnote{We use uppercase letters for matrices, bold lowercase for vectors, and regular lowercase for scalars, without distinguishing random variables from their values; context will make meanings clear.}. We make a standard assumption from the literature, e.g.,~\citep{arora2019fine,mei2022generalization,razborov2022improved} , that \(\mathbf{x}\) follows the uniform distribution on the sphere \(\mathbb{S}^{d-1}\), denoted by \(\rho_{d-1}\).
\footnote{Note that while this assumption is violated in our real data experiments, our  hypothesis continues to hold.} 
% \footnote{Note that this assumption could be relaxed to a subgaussian assumption.} 
We denote the space of square-integrable (with respect to \(\rho_{d-1}\)) functions by \(L^2(\rho_{d-1})\), with norm \(\lVert\cdot\rVert_2\). We assume that \(\lvert y\rvert\) is almost surely bounded above by \(1\):
\[\mathbb{P}\left(\lvert y\rvert\leq1\right)=1.\tag{\(\lvert y\rvert\)-Bound}\label{ass:ybound}\]
We consider the problem of estimating the \textit{regression function} \(f^\star:\mathbb{R}^d\rightarrow\mathbb{R}\) defined by \(f^\star(\mathbf{x})=\mathbb{E}[y\mid\mathbf{x}]\). Then clearly, \(\mathbb{P}\left(\lvert f^\star(\mathbf{x})\rvert>1\right)=\mathbb{P}\left(\lvert\mathbb{E}[y\mid\mathbf{x}]\rvert>1\right)\leq\mathbb{P}\left(\mathbb{E}[\lvert y\rvert\mid\mathbf{x}]>1\right)\leq0\), so the essential supremum \(\text{ess}\sup_{\mathbf{x}\in\mathbb{S}^{d-1}}\lvert f^\star(\mathbf{x})\rvert\leq1\) and we have
\[\mathbb{P}(\lvert f^\star(\mathbf{x})\rvert\leq1)=1,\qquad\lVert f^\star\rVert_2\leq1.\tag{\(f^\star\)-Bound}\label{ass:f^*bound}\]

Define the \textit{noise} variable \(\xi^\star=y-\mathbb{E}[y\mid\mathbf{x}]=y-f^\star(\mathbf{x})\); evidently, \(\mathbb{E}[\xi^\star]=0\). We make no assumption on the noise generation process other than boundedness.
% \shiva{Define noise? This section is low on details. I prefer the details in the older version -- ALT}
For \(n\in\mathbb{N}\) and \(i=1,...,n\), let \(\{(\mathbf{x}_i,y_i,\xi^\star_i)\}_{i=1}^n\) be i.i.d. copies of \((\mathbf{x},y,\xi^\star)\). Also, define the \textit{feature matrix}, the \textit{label vector} and the noise vector as
\[X\vcentcolon=\begin{pmatrix}\mathbf{x}_1^\top\\\vdots\\\mathbf{x}_n^\top\end{pmatrix}\in\mathbb{R}^{n\times d},\qquad\mathbf{y}\vcentcolon=\begin{pmatrix}y_1\\\vdots\\y_n\end{pmatrix}\in\mathbb{R}^n,\qquad\boldsymbol{\xi}^\star\vcentcolon=\begin{pmatrix}\xi^\star_1\\\vdots\\\xi^\star_n\end{pmatrix}\in\mathbb{R}^n.\]
We consider the square loss, \((y,y')\mapsto(y-y')^2:\mathbb{R}\times\mathbb{R}\to\mathbb{R}\).
For a function \(f:\mathbb{R}^d\to\mathbb{R}\), the \textit{population risk} (or \textit{test error}, or \textit{generalization error}) of \(f\) is
\[R(f)=\mathbb{E}[(f(\mathbf{x})-y)^2].\]
% where $\xi^\star_i = y_i - f^\star(\mathbf{x}_i)$.
It is straightforward to see that \(R\) is minimized by \(f^\star\). The main quantity of interest in generalization is the \textit{excess risk} of \(f\), defined by
\[\mbox{\textbf{Excess Risk:} }\;\;\; R(f)-R(f^\star)=\lVert f-f^\star\rVert_2^2.\]
Now write \(\mathbf{f}=(f(\mathbf{x}_1),...,f(\mathbf{x}_n))^\top\in\mathbb{R}^n\).\footnote{We will use bold letters to denote that evaluation on the training set \(\{(\mathbf{x}_1,y_1),...,(\mathbf{x}_n,y_n)\}\) has taken place; the non-bold letters denote their population counterparts.} Then the \textit{empirical risk} (or \textit{training error}) of \(f\) is
\[\mbox{\textbf{Empirical Risk:} }\;\;\; \mathbf{R}(f)=\frac{1}{n}\sum^n_{i=1}\left(f(\mathbf{x}_i)-y_i\right)^2.\]
% We formally define benign overfitting as:
\begin{definition}[Benign Overfitting] \label{def:benign}
 A learning algorithm \(\mathbb{A}:\{(\mathbf{x}_i,y_i)\}_{i=1}^n\mapsto\hat{f}\) takes as input an i.i.d.\ sample of \(n\) noisy data points (as defined above), and outputs a function \(\hat{f}:\mathbb{R}^d\to\mathbb{R}\). We say that a (possibly random) learning algorithm \(\mathbb{A}\) achieves \textit{benign overfitting} if, for all \(\epsilon,\delta>0\), there exists some \(n\) such that, with probability at least \(1-\delta\), we simultaneously have vanishing excess risk and vanishing empirical risk:
\[\mbox{Empirical risk: }\mathbf{R}(\hat{f})\leq\epsilon\;\text{ and }\; \mbox{Excess risk: } R(\hat{f})-R(f^\star)\leq\epsilon.\]
\end{definition}

\section{Adding Sample Size to the Risk vs.\ Model Complexity Plots}\label{sec:sample_size}
%\vspace*{-1ex}
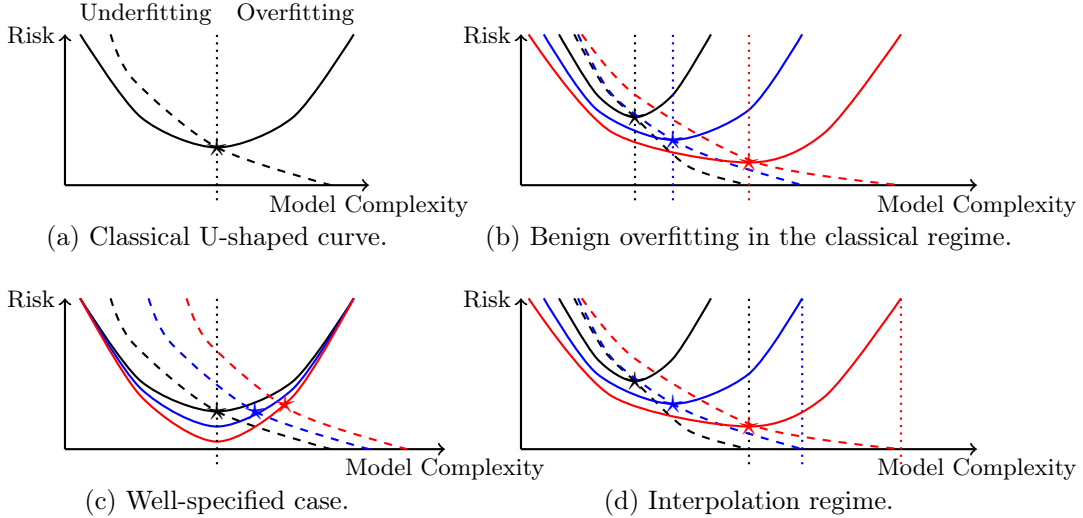
\begin{figure}[t]
    \centering
    \begin{tikzpicture}
        % Axes
        \draw[thick,->] (0,0) -- (4,0) node[below] {\small Model Complexity};
        \draw[thick,->] (0,0) -- (0,2) node[left] {\small Risk};
        % Curves (a)
        \draw[dashed,thick] plot [smooth] coordinates {(0.6,2.0) (0.9,1.4) (2,0.5) (3.5,0)};
        \draw[thick] plot [smooth] coordinates {(0.2,2) (1,0.9) (2,0.5) (3,0.9) (3.8,2)};
        \node at (2,0.5) {\Large \(\star\)};
        \draw[dotted,thick] (2,-0.2) -- (2,2) node[above] {\small Underfitting~~ Overfitting};
        \node at (2,-0.7) {(a) Classical U-shaped curve.};
        
        % Axes (b)
        \draw[thick,->] (6,0) -- (12,0) node[below] {\small Model Complexity};
        \draw[thick,->] (6,0) -- (6,2) node[left] {\small Risk};
        % Black (b)
        \draw[dashed,thick] plot [smooth] coordinates {(6.7,2) (7.1,1.3) (7.5,0.9) (8.2,0.25) (9,0)};
        \draw[thick] plot [smooth] coordinates {(6.5,2) (7,1.2) (7.5,0.9) (8,1.2) (8.5,2)};
        \draw[dotted,thick] (7.5,-0.2) -- (7.5,2);
        \node at (7.5,0.9) {\Large \(\star\)};
        % Blue (b)
        \draw[dashed,blue,thick] plot [smooth] coordinates {(6.75,2) (7.2,1.2) (8,0.6) (8.7,0.3) (9.7,0)};
        \draw[thick,blue] plot [smooth] coordinates {(6.3,2) (7,1) (8,0.6) (9,1) (9.7,2)};
        \draw[dotted,thick,blue] (8,-0.2) -- (8,2);
        \node at (8,0.6) {\Large \textcolor{blue}{\(\star\)}};
        % Red (b)
        \draw[dashed,red,thick] plot [smooth] coordinates {(6.8,2.0) (7.5,1.2) (9.1,0.3) (11,0)};
        \draw[thick,red] plot [smooth] coordinates {(6.1,2) (7.2,0.7) (9,0.3) (10,0.7) (11,2)};
        \draw[dotted,thick,red] (9,-0.2) -- (9,2);
        \node at (9,0.3) {\Large \textcolor{red}{\(\star\)}};
        % Caption (b)
        \node at (9,-0.7) {(b) Benign overfitting in the classical regime.};
        
        % Axes (c)
        \draw[thick,->] (0,-3.5) -- (5,-3.5) node[below] {\small Model Complexity};
        \draw[thick,->] (0,-3.5) -- (0,-1.5) node[left] {\small Risk};
        % Black (c)
        \draw[dashed,thick] plot [smooth] coordinates {(0.6,-1.5) (0.9,-2.1) (2,-3) (3.5,-3.5)};
        \draw[thick] plot [smooth] coordinates {(0.2,-1.5) (1,-2.6) (2,-3) (3,-2.6) (3.8,-1.5)};
        \node at (2,-3) {\Large \(\star\)};
        % Blue (c)
        \draw[dashed,thick,blue] plot [smooth] coordinates {(1.1,-1.5) (1.4,-2.1) (2.5,-3) (4,-3.5)};
        \draw[thick,blue] plot [smooth] coordinates {(0.2,-1.5) (1,-2.7) (2,-3.2) (3,-2.7) (3.8,-1.5)};
        \node at (2.5,-3) {\Large \textcolor{blue}{\(\star\)}};
        % Red (c)
        \draw[dashed,thick,red] plot [smooth] coordinates {(1.6,-1.5) (1.9,-2.1) (3,-3) (4.5,-3.5)};
        \draw[thick,red] plot [smooth] coordinates {(0.2,-1.5) (1,-2.8) (2,-3.4) (3,-2.8) (3.8,-1.5)};
        \node at (2.9,-2.9) {\Large \textcolor{red}{\(\star\)}};
        \draw[dotted,thick] (2,-3.7) -- (2,-1.5);
        % Caption (c)
        \node at (2,-4.2) {(c) Well-specified case.};
        
        % Axes (d)
        \draw[thick,->] (6,-3.5) -- (12,-3.5) node[below] {\small Model Complexity};
        \draw[thick,->] (6,-3.5) -- (6,-1.5) node[left] {\small Risk};
        % Black (d)
        \draw[dashed,thick] plot [smooth] coordinates {(6.7,-1.5) (7.1,-2.2) (7.5,-2.6) (8.2,-3.25) (9,-3.5)};
        \draw[thick] plot [smooth] coordinates {(6.5,-1.5) (7,-2.3) (7.5,-2.6) (8,-2.3) (8.5,-1.5)};
        \draw[dotted,thick] (9,-3.7) -- (9,-1.5);
        \node at (7.5,-2.6) {\Large \(\star\)};
        % Blue (d)
        \draw[dashed,blue,thick] plot [smooth] coordinates {(6.75,-1.5) (7.2,-2.3) (8,-2.9) (8.7,-3.2) (9.7,-3.5)};
        \draw[thick,blue] plot [smooth] coordinates {(6.3,-1.5) (7,-2.5) (8,-2.9) (9,-2.5) (9.7,-1.5)};
        \draw[dotted,thick,blue] (9.7,-3.7) -- (9.7,-1.5);
        \node at (8,-2.9) {\Large \textcolor{blue}{\(\star\)}};
        % Red (d)
        \draw[dashed,red,thick] plot [smooth] coordinates {(6.8,-1.5) (7.5,-2.3) (9.1,-3.2) (11,-3.5)};
        \draw[thick,red] plot [smooth] coordinates {(6.1,-1.5) (7.2,-2.8) (9,-3.2) (10,-2.8) (11,-1.5)};
        \draw[dotted,thick,red] (11,-3.7) -- (11,-1.5);
        \node at (9,-3.2) {\Large \textcolor{red}{\(\star\)}};
        % Caption (d)
        \node at (9,-4.2) {(d) Interpolation regime.};
    \end{tikzpicture}
    \caption{Dashed and solid lines show empirical and excess risk, respectively. On plots (b), (c) and (d), black, blue and red curves are in order of increasing sample size. The vertical dotted lines represent the model complexity of the model under consideration, and the points where the empirical and excess risk curves cross and stay over are marked with \(\star\) (which may not necessarily happen at the troughs of the U-curves). In (a) and (c), the model is taken at trough of the stationary U-curve, and in (b), the model is taken at the troughs of the moving U-curve. In (d), the model is taken in the interpolation regime. }
    %\vspace*{-2ex}
    \label{fig:U-curve}
\end{figure}
% \shiva{is there a difference between interpolation regime vs. interpolation threshold? We use them both in the paper.}
% \shiva{Figure 1: there are vertical dotted lines, so the caption is not clear. I will remove the vertical dotted lines in all those figures where they are not needed. In Figure 1d) we need to point to the interpolation regime.}
In this section, we investigate various scenarios that can occur in the risk versus model complexity plot\footnote{For clarity, we illustrate using a single-dimensional model complexity with a U-shaped excess risk curve, though real-world complexity is often multidimensional and the curve need not be U-shaped \citep{curth2023u}. Note also that we plot the \textit{excess risk} rather than the usual population risk used commonly in such plots. }, taking into account the sample size. We highlight one scenario in which benign overfitting occurs in the classical regime of U-shaped excess risk curve (Figure~\ref{fig:U-curve}(b)), with proofs covering two concrete cases provided in later sections. We also offer hypotheses on which scenario/regimes existing results (both positive and negative) on benign overfitting reside in (Figure~\ref{fig:U-curve}(d)). 

% On the other hand, there is a line of negative results that prove that certain kernel interpolators are \textit{inconsistent} with fixed input dimensions; no matter how much the sample size is increased, the excess risk never approaches zero \citep{rakhlin2019consistency,buchholz2022kernel,haas2023mind,beaglehole2023inconsistency,li2024kernel,joshi2024noisy,medvedev2024overfitting}. Depending on whether the excess risk converges to a strictly positive value or diverges to infinity, the inconsistency of interpolators is further classified into \textit{tempered} and \textit{catastrophic} overfitting \citep{mallinar2022benign}. 

Figure~\ref{fig:U-curve}(a) shows a classical U-shaped excess risk and monotonically decreasing empirical risk, for a \textit{fixed} sample size. As the sample size increases, two possible scenarios may occur. 

\paragraph{Scenario 1:} First is the well-specified case (Figure~\ref{fig:U-curve}(c)), whereby the learning algorithm at the trough of the U-curve is able to produce the true underlying regression function, \(f^\star\). This is typically true in well-specified, simple, parametric models. As an example, consider well-specified linear regression, where \(f^\star(\mathbf{x})=\beta^\top\mathbf{x}\) for some \(\beta\in\mathbb{R}^d\). Then regardless of the sample size, the model with the lowest excess risk is found by minimizing the empirical risk with \(\hat{f}(\mathbf{x})=\hat{\beta}^\top\mathbf{x}\) (corresponding to the vertical dotted line in Figure~\ref{fig:U-curve}(c) at the trough of the U-curves), and any deviation from this model complexity, for example by adding more features, will produce poorly generalizing models. 
With more data, the excess risk decreases toward the Bayes-optimal level, but the empirical risk increases with sample size and approaches the noise level, so benign overfitting does not arise.

% The models at the right complexity level will approach Bayes-optimal excess risk with increasing sample size -- in other words, the excess risk curve will move \say{down}. However, as we stay with the same model, the empirical risk stays roughly around the noise level, and benign overfitting does not occur.

\paragraph{Scenario 2:} The more interesting case for modern learning algorithms is represented in Figure~\ref{fig:U-curve}(b). It is rarely the case in modern machine learning that the learning algorithm at a particular complexity level is well-specified. For neural networks, even if \(f^\star\) is a neural network, using gradient-based learning algorithms with a network of the same architecture as \(f^\star\) will not recover the true parameters. This is also true for kernel regression, where there is a closed form solution. Suppose that regression is being carried out in an RKHS with kernel \(\kappa:\mathbb{R}^d\times\mathbb{R}^d\to\mathbb{R}\), and \(f^\star\) lives in this RKHS, say \(f^\star(\cdot)=\sum^m_{j=1}\alpha_j\kappa(\tilde{\mathbf{x}}_j,\cdot)\) for some \(\{\tilde{\mathbf{x}}\}^m_{j=1}\). Even in this seemingly well-specified case, confining solutions to have the same RKHS norm as \(f^\star\) will not recover \(f^\star\), since the empirical risk minimizer is of the form \(\sum^n_{i=1}\beta_i\kappa(\mathbf{x}_i,\cdot)\) -- the kernel evaluations are taken at different \(\mathbf{x}\) points. 

In these cases, instead of there being a single \say{right} model as in Figure~\ref{fig:U-curve}(c), we hypothesize that the ideal model complexity (corresponding to the vertical dotted lines in Figure~\ref{fig:U-curve}(b)) will depend on the sample size, with more samples and larger models enabling better generalization -- in other words, the excess risk curves move \say{down and to the right}, as in Figure~\ref{fig:U-curve}(b). Moreover, we hypothesize that the empirical risk at these \say{moving troughs} of the U-curve will also decrease, such that, as the sample size and model complexity become sufficiently large for both the empirical and excess risks to be below a desired accuracy level. This phenomenon was empirically shown in \citep[Figures 11 \& 12]{nakkiran2021deep}, and we rigorously establish it via upper bounds in two settings: 
\begin{enumerate}
\item As the first case study, we consider the setting of KRR, i.e., regularized empirical risk minimizers in an RKHS.  Consider a  kernel \(\kappa:\mathbb{R}^d\times\mathbb{R}^d\to\mathbb{R}\). We denote its associated RKHS by \(\mathscr{H}\), and its norm by \(\lVert\cdot\rVert_\mathscr{H}\). Define the empirical risk minimizer:
%\vspace*{-1ex}
\begin{align*}
\hat{f}_\gamma=\argmin_{f\in\mathscr{H}} \frac{1}{n}\sum^n_{i=1}(f(\mathbf{x}_i)-y_i)^2+\gamma\lVert f\rVert_\mathscr{H}^2.
    \end{align*}
    %\vspace*{-1ex}

We prove that under appropriate scaling of the sample size ($n$) and the regularization parameter ($\gamma$) with respect to other quantities, such as the failure
probability ($\delta$) and the accuracy level ($\epsilon$), we can make both empirical risk ($\mathbf{R}(\hat{f}_\gamma)$) and excess risk  ($R(\hat{f}_\gamma)-R(f^\star)$) small. 

\item For the second case study, we consider the regression problem with the square loss, of two-layer ReLU neural networks trained by gradient flow.  We theoretically establish conditions on the sample size, network width, feature dimension with respect to $\epsilon$ and $\delta$, under which the  neural network \(\hat{f}_{T}\) obtained by running gradient flow for \(T\) amount of time has both small empirical risk ($\mathbf{R}(\hat{f}_{T})$) and excess risk ($R(\hat{f}_{T}) - R(f^\star)$).
\end{enumerate}

At first glance, our findings may seem inconsistent with prior results, both positive which require strong assumptions like high dimensionality, and negative which rule out benign overfitting in low dimensions.
The resolution lies in the fact that existing works, both positive and negative, start by assuming an overfitting (interpolating) model, often in closed form (models in the interpolation regime, to the right of the vertical dotted lines in Figure~\ref{fig:U-curve}(d)), then study the behavior of the excess risk in this interpolation regime as sample size is increased, rather than staying at the trough of the U-curve, as in Figure~\ref{fig:U-curve}(b). As the sample size increases, larger and larger models are required to fit the data perfectly, thus the interpolation regime shifts to the right, but the model under consideration is always some way up the slope of the U-curve. Hence, it is not surprising that there exist negative results stating that, even if the sample size goes to infinity, interpolating models do not approach the Bayes optimal excess risk. On the other hand, it is equally unsurprising that the positive results rely on heavy assumptions to show that the excess risk of the model, which is always some way up the slope of the U-curve, converges to zero with increasing sample size.

\section{Benign Overfitting with Kernel Ridge Regression (KRR)}\label{sec:krr}
In this section, we prove that solutions of KRR, i.e., regularized empirical risk minimizers in an RKHS, achieves benign overfitting, with the appropriate scaling of the sample size and the regularization parameter. 
% The ideas behind these proofs serve as a warm-up for the technically more challenging proofs in the trained neural network case presented in the next section. 
 
% \shiva{here in the proofs do you use the fact that is it is NTK Kernel? If not important, can we do what we did in NN section, write general Assumptions, and simplify them for NTK.}

We take the kernel \(\kappa:\mathbb{R}^d\times\mathbb{R}^d\to\mathbb{R}\). We denote its associated RKHS by \(\mathscr{H}\), and its norm by \(\lVert\cdot\rVert_\mathscr{H}\). In addition to the risks defined in Section~\ref{sec:introduction}, we define the \textit{regularized} population and empirical risks for functions \(f\in\mathscr{H}\) as follows:
\[R_\gamma(f)=\mathbb{E}[(f(\mathbf{x})-y)^2]+\gamma\lVert f\rVert_\mathscr{H}^2,\qquad \mbox{and} \qquad \mathbf{R}_\gamma(f)=\frac{1}{n}\sum^n_{i=1}(f(\mathbf{x}_i)-y_i)^2+\gamma\lVert f\rVert_\mathscr{H}^2.\]
We denote their minimizers in \(\mathscr{H}\) as \(f_\gamma=\argmin_{f\in\mathscr{H}}R_\gamma(f)\) and \(\hat{f}_\gamma=\argmin_{f\in\mathscr{H}}\mathbf{R}_\gamma(f)\). 
% \shiva{we use $R_\gamma$ here and $R$ in theorem statements. Same for $\mathbf{R}_\gamma$.}
Define the accuracy level \(\epsilon>0\) and probability of failure \(\delta>0\). By the denseness of \(\mathscr{H}\) in \(L^2(\rho)\), there is an \(f_\epsilon\in\mathscr{H}\) such that \(\lVert f^\star-f_\epsilon\rVert_2^2\leq\frac{\epsilon}{8}\). 

Now, for simplicity, in this paper, we focus on the specific case of the Neural Tangent Kernel (NTK)~\citep{jacot2018neural} defined by \[\kappa(\mathbf{x},\mathbf{x}')=\mathbf{x}\cdot\mathbf{x}'\left(\frac{1}{2}-\frac{\arccos(\mathbf{x}\cdot\mathbf{x}')}{2\pi}\right).\] 
The only purpose that the Neural Tangent Kernel serves in this section is to allow us to use the same minimum eigenvalue results. We stress that the same proofs and qualitative behavior hold for any bounded reproducing kernel with appropriate lower bound conditions on the minimum eigenvalue of the Gram matrix, with the associated RKHS dense in \(L^2(\rho_{d-1})\)\footnote{\(\mathscr{H}\) is dense in \(L^2(\rho)\) if, for any \(f\in L^2(\rho)\) and any \(\epsilon\), there exists some \(f_\epsilon\in\mathcal{H}\) such that \(\lVert f-f_\epsilon\rVert_2\leq\epsilon\). This is a common condition, satisfied by many common kernels \citep{micchelli2006universal}. }.

\begin{assumption}\label{ass:krr}
    Suppose that the quantities \(\epsilon\), \(\delta\), \(\gamma\), \(d\), \(\lVert f_\epsilon\rVert_\mathscr{H}\) and \(n\) satisfy the following relations\footnote{Note that \(C>0\) is an absolute constant that first appears in Lemma~\ref{lem:probability_samples}\ref{spectralnorm}. }. In the text in \textcolor{red}{red} below, we give more intuitive interpretations of the technical assumptions. 
    \begin{enumerate}[(i)]
        \item\label{ass:krr_overfitting} \(e^{-d}\leq\frac{\delta}{4}\), \(\sqrt{n}-C\sqrt{d}\geq\frac{2}{\sqrt{5}}\sqrt{n}\), \(\left(\frac{\gamma}{\gamma+\frac{1}{5d}}\right)^2\leq\epsilon\).\hfill\textcolor{red}{(\(d\geq\Omega(\log(\frac{1}{\delta}))\), \(n\geq\Omega(d)\), \(\gamma\leq O(\frac{\sqrt{\epsilon}}{d})\))}
        \item\label{ass:krr_approximation} \(\gamma\lVert f_\epsilon\rVert_\mathscr{H}^2\leq\frac{1}{8}\epsilon\).\hfill\textcolor{red}{(\(\gamma\leq O(\frac{\epsilon}{\lVert f_\epsilon\rVert_2^2})\))}
        \item\label{ass:krr_estimation} \(n\geq\frac{16(1+\frac{1}{\gamma})^2\log\left(\frac{4}{\delta}\right)}{\gamma^2\epsilon}\).\hfill\textcolor{red}{(\(n\geq\Omega(\frac{\log(\frac{1}{\delta})}{\gamma^4\epsilon}\))}
    \end{enumerate}
\end{assumption}
% \shiva{how do we choose \(f_\epsilon\)? Seems like exsitence rather than choice}
For fixed \(\epsilon\) and \(\delta\), we start with the existence of \(f_\epsilon\), then sequentially choose \(d\), \(\gamma\) and \(n\) to satisfy \ref{ass:krr_overfitting}, \ref{ass:krr_approximation} and \ref{ass:krr_estimation} respectively, so it is clear that there are no inconsistencies between these assumptions. Given the model, \(\hat{f}_\gamma\), we now look at the defned empirical and excess risks. Our first result bounds the empirical risk of \(\hat{f}_\gamma\). 

We first recall the following explicit expressions of the regularized risk minimizers \citep[Lemma 2.4]{park2020regularised}:
\begin{alignat*}{2}
    f_\gamma&=(\iota^*\circ\iota+\gamma\text{Id}_\mathcal{H})^{-1}\iota^*f^\star=\iota^*(\iota\circ\iota^*+\gamma\text{Id}_2)^{-1}f^\star\\
    \hat{f}_\gamma&=(n\boldsymbol{\iota}_X^*\circ\boldsymbol{\iota}_X+\gamma\text{Id}_\mathcal{H})^{-1}\boldsymbol{\iota}_X^*\mathbf{y}=\boldsymbol{\iota}_X^*(n\boldsymbol{\iota}_X\circ\boldsymbol{\iota}_X^*+\gamma\text{Id}_{\mathbb{R}^n})^{-1}\mathbf{y}.
\end{alignat*}
We also inherit the notations from Appendix~\ref{subsec:functions_operators}. 

\begin{restatable}[Overfitting]{theorem}{krroverfitting}\label{thm:krr_overfitting}
    Suppose that Assumption~\ref{ass:krr}\ref{ass:krr_overfitting} holds. Then there is an event with probability at least \(1-\frac{\delta}{2}\) on which \(\mathbf{R}(\hat{f}_\gamma)\leq\epsilon\).
\end{restatable}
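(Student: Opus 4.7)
The plan is to rewrite the empirical residual $\hat{\mathbf f}_\gamma - \mathbf y$ in terms of the kernel Gram matrix $K \in \mathbb R^{n\times n}$ with entries $K_{ij} = \kappa(\mathbf x_i,\mathbf x_j)$, derive a deterministic upper bound on $\mathbf R(\hat f_\gamma)$ controlled by $\lmin(K)$, and then conclude using the high-probability lower bound on $\lmin(K)$ from Lemma~\ref{lem:probability_samples}.

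From the explicit formula for $\hat f_\gamma$ displayed immediately above the theorem statement, evaluating at the design points gives $\hat{\mathbf f}_\gamma = K(K + n\gamma I)^{-1}\mathbf y$, so
\[
\hat{\mathbf f}_\gamma - \mathbf y \;=\; -\,n\gamma\,(K + n\gamma I)^{-1}\mathbf y.
\]
Taking squared Euclidean norms and using $\normop{(K+n\gamma I)^{-1}} \leq (\lmin(K)+n\gamma)^{-1}$ together with $\|\mathbf y\|_2^2 \leq n$ (which follows from $|y_i|\leq 1$ almost surely) yields the purely deterministic estimate
\[
\mathbf R(\hat f_\gamma) \;=\; \tfrac{1}{n}\bignorm{\hat{\mathbf f}_\gamma - \mathbf y}_2^2 \;\leq\; \Paren{\frac{n\gamma}{\lmin(K)+n\gamma}}^2,
\]
which is monotone decreasing in $\lmin(K)$.

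The first two sub-conditions of Assumption~\ref{ass:krr}\ref{ass:krr_overfitting}, namely $e^{-d}\leq \delta/4$ and $\sqrt n - C\sqrt d \geq \tfrac{2}{\sqrt 5}\sqrt n$, are exactly what is needed by Lemma~\ref{lem:probability_samples} to deliver $\lmin(K) \geq n/(5d)$ on an event $E$ of probability at least $1-\delta/2$; the $n/d$ scaling ultimately reflects the near-isotropy of $n$ uniform samples on $\mathbb S^{d-1}$ combined with the linear-in-$\mathbf x\cdot\mathbf x'$ piece of the NTK. On $E$ the deterministic estimate becomes
\[
\mathbf R(\hat f_\gamma) \;\leq\; \Paren{\frac{n\gamma}{n\gamma + n/(5d)}}^2 \;=\; \Paren{\frac{\gamma}{\gamma + \tfrac{1}{5d}}}^2 \;\leq\; \epsilon,
\]
the final inequality being the third sub-condition of Assumption~\ref{ass:krr}\ref{ass:krr_overfitting}.

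The closed-form residual derivation and the deterministic operator-norm bound are routine KRR manipulations; the whole substantive difficulty sits in the probabilistic lower bound on $\lmin(K)$. This is where the delicate-looking constants in Assumption~\ref{ass:krr}\ref{ass:krr_overfitting} (in particular the factor $1/(5d)$) are fixed, and it is the reason that Lemma~\ref{lem:probability_samples}, rather than the spectral manipulation performed here, is the actual technical core underlying the theorem.
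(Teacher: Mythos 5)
Your argument is correct and follows essentially the same path as the paper's proof: reduce $\mathbf{R}(\hat f_\gamma)$ to a function of $\lambda_{\min}$ of the Gram matrix via the closed-form residual $\hat{\mathbf{f}}_\gamma - \mathbf{y} = -n\gamma(\mathbf{H} + n\gamma I)^{-1}\mathbf{y}$, then invoke the $\lambda_{\min}(\mathbf{H}) \geq n/(5d)$ lower bound (Taylor-expand the NTK, keep the positive-semi-definite $\frac14 XX^\top$ term by the Schur product theorem, and apply Vershynin's smallest-singular-value bound to $\sqrt{d}X$). The deterministic manipulation and the resulting bound $\left(\gamma/(\gamma + \tfrac{1}{5d})\right)^2 \leq \epsilon$ agree with the paper precisely, and your $1-\delta/2$ accounting via $2e^{-d} \leq \delta/2$ is also correct. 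The one small inaccuracy is bibliographic rather than mathematical: you cite Lemma~\ref{lem:probability_samples} for $\lambda_{\min}(\mathbf{H}) \geq n/(5d)$, but that lemma lives in the neural-network section under Assumption~\ref{ass:delta} with a different probability bookkeeping; the paper instead re-derives the identical estimate inline inside the proof of Theorem~\ref{thm:krr_overfitting}, tailored to Assumption~\ref{ass:krr}\ref{ass:krr_overfitting}. Substituting a verbatim re-derivation for that lemma citation would make your proof exactly match the paper's.
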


Next, we investigate whether \(\hat{f}_\gamma\) can also generalize. For this, we use the following decomposition of (the square-root of) the excess risk into approximation and estimation errors:
\begin{equation}\label{eqn:krr_decomp}
    \lVert f^\star-\hat{f}_\gamma\rVert_2\leq\underbrace{\lVert f^\star-f_\gamma\rVert_2}_{\text{Approximation Error}}+\underbrace{\lVert f_\gamma-\hat{f}_\gamma\rVert_2}_{\text{Estimation Error}}.
\end{equation}
% \shiva{skip Theorems 2 and 3 in the main paper.}
% \shiva{State (informally) what happens if we increse model complexity without adjusting $n$}
The next result shows that we can bound the approximation error. 
\begin{restatable}[Approximation]{theorem}{krrapproximation}\label{thm:krr_approximation}
    If Assumption~\ref{ass:krr}\ref{ass:krr_approximation} holds, then we have that \(\lVert f^\star-f_\gamma\rVert_2\leq\frac{1}{2}\sqrt{\epsilon}\). 
\end{restatable}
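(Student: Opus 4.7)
The plan is to use the optimality of $f_\gamma$ against the comparison point $f_\epsilon$, which is the approximating element in $\mathscr{H}$ guaranteed to exist by density and chosen to satisfy $\lVert f^\star - f_\epsilon\rVert_2^2 \leq \epsilon/8$ earlier in the section.

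First, I will rewrite the regularized population risk in a form that exposes the excess risk. Using $y = f^\star(\mathbf{x}) + \xi^\star$, the tower property, and the fact that $\mathbb{E}[\xi^\star \mid \mathbf{x}] = 0$, the cross term vanishes and one obtains, for every $f \in \mathscr{H}$,
\begin{equation*}
    R_\gamma(f) \;=\; \lVert f - f^\star\rVert_2^2 \;+\; \mathbb{E}[(\xi^\star)^2] \;+\; \gamma \lVert f\rVert_\mathscr{H}^2.
\end{equation*}
Since the noise term does not depend on $f$, minimization of $R_\gamma$ over $\mathscr{H}$ is equivalent to minimization of $\lVert f - f^\star\rVert_2^2 + \gamma \lVert f\rVert_\mathscr{H}^2$.

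Second, I invoke the defining optimality $R_\gamma(f_\gamma) \leq R_\gamma(f_\epsilon)$ (which holds because $f_\epsilon \in \mathscr{H}$), cancel the noise-variance term on both sides, and drop the nonnegative quantity $\gamma \lVert f_\gamma\rVert_\mathscr{H}^2$ on the left. This yields
\begin{equation*}
    \lVert f_\gamma - f^\star\rVert_2^2 \;\leq\; \lVert f_\epsilon - f^\star\rVert_2^2 \;+\; \gamma \lVert f_\epsilon\rVert_\mathscr{H}^2.
\end{equation*}

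Finally, I substitute the two bounds: $\lVert f^\star - f_\epsilon\rVert_2^2 \leq \epsilon/8$ by the choice of $f_\epsilon$, and $\gamma \lVert f_\epsilon\rVert_\mathscr{H}^2 \leq \epsilon/8$ by Assumption~\ref{ass:krr}\ref{ass:krr_approximation}. Adding gives $\lVert f_\gamma - f^\star\rVert_2^2 \leq \epsilon/4$, and taking square roots yields $\lVert f^\star - f_\gamma\rVert_2 \leq \tfrac{1}{2}\sqrt{\epsilon}$, as desired.

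There is no real obstacle here; this is a deterministic variational comparison argument relying only on (i) the optimality of $f_\gamma$, (ii) the Pythagorean-style decomposition of the $L^2$ risk enabled by $\mathbb{E}[\xi^\star \mid \mathbf{x}] = 0$, and (iii) the two-term budget carefully balanced in Assumption~\ref{ass:krr}\ref{ass:krr_approximation} and the definition of $f_\epsilon$. The only mild subtlety is ensuring that $f_\epsilon$ (which a priori is chosen only to make $\lVert f^\star - f_\epsilon\rVert_2$ small, with no bound on $\lVert f_\epsilon\rVert_\mathscr{H}$) is fixed \emph{before} $\gamma$ is chosen; the sequential choice of $d$, then $\gamma$, then $n$ specified after Assumption~\ref{ass:krr} takes care of this.
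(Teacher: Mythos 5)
Your proof is correct and is essentially the same variational comparison as the paper's: both use the optimality $R_\gamma(f_\gamma)\le R_\gamma(f_\epsilon)$, discard $\gamma\lVert f_\gamma\rVert_\mathscr{H}^2\ge 0$, and split the bound into the $\epsilon/8$ approximation budget for $\lVert f^\star-f_\epsilon\rVert_2^2$ and the $\epsilon/8$ regularization budget for $\gamma\lVert f_\epsilon\rVert_\mathscr{H}^2$ from Assumption~\ref{ass:krr}\ref{ass:krr_approximation}. The only cosmetic difference is that the paper phrases the argument through $R(f)-R(f^\star)=\lVert f-f^\star\rVert_2^2$ (already established in the preliminaries) rather than re-deriving the noise-orthogonality decomposition, but the steps are one-to-one identical.
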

% \begin{proof}
%     Recall that \(f_\epsilon\in\mathcal{H}\) satisfies \(\lVert f^\star-\iota f_\epsilon\rVert_2^2\leq\frac{\epsilon}{8}\). See that
%     \begin{alignat*}{2}
%         \lVert f^\star-\iota f_\gamma\rVert_2^2&=R(f_\gamma)-R(f^\star)\\
%         &\leq R_\gamma(f_\gamma)-R(f^\star)\\
%         &=R_\gamma(f_\gamma)-R_\gamma(f_\epsilon)+R_\gamma(f_\epsilon)-R(f_\epsilon)+R(f_\epsilon)-R(f^\star)\\
%         &\leq R_\gamma(f_\epsilon)-R(f_\epsilon)+\lVert f^\star-\iota f_\epsilon\rVert_2^2\\
%         &\leq\gamma\lVert f_\epsilon\rVert_\mathscr{H}^2+\frac{1}{8}\epsilon\\
%         &\leq\frac{1}{4}\epsilon,
%     \end{alignat*}
%     where we applied Assumption~\ref{ass:krr}\ref{ass:krr_approximation}. The result is obtained by taking square roots. 
% \end{proof}
Note that Theorem~\ref{thm:krr_approximation} is a deterministic result. Next, we have a bound on the estimation error.
\begin{restatable}[Estimation]{theorem}{krrestimation}\label{thm:krr_estimation}
    Suppose that Assumption~\ref{ass:krr}\ref{ass:krr_estimation} holds. Then there is an event with probability at least \(1-\frac{\delta}{2}\) on which \(\lVert f_\gamma-\hat{f}_\gamma\rVert_2\leq\frac{1}{2}\sqrt{\epsilon}\). 
\end{restatable}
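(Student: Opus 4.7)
The plan is to apply the classical integral-operator decomposition for regularized least squares, together with the resolvent identity $A^{-1}-B^{-1}=A^{-1}(B-A)B^{-1}$, to split the estimation error into two pieces, each controlled by elementary concentration of i.i.d.\ averages in the RKHS. Applying the identity with $A=\iota^*\iota+\gamma\,\text{Id}$ and $B=n\boldsymbol{\iota}_X^*\boldsymbol{\iota}_X+\gamma\,\text{Id}$ to the closed-form expressions of $f_\gamma$ and $\hat f_\gamma$ given in the text, I obtain
\[
f_\gamma-\hat f_\gamma = (\iota^*\iota+\gamma\text{Id})^{-1}(\iota^*f^\star-\boldsymbol{\iota}_X^*\mathbf{y}) + (\iota^*\iota+\gamma\text{Id})^{-1}\bigl(n\boldsymbol{\iota}_X^*\boldsymbol{\iota}_X-\iota^*\iota\bigr)(n\boldsymbol{\iota}_X^*\boldsymbol{\iota}_X+\gamma\text{Id})^{-1}\boldsymbol{\iota}_X^*\mathbf{y},
\]
so the task reduces to controlling two statistical fluctuations: the mean-embedding error $\|\iota^*f^\star-\boldsymbol{\iota}_X^*\mathbf{y}\|_\mathscr{H}$ and the covariance-operator error $\|n\boldsymbol{\iota}_X^*\boldsymbol{\iota}_X-\iota^*\iota\|_{\mathrm{op}}$.

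Since the NTK satisfies $\kappa(\mathbf{x},\mathbf{x})\leq 1$ on $\mathbb{S}^{d-1}$, the inclusion $\iota:\mathscr{H}\to L^2(\rho_{d-1})$ has operator norm at most $1$, hence $\|\cdot\|_2\leq\|\cdot\|_\mathscr{H}$. Combining this with the elementary resolvent bounds $\|(\iota^*\iota+\gamma\text{Id})^{-1}\|_{\mathrm{op}},\|(n\boldsymbol{\iota}_X^*\boldsymbol{\iota}_X+\gamma\text{Id})^{-1}\|_{\mathrm{op}}\leq 1/\gamma$ and the deterministic bound $\|\boldsymbol{\iota}_X^*\mathbf{y}\|_\mathscr{H}\leq 1$ (which follows from $|y_i|\leq 1$ and $\sup\kappa\leq 1$), the previous display yields
\[
\|f_\gamma-\hat f_\gamma\|_2 \;\leq\; \frac{1}{\gamma}\,\|\iota^*f^\star-\boldsymbol{\iota}_X^*\mathbf{y}\|_\mathscr{H} \;+\; \frac{1}{\gamma^2}\,\|n\boldsymbol{\iota}_X^*\boldsymbol{\iota}_X-\iota^*\iota\|_{\mathrm{op}}.
\]

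For the concentration step, both quantities on the right-hand side are centered averages of $n$ i.i.d.\ bounded summands, taking values in $\mathscr{H}$ and in the Hilbert--Schmidt operators on $\mathscr{H}$, respectively; each operator summand $\kappa(\mathbf{x}_i,\cdot)\otimes\kappa(\mathbf{x}_i,\cdot)$ is rank-one with Hilbert--Schmidt norm at most $1$. A Pinelis--Sakhanenko-type Hilbert-valued Bernstein inequality applied to each sum (bounding the operator norm by the Hilbert--Schmidt norm in the covariance case) followed by a union bound gives, with probability at least $1-\delta/2$, that both fluctuations are at most $2\sqrt{\log(4/\delta)/n}$. Substituting yields $\|f_\gamma-\hat f_\gamma\|_2\leq \tfrac{2(1+1/\gamma)}{\gamma}\sqrt{\log(4/\delta)/n}$, which is at most $\tfrac12\sqrt\epsilon$ exactly when Assumption~\ref{ass:krr}\ref{ass:krr_estimation} holds.

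The main obstacle is the operator-valued concentration of $n\boldsymbol{\iota}_X^*\boldsymbol{\iota}_X-\iota^*\iota$ in operator norm; passing through the Hilbert--Schmidt norm avoids full operator-Bernstein machinery at the cost of an $O(1)$ factor, which is still enough to produce the clean $O(1/\sqrt n)$ rate that drives the assumption. A secondary care point is the bookkeeping of normalization, ensuring that $n\boldsymbol{\iota}_X^*\boldsymbol{\iota}_X$ is indeed an honest empirical average of i.i.d.\ rank-one operators, matching the $\tfrac{1}{n}\sum$ normalization implicit in the adjoint $\boldsymbol{\iota}_X^*$ (and likewise for $\boldsymbol{\iota}_X^*\mathbf{y}$) so that the Bernstein step applies directly.
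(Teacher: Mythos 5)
Your proposal is correct, but it takes a genuinely different route from the paper. The paper does not use the resolvent identity at all: it starts from the single algebraic identity
\[
\hat f_\gamma - f_\gamma = (n\boldsymbol{\iota}_X^*\boldsymbol{\iota}_X+\gamma\,\mathrm{Id})^{-1}\bigl(\boldsymbol{\iota}_X^*\mathbf{y}-n\boldsymbol{\iota}_X^*\boldsymbol{\iota}_X f_\gamma-\gamma f_\gamma\bigr),
\]
replaces $\gamma f_\gamma$ by $\iota^*(f^\star-\iota f_\gamma)$ via the stationarity condition for $f_\gamma$, and observes that the residual is a \emph{single} centered i.i.d.\ average $\frac{1}{n}\sum_i \kappa(\mathbf{x}_i,\cdot)(y_i - f_\gamma(\mathbf{x}_i)) - \iota^*(f^\star - \iota f_\gamma)$ of RKHS-valued summands bounded by $1+1/\gamma$ (using $|f_\gamma(\mathbf{x}_i)|\leq\lVert f_\gamma\rVert_\mathscr{H}\leq1/\gamma$). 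One application of the vector-valued Hoeffding inequality \eqref{eqn:hoeffdingvector} then gives the claim. Your approach instead uses the classical Caponnetto--De Vito split into the mean-embedding fluctuation $\lVert\iota^*f^\star-\boldsymbol{\iota}_X^*\mathbf{y}\rVert_\mathscr{H}$ and the covariance-operator fluctuation $\lVert n\boldsymbol{\iota}_X^*\boldsymbol{\iota}_X-\iota^*\iota\rVert_{\mathrm{op}}$, and pays an extra $1/\gamma$ on the second term because of the two resolvents flanking $B-A$; the sum $\tfrac{2}{\gamma}(1+\tfrac1\gamma)$ happens to reproduce the same $\gamma$-dependence that the paper obtains in one shot. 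The trade-off: the paper's argument is more compact (one concentration, no operator-norm control needed), while yours is more modular and structurally transparent, separating the two independent sources of sampling error at the cost of requiring operator-valued concentration, which you correctly sidestep by passing through the Hilbert--Schmidt norm so the same Pinelis-type inequality applies. One caveat: the precise constant $2\sqrt{\log(4/\delta)/n}$ you quote for each fluctuation is a bit optimistic once you account for the centering (the almost-sure bound on the centered summand is $2$, not $1$, giving $4\sqrt{\log(8/\delta)/n}$ from \eqref{eqn:hoeffdingvector} with a union bound), so with fully honest constants your route needs a slightly larger $n$ than Assumption~\ref{ass:krr}\ref{ass:krr_estimation} states; the same looseness is arguably present in the paper's own arithmetic, and it does not affect the qualitative result.
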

Using the decomposition in (\ref{eqn:krr_decomp}), we have the following generalization bound as an immediate corollary of Theorems~\ref{thm:krr_approximation} and \ref{thm:krr_estimation}. 
\begin{restatable}[Generalization]{theorem}{krrgeneralization}\label{thm:krr_generalization}
    Suppose that Assumption~\ref{ass:krr}\ref{ass:krr_approximation} \& \ref{ass:krr_estimation} hold. Then on the same event as in Theorem~\ref{thm:krr_estimation}, we have \(R(\hat{f}_\gamma)-R(f^\star)\leq\epsilon\). 
\end{restatable}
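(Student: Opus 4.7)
The plan is simply to stitch Theorems~\ref{thm:krr_approximation} and~\ref{thm:krr_estimation} through the triangle-inequality decomposition already recorded in~(\ref{eqn:krr_decomp}). The observation that makes this a one-line deduction is the square-loss identity from Section~\ref{sec:introduction}, namely $R(\hat{f}_\gamma)-R(f^\star)=\lVert f^\star-\hat{f}_\gamma\rVert_2^2$. So the task reduces to controlling $\lVert f^\star-\hat{f}_\gamma\rVert_2$ in $L^2(\rho_{d-1})$.

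First I would split
\[
\lVert f^\star-\hat{f}_\gamma\rVert_2 \;\leq\; \lVert f^\star-f_\gamma\rVert_2 + \lVert f_\gamma-\hat{f}_\gamma\rVert_2,
\]
which is purely a triangle inequality and requires no probabilistic input. Then, under Assumption~\ref{ass:krr}\ref{ass:krr_approximation}, I would invoke Theorem~\ref{thm:krr_approximation} (a deterministic statement) to bound the approximation term by $\tfrac{1}{2}\sqrt{\epsilon}$. Under Assumption~\ref{ass:krr}\ref{ass:krr_estimation}, I would invoke Theorem~\ref{thm:krr_estimation} on the event of probability at least $1-\tfrac{\delta}{2}$ guaranteed there, bounding the estimation term by $\tfrac{1}{2}\sqrt{\epsilon}$. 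Summing on that same event gives $\lVert f^\star-\hat{f}_\gamma\rVert_2 \leq \sqrt{\epsilon}$, and squaring yields $R(\hat{f}_\gamma)-R(f^\star) \leq \epsilon$.

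There is essentially no obstacle: the heavy lifting lives in the approximation and estimation theorems. The only thing to check is that the constants line up — both constituent bounds were set to $\tfrac{1}{2}\sqrt{\epsilon}$ precisely so that their sum is $\sqrt{\epsilon}$ and its square is exactly $\epsilon$ with no slack. Notably, the event on which the conclusion holds is the same event appearing in Theorem~\ref{thm:krr_estimation}, since the approximation bound is deterministic; no union bound is required at this stage.
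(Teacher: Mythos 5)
Your proposal matches the paper's proof exactly: both use the decomposition in~(\ref{eqn:krr_decomp}), invoke the deterministic bound of Theorem~\ref{thm:krr_approximation} and the high-probability bound of Theorem~\ref{thm:krr_estimation}, and then sum and square. Your observation that no union bound is needed because the approximation bound is deterministic is correct and worth stating explicitly.
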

Finally, as an immediate corollary of Theorems~\ref{thm:krr_overfitting} and \ref{thm:krr_generalization}, we have the benign overfitting result. 
\begin{restatable}[Benign Overfitting]{theorem}{krrbenignoverfitting}\label{thm:krr_benign_overfitting}
    Suppose that all the conditions in Assumption~\ref{ass:krr} hold. Then there is an event with probability at least \(1-\delta\) on which
    \[\mbox{ Empirical Risk: } \mathbf{R}(\hat{f}_\gamma)\leq\epsilon\;\;\;  \mbox{ and } \;\;\;  \mbox{ Excess Risk: }R(\hat{f}_\gamma)-R(f^\star)\leq\epsilon.\]
\end{restatable}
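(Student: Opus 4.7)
The plan is to obtain the claim as a direct corollary of \Cref{thm:krr_overfitting} and \Cref{thm:krr_generalization}, which is what the paper itself signals by the phrasing ``immediate corollary.'' So the work is conceptually just bookkeeping on failure events and on which parts of \Cref{ass:krr} feed into which component theorem.

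Concretely, first I would invoke \Cref{thm:krr_overfitting}: since Assumption~\ref{ass:krr}\ref{ass:krr_overfitting} is in force, there is an event $A_1$ with $\Pr(A_1) \geq 1 - \delta/2$ on which $\mathbf{R}(\hat f_\gamma) \leq \epsilon$. Next, since both Assumption~\ref{ass:krr}\ref{ass:krr_approximation} and \ref{ass:krr}\ref{ass:krr_estimation} hold, \Cref{thm:krr_generalization} applies, giving an event $A_2$ with $\Pr(A_2) \geq 1 - \delta/2$ on which $R(\hat f_\gamma) - R(f^\star) \leq \epsilon$. (Internally, \Cref{thm:krr_generalization} already packages \Cref{thm:krr_approximation}, which is deterministic, with \Cref{thm:krr_estimation}, whose event is $A_2$; the approximation half contributes no probability cost.)

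Finally I would apply a union bound: $\Pr(A_1 \cap A_2) \geq 1 - \Pr(A_1^c) - \Pr(A_2^c) \geq 1 - \delta/2 - \delta/2 = 1 - \delta$. On $A_1 \cap A_2$ both conclusions hold simultaneously, which is exactly the statement of \Cref{thm:krr_benign_overfitting}. Nothing here requires reopening the proofs of the component results, and the budget splitting $\delta/2 + \delta/2$ is already baked into the statements of \Cref{thm:krr_overfitting,thm:krr_estimation}.

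There isn't really a ``hard part'' to this particular statement, since all of the analytic content—controlling the empirical risk via the explicit formula for $\hat f_\gamma$, controlling the approximation error $\lVert f^\star - f_\gamma\rVert_2$ via $\gamma \lVert f_\epsilon\rVert_{\mathscr H}^2$, and controlling the estimation error $\lVert f_\gamma - \hat f_\gamma\rVert_2$ through concentration of the empirical covariance/covariate operators toward their population counterparts in the integral-operator framework—has been discharged in the earlier theorems. The only subtle point to flag in writing is that $A_1$ and $A_2$ are a priori distinct events (the overfitting event concerns lower bounds on Gram matrix eigenvalues, while the estimation event concerns operator-norm concentration on $\mathscr H$), which is precisely why a union bound, rather than a direct intersection, is needed.
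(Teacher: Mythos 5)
Your proposal is correct and matches the paper's intended argument exactly: the paper states Theorem~\ref{thm:krr_benign_overfitting} as an immediate corollary of Theorems~\ref{thm:krr_overfitting} and~\ref{thm:krr_generalization}, and the union bound over the two $\delta/2$ failure events is the only bookkeeping required. Your observation that the two events are a priori distinct (Gram-matrix eigenvalue bound vs.\ operator-norm concentration in $\mathscr{H}$), so that a union bound rather than a direct intersection is needed, is a correct and worthwhile clarification.
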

These results precisely match our hypothesis in  Section~\ref{sec:sample_size}. If we reduce \(\gamma\) while keeping \(n\) fixed, then Assumption~\ref{ass:krr}\ref{ass:krr_estimation} is not satisfied, and we get vacuous estimation error bounds, corresponding to the upward slope of each curve in Figure~\ref{fig:U-curve}(b). However, if we simultaneously reduce \(\gamma\) and increase \(n\), making sure that all the conditions in Assumption~\ref{ass:krr} hold, corresponding to staying at the trough of the U-shaped curves in Figure~\ref{fig:U-curve}(b), then we achieve benign overfitting.
%\vspace*{-1ex}
\section{Bengin Overfitting with Trained Two-Layer ReLU Networks}\label{sec:neural_network}
%\vspace*{-1ex}
In this section, we prove the precise conditions under which two-layer fully connected ReLU neural networks trained by gradient flow in the NTK regime achieve benign overfitting.\!\footnote{There are some valid criticisms on the shortcomings of NTK regime, which we discuss in Appendix~\ref{sec:related_works_appendix}.} Our proofs are different from the standard NTK technique of matching the dynamics of the neural network to that of gradient iterates in an RKHS, and brings novel ideas that could be of independent interest. We start with a discussion of the model and assumptions, with the main results presented in Section~\ref{subsec:main_results}.

We consider a 2-layer fully-connected neural network with ReLU activation function, where \(m\in\mathbb{N}\), the width of the hidden layer, is an even number for the antisymmetric initialization scheme to come later. Specifically, write \(\phi:\mathbb{R}\to\mathbb{R}\) for the ReLU function defined as \(\phi(z)=\max\{0,z\}\), and with a slight abuse of notation, write \(\phi:\mathbb{R}^m\to\mathbb{R}^m\) for the componentwise ReLU function.
% , \(\phi(\mathbf{z})=\phi((z_1,...,z_m)^\top)=(\phi(z_1),...,\phi(z_m))^\top\). 
% \shiva{Sprinkle references to subsections in Appendix C and Tables 1,2,3 in this section}
Denote by \(W\in\mathbb{R}^{m\times d}\) the weight matrix of the hidden layer, by \(\mathbf{w}_j\in\mathbb{R}^d,j=1,...,m\) the \(j^\text{th}\) neuron of the hidden layer and \(\mathbf{a}=(a_1,...,a_m)^\top\in\mathbb{R}^m\) the weights of the output layer. Then for \(\mathbf{x}=(x_1,...,x_d)^\top\in\mathbb{R}^d\), the output of the network is
\[f_W(\mathbf{x})=\frac{1}{\sqrt{m}}\mathbf{a}\cdot\phi\left(W\mathbf{x}\right)=\frac{1}{\sqrt{m}}\sum_{j=1}^ma_j\phi\left(\mathbf{w}_j\cdot\mathbf{x}\right)=\frac{1}{\sqrt{m}}\sum^m_{j=1}a_j\phi\left(\sum^d_{k=1}W_{jk}x_k\right).\]
% \shiva{Define $\phi'$}
We also define the \say{gradient} \(\phi'\) of the ReLU function by \(\phi'(z)=\mathbf{1}\{z>0\}\), and the \textit{gradient function} (see beginning of Appendix \ref{sec:ntk_theory}) \(G_W:\mathbb{R}^d\rightarrow\mathbb{R}^{m\times d}\) at \(W\) as
\[G_W(\mathbf{x})=\nabla_Wf_W(\mathbf{x})=\frac{1}{\sqrt{m}}\left(\mathbf{a}\odot\phi'(W\mathbf{x})\right)\mathbf{x}^\top.\]
In Appendix~\ref{sec:ntk_theory}, we discuss and develop the relevant parts of neural tangent kernel theory. In Table~\ref{tab:notation}, we collect all relevant notations introduced in  this part.

% \textcolor{red}{Are we not using antisymmetric initialization? If not then how is $f_0 = 0$?}
% The hidden layer weights are initialized by independent standard Gaussians, \([W(0)]_{j,k}\sim\mathcal{N}(0,1)\) for \(j=1,...,m\) and \(k=1,...,d\), i.e. for each \(j=1,...,m\), \(\mathbf{w}_j\in\mathbb{R}^d\), we have \(\mathbf{w}_j\sim\mathcal{N}(0,I_d)\). The output layer weights \(a_j,j=1,...,m\) are initialized from \(\text{Unif}\{-1,1\}\) and are kept fixed throughout training. This can be viewed as having a fixed function \(a:\mathbb{R}^d\rightarrow\{-1,1\}\) with \(a(\mathbf{w})\) for each \(\mathbf{w}\in\mathbb{R}^d\) drawn from \(\text{Unif}\{-1,1\}\) independently of each other. 
We now discuss the initialization of the weights, \(W(0)\in\mathbb{R}^{m\times d}\), or \(\mathbf{w}_j(0)\in\mathbb{R}^d,j=1,...,m\). The hidden layer weights are initialized by standard Gaussians. Recall that \(m\) is an even number; this was to facilitate the popular \textit{antisymmetric initialization trick}, e.g.,~\citep[Section 6]{zhang2020type},~\citep[Section 2.3]{bowman2022spectral}, and \citep[Eqn.\ (34) \& Remark 7(ii)]{montanari2022interpolation}. We provide details of this initialization in Appendix~\ref{subsec:initialization}. This initialization ensures that our network at initialization is exactly zero, i.e., \(f_{W(0)}(\mathbf{x})=0\) for all \(\mathbf{x}\in\mathbb{S}^{d-1}\). The output layer weights \(a_j,j=1,...,m\) are initialized from \(\text{Unif}\{-1,1\}\) and are kept fixed throughout training. This assumption of keeping output layer weights fixed is also quite standard in theoretical analysis of two-layer networks~\citep{wang2024benign,bartlett2021deep,montanari2022interpolation}. 
% So only the hidden layer weights are trained.

% Half of the hidden layer weights are initialized by independent standard Gaussians, namely, \([W(0)]_{j,k}\sim\mathcal{N}(0,1)\) for \(j=1,...,\frac{m}{2}\) and \(k=1,...,d\), i.e., for each \(j=1,...,\frac{m}{2}\), we have \(\mathbf{w}_j(0)\sim\mathcal{N}(0,I_d)\). Half of the output layer weights \(a_j,j=1,...,\frac{m}{2}\) are initialized from \(\text{Unif}\{-1,1\}\). Then, for \(j=\frac{m}{2}+1,...,m\), we let \(\mathbf{w}_j(0)=\mathbf{w}_{j-\frac{m}{2}}(0)\) and \(a_j=-a_{j-\frac{m}{2}}\). Then we define \(f_W=\frac{1}{\sqrt{2}}(f_{\mathbf{w}_1,...,\mathbf{w}_{m/2}}+f_{\mathbf{w}_{m/2+1},...,\mathbf{w}_m})\). This ensures that our network at initialization is exactly zero, i.e., \(f_{W(0)}(\mathbf{x})=0\) for all \(\mathbf{x}\in\mathbb{S}^{d-1}\). The output layer weights \(a_j,j=1,...,m\) are kept fixed throughout training, and only the hidden layer weights \(W\) are trained. 
% The hidden layer weights are initialized by independent standard Gaussians, \(\mathbf{w}_j(0)\sim\mathcal{N}(0,I_d)\) for \(j=1,...,m\), and the output layer weights \(a_j,j=1,...,m\) are initialized from \(\text{Unif}\{-1,1\}\) and are kept fixed throughout training. 
% More details provided in  Appendix \ref{subsec:initialization}.

We perform gradient flow with respect to both \(\mathbf{R}\) and \(R\) as follows. For \(t\geq0\), denote by \(W(t)\) and \(\hat{W}(t)\) the weight matrix at time \(t\) obtained by gradient flow with respect to \(R\) and \(\mathbf{R}\) respectively.\!\footnote{Note that we have no GF iterates on the RKHS, but rather, two neural network GF iterates, based on population and empirical risks. The population risk iterate is not computable and is used only for proof purposes.} They both start at random initialization \(W(0)\) and are updated as follows:
\[\frac{dW}{dt}=-\nabla_WR(f_{W(t)}),\qquad\frac{d\hat{W}}{dt}=-\nabla_W\mathbf{R}(f_{\hat{W}(t)}).\]
For more details about the gradient flow, see Appendix \ref{subsec:full_batch_gf} and Table \ref{tab:gradient_flow}. As a matter of notation, we denote \(f_t=f_{W(t)}\), \(\hat{f}_t=f_{\hat{W}(t)}\).
% \(\zeta_t=f^\star-f_t\), \(\hat{\boldsymbol{\xi}}_t=\mathbf{y}-\hat{\mathbf{f}}_t\), \(G_t=G_{W(t)}\) and \(\hat{G}_t=G_{\hat{W}(t)}\). 
% Clearly, \(\zeta_t\in L^2(\rho_{d-1})\) and \(\hat{\boldsymbol{\xi}}_t\in\mathbb{R}^n\). 
% We also briefly discuss some \textit{neural tangent kernel} theory that we need to state the assumptions and results, and again defer the details to Appendix \ref{sec:ntk_theory}.

We define the \textit{analytical NTK} \(\kappa:\mathbb{R}^d\times\mathbb{R}^d\rightarrow\mathbb{R}\) by \(\kappa(\mathbf{x},\mathbf{x}')=\mathbb{E}_{W\sim W(0)}[\langle G_W(\mathbf{x}),G_W(\mathbf{x}')\rangle_\text{F}]\). This kernel has an associated operator \(H:L^2(\rho_{d-1})\rightarrow L^2(\rho_{d-1})\), \(Hf(\cdot)=\mathbb{E}_\mathbf{x}[f(\mathbf{x})\kappa(\mathbf{x},\cdot)]\). We denote the eigenvalues and associated eigenfunctions of \(H\) as \(\lambda_1\geq\lambda_2\geq...\) and \(\varphi_l,l=1,2,...\). For an arbitrary \(L\in\mathbb{N}\) and a function \(f\in L^2(\rho_{d-1})\), we denote by the superscript \(L\) in \(f^L\) the projection of \(f\) onto the subspace of \(L^2(\rho_{d-1})\) spanned by the first \(L\) eigenfunctions \(\varphi_1,...,\varphi_L\), and we denote by \(\tilde{f}^L\) the projection of \(f\) onto the subspace of \(L^2(\rho_{d-1})\) spanned by the remaining eigenfunctions \(\varphi_{L+1},\varphi_{L+2},...\). Then we have
\[f^L\vcentcolon=\sum^L_{l=1}\langle f,\varphi_l\rangle_2\varphi_l,\quad\tilde{f}^L\vcentcolon=\sum^\infty_{l=L+1}\langle f,\varphi_l\rangle_2\varphi_l,\quad f=f^L+\tilde{f}^L,\quad\lVert f\rVert_2^2=\lVert f^L\rVert_2^2+\lVert\tilde{f}^L\rVert_2^2.\]
See Appendix \ref{subsec:spectral} and Table \ref{tab:eigenfunctions} for more details on these projections and decompositions. 

%\vspace*{-2ex}
\subsection{Assumptions on Parameters}\label{subsec:assumptions}
Recall that we defined \(\epsilon\) and \(\delta\) as the desired accuracy level and failure probability respectively. We define a few additional quantities. 

% Recall that we had \(\zeta_t=f^\star-f_t\), and so \(\zeta_0=f^\star-f_0=f^\star\) as \(f_0(\mathbf{x})=0\) for all \(\mathbf{x}\), due to our antisymmetric initialization. 
Since \(\lVert f^\star\rVert_2^2=\sum^\infty_{l=1}\langle f^\star,\varphi_l\rangle_2^2\) is a convergent series, there exists some \(L_\epsilon\in\mathbb{N}\) such that
\begin{equation}\label{eqn:lambda_epsilon}
    \lVert\tilde{f}^{\star L_\epsilon}\rVert_2=\left (\sum^\infty_{l=L_\epsilon+1}\langle f^\star,\varphi_l\rangle_2^2 \right )^{1/2}\leq\frac{\sqrt{\epsilon}}{4}.
\end{equation}
Define \(\lambda_\epsilon=\lambda_{L_\epsilon}\) as the \(L_\epsilon\)-th eigenvalue of \(H\). The duration for which gradient flow will be run is
\begin{equation}\label{eqn:T_epsilon}
    T_\epsilon=\frac{2}{\lambda_\epsilon}\log\left(\frac{2}{\sqrt{\epsilon}}\right).
\end{equation}
Finally, we define \(U_\epsilon\), needed to bound the estimation error, as the smallest integer \(U\) such that
\begin{equation}\label{eqn:U_epsilon}
    \frac{1}{U!}\left(\frac{8T_\epsilon}{d}\right)^U\leq\frac{\sqrt{\epsilon}}{14}.
\end{equation}
Note that \(U_\epsilon\) has to exist, since \(U!\) grows much faster than \(\left(\frac{8T_\epsilon}{d}\right)^U\). 

% We do not place any assumptions on \(f^\star\), which means that \(\lambda_\epsilon\) can be arbitrarily small. 
% \shiva{Check this: In practice, it is hard to estimate estimate $\lambda_\epsilon$ as it depends on $f^\star$. A practical rule of thumb could to set  $\lambda_\epsilon$ as $O(1/d)$. }

% \shiva{TODO: We need to reference Section C.3 and explain how $\lambda_\epsilon$ behaves. We need also to say a condition where $\lambda_\epsilon = \Theta(1/d)$.}

% The following sets of assumptions lay out the necessary relations between \(n\), \(m\), \(d\), \(\lambda_\epsilon\), \(T_\epsilon\) and \(U_\epsilon\) (c.f. eqns. (\ref{eqn:lambda_epsilon}), (\ref{eqn:T_epsilon}) \& (\ref{eqn:U_epsilon})) with respect to \(\epsilon\) and \(\delta\). Not all conditions are required for all the results, and we make it explicit which assumptions are required for each results. The estimation error result uses results from the proofs of the approximation and overfitting results, so the assumptions required for the latter two are also required for the estimation result. 
\begin{assumption}\label{ass:delta}
    Suppose that \(d\), \(n\), \(m\) and \(U_\epsilon\) satisfy the following relations with respect to \(\delta\). 
    \begin{enumerate}[(i)]
 %\vspace*{-2ex}
                \item\label{ass:delta_approximation} \(e^{-d}\leq\frac{\delta}{12}\).\hfill\textcolor{red}{(\(d\geq\Omega(\log(\frac{1}{\delta}))\))}
         %\vspace*{-1ex}
        \item\label{ass:delta_overfitting} \(n(\sqrt{2}e)^{-\frac{m}{40}}\leq\frac{\delta}{6}\) and \(\sqrt{n}-C\sqrt{d}\geq\frac{2}{\sqrt{5}}\sqrt{n}\).\hfill\textcolor{red}{(\(m-\log n\geq\Omega(\log(\frac{1}{\delta}))\) and \(n\geq\Omega(d)\))}
         %\vspace*{-1ex}
        \item\label{ass:delta_estimation} \(\frac{2U_\epsilon}{n}\leq\frac{\delta}{6}\)\hfill\textcolor{red}{(\(\frac{n}{U_\epsilon}\geq\Omega(\frac{1}{\delta})\))}
    \end{enumerate}
\end{assumption}
These assumptions connect key quantities to the failure probability 
$\delta$ and support the high-probability results in Appendix~\ref{sec:high_probability}. Assumption~\ref{ass:delta}\ref{ass:delta_approximation} applies to all results, Assumption~\ref{ass:delta}\ref{ass:delta_overfitting} to overfitting and estimation, and Assumption~\ref{ass:delta}\ref{ass:delta_estimation} to estimation only.
% Assumption~\ref{ass:delta}\ref{ass:delta_approximation} is the only requirement on \(d\), so \(d\) can be fixed with respect to this (mild) condition once and for all. The first condition in Assumption~\ref{ass:delta}\ref{ass:delta_overfitting} is the only requirement on the relationship between \(n\) and \(m\) but is a very mild one, and does not cause any inconsistencies with the other assumptions. The second condition in Assumption~\ref{ass:delta}\ref{ass:delta_overfitting} ensures that we are not in the high-dimensional setting. Assumption~\ref{ass:delta}\ref{ass:delta_estimation} requires \(n\) to be sufficiently large with respect to \(U_\epsilon\) and \(\delta\). 
\begin{assumption}\label{ass:nm}
    Suppose that \(n\) and \(m\) are sufficiently large with respect to \(d\), \(\epsilon\), \(\lambda_\epsilon\), \(T_\epsilon\) and \(U_\epsilon\), in the following sense.  
    \begin{enumerate}[(i)]
    %\vspace*{-2ex}
                \item \label{ass:overfitting_m} \(4(34+\sqrt{\log m})\sqrt{\frac{d}{m}}\leq\frac{1}{10}-\frac{1}{16}\).\hfill\textcolor{red}{(\(\frac{m}{\log m}\geq\Omega(d)\))}
                 %\vspace*{-1ex}
        \item\label{ass:approximation_m} \(\lambda_\epsilon\geq10\sqrt{\frac{\log(2m)}{md}}+\frac{2}{\sqrt{md^3}\lambda_\epsilon}(3\sqrt{2}+\sqrt{\log m})\).\hfill\textcolor{red}{(\(\frac{m}{\log m}\geq\Omega(\frac{1}{d^3\lambda_\epsilon^4})\))}
         %\vspace*{-1ex}
        \item\label{ass:estimation_n} \(\frac{8}{\sqrt{d}}\sum^{U_\epsilon}_{u=1}\frac{(2T_\epsilon)^u}{u!}\sqrt{\frac{\log(nu)}{\lfloor\frac{n}{u}\rfloor}}\leq\frac{1}{14}\sqrt{\epsilon}\).\hfill\textcolor{red}{(\(\frac{n}{\log n}\geq\Omega\left(\frac{U_\epsilon^2(2T_\epsilon)^{4T_\epsilon+1}\log(2T_\epsilon)}{d\epsilon((2T_\epsilon)!)^2}\right)\))}
         %\vspace*{-1ex}
        \item\label{ass:estimation_m} \(\frac{6+\sqrt{2\log m}}{\sqrt{md}\lambda_\epsilon}\sum^{U_\epsilon}_{u=2}\frac{T_\epsilon^u}{u!d^u}\leq\frac{1}{14}\sqrt{\epsilon}\).\hfill\textcolor{red}{(\(\frac{m}{\log m}\geq\Omega\left(\frac{U_\epsilon^2(\frac{T_\epsilon}{d})^{2T_\epsilon/d}}{d\epsilon((\frac{T_\epsilon}{d})!)^2\lambda_\epsilon^2}\right)\))}
        % \item\label{ass:estimation_c} \(\frac{6+\sqrt{2\log m}}{\sqrt{md}\lambda_\epsilon}\sum^U_{u=2}\frac{T_\epsilon^u}{u!d^u}\leq\frac{\epsilon}{14}\)\hfill\textcolor{red}{(\(\sqrt{m}u!\gg\left(\frac{T_\epsilon}{d}\right)^u\frac{\sqrt{d}}{\lambda_\epsilon}, \forall u \in [U]\))}
        % \item\label{ass:estimation_d} \(\frac{\sqrt{d}(34+\sqrt{\log m})}{2\sqrt{m}}\sum^U_{u=2}\frac{(8T_\epsilon)^u}{u!d^u}\leq\frac{\epsilon}{14}\)\hfill\textcolor{red}{(\((md)^{1/4}u!\gg\left(\frac{T_\epsilon}{d}\right)^u, \forall u \in [U]\))}
        % \item\label{ass:estimation_e} \(\frac{4T_\epsilon\sqrt{34+\sqrt{\log m}}}{(md^3)^{1/4}}\leq\frac{\epsilon}{14}\)\hfill\textcolor{red}{(\(\frac{m}{d}\gg\left(\frac{T_\epsilon}{d}\right)^4\))}
        % \item\label{ass:estimation_f} \(\frac{\sqrt{2}T_\epsilon\sqrt{3\sqrt{2}+\sqrt{\log m}}}{(md^5)^{1/4}\sqrt{\lambda_\epsilon}}\leq\frac{\epsilon}{14}\)\hfill\textcolor{red}{(\(\frac{m\lambda_\epsilon^2}{d}\gg\left(\frac{T_\epsilon}{d}\right)^4\))}
    \end{enumerate}
     %\vspace*{-1ex}
\end{assumption}
Assumptions~\ref{ass:nm}\ref{ass:overfitting_m} \& \ref{ass:nm}\ref{ass:approximation_m} are the minimum width of the network required for the overfitting and approximation results respectively. Assumption~\ref{ass:nm}\ref{ass:estimation_n} is the sample complexity required for estimation error, and is a sufficient condition for \ref{ass:delta_overfitting}. Assumption~\ref{ass:nm}\ref{ass:estimation_m} is a condition on the width of the network \(m\) required for the proof of the estimation error result, and is a sufficient condition for Assumptions~\ref{ass:nm}\ref{ass:overfitting_m} and~\ref{ass:nm}\ref{ass:approximation_m}. 

\paragraph{Consistency of the Assumptions.} From fixed \(\epsilon\) and \(\delta\), we start by choosing \(d\) to satisfy Assumption~\ref{ass:delta}\ref{ass:delta_approximation}. Note that we just require the $d = \Omega(\log(1/\delta))$. Then choose \(\lambda_\epsilon\), \(T_\epsilon\) and \(U_\epsilon\) (which implicitly depend on \(d\)). Finally, we choose \(n\) and \(m\) to satisfy the remaining conditions in Assumptions~\ref{ass:delta} \& \ref{ass:nm}. 
% Assumption~\ref{ass:delta}\ref{ass:delta_approximation}. From the value of \(\epsilon\), we also fix \(\lambda_\epsilon\), \(T_\epsilon\) and \(U_\epsilon\) according to Eqns.\  (\ref{eqn:lambda_epsilon}), (\ref{eqn:T_epsilon}) and (\ref{eqn:U_epsilon})\footnote{Note that \(\lambda_\epsilon\), \(T_\epsilon\) and \(U_\epsilon\) only depend on \(\epsilon\) and \(d\), and is independent of \(n\) and \(m\).}. Finally, we fix \(n\) and \(m\) to be sufficiently large to satisfy Assumptions~\ref{ass:delta}\ref{ass:delta_overfitting} \& \ref{ass:delta_estimation} and Assumption~\ref{ass:nm}. Since Assumptions~\ref{ass:delta}\ref{ass:delta_overfitting} \& \ref{ass:delta_estimation} and Assumption~\ref{ass:nm} are only lower bounds on \(n\) and \(m\), 
% only have dependence on $n$ or $m$, 
% it is clear that there are no inconsistencies or contradictions between these assumptions. 
% \shiva{Need to say/convince, that \(\lambda_\epsilon\), \(T_\epsilon\) and \(U_\epsilon\) are only dependent on $d$ and not on $n$ and $m$}
While our results holds for any \(f^\star\), a point to keep in mind is that without further assumptions, \(\lambda_\epsilon\) can be arbitrarily small, leading to arbitrarily large \(T_\epsilon\) and \(U_\epsilon\), which in turn would require \(n\) and \(m\) to be arbitrarily large to ensure our results hold, in accordance with the no free lunch principle.

\paragraph{Simplifying the Assumptions.} We note that for particular classes of \(f^\star\), we can simplify the above assumptions. For example, if we assume that \(\lVert\tilde{f}^{\star d}\rVert_2\leq\frac{1}{4}\sqrt{\epsilon}\) (i.e., most of \(f^\star\) is concentrated on the first \(d\) eigenfunctions of \(H\)), then we have particularly nice properties. From Appendix~\ref{subsec:spectral}, we know that \(\lambda_\epsilon=\frac{1}{4d}\), and hence \(T_\epsilon=8d\log(\frac{2}{\sqrt{\epsilon}})\) and \(U_\epsilon\) will be in the order of \(\log(\frac{1}{\sqrt{\epsilon}})\). This would in turn imply that the network width required for approximation (Assumption~\ref{ass:nm}\ref{ass:approximation_m}) would be \(\frac{m}{\log m}\geq\Omega(d)\), the same as the width required for overfitting (Assumption~\ref{ass:nm}\ref{ass:overfitting_m}). Moreover, using \(d\geq\Omega(\log(\frac{1}{\delta}))\), the sample complexity required for estimation in Assumption~\ref{ass:nm}\ref{ass:estimation_n} would be, hiding logarithmic terms, \(n\geq\tilde{\Omega}(\frac{1}{\epsilon(\sqrt{\epsilon}\delta)^{\log\log(1/(\sqrt{\epsilon}\delta))}})\), which is essentially polynomial in \(1/\epsilon\) and \(1/\delta\). Finally, the network width required for estimation in Assumption~\ref{ass:nm}\ref{ass:estimation_m} would be, again hiding logarithmic terms, \(m\geq\Omega(\frac{1}{\epsilon^{\log\log(1/\sqrt{\epsilon})+1}})\). Finally, we expect that a more refined analysis could reduce this dependence.

\subsection{Establishing Benign Overfitting}\label{subsec:main_results}
%\vspace*{-1ex}
Our main idea is to view gradient flow as implicit regularization. Denote by \(\hat{f}_t\) the neural network obtained by running gradient flow for \(t\) amount of time on the empirical risk \(\mathbf{R}\), and by \(f_t\) the network obtained from gradient flow on the population risk \(R\).\!\footnote{Note that we can't construct \(f_t\) as we do not have access to population risk. This quantity is only used for theoretical analysis.} Then we analyze the excess risk of \(\hat{f}_t\) using the decomposition,
\begin{equation}\label{eqn:decomp}
    \lVert\hat{f}_t-f^\star\rVert_2\leq\underbrace{\lVert\hat{f}_t-f_t\rVert_2}_{\text{estimation error}}+\underbrace{\lVert f_t-f^\star\rVert_2}_{\text{approximation error}}. 
\end{equation}
Our technical novelty comes in terms of introducing this approximation-estimation decomposition of the gradient flow trajectory. We initiate the study of the population risk gradient flow trajectory \(f_t\) of the finite-width network, both in terms of how it approximates the regression function and how it deviates from the empirical trajectory \(\hat{f}_t\). Our results do not rely on any uniform convergence over the function class or the parameter space, therefore, the bounds do not deteriorate with more parameters.  
% In this section, we give precise statements of our main results, as well as sketches of their proofs. 
% We first focus on excess risk. 

% We stress that, to the best of our knowledge, our work is the first to consider the approximation-
% estimation error decomposition of the excess risk by viewing the gradient-based optimization algo-
% rithm as an implicit regularizer.

\paragraph{Overfitting.} We first state the overfitting result. A crucial requirement for establishing benign overfitting is that all our results must hold on the same high-probability event, under a common set of assumptions.  The proof is in Appendix~\ref{sec:overfitting}.
% \!\footnote {Even though, as discussed in Section~\ref{subsec:related_works} and Appendix~\ref{sec:related_works_appendix}, there are now many results that under various settings show the convergence to the global minimum of overparameterized networks under gradient flow/descent, we re-establish it because we want all our results to hold \textit{on the same high probability event}, under the same set of assumptions.} 
\begin{restatable}[Overfitting]{theorem}{overfitting}\label{thm:overfitting_main}
   If Assumptions~\ref{ass:delta}\ref{ass:delta_approximation} \& \ref{ass:delta_overfitting} and \ref{ass:nm}\ref{ass:overfitting_m} are satisfied, there is an event with probability at least \(1-\delta\) on which \(\mathbf{R} (\hat{f}_t)\leq e^{-t/4d}\). Moreover, at time \(t=T_\epsilon\), we have \(\mathbf{R}(\hat{f}_{T_\epsilon})\leq\epsilon\).
\end{restatable}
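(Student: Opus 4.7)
The plan is to follow the NTK paradigm: express the gradient flow dynamics of the empirical risk as an ODE governed by the time-dependent empirical NTK Gram matrix \(\hat H_t\in\mathbb{R}^{n\times n}\) with entries \((\hat H_t)_{ij}=\langle G_{\hat W(t)}(\mathbf{x}_i),G_{\hat W(t)}(\mathbf{x}_j)\rangle_{\mathrm F}\), and then show that \(\lambda_{\min}(\hat H_t)=\Omega(n/d)\) uniformly in \(t\). Writing \(\hat{\mathbf{f}}_t=(\hat f_t(\mathbf{x}_1),\ldots,\hat f_t(\mathbf{x}_n))^\top\), the chain rule gives \(\tfrac{d}{dt}\hat{\mathbf{f}}_t=-\tfrac{2}{n}\hat H_t(\hat{\mathbf{f}}_t-\mathbf{y})\), so
\[
\frac{d}{dt}\mathbf{R}(\hat f_t)=-\frac{4}{n^2}(\hat{\mathbf{f}}_t-\mathbf{y})^\top\hat H_t(\hat{\mathbf{f}}_t-\mathbf{y})\;\leq\;-\frac{4\lambda_{\min}(\hat H_t)}{n}\,\mathbf{R}(\hat f_t).
\]
The antisymmetric initialization forces \(f_{W(0)}\equiv 0\), so \(\mathbf{R}(\hat f_0)=\tfrac 1n\|\mathbf{y}\|^2\leq 1\) since \(|y_i|\leq 1\) almost surely. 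Hence the goal reduces to producing an event of probability at least \(1-\delta\) on which \(\lambda_{\min}(\hat H_t)\geq n/(16d)\) for every \(t\geq 0\); Gr\"onwall's inequality then yields \(\mathbf{R}(\hat f_t)\leq e^{-t/(4d)}\), and the ``moreover'' statement is an algebraic plug-in of \(t=T_\epsilon=(2/\lambda_\epsilon)\log(2/\sqrt\epsilon)\), using \(\lambda_\epsilon\leq 1/(4d)\) (a consequence of the spherical-harmonic spectral structure of the NTK on \(\mathbb{S}^{d-1}\) invoked in the ``Simplifying the Assumptions'' discussion: in the simplified case this gives exactly \(e^{-T_\epsilon/(4d)}=\epsilon/4\)).

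For the initial eigenvalue I would proceed in two steps. First, each \((\hat H_0)_{ij}\) is an average over \(m\) i.i.d.\ bounded hidden neurons with expectation \(\kappa(\mathbf{x}_i,\mathbf{x}_j)\), so a matrix-Bernstein / operator-norm concentration yields \(\|\hat H_0-K\|_{\mathrm{op}}\leq n/(16d)\), where \(K_{ij}=\kappa(\mathbf{x}_i,\mathbf{x}_j)\); the failure probability \(n(\sqrt 2 e)^{-m/40}\) is absorbed into \(\delta\) via Assumption~\ref{ass:delta}\ref{ass:delta_overfitting} and the quantitative \(\sqrt{d/m}\) rate uses Assumption~\ref{ass:nm}\ref{ass:overfitting_m}. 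Second, for the analytical Gram matrix I would use the Gegenbauer (spherical-harmonic) decomposition of the NTK together with the singular-value lower bound \(\sigma_{\min}(X)\geq \tfrac{2}{\sqrt 5}\sqrt n\) supplied by Assumption~\ref{ass:delta}\ref{ass:delta_overfitting} (valid for uniform rows on \(\mathbb{S}^{d-1}\) up to probability \(e^{-d}\), which Assumption~\ref{ass:delta}\ref{ass:delta_approximation} absorbs into \(\delta\)), to obtain \(\lambda_{\min}(K)\geq n/(4d)\). Combining the two gives \(\lambda_{\min}(\hat H_0)\geq n/(8d)\) on the good event.

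The main obstacle is propagating the eigenvalue bound to all \(t\geq 0\), which I would attack with a bootstrap argument. Let \(t^\star=\inf\{t\geq 0:\lambda_{\min}(\hat H_t)<n/(16d)\}\) and suppose toward contradiction that \(t^\star<\infty\). On \([0,t^\star)\) the ODE above yields \(\mathbf{R}(\hat f_t)\leq e^{-t/(4d)}\), and Cauchy--Schwarz combined with the fact that \(\|G_W(\mathbf{x})\|_{\mathrm F}=O(1/\sqrt m)\) gives \(\|d\hat W/dt\|_{\mathrm F}\lesssim \sqrt{\mathbf{R}(\hat f_t)/m}\); integrating in time produces the uniform displacement bound \(\|\hat W(t)-\hat W(0)\|_{\mathrm F}=O(\sqrt{d/m})\) for all \(t<t^\star\). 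The crux is then to upgrade this into the operator-norm perturbation \(\|\hat H_t-\hat H_0\|_{\mathrm{op}}<n/(16d)\), and this is the true technical hurdle because \(\phi'\) is discontinuous, so a naive Lipschitz bound on \(W\mapsto \hat H(W)\) fails. I would instead bound, for each pair \((i,j)\), the fraction of hidden neurons whose activation pattern \(\phi'(\mathbf{w}_{j'}\cdot\mathbf{x}_i)\) has flipped relative to initialization, using Gaussian anticoncentration of \(\mathbf{w}_{j'}(0)\) near the hyperplane \(\mathbf{x}_i^\perp\) together with the \(O(\sqrt{d/m})\) movement; Assumption~\ref{ass:nm}\ref{ass:overfitting_m} provides exactly the slack \(\sqrt{d/m}\ll 1\) needed to close this coupled loop. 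Continuity of \(t\mapsto\lambda_{\min}(\hat H_t)\) at \(t^\star\) then contradicts the definition of \(t^\star\), yielding the global eigenvalue bound and hence the theorem.
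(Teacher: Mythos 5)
Your strategy is the paper's: write the residual ODE $\frac{d}{dt}\hat{\mathbf f}_t=-\tfrac{2}{n}\hat H_t(\hat{\mathbf f}_t-\mathbf y)$, lower-bound $\lambda_{\min}(\hat H_t)$, apply Gr\"onwall, and close the loop by a bootstrap controlling displacement and activation flips. But the step that closes your bootstrap, as you state it, does not hold. The matrix $\hat H_t=\tfrac{1}{m}(XX^\top)\odot\bigl(\phi'(X\hat W(t)^\top)\phi'(\hat W(t)X^\top)\bigr)$ is \emph{piecewise constant} in $t$, because $\phi'$ is an indicator: each entry jumps whenever some neuron crosses one of the hyperplanes $\{\mathbf x_i^\perp\}$. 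Hence $t\mapsto\lambda_{\min}(\hat H_t)$ is not continuous, and ``continuity at $t^\star$'' cannot deliver the contradiction. The paper avoids exactly this pitfall by running real induction over the set $\hat S$ of times $t$ at which the \emph{per-neuron displacement} $\lVert\hat{\mathbf w}_j(t)-\mathbf w_j(0)\rVert_2<32\sqrt{d/m}$ for all $j$, which \emph{is} continuous in $t$ (an integral of a bounded integrand); the eigenvalue lower bound $\hat{\boldsymbol\lambda}_{t,\min}\geq n/(16d)$ is then a \emph{consequence} of $t\in\hat S$ (Lemma~\ref{lem:overfitting}), not the quantity bootstrapped. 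The inductive step closes because the displacement integral $\tfrac{4}{\sqrt{md}}\int_0^T e^{-t/8d}\,dt$ is strictly below $32\sqrt{d/m}$. Recasting your $t^\star$ as an infimum over the displacement condition rather than the eigenvalue condition repairs the argument.

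A second slip concerns the displacement bookkeeping. $\lVert G_W(\mathbf x)\rVert_{\mathrm F}$ is $O(1)$, not $O(1/\sqrt m)$ --- it is the per-neuron gradient $\lVert G_{\mathbf w_j}(\mathbf x)\rVert_2$ that is $O(1/\sqrt m)$. Correspondingly, the quantity you need (and which the flip-count anticoncentration actually uses) is the \emph{per-neuron} displacement $\max_j\lVert\hat{\mathbf w}_j(t)-\mathbf w_j(0)\rVert_2=O(\sqrt{d/m})$, not $\lVert\hat W(t)-\hat W(0)\rVert_{\mathrm F}=O(\sqrt{d/m})$, which is off by a factor of $\sqrt m$ (the latter is really of order $\sqrt d$). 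The paper obtains the sharp per-neuron rate from the spectral-norm bound $\lVert\mathbf G_{\mathbf w_j}\rVert_2\leq 2\sqrt{n/(md)}$ (Lemma~\ref{lem:probability_samples}\ref{spectralnorm}) rather than a pointwise Frobenius bound on $G_W(\mathbf x)$. With these two fixes your outline becomes the paper's proof.
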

The proof outline of this result is by now somewhat standard recipe in the NTK literature, by lower-bounding the minimum eigenvalue of the NTK matrix uniformly over time with high probability, and applying Gr\"onwall's inequality. Also worth noting is that our analysis of the empirical risk can also easily be extended to gradient descent, instead of gradient flow.

% We first state the overfitting result. The proof is in Appendix~\ref{sec:overfitting}. 
% \begin{restatable}[Overfitting]{theorem}{overfitting}\label{thm:overfitting_main}
%    If Assumptions~\ref{ass:delta}\ref{ass:delta_approximation} \& \ref{ass:delta_overfitting} and \ref{ass:nm}\ref{ass:overfitting_m} are satisfied, there is an event with probability at least \(1-\delta\) on which \(\mathbf{R} (\hat{f}_t)\leq e^{-t/4d}\). Moreover, at time \(t=T_\epsilon\), we have \(\mathbf{R}(\hat{f}_{T_\epsilon})\leq\epsilon\).
% \end{restatable}
% The proof of this result is by now rather standard in the NTK literature, and we broadly follow the same outline, by lower-bounding the minimum eigenvalue of the NTK matrix uniformly over time with high probability, and applying Gr\"onwall's inequality. 

% We now turn our attention to the excess risk. Similarly as in Section~\ref{sec:krr} for KRR, we start by decomposing our excess risk into approximation and estimation errors, as follows:
% \begin{equation}\label{eqn:decomp}
%     \lVert f^\star-\hat{f}_t\rVert_2\leq\underbrace{\lVert f^\star-f_t\rVert_2}_{\text{approximation error}}+\underbrace{\lVert f_t-\hat{f}_t\rVert_2}_{\text{estimation error}}. 
% \end{equation}
% We stress that, to the best of our knowledge, our work is the first to consider the approximation-estimation error decomposition of the excess risk by viewing the gradient-based optimization algorithm as an implicit regularizer. 

\paragraph{Bounding Approximation Error.}\label{subsec:approximation} 
 Under no other assumption on the underlying true regression function than the fact that it is essentially bounded (\ref{ass:f^*bound}), we first show that we can find a width \(m\) of the network such that, if we run gradient flow for \(T_\epsilon\) (as defined in (\ref{eqn:T_epsilon})), then the approximation error becomes vanishingly small. Note that approximation error has no dependence on the samples. 
 % Note also that, since we do not impose any assumptions on \(f^\star\), it is clear that \(m\) and \(T_\epsilon\) may have to be arbitrarily large by the no-free-lunch principle. 
The full proof is in Appendix~\ref{sec:approximation},. 
\begin{restatable}[Approximation Error]{theorem}{approximation}\label{thm:approximation_main}
    Suppose that Assumptions~\ref{ass:delta}\ref{ass:delta_approximation} and \ref{ass:nm}\ref{ass:approximation_m} are satisfied. Then, on the same event as in Theorem~\ref{thm:overfitting_main}, we have, for \(t\in[0,T_\epsilon]\),
    \(\lVert f_t-f^\star\rVert_2\leq\exp\left(-{\lambda_\epsilon t/2}\right)\). Moreover, at time \(t=T_\epsilon\), we have \(\lVert f_t-f^\star\rVert_2\leq \sqrt{\epsilon}/2\).
\end{restatable}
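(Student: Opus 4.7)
The plan is to view the population-risk gradient flow as a function-space dynamics driven by a time-varying kernel operator, and to show that this operator stays uniformly close (in operator norm on $L^2(\rho_{d-1})$) to the analytical NTK $H$ along the whole window $[0,T_\epsilon]$. Contraction on the top-$L_\epsilon$ eigenspace of $H$ then drives the bulk of $f_t-f^\star$ down exponentially at rate $\Theta(\lambda_\epsilon)$, while the orthogonal tail is controlled at initialization by the defining property \eqref{eqn:lambda_epsilon} of $L_\epsilon$.

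\textbf{Step 1: Derive the functional ODE.} First I would differentiate $f_t(\mathbf{x}')$ in $t$ via the chain rule and the gradient-flow equation for $W(t)$. Using $\nabla_W R(f_W) = 2\,\mathbb{E}_\mathbf{x}[(f_W(\mathbf{x})-f^\star(\mathbf{x}))\,G_W(\mathbf{x})]$ (the tower property eliminates $\xi^\star$) and $\nabla_W f_W(\mathbf{x}') = G_W(\mathbf{x}')$, this gives
\[\tfrac{d}{dt}\bigl(f_t-f^\star\bigr)(\mathbf{x}') \;=\; -\,2\,H_t\!\bigl(f_t-f^\star\bigr)(\mathbf{x}'),\]
where $H_t:L^2(\rho_{d-1})\to L^2(\rho_{d-1})$ is the integral operator of the current-weight empirical NTK $K_t(\mathbf{x},\mathbf{x}') = \langle G_{W(t)}(\mathbf{x}),G_{W(t)}(\mathbf{x}')\rangle_{\mathrm{F}}$. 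By antisymmetric initialization, $f_0\equiv 0$, so the initial condition is $f_0-f^\star=-f^\star$.

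\textbf{Step 2: Operator perturbation along the trajectory.} I would show $\|H_t - H\|_{\mathrm{op}} \leq \lambda_\epsilon/2$ uniformly in $t\in[0,T_\epsilon]$ by splitting $\|H_t-H\|_{\mathrm{op}} \leq \|H_0-H\|_{\mathrm{op}} + \|H_t-H_0\|_{\mathrm{op}}$. For the first piece, I would use the Hadamard-product representation $K_0(\mathbf{x},\mathbf{x}') = (\mathbf{x}\cdot\mathbf{x}')\cdot\frac{1}{m}\sum_{j=1}^m \phi'(\mathbf{w}_j(0){\cdot}\mathbf{x})\phi'(\mathbf{w}_j(0){\cdot}\mathbf{x}')$ together with the extension of the Hadamard-product operator-norm bound to integral operators advertised in the introduction; this yields a bound of order $\sqrt{\log(2m)/(md)}$, matching the first summand of Assumption~\ref{ass:nm}\ref{ass:approximation_m}. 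For the second piece I would run a bootstrap: tentatively assume $\|H_s - H\|_{\mathrm{op}}\leq\lambda_\epsilon/2$ for $s\in[0,t]$; plug into the ODE of Step~1 together with $\|f^\star\|_2\leq 1$ to bound $\|f_s-f^\star\|_2$ by $1$; then $\|\tfrac{dW}{ds}\|_{\mathrm{F}}\leq 2\,\mathbb{E}_\mathbf{x}[|f_s-f^\star|\cdot\|G_{W(s)}\|_{\mathrm{F}}]$ is uniformly bounded; integrating yields a Frobenius drift $\|W(t)-W(0)\|_{\mathrm{F}}$ of order $T_\epsilon/\sqrt{m}$; finally, a smoothed Lipschitz estimate of $W\mapsto G_W$ (averaging the discontinuous $\phi'$ against $\rho_{d-1}$ to regularize it) translates this into the second summand of Assumption~\ref{ass:nm}\ref{ass:approximation_m}. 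The bootstrap closes because that assumption is a strict inequality.

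\textbf{Step 3: Spectral decomposition and the main obstacle.} Decompose $f_t-f^\star$ along the eigenbasis $\{\varphi_l\}$ of $H$. On the top-$L_\epsilon$ subspace, the $(\lambda_\epsilon/2)$-closeness of $H_t$ to $H$ yields $\langle f_t-f^\star, H_t(f_t-f^\star)\rangle \geq (\lambda_\epsilon/2)\|(f_t-f^\star)^{L_\epsilon}\|_2^2$ up to a tail term absorbed by the perturbation; Gr\"onwall on $\tfrac{d}{dt}\|(f_t-f^\star)^{L_\epsilon}\|_2^2$ gives the contraction. On the complement, the initial tail $\|\widetilde{(f_0-f^\star)}^{L_\epsilon}\|_2 = \|\tilde f^{\star L_\epsilon}\|_2 \leq \sqrt\epsilon/4$ is already small by \eqref{eqn:lambda_epsilon}, and positive-semidefiniteness of $H_t$ (modulo the $\lambda_\epsilon/2$ perturbation) prevents it from blowing up. Combining and using $\|f^\star\|_2\leq 1$ gives $\|f_t-f^\star\|_2 \leq \exp(-\lambda_\epsilon t/2)$ for all $t\in[0,T_\epsilon]$ on the event of Theorem~\ref{thm:overfitting_main}; plugging in $T_\epsilon = \tfrac{2}{\lambda_\epsilon}\log(2/\sqrt\epsilon)$ from \eqref{eqn:T_epsilon} yields the stated $\sqrt\epsilon/2$. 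The hard part is Step~2: the window length $T_\epsilon = \Theta(\lambda_\epsilon^{-1}\log(1/\epsilon))$ amplifies any weight drift, and the drift-to-kernel passage is delicate because $\phi'$ is discontinuous, forcing one to work post-expectation; the factor $\lambda_\epsilon^{-2}$ in Assumption~\ref{ass:nm}\ref{ass:approximation_m} is precisely the price paid for closing this bootstrap, and lifting Hadamard-product concentration from matrices to the $L^2(\rho_{d-1})$ integral-operator norm is the other novel ingredient.
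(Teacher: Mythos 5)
Your Steps~1 and~3 identify the right structure — the functional ODE $\tfrac{d}{dt}\zeta_t=-2H_t\zeta_t$, the split $\|H_t-H\|\le\|H_0-H\|+\|H_t-H_0\|$, the use of Lemma~\ref{lem:schur} to lift Hadamard-product bounds to integral operators, the top-$L_\epsilon$/tail split of $\zeta_t$, and Gr\"onwall — and they match the paper in spirit. The paper proceeds by real induction on the set $S_\epsilon=\{t:\|\mathbf{w}_j(t)-\mathbf{w}_j(0)\|_2<\tfrac{2\sqrt2}{\lambda_\epsilon\sqrt{md}}\ \forall j\}$ up to a stopping time $T'_\epsilon$ (first time the top subspace no longer dominates), which is the rigorous version of your bootstrap; that difference is cosmetic.

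\textbf{Where the proposal has a genuine gap.} In Step~2 you bound the weight drift via the trivial estimate $\|f_s-f^\star\|_2\le 1$, arriving at a per-neuron drift of order $T_\epsilon/\sqrt{m}$. This is too loose to close under Assumption~\ref{ass:nm}\ref{ass:approximation_m} as written. Two effects are lost. First, you miss the sharper per-step bound: $\|\nabla_{\mathbf{w}_j}R_t\|_2 = 2\sqrt{\langle\zeta_t,H_{\mathbf{w}_j(t)}\zeta_t\rangle_2}\le\sqrt{2/(md)}\,\|\zeta_t\|_2$, which exploits $\|H_{\mathbf{w}_j}\|_2\le\tfrac{1}{2md}$ coming from the spherical data distribution (Funk--Hecke applied to the inner-product kernel); bounding $\mathbb{E}_\mathbf{x}[|\zeta_t|\|\mathbf{x}\|]$ by Cauchy--Schwarz without this costs a $\sqrt{d}$. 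Second, and more importantly, you integrate the constant bound $\|\zeta_s\|_2\le 1$ over $[0,T_\epsilon]$, getting a $T_\epsilon$ prefactor; the paper's inductive hypothesis instead yields $\|\zeta_s\|_2\le e^{-\lambda_\epsilon s/2}$, so $\int_0^{T}e^{-\lambda_\epsilon s/2}\,ds\le 2/\lambda_\epsilon$ with \emph{no} dependence on $T_\epsilon$. Together these mean the paper's per-neuron drift is $\tfrac{2\sqrt2}{\lambda_\epsilon\sqrt{md}}$ rather than your $\Theta(T_\epsilon/\sqrt{m})=\Theta\bigl(\tfrac{\log(1/\epsilon)}{\lambda_\epsilon\sqrt m}\bigr)$; your radius is larger by $\Theta(\sqrt d\log(1/\epsilon))$, which inflates the flipped-neuron count (Lemma~\ref{lem:probability_weights}\ref{neurons_zero_approximation}) and consequently $\|H_t-H_0\|_2$ beyond what Assumption~\ref{ass:nm}\ref{ass:approximation_m} absorbs. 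The bootstrap therefore does not close: you must feed the contraction you are proving back into the drift integral, which is exactly what the real-induction set $S_\epsilon$ is for.

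\textbf{A second, smaller gap in Step~3.} Controlling the top block by $e^{-\lambda_\epsilon t/2}$-contraction and the tail block by a bound of size $\sqrt\epsilon/4$ yields $\|\zeta_t\|_2\le\sqrt{e^{-\lambda_\epsilon t}\|\zeta_0^{L_\epsilon}\|_2^2+\|\tilde\zeta_0^{L_\epsilon}\|_2^2}$, which is \emph{not} $\le e^{-\lambda_\epsilon t/2}$ for all $t\in[0,T_\epsilon]$ (the tail term persists). The paper instead runs Gr\"onwall directly on the full norm $\|\zeta_t\|_2$: as long as $t\le T'_\epsilon$ the bulk dominates, so $4\langle\zeta_t,H\zeta_t\rangle_2\ge 4\lambda_\epsilon\|\zeta_t^{L_\epsilon}\|_2^2\ge 2\lambda_\epsilon\|\zeta_t\|_2^2$, giving the contraction of the whole norm; it then shows the monotonicity of $\|\tilde\zeta_t^{L_\epsilon}\|_2$ to conclude $T_\epsilon\le T'_\epsilon$ by a dichotomy (either still bulk-dominated, or already below $\sqrt\epsilon/2$). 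You need something like this stopping-time device; the block-diagonal argument as stated yields a weaker conclusion. Finally, for $\|H_t-H_0\|$ the paper counts flipped neurons with a VC-style high-probability bound (Lemma~\ref{lem:probability_weights}\ref{neurons_zero_approximation}) rather than a ``smoothed Lipschitz'' estimate; the latter is not obviously compatible with the exact form of the second summand of Assumption~\ref{ass:nm}\ref{ass:approximation_m}, so if you want to keep your route you should make that translation precise.
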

% The proof is valid only for \(t\leq T'_\epsilon\), so our theory does not tell us anything about what happens if we run gradient flow beyond \(T'_\epsilon\). 
The proof follows a similar outline as the overfitting proof, with the empirical risk \(\mathbf{R}\) replaced by the population risk, \(R\). However, this provides significant challenges, as the NTK Gram matrices are replaced by the NTK operators, and unlike the eigenvalues of the NTK Gram matrices, which can be lower-bounded uniformly over time, the NTK operators have infinitely many eigenvalues that converge to zero. To overcome this issue, we find the eigenspace of \(L^2(\rho_{d-1})\) based on \(\epsilon\) in which \say{most} (all but \(\sqrt{\epsilon}/4\) of the norm, to be specific) of \(f^\star\) lives in, spanned by the top \(L_\epsilon\) eigenfunctions of \(H\). In this subspace, we show that \(\lVert f_t-f^\star\rVert_2\) can be shown to decay exponentially until it is below \(\sqrt{\epsilon}/2\), treating \(\lambda_\epsilon\) essentially as the minimum eigenvalue, while ensuring that the component of \(f^\star\) in the complement does not grow beyond \(\epsilon/4\). 

On the technical side, additional hurdles had to be overcome. The concentration of the NTK operator at initialization to the analytic NTK operator is a much more difficult task than the analogous concentration of NTK matrices, since these objects live in the Banach space of operators rather than Euclidean spaces. Much of the work for this is done in Lemma \ref{lem:probability_weights}\ref{H_0H}, where we used rather laborious VC-theory arguments. Along the gradient flow trajectory, the key result was Lemma~\ref{lem:schur}, which extends a bound on the spectral norm of Hadamard product of matrices (\ref{eqn:schur}) to analogous integral operators. This is a novel result that could be of independent interest. 

% \paragraph{Bounding Estimation Error.} We now show that, for the time \(T_\epsilon\) required to reach vanishingly small approximation error, we can find a sample size \(n\) and network width \(m\) large enough to ensure small estimation error, \(\lVert\hat{f}_{T_\epsilon}-f_{T_\epsilon}\rVert_2\leq\epsilon/2\). 
% \shiva{We need to be explicit in stating that the estimation error does not decrease with $T$, unlike generally needed for previous benign overfitting result. In general, we need to mention this fact more in the paper.}
% Next, we bound the estimation error; the full proof is provided in Appendix~\ref{sec:estimation}. 
\paragraph{Bounding Estimation Error.} We show that, for the network width \(m\) and the time \(T_\epsilon\) (given in (\ref{eqn:T_epsilon})) required to reach vanishingly small approximation error, we can find a sample size \(n\) large enough to ensure small estimation error. The full proof is provided in Appendix~\ref{sec:estimation}. 
\begin{restatable}[Estimation Error]{theorem}{estimation} \label{thm:estimation_main}
    Suppose that all the conditions in Assumptions~\ref{ass:delta} and \ref{ass:nm} are satisfied. Then, on the same event as in Theorem \ref{thm:overfitting_main}, we have \(\lVert\hat{f}_{T_\epsilon}-f_{T_\epsilon}\rVert_2\leq\sqrt{\epsilon}/2\). 
\end{restatable}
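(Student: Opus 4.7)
The plan is to expand the difference $u_t := \hat{f}_t - f_t$ via an iterated integral (Dyson/Taylor) expansion of fixed order $U_\epsilon$, centered at the common random initialization $W(0)$ at which both trajectories vanish. Subtracting the two gradient-flow equations, $u_t$ satisfies an ODE whose right-hand side naturally splits into a ``sample-vs-population'' driving term of the form $(\hat{K}_0 - H)$ applied to the label vector, plus a correction coming from the drift of $\hat{W}(s)$ and $W(s)$ away from $W(0)$. Substituting this identity into itself $U_\epsilon$ times produces
\begin{align*}
\hat{f}_{T_\epsilon} - f_{T_\epsilon} \;=\; \sum_{u=1}^{U_\epsilon}\mathcal{T}_u \;+\; \mathcal{R}_{U_\epsilon},
\end{align*}
in which each $\mathcal{T}_u$ is a $u$-fold iterated integral whose integrand is determined entirely by the samples, the labels, and the weights \emph{at initialization}. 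This decomposition is the key step that sidesteps any uniform convergence argument over parameter space or time.

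Each NTK-operator composition contributes an operator-norm factor of order $1/d$ and the combinatorial factor at order $u$ is $t^u/u!$, so the remainder satisfies $\lVert\mathcal{R}_{U_\epsilon}\rVert_2 \leq \tfrac{1}{U_\epsilon!}\bigl(\tfrac{8T_\epsilon}{d}\bigr)^{U_\epsilon} \leq \tfrac{\sqrt{\epsilon}}{14}$ by (\ref{eqn:U_epsilon}). For each $\mathcal{T}_u$, once $W(0)$ is fixed the integrand is a symmetric $u$-fold sum over tuples $(\mathbf{x}_{i_1},y_{i_1}),\dots,(\mathbf{x}_{i_u},y_{i_u})$ of tensored kernel evaluations multiplied by products of labels, i.e.\ a vector-valued V-statistic of order $u$. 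Decoupling it into $\lfloor n/u \rfloor$ disjoint $u$-tuples and applying the vector-valued U-statistic concentration of \citet{lee2019u} yields, with failure probability $\tfrac{\delta}{6 U_\epsilon}$ per order, a bound of the form $\lVert\mathcal{T}_u\rVert_2 \lesssim \tfrac{(2T_\epsilon)^u}{u!\sqrt{d}}\sqrt{\log(nu)/\lfloor n/u\rfloor}$. Summing over $u=1,\dots,U_\epsilon$ and invoking Assumption~\ref{ass:nm}\ref{ass:estimation_n} bounds this contribution by $\sqrt{\epsilon}/14$. Replacing the random initial NTK $\hat{K}_0$ and gradient functions $G_{W(0)}$ by their analytic counterparts across the $U_\epsilon$ orders introduces an $O(m^{-1/2})$ fluctuation per order, which by Assumption~\ref{ass:nm}\ref{ass:estimation_m} contributes another $\sqrt{\epsilon}/14$. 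A union bound over all these events, drawing on the probability budget in Assumption~\ref{ass:delta}, places them on the same high-probability event as Theorem~\ref{thm:overfitting_main}, and the three $\sqrt{\epsilon}/14$ contributions combine to yield the target $\sqrt{\epsilon}/2$.

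The hardest part will be setting up the iterated substitution so that (i) each iteration increases the order by exactly one power of $t/d$ without any operator-norm blow-up, and (ii) each resulting $\mathcal{T}_u$ has a \emph{clean} V-statistic structure, with the sample-dependent part cleanly separated from a factor depending only on the initialization. The non-smoothness of ReLU complicates the Taylor step, and the clean-V-statistic property requires that the drift of $\hat{W}(s),W(s)$ from $W(0)$ enter only through the explicit remainder $\mathcal{R}_{U_\epsilon}$ and never inside the $\mathcal{T}_u$. The Gr\"onwall-type argument used in Theorem~\ref{thm:overfitting_main} will be leveraged to guarantee, on the same high-probability event, that the ReLU activation patterns remain locally stable throughout $[0,T_\epsilon]$, justifying the substitutions and closing the loop.
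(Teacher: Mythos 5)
Your overall strategy — bound $\lVert \hat f_{T_\epsilon}-f_{T_\epsilon}\rVert_2$ by $\tfrac{1}{\sqrt{d}}\lVert\hat W(T_\epsilon)-W(T_\epsilon)\rVert_{\mathrm{F}}$, expand the weight difference as an iterated integral of order $U_\epsilon$ centered at $W(0)$, extract V-statistics at initialization and concentrate them with the vector-valued U/V-statistic bounds, and control the remainder via the combinatorial choice of $U_\epsilon$ in~(\ref{eqn:U_epsilon}) — matches the paper's Lemma~\ref{lem:estimation} in outline. The $\sqrt{\epsilon}/14$ per-category accounting and the invocation of Assumptions~\ref{ass:nm}\ref{ass:estimation_n}, \ref{ass:nm}\ref{ass:estimation_m} are also on target. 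However there are two genuine gaps that would block you if you tried to carry out the plan as written.

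First, you claim that ``the drift of $\hat W(s),W(s)$ from $W(0)$ enter[s] only through the explicit remainder $\mathcal{R}_{U_\epsilon}$ and never inside the $\mathcal{T}_u$.'' This cannot be arranged. At each level $u$ of the iteration, centering $\tfrac{1}{n}\hat{\mathbf{G}}_t\hat{\boldsymbol{\xi}}_t$ (and $\langle G_t,\zeta_t\rangle_2$) at their $t=0$ values produces \emph{two} kinds of leftover: a piece proportional to $\hat{\boldsymbol{\xi}}_t-\boldsymbol{\xi}_0$ (resp.\ $\zeta_t-\zeta_0$), which feeds the next level and eventually becomes the combinatorial remainder; and a piece involving $\hat{\mathbf{G}}_t-\hat{\mathbf{G}}_0$, $\hat{\mathbf{H}}_t-\mathbf{H}_0$, $G_t-G_0$, $H_t-H_0$, which does \emph{not}. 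These drift terms carry the $(2T)^u/u!$ weight at level $u$ and survive even as $U_\epsilon\to\infty$ (their sum behaves like $e^{2T}$, not like $T^{U_\epsilon}/U_\epsilon!$); they must appear as separate $\sup_{t\in[0,T_\epsilon]}$ terms bounded by the NTK stability estimates (Lemmas~\ref{lem:overfitting}\ref{hatH_tH_0} and~\ref{lem:approximation}\ref{H_tH_0} and their gradient-matrix analogues), and these are precisely what Assumption~\ref{ass:nm}\ref{ass:estimation_m} is used for. They cannot be pushed into $\mathcal{R}_{U_\epsilon}$.

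Second, your ``third contribution'' — replacing the random initial NTK by its analytic counterpart — is not the step that is taken, and introducing it is both unnecessary and misplaced. The V-statistics $V_u=\tfrac{1}{n^u}\mathbf{G}_0\mathbf{H}_0^{u-1}\boldsymbol{\xi}_0-\langle G_0,H_0^{u-1}\zeta_0\rangle_2$ compare an empirical average to a population expectation with the \emph{same} random kernel $\kappa_{W(0)}$; since the almost-sure bound on the V-statistic kernel holds for any fixed $W(0)$, the concentration (Proposition~\ref{prop:V_vector_Hoeffding} via Lemma~\ref{lem:probability_both}\ref{vstatistic}) applies conditionally on the initialization, with no appeal to $H_0\approx H$. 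The comparison $\lVert H_0-H\rVert_2\lesssim m^{-1/2}$ (Lemma~\ref{lem:probability_weights}\ref{H_0H}) is used in the \emph{approximation} error proof, not here. What the estimation-error proof actually needs, beyond the V-statistics and remainder, is the \emph{trajectory drift} of the NTK operators and gradient matrices/functions along the flow — the piece you are missing.
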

We briefly sketch the proof here. We first note that
\[\lVert\hat{f}_{T_\epsilon}-f_{T_\epsilon}\rVert_2\leq\frac{1}{\sqrt{d}}\lVert\hat{W}(T_\epsilon)-W(T_\epsilon)\rVert_\text{F}\leq\frac{1}{\sqrt{d}}\left\lVert\int^{T_\epsilon}_0\frac{d\hat{W}}{dt}-\frac{dW}{dt}dt\right\rVert_\text{F}\]
using the 1-Lipschitzness of the ReLU function and the isotropy of the data distribution. At first glance, it seems that one has to perform uniform concentration of \(\frac{d\hat{W}}{dt}\) to \(\frac{dW}{dt}\) over (some subset of) the parameter space \(\mathbb{R}^{m\times d}\) and over \(t\in[0,T_\epsilon]\), which would give vacuous bounds. However, this can be avoided following the key observation that, at time \(t=0\), the concentration of \(\frac{d\hat{W}}{dt}\Bigr|_{t=0}\) to \(\frac{dW}{dt}\Bigr|_{t=0}\) requires no uniform concentration. Hence, we have the following bound:
\[\lVert\hat{f}_{T_\epsilon}-f_{T_\epsilon}\rVert_2\leq\frac{1}{\sqrt{d}}\left\lVert\int^{T_\epsilon}_0\frac{d\hat{W}}{dt}-\frac{d\hat{W}}{dt}\Bigr|_{0}+\frac{dW}{dt}\Bigr|_{0}-\frac{dW}{dt}dt\right\rVert_\text{F}+\frac{T_\epsilon}{\sqrt{d}}\left\lVert\frac{d\hat{W}}{dt}\Bigr|_{0}-\frac{dW}{dt}\Bigr|_{0}\right\rVert_\text{F}.\]
Here, the second term can be bound using standard concentration arguments. The first term is trickier. 
% The first term can informally be thought of as
% \[\frac{1}{\sqrt{d}}\left\lVert\int^{T_\epsilon}_0\int^t_0\frac{d^2\hat{W}}{dt^2}-\frac{d^2W}{dt^2}dsdt\right\rVert_\text{F},\]
% though the weights are not twice-differentiable with respect to time, so this should only serve as an intuition. 
We can bound the first term  using arguments similar to those used to bound the difference between the first derivatives, which will produce an additional vanilla concentration term at \(t=0\). We continue iteratively for \(U_\epsilon\in\mathbb{N}\) steps, until we have \(U_\epsilon\) vanilla concentrations and a factor of \(T_\epsilon^{U_\epsilon}/U_\epsilon!\) when the supremum is taken out of the remaining integral, and use the fact that \(U_\epsilon!\) is large enough to make the integral sufficiently small. Technically, we derive new concentration bounds for vector-valued U- and V-statistics (Propositions~\ref{prop:U_vector_Hoeffding}, ~\ref{prop:V_vector_Hoeffding}), which may be of independent interest.

The excess risk bound below follows directly from Theorems~\ref{thm:approximation_main} and~\ref{thm:estimation_main}, and to best of our knowledge, is the first generalization result in this setting for arbitrary $f^\star$ (under~\ref{ass:f^*bound}).
\begin{restatable}[Generalization]{theorem}{generalization}\label{thm:generalization}
    Suppose that all the conditions in Assumptions~\ref{ass:delta} and \ref{ass:nm} are satisfied. Then, on the same event as in Theorem~\ref{thm:overfitting_main}, we have \(R(\hat{f}_{T_\epsilon})-R(f^\star)=\lVert\hat{f}_{T_\epsilon}-f^\star\rVert_2^2\leq\epsilon\).
     %\vspace*{-1ex}
\end{restatable}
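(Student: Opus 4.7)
The plan is to establish this generalization bound as a direct corollary of the approximation and estimation error theorems, using the triangle-inequality decomposition introduced in equation~(\ref{eqn:decomp}). The key observation is that the theorem statement explicitly places us on the same high-probability event as Theorem~\ref{thm:overfitting_main}, and both Theorem~\ref{thm:approximation_main} and Theorem~\ref{thm:estimation_main} are stated to hold on precisely that event. Because the event has already been fixed, no additional union bound or probability accounting is needed in this step -- the whole argument is essentially deterministic conditional on that event.

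First, I would recall that the excess risk admits the clean identity
\[ R(\hat{f}_{T_\epsilon}) - R(f^\star) = \lVert \hat{f}_{T_\epsilon} - f^\star\rVert_2^2, \]
which was derived in the notations paragraph of Section~\ref{sec:introduction}. Second, I would invoke the decomposition~(\ref{eqn:decomp}):
\[ \lVert \hat{f}_{T_\epsilon} - f^\star \rVert_2 \;\leq\; \lVert \hat{f}_{T_\epsilon} - f_{T_\epsilon} \rVert_2 \;+\; \lVert f_{T_\epsilon} - f^\star \rVert_2. \]
Theorem~\ref{thm:approximation_main}, whose hypotheses Assumption~\ref{ass:delta}\ref{ass:delta_approximation} and Assumption~\ref{ass:nm}\ref{ass:approximation_m} are covered by the current assumption set, yields $\lVert f_{T_\epsilon} - f^\star \rVert_2 \leq \sqrt{\epsilon}/2$. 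Theorem~\ref{thm:estimation_main}, whose hypotheses are all of Assumptions~\ref{ass:delta} and~\ref{ass:nm} and which is stated on the same event, yields $\lVert \hat{f}_{T_\epsilon} - f_{T_\epsilon} \rVert_2 \leq \sqrt{\epsilon}/2$.

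Summing the two contributions gives $\lVert \hat{f}_{T_\epsilon} - f^\star \rVert_2 \leq \sqrt{\epsilon}$, and squaring delivers $R(\hat{f}_{T_\epsilon}) - R(f^\star) \leq \epsilon$ on the specified event, as required. There is no real obstacle here beyond bookkeeping; the only substantive point worth emphasizing in the writeup is that the approximation and estimation bounds were deliberately designed to each be $\sqrt{\epsilon}/2$ so that, after the triangle inequality and squaring, the total excess risk lands at exactly $\epsilon$. It is also worth flagging that combining this result with Theorem~\ref{thm:overfitting_main} on the same event immediately yields the benign overfitting conclusion in the sense of Definition~\ref{def:benign}, since both $\mathbf{R}(\hat{f}_{T_\epsilon})$ and $R(\hat{f}_{T_\epsilon}) - R(f^\star)$ are simultaneously at most $\epsilon$ with probability at least $1-\delta$.
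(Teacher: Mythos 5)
Your proof matches the paper's argument exactly: both apply the decomposition in equation~(\ref{eqn:decomp}), invoke Theorems~\ref{thm:approximation_main} and~\ref{thm:estimation_main} for the $\sqrt{\epsilon}/2$ bounds on each term, and square to conclude. In fact your writeup is slightly cleaner than the paper's, which contains a small notational slip ($\epsilon/2$ where it should read $\sqrt{\epsilon}/2$ in the intermediate display), while you carry the $\sqrt{\epsilon}/2$ bounds through correctly.
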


% \section{Benign Overfitting}\label{sec:benign_overfitting}
% We first establish our overfitting result. Even though, as discussed in Section~\ref{subsec:related_works}, there are now many results that under various settings show the convergence to the global minimum of overparameterized networks under gradient flow/descent, we re-establish it because we want all our results to hold \textit{on the same high probability event}, under the same set of assumptions. 

% (where \(\tilde{o}\) notation hides log factors).
% \textcolor{red}{say that $\lambda_\epsilon \leq 1/4d$.}

Finally, as an immediate corollary of Theorems~\ref{thm:overfitting_main} and \ref{thm:generalization}, we have the benign overfitting result. 
\begin{restatable}[Benign Overfitting]{theorem}{benignoverfitting}\label{thm:benign_overfitting}
    Suppose that all the conditions in Assumptions~\ref{ass:delta} and \ref{ass:nm} are satisfied. Then, on the same event as in Theorem \ref{thm:overfitting_main}, we have
    \begin{align*}
        \mbox{ Empirical Risk: } \mathbf{R}(\hat{f}_{T_\epsilon})\leq \epsilon \;\;\;  \mbox{ and } \;\;\; \mbox{ Excess Risk: } R(\hat{f}_{T_\epsilon}) - R(f^\star) \leq \epsilon.
    \end{align*}
\end{restatable}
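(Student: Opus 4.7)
The plan is to obtain Theorem~\ref{thm:benign_overfitting} as a direct combination of Theorem~\ref{thm:overfitting_main} and Theorem~\ref{thm:generalization}. The paper has been structured so that both of these results are explicitly stated to hold on the same high-probability event, so the combination goes through without any union bound that would degrade the failure probability from $\delta$ to $2\delta$ or $3\delta$.

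First, I would verify that the hypotheses of Theorem~\ref{thm:overfitting_main}, namely Assumptions~\ref{ass:delta}\ref{ass:delta_approximation}, \ref{ass:delta_overfitting}, and \ref{ass:nm}\ref{ass:overfitting_m}, are all subsumed by the full set of Assumptions~\ref{ass:delta} and \ref{ass:nm} that we are given. Applying that theorem at time $t = T_\epsilon$ yields an event $\cE$ with $\Pr(\cE) \geq 1 - \delta$ on which $\mathbf{R}(\hat{f}_{T_\epsilon}) \leq \epsilon$, which handles the empirical risk half of the conclusion.

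Next, I would invoke Theorem~\ref{thm:generalization}, which requires the full set of Assumptions~\ref{ass:delta} and \ref{ass:nm} --- exactly what we are assuming --- and whose conclusion is stated to hold \emph{on the same event $\cE$} produced by Theorem~\ref{thm:overfitting_main}. This immediately yields $R(\hat{f}_{T_\epsilon}) - R(f^\star) = \lVert \hat{f}_{T_\epsilon} - f^\star \rVert_2^2 \leq \epsilon$ on $\cE$, giving the excess risk half. Since both bounds hold simultaneously on $\cE$ and $\Pr(\cE) \geq 1-\delta$, the theorem follows.

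There is essentially no obstacle at this final stage: the combination is one line. The genuine difficulty has already been absorbed upstream --- the uniform-in-time lower bound on the minimum eigenvalue of the empirical NTK Gram matrix together with Gr\"onwall's inequality that drive Theorem~\ref{thm:overfitting_main}, the spectral truncation argument at level $L_\epsilon$ combined with the Hadamard-to-integral-operator extension behind Theorem~\ref{thm:approximation_main}, and the iterated time-$0$ reduction using new vector-valued U-/V-statistic concentration inequalities behind Theorem~\ref{thm:estimation_main}. The only substantive bookkeeping point, which is flagged in the paper's remark that ``all our results must hold on the same high-probability event, under a common set of assumptions,'' is to confirm by tracing through Theorems~\ref{thm:approximation_main} and~\ref{thm:estimation_main} that the event in Theorem~\ref{thm:generalization} is literally the event $\cE$ from Theorem~\ref{thm:overfitting_main}, and not an intersection of two events of probability $1-\delta/2$; this is precisely the content of the phrase ``on the same event as in Theorem~\ref{thm:overfitting_main}'' in those statements.
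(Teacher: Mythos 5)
Your proposal matches the paper's proof exactly: the paper states that Theorem~\ref{thm:benign_overfitting} is ``an immediate corollary of Theorem~\ref{thm:overfitting_main} and Theorem~\ref{thm:generalization},'' and you correctly identify both the one-line combination and the key bookkeeping point that the ``same event'' phrasing avoids a union-bound degradation of the failure probability.
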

These results align with our hypothesis: with fixed $n$, increasing $T$ raises model complexity and leads to vacuous estimation error bounds, matching the upward slope in Figure~\ref{fig:U-curve}(b). On the other hand, by increasing the sample size \(n\) and the two model complexities \(m\) and \(T\) simultaneously at a rate specified by Assumptions~\ref{ass:delta} \& \ref{ass:nm}, we can ensure that we stay on the trough of the U-curves in Figure~\ref{fig:U-curve}, and eventually reach benign overfitting. 

\subsection{Experiments}\label{sec:experiments}  
%\vspace*{-1ex}
We support our theoretical results on two-layer ReLU NNs with experiments on real and synthetic data. This section highlights one experiment; further results and experimental details are included in Appendix~\ref{app:expts}. In all our experiments, we initialize network weights as in Section~\ref{sec:neural_network}, with the only change being the use of gradient descent (with learning rate $=0.1$) instead of gradient flow.

\begin{figure}
\centering
     \includegraphics[width=0.55\textwidth]{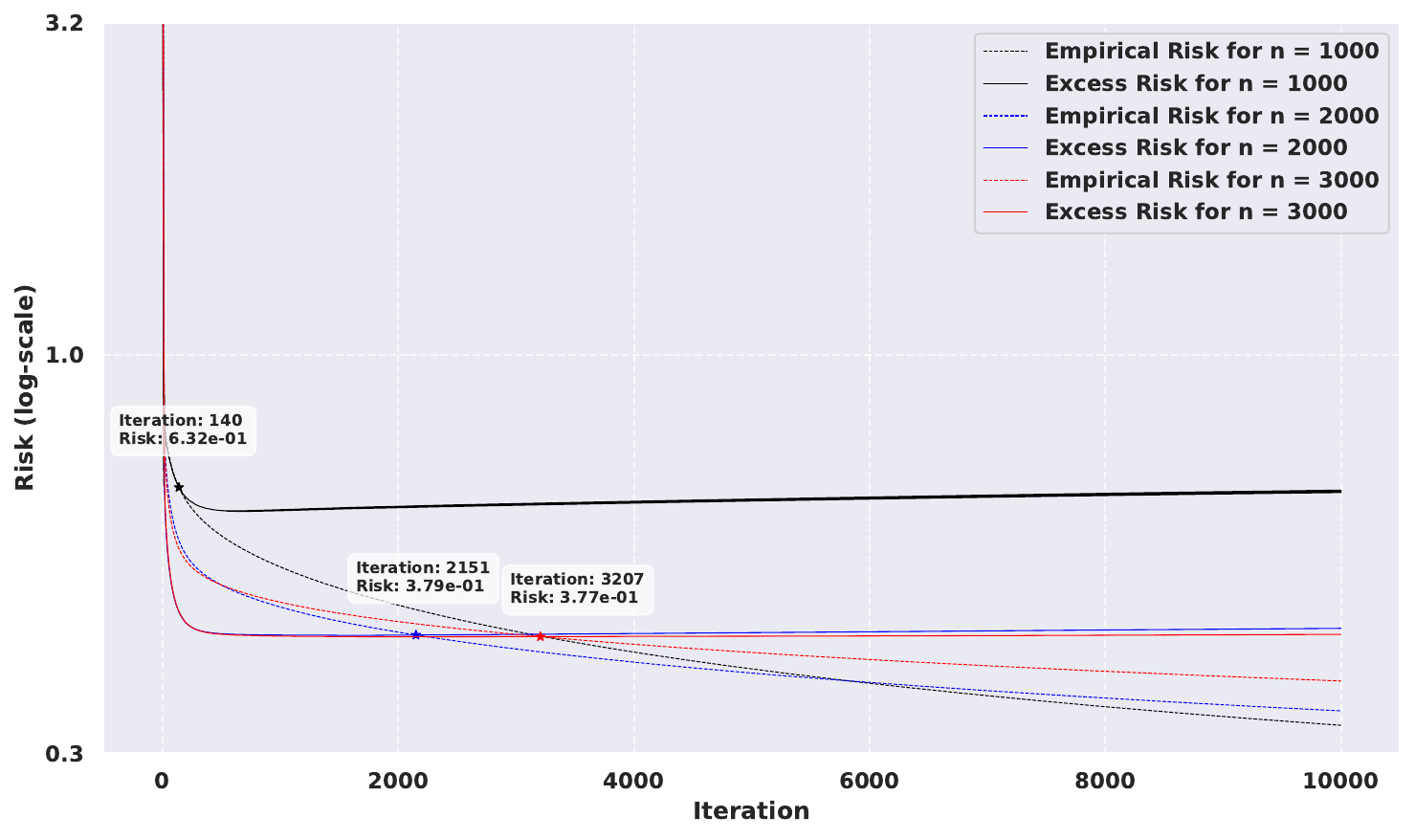}
     %\vspace*{-15pt}
\caption{Risk vs.\ model complexity plot for Abalone dataset with Gaussian noise (mean-zero, std.\ dev $0.2$) added to the target variable (age) during the training process. We use $m=100000$.}
\label{fig:1}
 \end{figure}
Our first real data experiment is with Abalone dataset~\citep{abalone_1} to predict age from $d=7$ physical measurements, with standardized features and targets (zero mean, unit variance).  
In Figure~\ref{fig:1}, we plot empirical (dashed) and excess (solid) risk curves against gradient descent iterations 
$T$ for various training sample sizes $n$, using matching colors for each 
$n$. We add mean-zero Gaussian noise with standard deviation $0.2$ to the target variable in the training data.\!\footnote{In Appendix~\ref{app:expts}, we present results with varying noise levels and initializations.}
As expected, empirical risk decreases with $T$, with smaller 
 $n$ yielding stronger overfitting. Excess risk exhibits a U-shaped curve, first decreasing then increasing. For each $n$, the point where the excess risk for that $n$ crosses and remains over the corresponding empirical risk for that $n$ are marked by $\star$ symbols. At these crossing points, both excess and empirical risks are equal to the Y-axis value. Notice that as $n$ increases, this crossing point shifts both down and to the right. For instance, with $n=1000$ and $140$ iterations, both empirical and excess risks reach 
$0.632$; increasing to $n=3000$ and $3207$ iterations reduces both risks to $0.377$. This supports our theory that both risks drop with enough data and suitable model complexity.

\section{Conclusion}\label{sec:conclusion}
%\vspace*{-1ex}
We offer a new perspective on benign overfitting and the classic risk–complexity trade-off. In traditional, well-specified models, the excess risk can be driven down to zero with the same model by increasing the sample size. In this case, the empirical risk will stay around the noise level, and benign overfitting will not occur. In contrast, we hypothesize—and prove in two interesting cases—that modern models can leverage more data to support higher complexity, achieving both low training and test error without strong assumptions. Our analysis departs from prior approaches which focused on  interpolating models, instead deriving guarantees through a principled trade-off between data size and model capacity.

% In this paper, we provided a novel view on the phenomenon of benign overfitting, and more generally, on the ubiquitous risk versus model complexity plots. In traditional, simple learning algorithms, once the problem is well-specified, the excess risk can be driven down to zero with the same model by increasing the sample size. In this case, the empirical risk will stay around the noise level, and benign overfitting will not occur. On the contrary, we hypothesized that for modern models, more samples will enable larger models to provide both a better fit to the training data \textit{and} to generalize better to unseen data, thereby ensuring benign overfitting. We proved this hypothesis in two concrete cases, where we side-stepped the need for heavy assumptions by a careful control on the sample size and the model complexity, instead of studying models that are designed to interpolate a priori, as done in the existing literature. 

% Further research is needed to ascertain whether the benign overfitting phenomenon in modern neural networks occurs at the \say{moving trough of the U-curve} with minimal assumptions, as we hypothesized in this work, or whether they occur due to the heavy assumptions in existing works being satisfied. Similar to discussions in \citet{curth2023u}, our work also highlights the need for a better understanding of \say{model complexity} in neural networks and gradient-based training algorithms, as it is apparent that the raw number of parameters is not the most appropriate. 

A key limitation of our work is that we provide only upper bounds supporting our hypothesis; establishing matching lower bounds remains an open question. Additionally, our analysis is restricted to kernel ridge regression and two-layer ReLU networks trained in the NTK regime—models that may not fully capture the behavior of modern deep networks. However, this probably reflects broader limitations in current deep learning theory rather than of this work specifically.

\bibliographystyle{plainnat}
\bibliography{ref}

\begin{thebibliography}{103}
\providecommand{\natexlab}[1]{#1}
\providecommand{\url}[1]{\texttt{#1}}
\expandafter\ifx\csname urlstyle\endcsname\relax
  \providecommand{\doi}[1]{doi: #1}\else
  \providecommand{\doi}{doi: \begingroup \urlstyle{rm}\Url}\fi

\bibitem[Adlam and Pennington(2020)]{adlam2020neural}
Ben Adlam and Jeffrey Pennington.
\newblock The {N}eural {T}angent {K}ernel in {H}igh {D}imensions: {T}riple
  {D}escent and a {M}ulti-{S}cale {T}heory of {G}eneralization.
\newblock In \emph{International Conference on Machine Learning}, pages 74--84.
  PMLR, 2020.

\bibitem[Aeberhard and Forina(1992)]{wine_109}
Stefan Aeberhard and M.~Forina.
\newblock {Wine}.
\newblock UCI Machine Learning Repository, 1992.
\newblock {DOI}: https://doi.org/10.24432/C5PC7J.

\bibitem[Allen-Zhu et~al.(2019{\natexlab{a}})Allen-Zhu, Li, and
  Liang]{allen2019learning}
Zeyuan Allen-Zhu, Yuanzhi Li, and Yingyu Liang.
\newblock Learning and {G}eneralization in {O}verparameterized {N}eural
  {N}etworks, {G}oing {B}eyond {T}wo {L}ayers.
\newblock \emph{Advances in neural information processing systems}, 32,
  2019{\natexlab{a}}.

\bibitem[Allen-Zhu et~al.(2019{\natexlab{b}})Allen-Zhu, Li, and
  Song]{allen2019convergence}
Zeyuan Allen-Zhu, Yuanzhi Li, and Zhao Song.
\newblock A {C}onvergence {T}heory for {D}eep {L}earning via
  {O}ver-{P}arameterization.
\newblock In \emph{International conference on machine learning}, pages
  242--252. PMLR, 2019{\natexlab{b}}.

\bibitem[Arora et~al.(2019)Arora, Du, Hu, Li, and Wang]{arora2019fine}
Sanjeev Arora, Simon Du, Wei Hu, Zhiyuan Li, and Ruosong Wang.
\newblock Fine-{G}rained {A}nalysis of {O}ptimization and {G}eneralization for
  {O}verparameterized {T}wo-{L}ayer {N}eural {N}etworks.
\newblock In \emph{International Conference on Machine Learning}, pages
  322--332. PMLR, 2019.

\bibitem[Azevedo and Menegatto(2014)]{azevedo2014sharp}
Douglas Azevedo and Valdir~Antonio Menegatto.
\newblock Sharp {E}stimates for {E}igenvalues of {I}ntegral {O}perators
  {G}enerated by {D}ot {P}roduct {K}ernels on the {S}phere.
\newblock \emph{Journal of Approximation Theory}, 177:\penalty0 57--68, 2014.

\bibitem[Azulay et~al.(2021)Azulay, Moroshko, Nacson, Woodworth, Srebro,
  Globerson, and Soudry]{azulay2021implicit}
Shahar Azulay, Edward Moroshko, Mor~Shpigel Nacson, Blake~E Woodworth, Nathan
  Srebro, Amir Globerson, and Daniel Soudry.
\newblock On the {I}mplicit {B}ias of {I}nitialization {S}hape: {B}eyond
  {I}nfinitesimal {M}irror {D}escent.
\newblock In \emph{International Conference on Machine Learning}, pages
  468--477. PMLR, 2021.

\bibitem[Bartlett et~al.(2020)Bartlett, Long, Lugosi, and
  Tsigler]{bartlett2020benign}
Peter~L Bartlett, Philip~M Long, G{\'a}bor Lugosi, and Alexander Tsigler.
\newblock Benign {O}verfitting in {L}inear {R}egression.
\newblock \emph{Proceedings of the National Academy of Sciences}, 117\penalty0
  (48):\penalty0 30063--30070, 2020.

\bibitem[Bartlett et~al.(2021)Bartlett, Montanari, and
  Rakhlin]{bartlett2021deep}
Peter~L Bartlett, Andrea Montanari, and Alexander Rakhlin.
\newblock Deep {L}earning: {A} {S}tatistical {V}iewpoint.
\newblock \emph{Acta numerica}, 30:\penalty0 87--201, 2021.

\bibitem[Barzilai and Shamir(2024)]{barzilai2024generalization}
Daniel Barzilai and Ohad Shamir.
\newblock Generalization in {K}ernel {R}egression {U}nder {R}ealistic
  {A}ssumptions.
\newblock In \emph{Forty-first International Conference on Machine Learning},
  2024.

\bibitem[Beaglehole et~al.(2023)Beaglehole, Belkin, and
  Pandit]{beaglehole2023inconsistency}
Daniel Beaglehole, Mikhail Belkin, and Parthe Pandit.
\newblock On the {I}nconsistency of {K}ernel {R}idgeless {R}egression in
  {F}ixed {D}imensions.
\newblock \emph{SIAM Journal on Mathematics of Data Science}, 5\penalty0
  (4):\penalty0 854--872, 2023.

\bibitem[Belkin et~al.(2019)Belkin, Hsu, Ma, and Mandal]{belkin2019reconciling}
Mikhail Belkin, Daniel Hsu, Siyuan Ma, and Soumik Mandal.
\newblock Reconciling {M}odern {M}achine-{L}earning {P}ractice and the
  {C}lassical {B}ias--{V}ariance {T}rade-{O}ff.
\newblock \emph{Proceedings of the National Academy of Sciences}, 116\penalty0
  (32):\penalty0 15849--15854, 2019.

\bibitem[Berlinet and Thomas-Agnan(2004)]{berlinet2004reproducing}
Alain Berlinet and Christine Thomas-Agnan.
\newblock \emph{Reproducing {K}ernel {H}ilbert {S}paces in {P}robability and
  {S}tatistics}.
\newblock Springer Science \& Business Media, 2004.

\bibitem[Bietti and Mairal(2019)]{bietti2019inductive}
Alberto Bietti and Julien Mairal.
\newblock On the {I}nductive {B}ias of {N}eural {T}angent {K}ernels.
\newblock \emph{Advances in Neural Information Processing Systems}, 32, 2019.

\bibitem[Bowman and Montufar(2021)]{bowman2021implicit}
Benjamin Bowman and Guido Montufar.
\newblock Implicit {B}ias of {MSE} {G}radient {O}ptimization in
  {U}nderparameterized {N}eural {N}etworks.
\newblock In \emph{International Conference on Learning Representations}, 2021.

\bibitem[Bowman and Montufar(2022)]{bowman2022spectral}
Benjamin Bowman and Guido~F Montufar.
\newblock Spectral {B}ias {O}utside {T}he {T}raining {S}et for {D}eep
  {N}etworks in the {K}ernel {R}egime.
\newblock \emph{Advances in Neural Information Processing Systems},
  35:\penalty0 30362--30377, 2022.

\bibitem[Buchholz(2022)]{buchholz2022kernel}
Simon Buchholz.
\newblock Kernel {I}nterpolation in {S}obolev {S}paces is not {C}onsistent in
  {L}ow {D}imensions.
\newblock In \emph{Conference on Learning Theory}, pages 3410--3440. PMLR,
  2022.

\bibitem[Cao et~al.(2021)Cao, Fang, Wu, Zhou, and Gu]{cao2021towards}
Yuan Cao, Zhiying Fang, Yue Wu, Ding-Xuan Zhou, and Quanquan Gu.
\newblock Towards understanding the spectral bias of deep learning.
\newblock In \emph{Proceedings of the Thirtieth International Joint Conference
  on Artificial Intelligence}, pages 2205--2211. International Joint
  Conferences on Artificial Intelligence Organization, 2021.

\bibitem[Cao et~al.(2022)Cao, Chen, Belkin, and Gu]{cao2022benign}
Yuan Cao, Zixiang Chen, Misha Belkin, and Quanquan Gu.
\newblock Benign {O}verfitting in {T}wo-{L}ayer {C}onvolutional {N}eural
  {N}etworks.
\newblock \emph{Advances in neural information processing systems},
  35:\penalty0 25237--25250, 2022.

\bibitem[Caponnetto and De~Vito(2007)]{caponnetto2007optimal}
Andrea Caponnetto and Ernesto De~Vito.
\newblock Optimal {R}ates for the {R}egularized {L}east-{S}quares {A}lgorithm.
\newblock \emph{Foundations of Computational Mathematics}, 7:\penalty0
  331--368, 2007.

\bibitem[Cheng et~al.(2024)Cheng, Lucchi, Kratsios, and
  Belius]{cheng2024characterizing}
Tin~Sum Cheng, Aurelien Lucchi, Anastasis Kratsios, and David Belius.
\newblock Characterizing {O}verfitting in {K}ernel {R}idgeless {R}egression
  {T}hrough the {E}igenspectrum.
\newblock \emph{arXiv preprint arXiv:2402.01297}, 2024.

\bibitem[Chinot and Lerasle(2022)]{chinot2022robustness}
Geoffrey Chinot and Matthieu Lerasle.
\newblock On the {R}obustness of the {M}inimim l2 {I}nterpolator.
\newblock \emph{Bernoulli}, 2022.

\bibitem[Chizat and Bach(2018)]{chizat2018global}
Lenaic Chizat and Francis Bach.
\newblock On the {G}lobal {C}onvergence of {G}radient {D}escent for
  {O}ver-{P}arameterized {M}odels using {O}ptimal {T}ransport.
\newblock \emph{Advances in neural information processing systems}, 31, 2018.

\bibitem[Clark(2019)]{clark2019instructor}
Pete~L Clark.
\newblock The {I}nstructor’s {G}uide to {R}eal {I}nduction.
\newblock \emph{Mathematics Magazine}, 92\penalty0 (2):\penalty0 136--150,
  2019.

\bibitem[Curth et~al.(2023)Curth, Jeffares, and van~der Schaar]{curth2023u}
Alicia Curth, Alan Jeffares, and Mihaela van~der Schaar.
\newblock A {U}-{T}urn on {D}ouble {D}escent: {R}ethinking {P}arameter
  {C}ounting in {S}tatistical {L}earning.
\newblock In \emph{Advances in Neural Information Processing Systems},
  volume~36, 2023.

\bibitem[Du et~al.(2019{\natexlab{a}})Du, Lee, Li, Wang, and Zhai]{du2019icml}
Simon Du, Jason Lee, Haochuan Li, Liwei Wang, and Xiyu Zhai.
\newblock Gradient {D}escent {F}inds {G}lobal {M}inima of {D}eep {N}eural
  {N}etworks.
\newblock In \emph{International conference on machine learning}, pages
  1675--1685. PMLR, 2019{\natexlab{a}}.

\bibitem[Du et~al.(2019{\natexlab{b}})Du, Zhai, Poczos, and Singh]{du2019iclr}
Simon~S Du, Xiyu Zhai, Barnabas Poczos, and Aarti Singh.
\newblock Gradient {D}escent {P}rovably {O}ptimizes {O}ver-{P}arameterized
  {N}eural {N}etworks.
\newblock In \emph{International Conference on Learning Representations},
  2019{\natexlab{b}}.

\bibitem[E et~al.(2019)E, Ma, and Wu]{e2019comparative}
Weinan E, Chao Ma, and Lei Wu.
\newblock A {C}omparative {A}nalysis of {O}ptimization and {G}eneralization
  {P}roperties of {T}wo-{L}ayer {N}eural {N}etwork and {R}andom {F}eature
  {M}odels under {G}radient {D}escent {D}ynamics.
\newblock \emph{Sci. China Math}, 2019.

\bibitem[Frei et~al.(2022)Frei, Chatterji, and Bartlett]{frei2022benign}
Spencer Frei, Niladri~S Chatterji, and Peter Bartlett.
\newblock Benign {O}verfitting {W}ithout {L}inearity: {N}eural {N}etwork
  {C}lassifiers {T}rained by {G}radient {D}escent for {N}oisy {L}inear {D}ata.
\newblock In \emph{Conference on Learning Theory}, pages 2668--2703. PMLR,
  2022.

\bibitem[Frei et~al.(2023)Frei, Vardi, Bartlett, and Srebro]{frei2023benign}
Spencer Frei, Gal Vardi, Peter Bartlett, and Nathan Srebro.
\newblock Benign {O}verfitting in {L}inear {C}lassifiers and {L}eaky {R}e{LU}
  {N}etworks from {KKT} {C}onditions for {M}argin {M}aximization.
\newblock In \emph{The Thirty Sixth Annual Conference on Learning Theory},
  pages 3173--3228. PMLR, 2023.

\bibitem[Ghorbani et~al.(2020)Ghorbani, Mei, Misiakiewicz, and
  Montanari]{ghorbani2020neural}
Behrooz Ghorbani, Song Mei, Theodor Misiakiewicz, and Andrea Montanari.
\newblock When do {N}eural {N}etworks {O}utperform {K}ernel {M}ethods?
\newblock \emph{Advances in Neural Information Processing Systems},
  33:\penalty0 14820--14830, 2020.

\bibitem[Ghorbani et~al.(2021)Ghorbani, Mei, Misiakiewicz, and
  Montanari]{ghorbani2021linearized}
Behrooz Ghorbani, Song Mei, Theodor Misiakiewicz, and Andrea Montanari.
\newblock Linearized {T}wo-{L}ayers {N}eural {N}etworks in {H}igh {D}imension.
\newblock \emph{The Annals of Statistics}, 49\penalty0 (2):\penalty0
  1029--1054, 2021.

\bibitem[Gy{\"o}rfi et~al.(2006)Gy{\"o}rfi, Kohler, Krzyzak, and
  Walk]{gyorfi2006distribution}
L{\'a}szl{\'o} Gy{\"o}rfi, Michael Kohler, Adam Krzyzak, and Harro Walk.
\newblock \emph{A {D}istribution-{F}ree {T}heory of {N}onparametric
  {R}egression}.
\newblock Springer Science \& Business Media, 2006.

\bibitem[Haas et~al.(2023)Haas, Holzm{\"u}ller, von Luxburg, and
  Steinwart]{haas2023mind}
Moritz Haas, David Holzm{\"u}ller, Ulrike von Luxburg, and Ingo Steinwart.
\newblock Mind the {S}pikes: {B}enign {O}verfitting of {K}ernels and {N}eural
  {N}etworks in {F}ixed {D}imension.
\newblock \emph{arXiv preprint arXiv:2305.14077}, 2023.

\bibitem[Harel et~al.(2024)Harel, Hoza, Vardi, Evron, Srebro, and
  Soudry]{harel2024provable}
Itamar Harel, William~M Hoza, Gal Vardi, Itay Evron, Nathan Srebro, and Daniel
  Soudry.
\newblock Provable {T}empered {O}verfitting of {M}inimal {N}ets and {T}ypical
  {N}ets.
\newblock \emph{arXiv preprint arXiv:2410.19092}, 2024.

\bibitem[Hastie et~al.(2009)Hastie, Tibshirani, Friedman, and
  Friedman]{hastie2009elements}
Trevor Hastie, Robert Tibshirani, Jerome~H Friedman, and Jerome~H Friedman.
\newblock \emph{The {E}lements of {S}tatistical {L}earning: {D}ata {M}ining,
  {I}nference, and {P}rediction}, volume~2.
\newblock Springer, 2009.

\bibitem[Hastie et~al.(2022)Hastie, Montanari, Rosset, and
  Tibshirani]{hastie2022surprises}
Trevor Hastie, Andrea Montanari, Saharon Rosset, and Ryan~J Tibshirani.
\newblock Surprises in {H}igh-{D}imensional {R}idgeless {L}east {S}quares
  {I}nterpolation.
\newblock \emph{Annals of statistics}, 50\penalty0 (2):\penalty0 949, 2022.

\bibitem[Hathaway(2011)]{hathaway2011using}
Dan Hathaway.
\newblock Using {C}ontinuity {I}nduction.
\newblock \emph{The College Mathematics Journal}, 42\penalty0 (3):\penalty0
  229--231, 2011.

\bibitem[Horn and Johnson(2013)]{horn2013matrix}
Roger~A Horn and Charles~R Johnson.
\newblock \emph{Matrix {A}nalysis}.
\newblock Cambridge university press, 2013.

\bibitem[Jacot et~al.(2018)Jacot, Gabriel, and Hongler]{jacot2018neural}
Arthur Jacot, Franck Gabriel, and Cl{\'e}ment Hongler.
\newblock Neural {T}angent {K}ernel: {C}onvergence and {G}eneralization in
  {N}eural {N}etworks.
\newblock \emph{Advances in neural information processing systems}, 31, 2018.

\bibitem[Ji and Telgarsky(2020)]{ji2020directional}
Ziwei Ji and Matus Telgarsky.
\newblock Directional {C}onvergence and {A}lignment in {D}eep {L}earning.
\newblock \emph{Advances in Neural Information Processing Systems},
  33:\penalty0 17176--17186, 2020.

\bibitem[Jin and Mont{\'u}far(2023)]{jin2023implicit}
Hui Jin and Guido Mont{\'u}far.
\newblock Implicit {B}ias of {G}radient {D}escent for {M}ean {S}quared {E}rror
  {R}egression with {T}wo-{L}ayer {W}ide {N}eural {N}etworks.
\newblock \emph{Journal of Machine Learning Research}, 24\penalty0
  (137):\penalty0 1--97, 2023.

\bibitem[Joshi et~al.(2024)Joshi, Vardi, and Srebro]{joshi2024noisy}
Nirmit Joshi, Gal Vardi, and Nathan Srebro.
\newblock Noisy {I}nterpolation {L}earning with {S}hallow {U}nivariate {R}e{LU}
  {N}etworks.
\newblock In \emph{The Twelfth International Conference on Learning
  Representations}, 2024.

\bibitem[Ju et~al.(2021)Ju, Lin, and Shroff]{ju2021generalization}
Peizhong Ju, Xiaojun Lin, and Ness Shroff.
\newblock On the {G}eneralization {P}ower of {O}verfitted {T}wo-{L}ayer
  {N}eural {T}angent {K}ernel {M}odels.
\newblock In \emph{International Conference on Machine Learning}, pages
  5137--5147. PMLR, 2021.

\bibitem[Ju et~al.(2022)Ju, Lin, and Shroff]{ju2022generalization}
Peizhong Ju, Xiaojun Lin, and Ness Shroff.
\newblock On the {G}eneralization {P}ower of the {O}verfitted {T}hree-{L}ayer
  {N}eural {T}angent {K}ernel {M}odel.
\newblock \emph{Advances in Neural Information Processing Systems},
  35:\penalty0 26135--26146, 2022.

\bibitem[Koehler et~al.(2021)Koehler, Zhou, Sutherland, and
  Srebro]{koehler2021uniform}
Frederic Koehler, Lijia Zhou, Danica~J Sutherland, and Nathan Srebro.
\newblock Uniform {C}onvergence of {I}nterpolators: {G}aussian {W}idth, {N}orm
  {B}ounds and {B}enign {O}verfitting.
\newblock \emph{Advances in Neural Information Processing Systems},
  34:\penalty0 20657--20668, 2021.

\bibitem[Kornowski et~al.(2023)Kornowski, Yehudai, and
  Shamir]{kornowski2023tempered}
Guy Kornowski, Gilad Yehudai, and Ohad Shamir.
\newblock From {T}empered to {B}enign {O}verfitting in {R}e{LU} {N}eural
  {N}etworks.
\newblock \emph{arXiv preprint arXiv:2305.15141}, 2023.

\bibitem[Kou et~al.(2023)Kou, Chen, Chen, and Gu]{kou2023benign}
Yiwen Kou, Zixiang Chen, Yuanzhou Chen, and Quanquan Gu.
\newblock Benign {O}verfitting for {T}wo-{L}ayer {R}e{LU} {N}etworks.
\newblock \emph{arXiv preprint arXiv:2303.04145}, 2023.

\bibitem[Lai et~al.(2023)Lai, Xu, Chen, and Lin]{lai2023generalization}
Jianfa Lai, Manyun Xu, Rui Chen, and Qian Lin.
\newblock Generalization {A}bility of {W}ide {N}eural {N}etworks on {R}.
\newblock \emph{arXiv preprint arXiv:2302.05933}, 2023.

\bibitem[Lang(1993)]{lang1993real}
Serge Lang.
\newblock \emph{Real and {F}unctional {A}nalysis}, volume 142.
\newblock Springer Science \& Business Media, 1993.

\bibitem[Laurent and Massart(2000)]{laurent2000adaptive}
Beatrice Laurent and Pascal Massart.
\newblock Adaptive {E}stimation of a {Q}uadratic {F}unctional by {M}odel
  {S}election.
\newblock \emph{Annals of statistics}, pages 1302--1338, 2000.

\bibitem[Lee(1990)]{lee2019u}
A.~J. Lee.
\newblock \emph{U-{S}tatistics: {T}heory and {P}ractice}, volume 110.
\newblock CRC Press, Taylor \& Francis Group, 1990.

\bibitem[Lei et~al.(2022)Lei, Jin, and Ying]{lei2022stability}
Yunwen Lei, Rong Jin, and Yiming Ying.
\newblock Stability and {G}eneralization {A}nalysis of {G}radient {M}ethods for
  {S}hallow {N}eural {N}etworks.
\newblock \emph{Advances in Neural Information Processing Systems},
  35:\penalty0 38557--38570, 2022.

\bibitem[Li et~al.(2024)Li, Zhang, and Lin]{li2024kernel}
Yicheng Li, Haobo Zhang, and Qian Lin.
\newblock Kernel {I}nterpolation {G}eneralizes {P}oorly.
\newblock \emph{Biometrika}, 111\penalty0 (2):\penalty0 715--722, 2024.

\bibitem[Li et~al.(2021)Li, Zhou, and Gretton]{li2021towards}
Zhu Li, Zhi-Hua Zhou, and Arthur Gretton.
\newblock Towards an {U}nderstanding of {B}enign {O}verfitting in {N}eural
  {N}etworks.
\newblock \emph{arXiv preprint arXiv:2106.03212}, 2021.

\bibitem[Liang and Rakhlin(2020)]{liang2020just}
Tengyuan Liang and Alexander Rakhlin.
\newblock Just {I}nterpolate: {K}ernel “{R}idgeless” {R}egression can
  {G}eneralize.
\newblock \emph{The Annals of Statistics}, 48\penalty0 (3):\penalty0
  1329--1347, 2020.

\bibitem[Liang et~al.(2020)Liang, Rakhlin, and Zhai]{liang2020multiple}
Tengyuan Liang, Alexander Rakhlin, and Xiyu Zhai.
\newblock On the {M}ultiple {D}escent of {M}inimum-{N}orm {I}nterpolants and
  {R}estricted {L}ower {I}sometry of {K}ernels.
\newblock In \emph{Conference on Learning Theory}, pages 2683--2711. PMLR,
  2020.

\bibitem[Mallinar et~al.(2022)Mallinar, Simon, Abedsoltan, Pandit, Belkin, and
  Nakkiran]{mallinar2022benign}
Neil Mallinar, James Simon, Amirhesam Abedsoltan, Parthe Pandit, Misha Belkin,
  and Preetum Nakkiran.
\newblock Benign, {T}empered, or {C}atastrophic: {T}oward a {R}efined
  {T}axonomy of {O}verfitting.
\newblock \emph{Advances in Neural Information Processing Systems},
  35:\penalty0 1182--1195, 2022.

\bibitem[Medvedev et~al.(2024)Medvedev, Vardi, and
  Srebro]{medvedev2024overfitting}
Marko Medvedev, Gal Vardi, and Nathan Srebro.
\newblock Overfitting {B}ehaviour of {G}aussian {K}ernel {R}idgeless
  {R}egression: {V}arying {B}andwidth or {D}imensionality.
\newblock \emph{arXiv preprint arXiv:2409.03891}, 2024.

\bibitem[Mei and Montanari(2022)]{mei2022generalization}
Song Mei and Andrea Montanari.
\newblock The {G}eneralization {E}rror of {R}andom {F}eatures {R}egression:
  {P}recise {A}symptotics and the {D}ouble {D}escent {C}urve.
\newblock \emph{Communications on Pure and Applied Mathematics}, 75\penalty0
  (4):\penalty0 667--766, 2022.

\bibitem[Mei et~al.(2018)Mei, Montanari, and Nguyen]{mei2018mean}
Song Mei, Andrea Montanari, and P~Nguyen.
\newblock A {M}ean {F}ield {V}iew of the {L}andscape of {T}wo-{L}ayers {N}eural
  {N}etworks.
\newblock \emph{Proceedings of the National Academy of Sciences}, 115\penalty0
  (33):\penalty0 E7665--E7671, 2018.

\bibitem[Mei et~al.(2019)Mei, Misiakiewicz, and Montanari]{mei2019mean}
Song Mei, Theodor Misiakiewicz, and Andrea Montanari.
\newblock Mean-field {T}heory of {T}wo-{L}ayers {N}eural {N}etworks:
  {D}imension-{F}ree {B}ounds and {K}ernel {L}imit.
\newblock In \emph{Conference on Learning Theory}, pages 2388--2464. PMLR,
  2019.

\bibitem[Micchelli et~al.(2006)Micchelli, Xu, and
  Zhang]{micchelli2006universal}
Charles~A Micchelli, Yuesheng Xu, and Haizhang Zhang.
\newblock Universal {K}ernels.
\newblock \emph{Journal of Machine Learning Research}, 7\penalty0 (12), 2006.

\bibitem[Mohri et~al.(2012)Mohri, Rostamizadeh, and
  Talwalkar]{mohri2012foundations}
Mehryar Mohri, Afshin Rostamizadeh, and Ameet Talwalkar.
\newblock \emph{Foundations of {M}achine {L}earning}.
\newblock MIT press, 2012.

\bibitem[Montanari and Zhong(2022)]{montanari2022interpolation}
Andrea Montanari and Yiqiao Zhong.
\newblock The {I}nterpolation {P}hase {T}ransition in {N}eural {N}etworks:
  {M}emorization and {G}eneralization under {L}azy {T}raining.
\newblock \emph{The Annals of Statistics}, 50\penalty0 (5):\penalty0
  2816--2847, 2022.

\bibitem[Mourtada and Rosasco(2022)]{mourtada2022elementary}
Jaouad Mourtada and Lorenzo Rosasco.
\newblock An elementary analysis of ridge regression with random design.
\newblock \emph{Comptes Rendus. Math{\'e}matique}, 360\penalty0 (G9):\penalty0
  1055--1063, 2022.

\bibitem[M{\"u}ller(1998)]{muller1998analysis}
Claus M{\"u}ller.
\newblock \emph{Analysis of {S}pherical {S}ymmetries in {E}uclidean {S}paces},
  volume 129.
\newblock Springer Science \& Business Media, 1998.

\bibitem[Muthukumar et~al.(2020)Muthukumar, Vodrahalli, Subramanian, and
  Sahai]{muthukumar2020harmless}
Vidya Muthukumar, Kailas Vodrahalli, Vignesh Subramanian, and Anant Sahai.
\newblock Harmless {I}nterpolation of {N}oisy {D}ata in {R}egression.
\newblock \emph{IEEE Journal on Selected Areas in Information Theory},
  1\penalty0 (1):\penalty0 67--83, 2020.

\bibitem[Nagarajan and Kolter(2019)]{nagarajan2019uniform}
Vaishnavh Nagarajan and J~Zico Kolter.
\newblock Uniform {C}onvergence may be {U}nable to {E}xplain {G}eneralization
  in {D}eep {L}earning.
\newblock \emph{Advances in Neural Information Processing Systems}, 32, 2019.

\bibitem[Nakkiran et~al.(2021)Nakkiran, Kaplun, Bansal, Yang, Barak, and
  Sutskever]{nakkiran2021deep}
Preetum Nakkiran, Gal Kaplun, Yamini Bansal, Tristan Yang, Boaz Barak, and Ilya
  Sutskever.
\newblock Deep {D}ouble {D}escent: {W}here {B}igger {M}odels and {M}ore {D}ata
  {H}urt.
\newblock \emph{Journal of Statistical Mechanics: Theory and Experiment},
  2021\penalty0 (12):\penalty0 124003, 2021.

\bibitem[Nash et~al.(1994)Nash, Sellers, Talbot, Cawthorn, and Ford]{abalone_1}
Warwick Nash, Tracy Sellers, Simon Talbot, Andrew Cawthorn, and Wes Ford.
\newblock {Abalone}.
\newblock UCI Machine Learning Repository, 1994.
\newblock {DOI}: https://doi.org/10.24432/C55C7W.

\bibitem[Nguyen(2021)]{nguyen2021proof}
Quynh Nguyen.
\newblock On the {P}roof of {G}lobal {C}onvergence of {G}radient {D}escent for
  {D}eep {R}e{LU} {N}etworks with {L}inear {W}idths.
\newblock In \emph{International Conference on Machine Learning}, pages
  8056--8062. PMLR, 2021.

\bibitem[Oymak and Soltanolkotabi(2020)]{oymak2020toward}
Samet Oymak and Mahdi Soltanolkotabi.
\newblock Toward {M}oderate {O}verparameterization: {G}lobal {C}onvergence
  {G}uarantees for {T}raining {S}hallow {N}eural {N}etworks.
\newblock \emph{IEEE Journal on Selected Areas in Information Theory},
  1\penalty0 (1):\penalty0 84--105, 2020.

\bibitem[Park and Muandet(2020)]{park2020regularised}
Junhyung Park and Krikamol Muandet.
\newblock Regularised {L}east-{S}quares {R}egression with
  {I}nfinite-{D}imensional {O}utput {S}pace.
\newblock \emph{arXiv preprint arXiv:2010.10973}, 2020.

\bibitem[Park and Muandet(2023)]{park2023towards}
Junhyung Park and Krikamol Muandet.
\newblock Towards {E}mpirical {P}rocess {T}heory for {V}ector-{V}alued
  {F}unctions: {M}etric {E}ntropy of {S}mooth {F}unction {C}lasses.
\newblock In \emph{International Conference on Algorithmic Learning Theory},
  pages 1216--1260. PMLR, 2023.

\bibitem[Pinelis(1992)]{pinelis1992approach}
Iosif Pinelis.
\newblock An {A}pproach to {I}nequalities for the {D}istributions of
  {I}nfinite-{D}imensional {M}artingales.
\newblock In \emph{Probability in Banach Spaces, 8: Proceedings of the Eighth
  International Conference}, pages 128--134. Springer, 1992.

\bibitem[Precup(2002)]{precup2002methods}
Radu Precup.
\newblock \emph{Methods in {N}onlinear {I}ntegral {E}quations}.
\newblock Springer Science \& Business Media, 2002.

\bibitem[Rakhlin and Zhai(2019)]{rakhlin2019consistency}
Alexander Rakhlin and Xiyu Zhai.
\newblock Consistency of {I}nterpolation with {L}aplace {K}ernels is a
  {H}igh-{D}imensional {P}henomenon.
\newblock In \emph{Conference on Learning Theory}, pages 2595--2623. PMLR,
  2019.

\bibitem[Rao and Rao(1998)]{rao1998matrix}
Calyampudi~Radhakrishna Rao and Mareppalli~Bhaskara Rao.
\newblock \emph{Matrix {A}lgebra and its {A}pplications to {S}tatistics and
  {E}conometrics}.
\newblock World Scientific, 1998.

\bibitem[Razborov(2022)]{razborov2022improved}
Alexander Razborov.
\newblock Improved {C}onvergence {G}uarantees for {S}hallow {N}eural
  {N}etworks.
\newblock \emph{arXiv preprint arXiv:2212.02323}, 2022.

\bibitem[Richards and Kuzborskij(2021)]{richards2021stability}
Dominic Richards and Ilja Kuzborskij.
\newblock Stability \& {G}eneralisation of {G}radient {D}escent for {S}hallow
  {N}eural {N}etworks without the {N}eural {T}angent {K}ernel.
\newblock \emph{Advances in Neural Information Processing Systems},
  34:\penalty0 8609--8621, 2021.

\bibitem[Rosasco et~al.(2010)Rosasco, Belkin, and De~Vito]{rosasco2010learning}
Lorenzo Rosasco, Mikhail Belkin, and Ernesto De~Vito.
\newblock On {L}earning with {I}ntegral {O}perators.
\newblock \emph{Journal of Machine Learning Research}, 11\penalty0 (2), 2010.

\bibitem[Rudi and Rosasco(2017)]{rudi2017generalization}
Alessandro Rudi and Lorenzo Rosasco.
\newblock Generalization properties of learning with random features.
\newblock \emph{Advances in neural information processing systems}, 30, 2017.

\bibitem[Serfling(1980)]{serfling1980approximation}
Robert~J Serfling.
\newblock Approximation {T}heorems of {M}athematical {S}tatistics.
\newblock \emph{Wiley Series in Probability and Statistics}, 1980.

\bibitem[Shalev-Shwartz and Ben-David(2014)]{shalev2014understanding}
Shai Shalev-Shwartz and Shai Ben-David.
\newblock \emph{Understanding {M}achine {L}earning: {F}rom {T}heory to
  {A}lgorithms}.
\newblock Cambridge university press, 2014.

\bibitem[Steinwart and Christmann(2008)]{steinwart2008support}
Ingo Steinwart and Andreas Christmann.
\newblock \emph{Support {V}ector {M}achines}.
\newblock Springer Science \& Business Media, 2008.

\bibitem[Suh et~al.(2021)Suh, Ko, and Huo]{suh2022non}
Namjoon Suh, Hyunouk Ko, and Xiaoming Huo.
\newblock A {N}on-{P}arametric {R}egression {V}iewpoint: {G}eneralization of
  {O}verparametrized {D}eep {R}e{LU} {N}etwork under {N}oisy {O}bservations.
\newblock In \emph{International Conference on Learning Representations}, 2021.

\bibitem[Tropp(2012)]{tropp2012user}
Joel~A Tropp.
\newblock User-{F}riendly {T}ail {B}ounds for {S}ums of {R}andom {M}atrices.
\newblock \emph{Foundations of computational mathematics}, 12:\penalty0
  389--434, 2012.

\bibitem[van~de Geer(2000)]{vandegeer2000empirical}
Sara~A van~de Geer.
\newblock \emph{Empirical {P}rocesses in {M}-{E}stimation}, volume~6.
\newblock Cambridge university press, 2000.

\bibitem[Vardi(2023)]{vardi2023implicit}
Gal Vardi.
\newblock On the {I}mplicit {B}ias in {D}eep-{L}earning {A}lgorithms.
\newblock \emph{Communications of the ACM}, 66\penalty0 (6):\penalty0 86--93,
  2023.

\bibitem[Vershynin(2018)]{vershynin2018high}
Roman Vershynin.
\newblock \emph{High-{D}imensional {P}robability: {A}n {I}ntroduction with
  {A}pplications in {D}ata {S}cience}, volume~47.
\newblock Cambridge university press, 2018.

\bibitem[Wang et~al.(2024)Wang, Zhang, and Arora]{wang2024benign}
Yunjuan Wang, Kaibo Zhang, and Raman Arora.
\newblock Benign overfitting in adversarial training of neural networks.
\newblock In \emph{Forty-first International Conference on Machine Learning},
  2024.

\bibitem[Weidmann(1980)]{weidmann1980linear}
Joachim Weidmann.
\newblock \emph{Linear {O}perators in {H}ilbert {S}paces}, volume~68.
\newblock Springer New York, 1980.

\bibitem[Xiao et~al.(2022)Xiao, Hu, Misiakiewicz, Lu, and
  Pennington]{xiao2022precise}
Lechao Xiao, Hong Hu, Theodor Misiakiewicz, Yue~M Lu, and Jeffrey Pennington.
\newblock Precise {L}earning {C}urves and {H}igher-{O}rder {S}caling {L}imits
  for {D}ot {P}roduct {K}ernel {R}egression.
\newblock In \emph{Thirty-sixth Conference on Neural Information Processing
  Systems (NeurIPS)}, 2022.

\bibitem[Xu and Chen(2025)]{xu2025rethinking}
Ruichen Xu and Kexin Chen.
\newblock Rethinking benign overfitting in two-layer neural networks.
\newblock \emph{arXiv preprint arXiv:2502.11893}, 2025.

\bibitem[Xu and Gu(2023)]{xu2023benign}
Xingyu Xu and Yuantao Gu.
\newblock Benign {O}verfitting of {N}on-{S}mooth {N}eural {N}etworks {B}eyond
  {L}azy {T}raining.
\newblock In \emph{International Conference on Artificial Intelligence and
  Statistics}, pages 11094--11117. PMLR, 2023.

\bibitem[Yang and Hu(2020)]{yang2020feature}
Greg Yang and Edward~J Hu.
\newblock Feature {L}earning in {I}nfinite-{W}idth {N}eural {N}etworks.
\newblock \emph{arXiv preprint arXiv:2011.14522}, 2020.

\bibitem[Yang(2025)]{yang2025sobolev}
Yunfei Yang.
\newblock Sobolev norm inconsistency of kernel interpolation.
\newblock \emph{arXiv preprint arXiv:2504.20617}, 2025.

\bibitem[Yun et~al.(2020)Yun, Krishnan, and Mobahi]{yun2020unifying}
Chulhee Yun, Shankar Krishnan, and Hossein Mobahi.
\newblock A {U}nifying {V}iew on {I}mplicit {B}ias in {T}raining {L}inear
  {N}eural {N}etworks.
\newblock \emph{arXiv preprint arXiv:2010.02501}, 2020.

\bibitem[Zhang et~al.(2020)Zhang, Xu, Luo, and Ma]{zhang2020type}
Yaoyu Zhang, Zhi-Qin~John Xu, Tao Luo, and Zheng Ma.
\newblock A {T}ype of {G}eneralization {E}rror {I}nduced by {I}nitialization in
  {D}eep {N}eural {N}etworks.
\newblock In \emph{Mathematical and Scientific Machine Learning}, pages
  144--164. PMLR, 2020.

\bibitem[Zhou et~al.(2024)Zhou, Simon, Vardi, and Srebro]{zhou2024agnostic}
Lijia Zhou, James~B Simon, Gal Vardi, and Nathan Srebro.
\newblock An {A}gnostic {V}iew on the {C}ost of {O}verfitting in ({K}ernel)
  {R}idge {R}egression.
\newblock In \emph{International Conference on Learning Representations}, 2024.

\bibitem[Zhu et~al.(2023)Zhu, Liu, Chrysos, Locatello, and
  Cevher]{zhu2023benign}
Zhenyu Zhu, Fanghui Liu, Grigorios Chrysos, Francesco Locatello, and Volkan
  Cevher.
\newblock Benign {O}verfitting in {D}eep {N}eural {N}etworks under {L}azy
  {T}raining.
\newblock In \emph{International Conference on Machine Learning}, pages
  43105--43128. PMLR, 2023.

\bibitem[Zou et~al.(2021)Zou, Wu, Braverman, Gu, and Kakade]{zou2021benign}
Difan Zou, Jingfeng Wu, Vladimir Braverman, Quanquan Gu, and Sham Kakade.
\newblock Benign {O}verfitting of {C}onstant-{S}tepsize {SGD} for {L}inear
  {R}egression.
\newblock In \emph{Conference on Learning Theory}, pages 4633--4635. PMLR,
  2021.

\end{thebibliography}

\clearpage
\newpage
\appendix
\section{Additional Related Works}\label{sec:related_works_appendix}
In this section, we give a more in-depth review of the literature that was omitted in the main body due to space constraints, especially regarding the neural tangent kernel and implicit regularization. 

Since neural networks are often heavily overparameterized without explicit regularization, the capacity of the function class huge, preventing a meaningful analysis through classical uniform convergence techniques in statistical learning theory \citep{nagarajan2019uniform}. 

There have been a plethora of works in the last few years proving the convergence of the empirical risk to the global minimum in the NTK regime \citep{allen2019convergence,du2019iclr,du2019icml,oymak2020toward,nguyen2021proof,razborov2022improved}, as well as generalization properties in this regime \citep{arora2019fine,allen2019learning,zhang2020type,adlam2020neural,e2019comparative,ju2021generalization,suh2022non,ju2022generalization,lai2023generalization}. Moreover, many works on kernel methods mention that their results carry over to neural networks in the NTK regime \citep{montanari2022interpolation,barzilai2024generalization}. These works either compare the gradient trajectory of the neural network with the corresponding gradient trajectory of the kernel method, or compare directly with the closed form kernel regression solution with the NTK, or compare with a random feature regression. Our approach is fundamentally different in that we track the trajectory of the trained network against an oracle trajectory of the \textit{same} architecture, which can be designed to approximate \textit{any} regression function with arbitrary precision. We also do not impose the common assumption that the true regression function lives in the RKHS of the NTK, and we do not require smooth activation function, but instead use the ReLU activation, the analysis of which is made more difficult by its non-differentiability. 

A pre-dominant hypothesis as to how overparametrized networks find solutions with good generalization properties is that gradient-based optimization algorithms used to train neural networks impose an \textit{implicit regularization} effect. In the simpler settings wherein it is possible to characterize this implicit regularization effect explicitly, we can then study uniform convergence by explicitly re-writing the hypothesis class. For example, in linear regression or linear networks, gradient descent converges to the minimum norm solution \citep{azulay2021implicit,yun2020unifying,vardi2023implicit}, and for classification, convergence to maximum margin classifiers are by now well-known \citep{ji2020directional}. However, for general neural networks for regression, including the two-layer ReLU network considered in this work, our understanding of the kind of implicit regularization that is imposed by gradient descent is limited \citep[Section 4.4]{vardi2023implicit}, although some insights exist for the NTK regime \citep{bietti2019inductive,jin2023implicit}. 

There are also a few other lines of work that analyze optimization and generalization properties of neural networks without NTKs, such as those based on stability \citep{richards2021stability,lei2022stability} and mean field theory \citep{chizat2018global,mei2018mean,mei2019mean}. While all these are fields of active research, we are not aware of any result based on these theories implying the results that we establish here, and in general the results across these theories are incomparable.

Our results on neural network also has connections to the line of work investigating the \textit{spectral bias} of gradient-based training \citep{cao2021towards,bowman2021implicit,bowman2022spectral}. In particular, \citet{bowman2022spectral} investigates how closely a finite-width network trained on finite samples follows the idealized trajectory of an infinite-width trained on infinite samples, assuming smooth activation and noiselessness. The estimation error in our case tracks how closely a finite-width network trained on finite samples follows a network with the same architecture trained with respect to the population risk, without assuming smoothness of the activation function while allowing noise. 

\paragraph{A Remark on the NTK Regime.} As mentioned before, we operate in the NTK regime arising from the seminal work of \citet{jacot2018neural}. This regime (a.k.a.\ lazy training regime) informally refers to the behavior whereby network parameters experience minimal change (in the Frobenius norm) from their random initialization throughout training \citep{razborov2022improved,montanari2022interpolation}. This in turn implies that the gradient of the risk, and consequently the NTK matrix, remain relatively stable from their initialized values. Since its introduction, the NTK theory has received a huge amount of attention, and facilitated the analysis of neural networks in the overparameterized regime. It also receives its share of criticism, mainly that the neurons hardly move and therefore no meaningful learning of the features takes place \citep{yang2020feature}. While we also share these concerns, the analysis of neural networks outside the NTK regime is still extremely challenging, and would need more sophisticated ways of controlling the learning trajectory. Currently, as reiterated recently by \citet{razborov2022improved}, in the general regression setting that we operate in, the evidence of overfitting/generalization outside the NTK regime is either empirical or fragmentary at best. Moreover, our results establish benign overfitting, a complex phenomenon which is challenging to analyze in almost any setting. We hope that our analysis, as a
first result on benign overfitting for finite-width, trained ReLU networks for arbitrary regression
functions, deepens our theoretical understanding of the behavior of these neural networks.

\paragraph{Relation between Empirical and Excess Risks.} The relationship between empirical and excess risk depends on various factors such as model complexity and sample size. In overfitting scenarios, a model may achieve low empirical risk by fitting noise in the training data, resulting in high excess risk due to poor generalization.  Conversely, in underfitting or well-regularized models, empirical risk may exceed excess risk if the model fails to fit the training data well yet still generalizes reasonably. In cases of benign overfitting, both empirical and excess risks are simultaneously low, even when the model closely fits noisy training data.

\section{Additional Preliminaries}\label{sec:preliminaries_appendix}
In this section, we introduce some additional notations and results required in the proofs. Existing results, for example, matrix bounds and concentration inequalities, will be quoted. We also state and prove a couple of novel results that will be required for the proofs later, but could also be of independent interest. The first is Lemma~\ref{lem:schur} in Appendix~\ref{subsec:functions_operators}, which extends a bound on the spectral norm of Hadamard products of matrices (\ref{eqn:schur}) to a bound on the spectral norm of integral operators obtained by an analogous procedure. The second are Propositions~\ref{prop:U_vector_Hoeffding} and \ref{prop:V_vector_Hoeffding} in Appendix~\ref{subsec:uvstatistics}, which are concentration inequalities for (possibly infinite-dimensional) vector-valued U- and V-statistics. 

\subsection{Vectors and Matrices}\label{subsec:vectors_matrices}
Take any \(p\in\mathbb{N}\). For two vectors \(\mathbf{v}=(v_1,...,v_p)^\top\in\mathbb{R}^p\) and \(\mathbf{u}=(u_1,...,u_p)^\top\in\mathbb{R}^p\), we denote their \textit{dot product} by \(\mathbf{v}\cdot\mathbf{u}=v_1u_1+...+v_pu_p\), and we denote by \(\lVert\mathbf{v}\rVert_2=\sqrt{\mathbf{v}\cdot\mathbf{v}}\) its \textit{Euclidean norm}. We denote by \(\mathbb{S}^{p-1}=\{\mathbf{v}\in\mathbb{R}^p:\lVert\mathbf{v}\rVert_2=1\}\) the \textit{unit sphere} in \(\mathbb{R}^p\).

Take any \(p,q\in\mathbb{N}\). We write \(I_p\) for the \(p\times p\) \textit{identity matrix}, and for \(\mathbf{v}\in\mathbb{R}^p\), we write \(\text{diag}[\mathbf{v}]\) for the \(p\times p\) \textit{diagonal matrix} with \(\text{diag}[\mathbf{v}]_{i,i}=v_i\) and \(\text{diag}[\mathbf{v}]_{i,j}=0\) for \(i\neq j\). For a \(p\times q\) matrix \(M\), we write \(M^\top\) for the \textit{transpose} of \(M\). 

For \(p\times q\) matrices \(M\), \(M_1\) and \(M_2\), we denote by \(M_1\odot M_2\) their \textit{Hadamard (entry-wise) product} given by \([M_1\odot M_2]_{i,j}=[M_1]_{i,j}[M_2]_{i,j}\) for \(i=1,...,p\) and \(j=1,...,q\). We denote by \(\langle M_1,M_2\rangle_\text{F}\) their \textit{Frobenius inner product}, i.e.,\(\langle M_1,M_2\rangle_\text{F}=\text{Tr}(M_1^\top M_2)=\sum_{i=1}^p\sum_{j=1}^q[M_1]_{i,j}[M_2]_{i,j}\). We write \(\lVert M\rVert^2_\text{F}=\sum^p_{i=1}\sum^q_{j=1}M_{ij}^2\) for its \textit{Frobenius norm}. By an abuse of notation, let \(\lVert M\rVert_2=\sup_{\mathbf{v}\in\mathbb{S}^{q-1}}\lVert M\mathbf{v}\rVert_2\) denote its \textit{spectral norm}. For two matrices \(M_1,M_2\) with dimensions \(p_1\times q\) and \(p_2\times q\), we denote by \(M_1*M_2\) their \textit{Khatri-Rao product}, i.e., the \(p_1p_2\times q\) matrix given by \([M_1*M_2]_{(i-1)p_2+j,k}=[M_1]_{i,k}[M_2]_{j,k}\) for \(i=1,...,p_1\), \(j=1,...,p_2\) and \(k=1,...,q\) \citep[p.216, (6.4.1)]{rao1998matrix}. 

Firstly, we have the following result from \citep[p.216, P.6.4.2]{rao1998matrix} on Khatri-Rao products of matrices:
\[(M_1*M_2)^\top(M_1*M_2)=(M_1^\top M_1)\odot(M_2^\top M_2)\in\mathbb{R}^{q\times q}.\tag{M-1}\label{eqn:kronecker_hadamard}\]
% For a \(p\times p\) matrix \(M\), its eigenvalues (with multiplicity) are denoted in decreasing order by \(\lambda_1(M)\geq\lambda_2(M)\geq...\geq\lambda_p(M)=\lambda_{\min}(M)\). A \(p\times p\) matrix \(M\) is called \textit{positive semi-definite} if it is symmetric and all of its eigenvalues are non-negative. For a \(p\times q\) matrix \(M\), its singular values for \(i=1,...,\min\{p,q\}\) are denoted by \(\sigma_i(M)=\lambda_i(M^\top M)^{1/2}\); in particular, we write \(\sigma_{\max}(M)=\sigma_1(M)\) and \(\sigma_{\min}(M)=\sigma_{\min\{p,q\}}(M)\). 

% Then note that \(\lVert M\rVert_2=\sigma_{\max}(M)\) and \(\sigma_{\min}(M)=\inf_{\mathbf{v}\in\mathbb{S}^{q-1}}\lVert M\mathbf{v}\rVert_2\). It is easy to see that
% \[\min\{p,q\}\lVert M\rVert_2^2\geq\lVert M\rVert_\text{F}^2=\sum^{\min\{p,q\}}_{i=1}\sigma^2_i(M)\geq\sigma_{\max}^2(M)\geq\lVert M\rVert_2^2.\tag{M-2}\label{eqn:frobenius_spectral}\]
For two \(p\times p\) positive semi-definite matrices \(M_1\) and \(M_2\), \citep[p.484, Exercise 7.5.P24(b)]{horn2013matrix} tells us that
\[\lVert M_1\odot M_2\rVert_2\leq\max_{i\in\{1,...,p\}}\lvert[M_1]_{ii}\rvert\lVert M_2\rVert_2.\tag{M-2}\label{eqn:schur}\]

\subsection{Standard Distributions and Concentration Results}\label{subsec:distribution_concentration}
For \(\boldsymbol{\mu}\in\mathbb{R}^p\) and \(\Sigma\in\mathbb{R}^{p\times p}\), we denote by \(\mathcal{N}(\boldsymbol{\mu},\Sigma)\) the \(p\)-dimensional Gaussian distribution with mean vector \(\boldsymbol{\mu}\) and covariance matrix \(\Sigma\). For a set \(A\), we denote the uniform distribution over \(A\) by \(\text{Unif}(A)\), and by \(\chi^2(p)\) the \(\chi\)-squared distribution with \(p\) degrees of freedom. If \(z\sim\chi^2(p)\), then by we have the following concentration bounds on \(z\) \citep[Section 4.1, Eqn.(4.3) and (4.4)]{laurent2000adaptive}. For any \(c>0\),
\begin{alignat*}{2}
    \mathbb{P}\left(z\geq p+2\sqrt{pc}+2c\right)&\leq e^{-c}\tag{\(\chi^2\)-1}\label{eqn:laurent1}\\
    \mathbb{P}\left(z\leq p-2\sqrt{pc}\right)&\leq e^{-c}.\tag{\(\chi^2\)-2}\label{eqn:laurent2}
\end{alignat*}
We also quote the exact form of concentration inequalities that we will use in this paper. First is Hoeffding's inequality \citep[p.16, Theorem 2.2.6]{vershynin2018high}. For independent real-valued random variables \(z_1,...,z_n\) with \(z_i\in[C,D]\) for every \(i=1,...,n\), for any \(c>0\), we have
\[\mathbb{P}\left(\sum^n_{i=1}(z_i-\mathbb{E}[z_i])\geq c\right)\leq\exp\left(-\frac{2c^2}{n(D-C)^2}\right).\tag{Hoeff}\label{eqn:hoeffding}\]
We also need an extension of Hoeffding's inequality to vector-valued random variables. \citet{pinelis1992approach} extended Hoeffding's inequality to martingales in Banach spaces with certain smoothness properties (see also \citep[Eqn. (3)]{rosasco2010learning} and \citep[p.217, Corollary 6.15]{steinwart2008support}). The version we quote is the corresponding simplified result for Hilbert spaces as stated in \citep[Proposition A.4]{park2023towards}. Suppose that \(\mathcal{H}\) is a (possibly infinite-dimensional) Hilbert space, with norm denoted by \(\lVert\cdot\rVert_\mathcal{H}\). If \(\mathbf{z},...,\mathbf{z}_n\) are independent \(\mathcal{H}\)-valued random variables with \(\mathbb{E}[\mathbf{z}_i]=0\) and \(\lVert\mathbf{z}_i\rVert_\mathcal{H}\leq C_i\), then for any \(c>0\),
\[\mathbb{P}\left(\left\lVert\sum^n_{i=1}\mathbf{z}_i\right\rVert_\mathcal{H}\geq c\right)\leq2\exp\left(-\frac{c^2}{4\sum^n_{i=1}C_i^2}\right).\tag{V-Hoeff}\label{eqn:hoeffdingvector}\]
Next is McDiarmid's inequality \citep[p.328, Lemma 26.4]{shalev2014understanding}, \citep[p.36, Theorem 2.9.1]{vershynin2018high}. Let \(V\) be some set and \(f:V^n\rightarrow\mathbb{R}\) a function of \(n\) variables such that for some \(C>0\), for all \(i\in\{1,...,n\}\) and all \(z_1,...,z_n,z'_i\in V\), we have \(\lvert f(z_1,...,z_n)-f(z_1,...,z_{i-1},z'_i,z_{i+1},...,z_n)\rvert\leq C\). Then, if \(z_1,...,z_n\) are independent random variables taking values in \(V\), we have, for any \(c>0\),
\[\mathbb{P}\left(f(z_1,...,z_n)-\mathbb{E}[f(z_1,...,z_n)]\geq c\right)\leq\exp\left(-\frac{2c^2}{nC^2}\right).\tag{McD}\label{eqn:mcdiarmid}\]
Finally, we recall the Matrix Chernoff inequality \citep[Theorem 1.1]{tropp2012user}. Consider a finite sequence \(M_1,...,M_m\) of independent, random, self-adjoint matrices of dimension \(p\). Assume that each \(M_j\) is positive semi-definite and has \(\lVert M_j\rVert_2\leq R\) almost surely. Then denoting the minimum eigenvalue of \(\sum^m_{j=1}M_j\) as \(\lambda_{\min}\) and that of \(\sum_{j=1}^m\mathbb{E}[M_j]\) as \(\mu_{\min}\), we have
\[\mathbb{P}\left(\lambda_{\min}\leq\frac{\mu_{\min}}{2}\right)\leq p(\sqrt{2}e)^{\frac{\mu_{\min}}{2R}}.\tag{M-Chernoff}\label{eqn:matrix_chernoff}\]
For a random variable \(z\in\mathbb{R}\), we denote by \(\lVert z\rVert_{\psi_2}=\inf\{c>0:\mathbb{E}[e^{z^2/c^2}]\leq2\}\) the sub-Gaussian norm of \(z\), and we say that \(z\) is sub-Gaussian if \(\lVert z\rVert_{\psi_2}\) is finite \citep[p.24, Definition 2.5.6]{vershynin2018high}. We say that a random variable \(\mathbf{z}\in\mathbb{R}^p\) is sub-Gaussian if \(\mathbf{v}\cdot\mathbf{z}\) is sub-Gaussian for all \(\mathbf{v}\in\mathbb{R}^p\), and the sub-Gaussian norm of \(\mathbf{z}\) is defined as \(\lVert\mathbf{z}\rVert_{\psi_2}=\sup_{\mathbf{v}\in\mathbb{S}^{p-1}}\lVert\mathbf{z}\cdot\mathbf{v}\rVert_{\psi_2}\) \citep[p.51, Definition 3.4.1]{vershynin2018high}. We say that a random variable \(\mathbf{z}\in\mathbb{R}^p\) is isotropic if \(\mathbb{E}[\mathbf{z}\mathbf{z}^\top]=I_p\) \citep[p.43, Definition 3.2.1]{vershynin2018high}. 

\subsection{Functions, Operators and Reproducing Kernel Hilbert Spaces}\label{subsec:functions_operators}
We denote by \(L^2(\rho_{d-1})\) the space of functions \(f:\mathbb{R}^d\rightarrow\mathbb{R}\) such that \(\mathbb{E}[f(\mathbf{x})^2]<\infty\). For \(f,g\in L^2(\rho_{d-1})\), by an abuse of notation, we denote their inner product as \(\langle f,g\rangle_2=\mathbb{E}[f(\mathbf{x})g(\mathbf{x})]\), and the norm by \(\lVert f\rVert_2=\sqrt{\langle f,f\rangle_2}\). Moreover, for a linear operator \(K:L^2(\rho_{d-1})\rightarrow L^2(\rho_{d-1})\), via a further abuse of notation\footnote{The \(\lVert\cdot\rVert_2\) notation is heavily abused, but should not cause confusion. For clarification, \(\lVert\cdot\rVert_2\) denotes the \(L^2(\rho_{d-1})\)-norm for functions in \(L^2(\rho_{d-1})\), the operator norm for linear operators \(L^2(\rho_{d-1})\rightarrow L^2(\rho_{d-1})\), the Euclidean norm for vectors and the spectral norm for matrices. In the main body of the paper, \(\lVert\cdot\rVert_2\) was only used for \(L^2(\rho_{d-1})\) norm of functions, and not for Euclidean norm of vectors or spectral norm of matrices. }, we denote its operator norm as \(\lVert K\rVert_2=\sup_{f\in L^2(\rho_{d-1}),\lVert f\rVert_2=1}\lVert K(f)\rVert_2\). We also denote by \(L^2(\mathcal{N})\) the space of functions \(f:\mathbb{R}^d\rightarrow\mathbb{R}\) such that \(\mathbb{E}[f(\mathbf{w})^2]<\infty\), and for \(f,g\in L^2(\mathcal{N})\), define \(\langle f,g\rangle_\mathcal{N}=\mathbb{E}[f(\mathbf{w})g(\mathbf{w})]\), \(\lVert f\rVert_\mathcal{N}=\sqrt{\langle f,f\rangle_\mathcal{N}}\). 

We extend (\ref{eqn:schur}) from matrices to general integral operators given by kernels. To the best of our knowledge, this is a novel result. 
\begin{lemma}\label{lem:schur}
    Suppose that \(K_1,K_2:L^2(\rho_{d-1})\rightarrow L^2(\rho_{d-1})\) are positive semi-definite linear operators defined as integral operators associated with positive semi-definite kernels \(k_1,k_2:\mathbb{S}^{d-1}\times\mathbb{S}^{d-1}\rightarrow\mathbb{R}\), i.e.
    \[K_1f(\mathbf{x})=\mathbb{E}_{\mathbf{x}'}[k_1(\mathbf{x},\mathbf{x}')f(\mathbf{x}')],\quad K_2f(\mathbf{x})=\mathbb{E}_{\mathbf{x}'}[k_2(\mathbf{x},\mathbf{x}')f(\mathbf{x}')].\]
    Define a linear operator \(K:L^2(\rho_{d-1})\rightarrow L^2(\rho_{d-1})\) by
    \[Kf(\mathbf{x})=\mathbb{E}_{\mathbf{x}'}[k_1(\mathbf{x},\mathbf{x}')k_2(\mathbf{x},\mathbf{x}')f(\mathbf{x}')],\]
    i.e. the integral operator given by the tensor product kernel of \(k_1\) and \(k_2\) \citep[p.31, Theorem 13]{berlinet2004reproducing}. Then we have
    \[\lVert K\rVert_2\leq\lVert K_2\rVert_2\sup_{\mathbf{x}\in\mathbb{S}^{d-1}}\lvert k_1(\mathbf{x},\mathbf{x})\rvert.\]
\end{lemma}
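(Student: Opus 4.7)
The plan is to mimic the matrix proof of \eqref{eqn:schur} by using Mercer's theorem to write $k_1$ as a (possibly infinite) sum of rank-one kernels, and reducing the bound on $\langle Kf,f\rangle_2$ to a weighted average of quadratic forms of $K_2$. Since $K$ is an integral operator with a positive semi-definite kernel (the product of two PSD kernels is PSD, by the Schur product theorem at the kernel level), $K$ is self-adjoint and positive semi-definite on $L^2(\rho_{d-1})$, so its operator norm equals $\sup_{\lVert f\rVert_2=1}\langle Kf,f\rangle_2$. Everything will then come down to controlling this supremum.

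First, I would invoke Mercer's theorem on $\mathbb{S}^{d-1}$ (which is compact) to obtain a decomposition
\[
k_1(\mathbf{x},\mathbf{x}')=\sum_{j}\mu_j\psi_j(\mathbf{x})\psi_j(\mathbf{x}'),\qquad \mu_j\geq 0,
\]
with $\{\psi_j\}$ an orthonormal system in $L^2(\rho_{d-1})$ and pointwise (in fact uniform) convergence along the diagonal, so that $\sum_j\mu_j\psi_j(\mathbf{x})^2=k_1(\mathbf{x},\mathbf{x})$ for every $\mathbf{x}\in\mathbb{S}^{d-1}$. Substituting this into the quadratic form for $K$, a routine Fubini argument (justified by the uniform convergence of the Mercer series and boundedness of $k_2$) gives
\[
\langle Kf,f\rangle_2=\mathbb{E}_{\mathbf{x},\mathbf{x}'}[f(\mathbf{x})k_1(\mathbf{x},\mathbf{x}')k_2(\mathbf{x},\mathbf{x}')f(\mathbf{x}')]=\sum_j\mu_j\,\langle K_2(f\psi_j),f\psi_j\rangle_2.
\]

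The key observation is now that each $\langle K_2(f\psi_j),f\psi_j\rangle_2\leq \lVert K_2\rVert_2\,\lVert f\psi_j\rVert_2^2$, since $K_2$ is PSD. Summing with the weights $\mu_j\geq 0$ and swapping the sum with the expectation in $\lVert f\psi_j\rVert_2^2=\mathbb{E}_{\mathbf{x}}[f(\mathbf{x})^2\psi_j(\mathbf{x})^2]$, I get
\[
\langle Kf,f\rangle_2\leq \lVert K_2\rVert_2\,\mathbb{E}_{\mathbf{x}}\Bigl[f(\mathbf{x})^2\sum_j\mu_j\psi_j(\mathbf{x})^2\Bigr]=\lVert K_2\rVert_2\,\mathbb{E}_{\mathbf{x}}[f(\mathbf{x})^2 k_1(\mathbf{x},\mathbf{x})].
\]
Bounding $k_1(\mathbf{x},\mathbf{x})\leq \sup_{\mathbf{x}\in\mathbb{S}^{d-1}}|k_1(\mathbf{x},\mathbf{x})|$ and using $\lVert f\rVert_2^2=\mathbb{E}_{\mathbf{x}}[f(\mathbf{x})^2]$ produces the claimed inequality after taking the supremum over unit-norm $f$.

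The only step requiring care is the analytic justification for interchanging the Mercer sum with expectations, i.e., making sure the series $\sum_j\mu_j\psi_j(\mathbf{x})\psi_j(\mathbf{x}')$ can be integrated term-by-term against $f(\mathbf{x})k_2(\mathbf{x},\mathbf{x}')f(\mathbf{x}')$. Under the (implicit) continuity of $k_1$ on the compact domain $\mathbb{S}^{d-1}\times\mathbb{S}^{d-1}$, Mercer's theorem gives absolute and uniform convergence, so dominated convergence (with the uniform bound $\sup|k_2|<\infty$ and $f\in L^2$) suffices. This is the one place where the proof differs meaningfully from the finite-dimensional Schur bound; once it is handled, the rest is the same weighted-sum-of-PSD-quadratic-forms argument that underlies the matrix version.
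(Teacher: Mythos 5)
Your proof is correct, but it takes a genuinely different (and, I would argue, cleaner) route than the paper. The paper reasons as follows: it introduces a putative ``integral operator'' $K'$ whose kernel is $k_1(\mathbf{x},\mathbf{x}')\bigl(\lVert K_2\rVert_2\mathbf{1}\{\mathbf{x}=\mathbf{x}'\}-k_2(\mathbf{x},\mathbf{x}')\bigr)$, asserts that $K'\succeq 0$ because multiplying the ``kernel'' of the PSD operator $\lVert K_2\rVert_2\mathrm{Id}-K_2$ by the PSD kernel $k_1$ preserves positivity (Schur product at the kernel level), and then reads off $\langle f,Kf\rangle_2\leq\lVert K_2\rVert_2\,\mathbb{E}_\mathbf{x}[f(\mathbf{x})^2k_1(\mathbf{x},\mathbf{x})]$ from $\langle f,K'f\rangle_2\geq 0$. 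This requires treating $\mathbf{1}\{\mathbf{x}=\mathbf{x}'\}$ as a Dirac delta: against the continuous measure $\rho_{d-1}$, the literal indicator integrates to zero, and the identity operator on $L^2(\rho_{d-1})$ is not an integral operator with any ordinary kernel, so the paper's $K'$ is not actually the integral operator it is presented as. The argument can be made rigorous, but not with the kernel written as stated.

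Your Mercer decomposition sidesteps this entirely: expanding $k_1=\sum_j\mu_j\psi_j\otimes\psi_j$ and writing $\langle Kf,f\rangle_2=\sum_j\mu_j\langle K_2(f\psi_j),f\psi_j\rangle_2$ replaces the delta-function reasoning with a convergent weighted sum of genuine PSD quadratic forms, and the diagonal identity $\sum_j\mu_j\psi_j(\mathbf{x})^2=k_1(\mathbf{x},\mathbf{x})$ recovers exactly the intermediate bound $\langle Kf,f\rangle_2\leq\lVert K_2\rVert_2\,\mathbb{E}_\mathbf{x}[f(\mathbf{x})^2k_1(\mathbf{x},\mathbf{x})]$ that the paper reaches informally; the final step (bound by the diagonal supremum and take the sup over unit $f$) is then identical. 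The one thing worth flagging is that your reliance on Mercer's theorem implicitly assumes $k_1$ is continuous, which the lemma statement does not say explicitly; in the paper's applications $k_1$ is always continuous (e.g.\ the inner-product kernel $\mathbf{x}\cdot\mathbf{x}'$), so this is harmless, but if you wanted the lemma at the stated generality you would need to argue via the spectral decomposition of $K_1$ in $L^2$ and the a.e.\ identity along the diagonal rather than uniform Mercer convergence.
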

\begin{proof}
    Since \(K\), \(K_1\) and \(K_2\) are self-adjoint (and therefore normal) operator, their operator norms are the same as their largest eigenvalues. Denote by \(I:L^2(\rho_{d-1})\rightarrow L^2(\rho_{d-1})\) the identity operator, i.e. the integral operator given by the indicator kernel \(\mathbf{1}\{\mathbf{x}=\mathbf{x}'\}\). Then the integral operator \(K':L^2(\rho_{d-1})\rightarrow L^2(\rho_{d-1})\) given by
    \[K'f(\mathbf{x})=\mathbb{E}_{\mathbf{x}'}[k_1(\mathbf{x},\mathbf{x}')(\lVert K_2\rVert_2\mathbf{1}\{\mathbf{x}=\mathbf{x}'\}-k_2(\mathbf{x},\mathbf{x}'))f(\mathbf{x}')]\]
    is positive semi-definite. Hence, for any \(f\in L^2(\rho_{d-1})\),
    \begin{alignat*}{3}
        &&\langle f,K'f\rangle_2&\geq0\\
        \implies\qquad&&\mathbb{E}_{\mathbf{x},\mathbf{x}'}\left[f(\mathbf{x})k_1(\mathbf{x},\mathbf{x}')\left(\lVert K_2\rVert_2\mathbf{1}\{\mathbf{x}=\mathbf{x'}\}-k_2(\mathbf{x},\mathbf{x}')\right)f(\mathbf{x}')\right]&\geq0\\
        \implies\qquad&&\lVert K_2\rVert_2\mathbb{E}_\mathbf{x}\left[f(\mathbf{x})^2k_1(\mathbf{x},\mathbf{x})\right]&\geq\langle f,Kf\rangle_2\\
        \implies\qquad&&\lVert K_2\rVert_2\sup_{\mathbf{x}\in\mathbb{S}^{d-1}}\lvert k_1(\mathbf{x},\mathbf{x})\rvert\lVert f\rVert_2^2&\geq\langle f,Kf\rangle_2.
    \end{alignat*}
    Now we take the supremum of both sides over all \(f\in L^2(\rho_{d-1})\) with \(\lVert f\rVert_2=1\). Then the right-hand side is \(\lVert K_2\rVert_2\sup_{\mathbf{x}\in\mathbb{S}^{d-1}}\lvert k_1(\mathbf{x},\mathbf{x})\rvert\), and the left-hand side is precisely \(\lVert K\rVert_2\). Hence,
    \[\lVert K\rVert_2\leq\lVert K_2\rVert_2\sup_{\mathbf{x}\in\mathbb{S}^{d-1}}\lvert k_1(\mathbf{x},\mathbf{x})\rvert\]
    as required.
\end{proof}
Suppose that \(\kappa:\mathbb{R}^d\times\mathbb{R}^d\to\mathbb{R}\) is a positive semi-definite kernel, with \(\sup_{\mathbf{x}\in\mathbb{R}^d}\kappa(\mathbf{x},\mathbf{x})\leq1\). By the Moore-Aronszajn Theorem \citep[p.19, Theorem 3]{berlinet2004reproducing}, there exists a unique \textit{reproducing kernel Hilbert space} (RKHS) \(\mathscr{H}\) with \(\kappa\) as its associated kernel. We denote the inner product in this Hilbert space by \(\langle\cdot,\cdot\rangle_\mathscr{H}\), and its corresponding norm by \(\lVert\cdot\rVert_\mathscr{H}\). By the reproducing property, for every \(f\in\mathscr{H}\), we have \(\langle f,\kappa(\mathbf{x},\cdot)\rangle_\mathscr{H}=f(\mathbf{x})\). 

By the boundedness of the kernel, we have \(\mathscr{H}\subseteq L^2(\rho_{d-1})\), meaning we can define the \textit{inclusion operator} and its adjoint
\[\iota:\mathscr{H}\to L^2(\rho_{d-1}),\qquad\iota^*:L^2(\rho_{d-1})\to\mathscr{H}.\]
We can also find an explicit integral expression for this adjoint. See that, for \(g\in\mathscr{H}\) and \(f\in L^2(\rho_{d-1})\), 
\[\langle\iota g,f\rangle_2=\mathbb{E}_\mathbf{x}[g(\mathbf{x})f(\mathbf{x})]=\mathbb{E}_\mathbf{x}[\langle g,\kappa(\mathbf{x},\cdot)\rangle_\mathscr{H}f(\mathbf{x})]=\langle g,\mathbb{E}_\mathbf{x}[f(\mathbf{x})\kappa(\mathbf{x},\cdot)]\langle_\mathscr{H},\]
and so for \(f\in L^2(\rho_{d-1})\), 
\[\iota^*f(\cdot)=\mathbb{E}_\mathbf{x}[f(\mathbf{x})\kappa(\mathbf{x},\cdot)].\]
The self-adjoint operator
\[H\vcentcolon=\iota\circ\iota^*:L^2(\rho_{d-1})\to L^2(\rho_{d-1})\]
has the same analytical expression as \(\iota^*\). 

As a finite-sample approximation of the inclusion operator \(\iota\), we also define the (random) \textit{sampling operator} \(\boldsymbol{\iota}:\mathscr{H}\to\mathbb{R}^n\) based on the (random) i.i.d. copies \(\{\mathbf{x}_i\}^n_{i=1}\) of \(\mathbf{x}\) by
\[\boldsymbol{\iota}f=\frac{1}{n}\mathbf{f}=\frac{1}{n}(f(\mathbf{x}_1),...,f(\mathbf{x}_n))^\top\qquad\text{for }f\in\mathscr{H}.\]
Then the adjoint \(\boldsymbol{\iota}^*:\mathbb{R}^n\to\mathscr{H}\) can be calculated explicitly. The reproducing property gives that, for any \(\mathbf{z}=(z_1,...,z_n)^\top\in\mathbb{R}^n\), 
\[(\boldsymbol{\iota}f)\cdot\mathbf{z}=\frac{1}{n}\sum^n_{i=1}z_if(\mathbf{x}_i)=\left\langle f,\frac{1}{n}\sum^n_{i=1}z_i\kappa(\mathbf{x}_i,\cdot)\right\rangle_\mathscr{H},\]
and so
\[\boldsymbol{\iota}^*\mathbf{z}=\frac{1}{n}\sum^n_{i=1}z_i\kappa(\mathbf{x}_i,\cdot).\]
Then see that
\begin{alignat*}{2}
    \boldsymbol{\iota}\circ\boldsymbol{\iota}^*\mathbf{z}&=\frac{1}{n^2}\left(\sum^n_{i=1}\kappa(\mathbf{x}_1,\mathbf{x}_i)z_i,...,\sum^n_{i=1}\kappa(\mathbf{x}_n,\mathbf{x}_i)z_i\right)^\top\\
    &=\frac{1}{n^2}\begin{pmatrix}\kappa(\mathbf{x}_1,\mathbf{x}_1)&\dots&\kappa(\mathbf{x}_1,\mathbf{x}_n)\\\vdots&\ddots&\vdots\\\kappa(\mathbf{x}_n,\mathbf{x}_1)&\dots&\kappa(\mathbf{x}_n,\mathbf{x}_n)\end{pmatrix}\begin{pmatrix}z_1\\\vdots\\z_n\end{pmatrix}\\
    &=\frac{1}{n^2}\mathbf{H}\mathbf{z},
\end{alignat*}
where we denoted by \(\mathbf{H}\) the Gram matrix of the kernel \(\kappa\), i.e., the \(n\times n\) matrix given by \([\mathbf{H}]_{ij}=\kappa(\mathbf{x}_i,\mathbf{x}_j)\). 

\subsection{Integral Operator Technique for RKHS}
A popular technique to analyze kernel regressors, called the \textit{integral operator technique} \citep{caponnetto2007optimal,park2020regularised}, which does \textit{not} rely on uniform convergence. 
For a \textit{reproducing kernel Hilbert space} (RKHS) \(\mathcal{H}\) and a function \(f\in\mathcal{H}\), let \(R_\lambda(f)=\mathbb{E}[(f(\mathbf{x})-y)^2]+\lambda\lVert f\rVert_\mathcal{H}\) and \(\mathbf{R}_\lambda(f)=\frac{1}{n}\sum^n_{i=1}(f(\mathbf{x}_i)-y_i)^2+\lambda\lVert f\rVert_\mathcal{H}\) denote the \textit{regularized} population and empirical risks, and \(f_\lambda\) and \(\hat{f}_\lambda\) their respective minimizers in \(\mathcal{H}\). Then the excess risk of \(\hat{f}_\lambda\) can be written as
\begin{align*}R(\hat{f}_\lambda)-R(f^\star)=\mathbb{E}[(\hat{f}_\lambda(\mathbf{x})-f^\star(\mathbf{x}))^2]=\lVert\hat{f}_\lambda-f^\star\rVert_2^2,
\end{align*}
where we denoted the \(L^2\)-norm by \(\lVert\cdot\rVert_2\). We can then consider the following decomposition:
\[\lVert\hat{f}_\lambda-f^\star\rVert_2\leq\lVert\hat{f}_\lambda-f_\lambda\rVert_2+\lVert f_\lambda-f^\star\rVert_2.\]
Here, \(\lVert\hat{f}_\lambda-f_\lambda\rVert_2\) is bounded by standard concentration (that is not uniform over the function class), and \(\lVert f_\lambda-f^\star\rVert_2\) can be bounded as the regularizer \(\lambda\) decays, and in particular, if the RKHS \(\mathcal{H}\) is \textit{universal}, then it decays to 0.

\subsection{Real Induction}\label{subsec:real_induction} We recall the principle of real induction \citep{hathaway2011using} \citep[Theorem 1]{clark2019instructor}. 

Let \(a<b\) be real numbers. We define a subset \(S\subseteq[a,b]\) to be \textit{inductive} if:
\begin{enumerate}[(R{I}1)]
    \item We have \(a\in S\).
    \item If \(a\leq c<b\) and \(c\in S\), then \([c,d]\subseteq S\) for some \(d>c\). 
    \item If \(a<c\leq b\) and \([a,c)\subseteq S\), then \(c\in S\). 
\end{enumerate}
Then a subset \(S\subseteq[a,b]\) is inductive if and only if \(S=[a,b]\). 

\subsection{U- and V-Statistics}\label{subsec:uvstatistics}
We recall the theory of U- and V-statistics, where we allow the associated function to be vector-valued. 

Suppose that \(\mathbf{z}_1,...,\mathbf{z}_n\) are i.i.d. random variables in \(\mathbb{R}^p\), and \(\mathcal{H}\) some Hilbert space. Let \(\Psi:(\mathbb{R}^p)^u\rightarrow\mathcal{H}\) be a symmetric function\footnote{This function is often called the \textit{kernel} in the literature of U-statistics and V-statistics, but to avoid confusion with the dominant use of the word kernel in this paper, we do not use the term here.}, which we assume to be centered: \(\mathbb{E}_{\mathbf{z}_1,...,\mathbf{z}_u}\left[\Psi(\mathbf{z}_1,...,\mathbf{z}_u)\right]=0\). The \textit{U-statistic} from the samples \(\{\mathbf{z}_1,...,\mathbf{z}_n\}\) is \citep[p.172]{serfling1980approximation}
\[U_n=\frac{1}{\binom{n}{u}}\sum_{1\leq i_1<...<i_u\leq n}\Psi(\mathbf{z}_{i_1},...,\mathbf{z}_{i_u})\in\mathcal{H},\]
where the summation is over the \(\binom{n}{u}\) combinations of \(u\) distinct elements \(\{i_1,...,i_u\}\) from \(\{1,...,n\}\).

We prove the following Hoeffding-type result for vector-valued U-statistics, which, to the best of our knowledge, is novel. It requires significantly more work than standard results in e.g. \citep[p.201, Theorem A]{serfling1980approximation}, using martingale ideas to deal with the fact that we have vector-valued functions, in the same vein as \citep{pinelis1992approach}. 
\begin{proposition}\label{prop:U_vector_Hoeffding}
    Suppose that \(\lVert\Psi(\mathbf{z}_1,...,\mathbf{z}_u)\rVert_\mathcal{H}\leq C\) almost surely for some constant \(C>0\). Then for all \(c>0\) and \(n\geq u\), we have
    \[\mathbb{P}\left(\lVert U_n\rVert_\mathcal{H}\geq c\right)\leq2\exp\left(-\frac{\lfloor\frac{n}{u}\rfloor c^2}{4C^2}\right).\]
\end{proposition}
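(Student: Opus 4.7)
The plan is to adapt the classical Hoeffding-decomposition argument for U-statistics (e.g., \citep[p.201, Theorem A]{serfling1980approximation}) by combining it with the Pinelis-type vector-valued Hoeffding bound stated in the excerpt as \eqref{eqn:hoeffdingvector}. The idea is to reduce the concentration of $U_n$, which is a sum over \emph{dependent} overlapping $u$-tuples, to the concentration of a simpler statistic $V$ that is a genuine average of $m = \lfloor n/u \rfloor$ \emph{independent} $\mathcal{H}$-valued blocks, then pay nothing for this reduction thanks to Jensen's inequality applied to the convex function $z \mapsto \exp(\lambda \norm{z}_\mathcal{H})$.

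Concretely, for each permutation $\pi \in S_n$, I would define the \emph{block average}
\[
V_\pi \;=\; \frac{1}{m}\sum_{j=0}^{m-1} \Psi\bigl(\mathbf{z}_{\pi(ju+1)},\ldots,\mathbf{z}_{\pi(ju+u)}\bigr)\in\mathcal{H}.
\]
Because the $m$ blocks use disjoint indices, each $V_\pi$ is an average of $m$ i.i.d., centered, $\mathcal{H}$-valued random variables of norm at most $C$. The key combinatorial identity is
\[
U_n \;=\; \frac{1}{n!}\sum_{\pi\in S_n} V_\pi,
\]
which I would verify by counting: for each $u$-subset $S\subseteq\{1,\ldots,n\}$ and each block index $j\in\{0,\ldots,m-1\}$, there are exactly $u!(n-u)!$ permutations $\pi$ with $\pi(\{ju+1,\ldots,ju+u\})=S$, so, invoking the symmetry of $\Psi$, the subset $S$ contributes $m\cdot u!(n-u)!$ identical summands across $\sum_\pi V_\pi$, which matches the coefficient $n!/\binom{n}{u}\cdot m/m$ needed to recover $n!\,U_n$.

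With this decomposition in hand, the second step is to use convexity of $z\mapsto\exp(\lambda\norm{z}_\mathcal{H})$ together with Jensen's inequality to obtain
\[
\E\!\left[\exp\!\bigl(\lambda\norm{U_n}_\mathcal{H}\bigr)\right]
\;\leq\; \frac{1}{n!}\sum_{\pi} \E\!\left[\exp\!\bigl(\lambda\norm{V_\pi}_\mathcal{H}\bigr)\right]
\;=\; \E\!\left[\exp\!\bigl(\lambda\norm{V_{\pi_0}}_\mathcal{H}\bigr)\right]
\]
for any fixed $\pi_0$, since all $V_\pi$ share the same distribution. The third step is to bound this MGF: running the Pinelis argument (which underlies \eqref{eqn:hoeffdingvector}) on the independent block sum $m V_{\pi_0}=\sum_{i=1}^m \Psi_i$ yields $\E[\exp(\lambda\norm{mV_{\pi_0}}_\mathcal{H})]\leq 2\exp(\lambda^2 m C^2)$, hence $\E[\exp(\lambda\norm{V_{\pi_0}}_\mathcal{H})]\leq 2\exp(\lambda^2 C^2/m)$. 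A Chernoff step then gives
\[
\Pr\bigl(\norm{U_n}_\mathcal{H}\geq c\bigr)\;\leq\; 2\exp\!\left(-\lambda c + \lambda^2 C^2/m\right),
\]
and optimizing at $\lambda=mc/(2C^2)$ produces exactly $2\exp(-mc^2/(4C^2))$ with $m=\lfloor n/u\rfloor$.

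The main technical obstacle is that the excerpt states only the tail form of the Pinelis inequality, whereas the convexity-then-Chernoff route truly needs its MGF form. Passing from a Gaussian-type tail back to a Gaussian-type MGF is not for free, so I would either cite Pinelis' original paper for the MGF bound directly or, if the exponent needs to be self-contained, redo the short martingale argument in the Hilbert space to establish $\E[\exp(\lambda\norm{S_m}_\mathcal{H})]\leq 2\exp(\lambda^2 mC^2)$. The combinatorial identity $U_n=(n!)^{-1}\sum_\pi V_\pi$ is routine but requires careful bookkeeping, and one should state it before taking expectations so that it is used as a deterministic identity between random variables (which is essential for the convexity step to apply pathwise).
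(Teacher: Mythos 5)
Your proposal follows essentially the same route as the paper's proof: both use Serfling's representation of $U_n$ as an average over permutations of block averages $V_\pi$ of $\lfloor n/u\rfloor$ independent centered $\Psi$-terms, apply Jensen's inequality to reduce to the moment generating function of a single $V_{\pi_0}$, and then establish the required Hilbert-space MGF bound by a Pinelis-style martingale argument before concluding with a Chernoff step. The paper fills exactly the gap you flag by carrying out that martingale argument explicitly (working with $\cosh$ rather than $\exp$, which makes the conditional-expectation ODE argument cleaner since $F'_\lambda(0)=0$), so the proofs are the same modulo this minor cosmetic choice.
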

\begin{proof}
    We use the representation of \(U_n\) as an average of (dependent) averages of i.i.d. random variables, as given in \citep[p.180, Section 5.1.6]{serfling1980approximation}. Define
    \[\Psi'(\mathbf{z}_1,...,\mathbf{z}_n)=\frac{1}{\lfloor\frac{n}{u}\rfloor}\left(\Psi(\mathbf{z}_1,...,\mathbf{z}_u)+\Psi(\mathbf{z}_{u+1},...,\mathbf{z}_{2u})+...+\Psi(\mathbf{z}_{(\lfloor\frac{n}{u}\rfloor-1)u+1},...,\mathbf{z}_{\lfloor\frac{n}{u}\rfloor u})\right).\]
    Then \citet[p.180, Section 5.1.6]{serfling1980approximation} tells us that
    \[U_n=\frac{1}{n!}\sum\Psi'(\mathbf{z}_{i_1},...,\mathbf{z}_{i_n}),\]
    where the sum is over all \(n!\) permutations \(\{i_1,...,i_n\}\) of \(\{1,...,n\}\). For all \(c>0\) and all \(\lambda>0\), see that
    \begin{alignat*}{3}
        \mathbb{P}\left(\lVert U_n\rVert_\mathcal{H}\geq c\right)&\leq\frac{1}{\cosh(\lambda c)}\mathbb{E}\left[\cosh\left(\lambda\lVert U_n\rVert_\mathcal{H}\right)\right]&&\text{Markov's inequality}\\
        &\leq\frac{1}{\cosh(\lambda c)}\mathbb{E}\left[\cosh\left(\frac{\lambda}{n!}\sum\lVert\Psi'(\mathbf{z}_{i_1},...,\mathbf{z}_{i_n})\rVert_\mathcal{H}\right)\right]\enspace&&\text{triangle inequality}\\
        &\leq\frac{1}{\cosh(\lambda c)n!}\sum\mathbb{E}\left[\cosh\left(\lambda\lVert\Psi'(\mathbf{z}_{i_1},...,\mathbf{z}_{i_n})\rVert_\mathcal{H}\right)\right]&&\text{Jensen's inequality}.\tag{*}
    \end{alignat*}
    Now we will bound each of the summands \(\mathbb{E}\left[\cosh\left(\lambda\lVert\Psi'(\mathbf{z}_{i_1},...,\mathbf{z}_{i_n})\rVert_\mathcal{H}\right)\right]\). Denote by \(\mathscr{F}\) the \(\sigma\)-algebra generated by \(\mathbf{z}_{i_1},...,\mathbf{z}_{i_{(\lfloor\frac{n}{u}\rfloor-1)u}}\). We also introduce the following notations to ease the notational burden:
    \begin{alignat*}{2}
        S&=\frac{1}{\lfloor\frac{n}{u}\rfloor}\left(\Psi(\mathbf{z}_{i_1},...,\mathbf{z}_{i_u})+...+\Psi(\mathbf{z}_{i_{(\lfloor\frac{n}{u}\rfloor-2)u+1}},...,\mathbf{z}_{i_{(\lfloor\frac{n}{u}\rfloor-1)u}})\right),\\
        D&=\frac{1}{\lfloor\frac{n}{u}\rfloor}\Psi(\mathbf{z}_{i_{(\lfloor\frac{n}{u}\rfloor-1)u+1}},...,\mathbf{z}_{i_{\lfloor\frac{n}{u}\rfloor u}}).
    \end{alignat*}
    Then we have \(\Psi'(\mathbf{z}_{i_1},...,\mathbf{z}_{i_n})=S+D\). 
    Define a stochastic process \(F_\lambda(t)\) indexed by \(t\in\mathbb{R}\), given by
    \[F_\lambda(t)=\mathbb{E}\left[\cosh\left(\lambda\lVert S+tD\rVert_\mathcal{H}\right)\mid\mathscr{F}\right].\]
    If we define maps \(J_1:\mathbb{R}\rightarrow\mathcal{H}\) and \(J_2:\mathcal{H}\rightarrow\mathbb{R}\) by \(J_1(t)=t\lVert D\rVert_\mathcal{H}\) and \(J_2(\mathbf{h})=\lambda\lVert S+\mathbf{h}\rVert_\mathcal{H}\), the derivative of \(F_\lambda\) with respect to \(t\) can be calculated from the chain rule as
    \[F'_\lambda(t)=\mathbb{E}\left[(J_2\circ J_1)'(t)\sinh\left(\lambda\lVert S+tD\rVert_\mathcal{H}\right)\mid\mathscr{F}\right].\]
    Now, \citet[p.100, Example 7.3]{precup2002methods} tells us that \((J_2\circ J_1)'(t)=(J_1^*\circ J'_2\circ J_1)(t)\). We can easily compute the adjoint \(J_1^*(\mathbf{h})=\langle\mathbf{h},D\rangle_\mathcal{H}\) and the Fr\'echet derivative \(J'_2(\mathbf{h})=\frac{\lambda S+\lambda\mathbf{h}}{\lVert S+\mathbf{h}\rVert_\mathcal{H}}\), so we have
    \[F'_\lambda(t)=\mathbb{E}\left[\left\langle D,\frac{\lambda S+\lambda tD}{\lVert S+tD\rVert_\mathcal{H}}\right\rangle_\mathcal{H}\sinh\left(\lambda\lVert S+tD\rVert_\mathcal{H}\right)\mid\mathscr{F}\right].\]
    Then since \(\mathbb{E}[D\mid\mathscr{F}]=0\),
    \[F'_\lambda(0)=\mathbb{E}\left[\left\langle D,\frac{\lambda S}{\lVert S\rVert_\mathcal{H}}\right\rangle_\mathcal{H}\sinh\left(\lambda\lVert S\rVert_\mathcal{H}\right)\mid\mathscr{F}\right]=\sinh\left(\lambda\lVert S\rVert_\mathcal{H}\right)\left\langle\mathbb{E}[D\mid\mathscr{F}],\frac{\lambda S}{\lVert S\rVert_\mathcal{H}}\right\rangle_\mathcal{H}=0.\]
    Now we take the second derivative of \(F_\lambda\). Define \(J_3:\mathcal{H}\rightarrow\mathbb{R}\) by \(J_3(\mathbf{h})=\langle D,S+\mathbf{h}\rangle_\mathcal{H}\). Then the Fr\'echet derivative of \(J_3\) can easily be seen to be \(J_3'(\mathbf{h})=D\). Then using the quotient rule, 
    \[\frac{d}{dt}\left\langle D,\frac{\lambda S+\lambda tD}{\left\lVert S+tD\right\rVert_\mathcal{H}}\right\rangle_\mathcal{H}=\frac{\lambda\left\lVert D\right\rVert^2_\mathcal{H}}{\left\lVert S+tD\right\rVert_\mathcal{H}}-\frac{\left\langle D,S+tD\right\rangle_\mathcal{H}}{\left\lVert S+tD\right\rVert_\mathcal{H}^2}\left\langle D,\frac{\lambda S+\lambda tD}{\left\lVert S+tD\right\rVert_\mathcal{H}}\right\rangle_\mathcal{H}\leq\frac{\lambda\left\lVert D\right\rVert^2_\mathcal{H}}{\left\lVert S+tD\right\rVert_\mathcal{H}}.\]
    Then see that, using the elementary inequality \(\sinh a\leq a\cosh a\), 
    \begin{alignat*}{2}
        F''_\lambda(t)&\leq\mathbb{E}\left[\cosh\left(\lambda\left\lVert S+tD\right\rVert_\mathcal{H}\right)\left(\left\langle D,\frac{\lambda S+\lambda tD}{\left\lVert S+tD\right\rVert_\mathcal{H}}\right\rangle_\mathcal{H}^2+\lambda^2\left\lVert D\right\rVert^2_\mathcal{H}\right)\mid\mathscr{F}\right]\\
        &\leq\mathbb{E}\left[\cosh\left(\lambda\left\lVert S+tD\right\rVert_\mathcal{H}\right)\left(2\lambda^2\left\lVert D\right\rVert^2_\mathcal{H}\right)\mid\mathscr{F}\right]\enspace\text{Cauchy-Schwarz inequality}\\
        &\leq2\lambda^2\frac{C^2}{\lfloor\frac{n}{u}\rfloor^2}\mathbb{E}\left[\cosh\left(\lambda\left\lVert S+tD\right\rVert_\mathcal{H}\right)\mid\mathscr{F}\right]\\
        &=2\lambda^2\frac{C^2}{\lfloor\frac{n}{u}\rfloor^2}F_\lambda(t).
    \end{alignat*}
    Henceforth, we write \(\Delta=\frac{C}{\lfloor\frac{n}{u}\rfloor}\) for the simplicity of notation. 
    
    Define \(G_\lambda(t)=\frac{1}{2\lambda^2\Delta^2}F''_\lambda(t)-F_\lambda(t)\). Then by the preceding argument, \(G_\lambda(t)\leq0\) for all \(t\in\mathbb{R}\). But consider the differential equation
    \[F''_\lambda(t)=2\lambda^2\Delta^2\left(F_\lambda(t)+G_\lambda(t)\right),\qquad F'_\lambda(0)=0.\tag{**}\]
    We claim that
    \[F(t)=F_\lambda(0)\cosh\left(\sqrt{2}\lambda\Delta t\right)+\int^{\sqrt{2}\lambda\Delta t}_0G_\lambda\left(\frac{s}{\sqrt{2}\lambda\Delta}\right)\sinh\left(\sqrt{2}\lambda\Delta t-s\right)ds\]
    solves the differential equation (**). Indeed, we clearly have \(F(0)=F_\lambda(0)\); further, we have
    \[F'(t)=\sqrt{2}\lambda\Delta F_\lambda(0)\sinh\left(\sqrt{2}\lambda\Delta t\right)+\sqrt{2}\lambda\Delta\int^{\sqrt{2}\lambda\Delta t}_0G_\lambda\left(\frac{s}{\sqrt{2}\lambda\Delta}\right)\cosh\left(\sqrt{2}\lambda\Delta t-s\right)ds\]
    which clearly satisfies \(F'(0)=0\); and finally, 
    \begin{alignat*}{2}
        F''(t)&=2\lambda^2\Delta^2F_\lambda(0)\cosh\left(\sqrt{2}\lambda\Delta t\right)\\
        &\qquad+2\lambda^2\Delta^2\int^{\sqrt{2}\lambda\Delta t}_0G_\lambda\left(\frac{s}{\sqrt{2}\lambda\Delta}\right)\sinh\left(\sqrt{2}\lambda\Delta t-s\right)ds+2\lambda^2\Delta^2G_\lambda(t)\\
        &=2\lambda^2\Delta^2\left(F(t)+G_\lambda(t)\right),
    \end{alignat*}
    Hence this \(F\) is the solution to (**), and so we have
    \begin{alignat*}{3}
        F_\lambda(1)&=F_\lambda(0)\cosh\left(\sqrt{2}\lambda\Delta\right)+\int^{\sqrt{2}\lambda\Delta}_0G_\lambda\left(\frac{s}{\sqrt{2}\lambda\Delta}\right)\sinh\left(\sqrt{2}\lambda\Delta-s\right)ds\\
        &\leq F_\lambda(0)\cosh\left(\sqrt{2}\lambda\Delta\right)&&\text{since }G_\lambda\leq0\\
        &\leq F_\lambda(0)\exp\left(\lambda^2\Delta^2\right)
    \end{alignat*}
    where we used the elementary inequality \(\cosh a\leq\exp\left(\frac{1}{2}a^2\right)\) on the last line. Now see that
    \begin{alignat*}{3}
        \mathbb{E}\left[\cosh\left(\lambda\left\lVert\Psi'(\mathbf{z}_{i_1},...,\mathbf{z}_{i_n})\right\rVert_\mathcal{Y}\right)\right]&=\mathbb{E}\left[F_\lambda(1)\right]&&\text{law of iterated expectations}\\
        &\leq\exp\left(\lambda^2\Delta^2\right)\mathbb{E}\left[\cosh\left(\lambda\left\lVert S\right\rVert_\mathcal{Y}\right)\right]\enspace&&\text{by above}\\
        &\leq\exp\left(\lambda^2\Delta^2\left\lfloor\frac{n}{u}\right\rfloor\right)
    \end{alignat*}
    where, for the last step, we applied the same argument iteratively for \(1,...,\lfloor\frac{n}{u}\rfloor-1\). Putting this back into (*), we have that, for all \(c>0\) and all \(\lambda>0\), 
    \begin{alignat*}{3}
        \mathbb{P}\left(\lVert U_n\rVert_\mathcal{H}\geq c\right)&\leq\frac{1}{\cosh(\lambda c)}\exp\left(\frac{\lambda^2C^2}{\lfloor\frac{n}{u}\rfloor}\right)\\
        &\leq2\exp\left(\frac{\lambda^2C^2}{\lfloor\frac{n}{u}\rfloor}-\lambda c\right)&&\text{using }\cosh a\geq\frac{1}{2}e^a\\
        &=2\exp\left(-\frac{\lfloor\frac{n}{u}\rfloor c^2}{4C^2}\right)&&\text{letting }\lambda=\frac{\lfloor\frac{n}{u}\rfloor c}{2C^2},
    \end{alignat*}
    as required.
\end{proof}
Associated  with U-statistics are \textit{V-statistics}. The V-statistic associated with \(\Psi:(\mathbb{R}^p)^u\rightarrow\mathcal{H}\) from the samples \(\{\mathbf{z}_1,...,\mathbf{z}_n\}\) is
\[V_n=\frac{1}{n^u}\sum_{i_1,...,i_u=1}^n\Psi(\mathbf{z}_{i_1},...,\mathbf{z}_{i_u})\in\mathcal{H}.\]
By exploiting the convergence of \(V_n\) to \(U_n\), we prove a concentration result for \(V_n\). 
\begin{proposition}\label{prop:V_vector_Hoeffding}
    Take some \(t>0\). Suppose that \(\lVert\Psi(\mathbf{z}_1,...,\mathbf{z}_u)\rVert_\mathcal{H}\leq C\) almost surely for some constant \(C>0\), and that \(2\sqrt{\frac{\log(nu)}{\lfloor\frac{n}{c}\rfloor}}\geq1\) for all \(c=1,...,n-1\). Then we have the following bound for vector-valued V-statistics:
    \[\mathbb{P}\left(\lVert V_n\rVert_\mathcal{H}\geq4C\sqrt{\frac{\log(nu)}{\lfloor\frac{n}{u}\rfloor}}\right)\leq\frac{2}{n}.\]
\end{proposition}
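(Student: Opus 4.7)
My strategy is to reduce the V-statistic bound to the U-statistic bound of Proposition~\ref{prop:U_vector_Hoeffding} by writing $V_n$ as a scaled copy of $U_n$ plus a deterministically small remainder. Concretely, split
\[
V_n \;=\; \alpha_n\, U_n \;+\; R_n,\qquad \alpha_n \;:=\; \frac{u!\binom{n}{u}}{n^u}\;=\;\prod_{k=0}^{u-1}\!\left(1-\tfrac{k}{n}\right),
\]
where $R_n = \frac{1}{n^u}\sum \Psi(\mathbf{z}_{i_1},\ldots,\mathbf{z}_{i_u})$ runs over the $u$-tuples $(i_1,\ldots,i_u)$ that have at least one repeated index. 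Since $\Psi$ is symmetric, the ``all distinct'' tuples reassemble (up to the $u!$ orderings) into $U_n$, which gives the decomposition.

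\textbf{Bounding the remainder.} By a union bound on the equality events $\{i_j=i_k\}$, $1-\alpha_n\le \sum_{k=0}^{u-1}k/n = u(u-1)/(2n)$. Since $R_n$ is the average over $n^u(1-\alpha_n)$ terms each of norm at most $C$, this yields the purely deterministic estimate $\lVert R_n\rVert_{\mathcal H}\le (1-\alpha_n)C\le u(u-1)C/(2n)$. Combined with $\alpha_n\le 1$, I get
\[
\lVert V_n\rVert_{\mathcal H} \;\le\; \lVert U_n\rVert_{\mathcal H} \;+\; \frac{u(u-1)C}{2n}.
\]

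\textbf{Concentrating $U_n$.} Apply Proposition~\ref{prop:U_vector_Hoeffding} at the threshold $\theta:=2C\sqrt{\log(nu)/\lfloor n/u\rfloor}$. Since $\Psi$ is centered and bounded by $C$ in norm, the proposition gives $\Pr(\lVert U_n\rVert_{\mathcal H}\ge \theta)\le 2\exp(-\log(nu)) = 2/(nu)\le 2/n$. On the complementary event, the previous display becomes $\lVert V_n\rVert_{\mathcal H}\le \theta + u(u-1)C/(2n)$, so it remains to absorb the correction term into $\theta$. The hypothesis $2\sqrt{\log(nu)/\lfloor n/c\rfloor}\ge 1$ applied at $c=u$ ensures $\theta\ge C$; combined with $u(u-1)/(2n)\le 1$ in the regime considered, this delivers $u(u-1)C/(2n)\le \theta$, hence $\lVert V_n\rVert_{\mathcal H}\le 2\theta = 4C\sqrt{\log(nu)/\lfloor n/u\rfloor}$.

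\textbf{Main obstacle.} The subtle step is the last one: the blunt bound $(1-\alpha_n)C\le u(u-1)C/(2n)$ throws away all stochastic cancellation in $R_n$ and therefore forces us to control $u(u-1)/(2n)$ against $\sqrt{\log(nu)/\lfloor n/u\rfloor}$ by a non-stochastic argument. This is precisely where the full strength of the hypothesis (for \emph{all} $c=1,\ldots,n-1$, not only $c=u$) seems to be needed. The cleaner route is to refine the decomposition of $R_n$ by the set partition $\pi$ of $\{1,\ldots,u\}$ induced by the equality pattern $i_j=i_k$: for each $\pi$ with $c=|\pi|$ blocks, the contribution is $\tfrac{c!\binom{n}{c}}{n^u}$ times a U-statistic of order $c$ for the symmetrized function $\bar\Psi^{[\pi]}:(\mathbb{R}^p)^c\to\mathcal H$. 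Centering $\bar\Psi^{[\pi]}$ and invoking Proposition~\ref{prop:U_vector_Hoeffding} with threshold $2C\sqrt{\log(nu)/\lfloor n/c\rfloor}$, together with the hypothesis ensuring this threshold dominates the mean-correction $\lVert\mathbb{E}\bar\Psi^{[\pi]}\rVert_{\mathcal H}\le C$, controls each piece; a union bound over the (at most $u^u$) partitions then consolidates the total failure probability into $2/n$. I expect this partition-based refinement to be what the ``for all $c$'' form of the assumption is tailored to.
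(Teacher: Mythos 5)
Your first decomposition $V_n=\alpha_n U_n+R_n$ with $\alpha_n=u!\binom{n}{u}/n^u$ is the right idea and in fact yields a clean and even slightly sharper proof than the paper's — but you talked yourself out of it. The self-inflicted obstacle is the step where you tighten $1-\alpha_n$ to $u(u-1)/(2n)$ and then need the unassumed condition $u(u-1)/(2n)\le 1$ to close the argument. This tightening is unnecessary: since $\alpha_n\in[0,1]$ one always has $1-\alpha_n\le 1$, so
\[
\lVert R_n\rVert_{\mathcal H}\;\le\;(1-\alpha_n)C\;\le\;C\;\le\;2C\sqrt{\tfrac{\log(nu)}{\lfloor n/u\rfloor}}\;=\;\theta,
\]
with the last inequality being exactly the hypothesis at $c=u$. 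On the event $\lVert U_n\rVert_{\mathcal H}\le\theta$ (which fails with probability at most $2/(nu)$ by Proposition~\ref{prop:U_vector_Hoeffding}) you then get
\[
\lVert V_n\rVert_{\mathcal H}\;\le\;\alpha_n\theta+(1-\alpha_n)C\;\le\;\alpha_n\theta+(1-\alpha_n)\theta\;=\;\theta\;\le\;2\theta,
\]
i.e.\ $\mathbb{P}(\lVert V_n\rVert_{\mathcal H}\ge 4C\sqrt{\log(nu)/\lfloor n/u\rfloor})\le 2/(nu)\le 2/n$, which is the claim (and in fact a bit more). This is a genuinely different route from the paper's: it splits $V_n$ into a single U-statistic plus a deterministically controlled remainder, uses Proposition~\ref{prop:U_vector_Hoeffding} once, and only invokes the hypothesis at $c=u$.

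Your proposed ``cleaner route,'' by contrast, has a real gap. If you union-bound over all set partitions $\pi$ of $\{1,\dots,u\}$ separately, you are unioning over on the order of the Bell number $B_u$ many events, each of probability $2/(nu)$, which gives $\Theta(B_u/(nu))$, not $2/n$. What makes the partition-refined decomposition deliver $2/n$ is precisely the step of aggregating all partitions with the same block count $c$ into a single degree-$c$ U-statistic $U_n^{(c)}$ before invoking the concentration bound, so that the union bound is over only $u$ events. This is exactly what the paper does, via the Lee (2019) representation
\[
V_n=\frac{1}{n^u}\sum_{c=1}^{u}c!\,\stirling{u}{c}\,\binom{n}{c}\,U_n^{(c)},
\]
where each $U_n^{(c)}$ (not centered for $c<u$) has $\lVert\mathbb{E}[U_n^{(c)}]\rVert_{\mathcal H}\le C$ absorbed into the threshold using the hypothesis at $c=1,\dots,u$, then a union bound over those $u$ statistics gives $2/n$. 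So the paper's proof needs the hypothesis at $c=1,\dots,u$, your partition approach without grouping would lose the constant, and your (fixed) first approach is the sharpest of the three, needing only $c=u$.
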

\begin{proof}
    We use the following representation of V-statistics from \citep[p.183, Theorem 1]{lee2019u}: 
    \[V_n=\frac{1}{n^u}\sum^u_{c=1}c!\stirling{u}{c}\binom{n}{c}U^{(c)}_n,\tag{*}\]
    where
    \[\stirling{u}{c}=\frac{1}{c!}\sum^c_{b=0}(-1)^{c-b}\binom{c}{b}b^u\]
    are Stirling numbers of the second kind, representing the number of ways of partitioning a set of \(u\) elements into \(c\) non-empty subsets, and \(U_n^{(c)}\) are U-statistics of degree \(c\) associated with the function \(\Psi^{(c)}:(\mathbb{R}^p)^c\rightarrow\mathcal{H}\) given by
    \[\Psi^{(c)}(\mathbf{z}_1,...,\mathbf{z}_c)=\frac{1}{c!\stirling{u}{c}}\sum\Psi(\mathbf{z}_{i_1},...,\mathbf{z}_{i_u})\]
    where the sum is taken over all \(u\)-tuples \((i_1,...,i_u)\) formed from \(\{1,...,n\}\) having exactly \(c\) distinct elements. There are \(c!\stirling{u}{c}\) elements in the sum, and the almost-sure bound on \(\Psi\) gives us the almost-sure bound \(\lVert\Psi^{(c)}(\mathbf{z}_1,...,\mathbf{z}_c)\rVert_\mathcal{H}\leq C\). Note also that \(\Psi^{(u)}=\Psi\), so \(\mathbb{E}[U_n^{(u)}]=0\). 

    See that, for each \(c=1,...,u\), using Proposition~\ref{prop:U_vector_Hoeffding} and the hypothesis that \(2\sqrt{\frac{\log(nu)}{\lfloor\frac{n}{c}\rfloor}}\geq1\), 
    \begin{alignat*}{2}
        \mathbb{P}\left(\lVert U_n^{(c)}\rVert_\mathcal{H}\geq4C\sqrt{\frac{\log(nu)}{\lfloor\frac{n}{u}\rfloor}}\right)&\leq\mathbb{P}\left(\lVert U_n^{(c)}\rVert_\mathcal{H}\geq4C\sqrt{\frac{\log(nu)}{\lfloor\frac{n}{c}\rfloor}}\right)\\
        &\leq\mathbb{P}\left(\lVert U_n^{(c)}\rVert_\mathcal{H}\geq C+2C\sqrt{\frac{\log(nu)}{\lfloor\frac{n}{c}\rfloor}}\right)\\
        &\leq\mathbb{P}\left(\lVert U_n^{(c)}-\mathbb{E}[U_n^{(c)}]\rVert_\mathcal{H}\geq2C\sqrt{\frac{\log(nu)}{\lfloor\frac{n}{c}\rfloor}}\right)\\
        &\leq\frac{2}{nu}.
    \end{alignat*}
    Putting this together with the representation (*) of \(V_n\), we can see that
    \begin{alignat*}{2}
        &\mathbb{P}\left(\lVert V_n\rVert_\mathcal{H}\geq4C\sqrt{\frac{\log(nu)}{\lfloor\frac{n}{u}\rfloor}}\right)\\
        &=\mathbb{P}\left(\lVert V_n\rVert_\mathcal{H}\geq\sum^u_{c=1}\frac{c!}{n^u}\stirling{u}{c}\binom{n}{c}4C\sqrt{\frac{\log(nu)}{\lfloor\frac{n}{u}\rfloor}}\right)\\
        &=\mathbb{P}\left(\left\lVert\sum^u_{c=1}\frac{c!}{n^u}\stirling{u}{c}\binom{n}{c}U^{(c)}_n\right\rVert_\mathcal{H}\geq\sum^u_{c=1}\frac{c!}{n^u}\stirling{u}{c}\binom{n}{c}4C\sqrt{\frac{\log(nu)}{\lfloor\frac{n}{u}\rfloor}}\right)\\
        &\leq\sum^u_{c=1}\mathbb{P}\left(\lVert U^{(c)}_n\rVert_\mathcal{H}\geq4C\sqrt{\frac{\log(nu)}{\lfloor\frac{n}{u}\rfloor}}\right)\\
        &\leq\frac{2}{n},
    \end{alignat*}
    as required. 

\end{proof}

\section{Missing Details from Section~\ref{sec:krr}}\label{sec:krr_proof}
\krroverfitting*
\begin{proof}
    The Taylor series expansion of the kernel \(\kappa\) is given by
    \[\kappa(\mathbf{x},\mathbf{x}')=\frac{1}{4}\mathbf{x}\cdot\mathbf{x}'+\frac{1}{2\pi}\sum^\infty_{r=0}\frac{\left(\frac{1}{2}\right)_r}{r!+2rr!}(\mathbf{x}\cdot\mathbf{x}')^{2r+2}.\]
    Hence, we have
    \[\mathbf{H}=\frac{1}{4}XX^\top+\frac{1}{2\pi}\sum^\infty_{r=0}\frac{\left(\frac{1}{2}\right)_r}{r!+2rr!}\left(XX^\top\right)^{\odot(2r+2)}=\frac{1}{4}XX^\top+\frac{1}{2\pi}\left(\left(XX^\top\right)^{\odot2}+...\right),\]
    where the superscript \(\odot(2r+2)\) denotes the \((2r+2)\)-times Hadamard product. Here, \(XX^\top\) is clearly positive semi-definite, and by Schur product theorem \citep[p.479, Theorem 7.5.3]{horn2013matrix}, we know that Hadamard products of positive semi-definite matrices are positive semi-definite, so each summand is positive semi-definite. This means that, writing \(\boldsymbol{\lambda}_{\min}\) for the minimum eigenvalue of \(\mathbf{H}\) and \(\mu_{\min}\) for the minimum eigenvalue of \(XX^\top\), and just considering the first term \(\frac{1}{4}XX^\top\) in the expansion, we have \(\boldsymbol{\lambda}_{\min}\geq\frac{1}{4}\mu_{\min}\). But by \citep[p.91, Theorem 4.6.1]{vershynin2018high}, the singular value of \(\sqrt{d}X\) is lower bounded by \(\sqrt{n}-\frac{C}{2}(\sqrt{d}+t)\) with probability at least \(1-2e^{-t^2}\) for any \(t\geq0\), where \(C>0\) is an absolute constant. Letting \(t=\sqrt{d}\), the singular value of \(\sqrt{d}X\) is lower bounded by \(\sqrt{n}-C\sqrt{d}\geq\frac{2}{\sqrt{5}}\sqrt{n}\) (using Assumption~\ref{ass:krr}\ref{ass:krr_overfitting}) with probability at least \(1-2e^{-d}\). This means that, with probability at least \(1-2e^{-d}\), \(\mu_{\min}\geq\frac{4n}{5d}\). Hence \(\boldsymbol{\lambda}_{\min}\geq\frac{n}{5d}\). We note that, again, \(2e^{-d}\leq\frac{\delta}{2}\) by Assumption~\ref{ass:krr}\ref{ass:krr_overfitting}. 
    
    On this event with probability at least \(1-2e^{-d}\), on which \(\boldsymbol{\lambda}_{\min}\geq\frac{n}{5d}\), we see that, using the above explicit expression for \(\hat{f}_\gamma\), we have
    \begin{alignat*}{2}
        \mathbf{R}(\hat{f}_\gamma)&=\frac{1}{n}\lVert\hat{\mathbf{f}}_\gamma-\mathbf{y}\rVert_2^2\\
        &=n\left\lVert\boldsymbol{\iota}_X(\hat{f}_\gamma)-\frac{1}{n}\mathbf{y}\right\rVert_2^2\\
        &=n\left\lVert n\boldsymbol{\iota}_X\circ\boldsymbol{\iota}_X^*(n\boldsymbol{\iota}_X\circ\boldsymbol{\iota}_X^*+\gamma\text{Id}_{\mathbb{R}^n})^{-1}\left(\frac{1}{n}\mathbf{y}\right)-\frac{1}{n}\mathbf{y}\right\rVert_2^2\\
        &=n\left\lVert(n\boldsymbol{\iota}_X\circ\boldsymbol{\iota}_X^*+\gamma\text{Id}_{\mathbb{R}^n})^{-1}\left(\frac{\gamma}{n}\mathbf{y}\right)\right\rVert_2^2\\
        &\leq\frac{\gamma^2}{n}\lVert\mathbf{y}\rVert_2^2\lVert(n\boldsymbol{\iota}_X\circ\boldsymbol{\iota}_X^*+\gamma\text{Id}_{\mathbb{R}^n})^{-1}\rVert_\text{op}^2\\
        &\leq\gamma^2\lVert(n\boldsymbol{\iota}_X\circ\boldsymbol{\iota}_X^*+\gamma\text{Id}_{\mathbb{R}^n})^{-1}\rVert_\text{op}^2,
    \end{alignat*}
    where we applied (\ref{ass:ybound}) on the last line. Recall that the operator \(n\boldsymbol{\iota}_X\circ\boldsymbol{\iota}_X^*:\mathbb{R}^n\to\mathbb{R}^n\) is \(\frac{1}{n}\mathbf{H}\). Then recalling that the minimum eigenvalue of \(\mathbf{H}\) is \(\boldsymbol{\lambda}_{\min}\), we have that
    \[\lVert(n\boldsymbol{\iota}_X\circ\boldsymbol{\iota}_X^*+\gamma\text{Id}_{\mathbb{R}^n})^{-1}\rVert_\text{op}^2=\frac{1}{(\gamma+\frac{1}{n}\boldsymbol{\lambda}_{\min})^2}\leq\frac{1}{(\gamma+\frac{1}{5d})^2},\]
    where \(\boldsymbol{\lambda}_{\min}\geq\frac{n}{5d}\) by above. Hence, applying Assumption~\ref{ass:krr}\ref{ass:krr_overfitting}, 
    \[\mathbf{R}(\hat{f}_\gamma)\leq\left(\frac{\gamma}{\gamma+\frac{1}{5d}}\right)^2\leq\epsilon\]
    as required.
\end{proof}

% Next, we investigate whether \(\hat{f}_\gamma\) can also generalize. For this, we use the following decomposition of (the square-root of) the excess risk into approximation and estimation errors:
% \begin{equation}\label{eqn:krr_decomp}
%     \lVert f^\star-\hat{f}_\gamma\rVert_2\leq\underbrace{\lVert f^\star-f_\gamma\rVert_2}_{\text{Approximation Error}}+\underbrace{\lVert f_\gamma-\hat{f}_\gamma\rVert_2}_{\text{Estimation Error}}.
% \end{equation}
% % \shiva{skip Theorems 2 and 3 in the main paper.}
% % \shiva{State (informally) what happens if we increse model complexity without adjusting $n$}
% The next result shows that we can bound the approximation error. 
% \begin{restatable}[Approximation]{theorem}{krrapproximation}\label{thm:krr_approximation}
%     If Assumption~\ref{ass:krr}\ref{ass:krr_approximation} holds, then we have that \(\lVert f^\star-f_\gamma\rVert_2\leq\frac{1}{2}\sqrt{\epsilon}\). 
% \end{restatable}
\krrapproximation*
\begin{proof}
    Recall that \(f_\epsilon\in\mathcal{H}\) satisfies \(\lVert f^\star-\iota f_\epsilon\rVert_2^2\leq\frac{\epsilon}{8}\). See that
    \begin{alignat*}{2}
        \lVert f^\star-\iota f_\gamma\rVert_2^2&=R(f_\gamma)-R(f^\star)\\
        &\leq R_\gamma(f_\gamma)-R(f^\star)\\
        &=R_\gamma(f_\gamma)-R_\gamma(f_\epsilon)+R_\gamma(f_\epsilon)-R(f_\epsilon)+R(f_\epsilon)-R(f^\star)\\
        &\leq R_\gamma(f_\epsilon)-R(f_\epsilon)+\lVert f^\star-\iota f_\epsilon\rVert_2^2\\
        &\leq\gamma\lVert f_\epsilon\rVert_\mathscr{H}^2+\frac{1}{8}\epsilon\\
        &\leq\frac{1}{4}\epsilon,
    \end{alignat*}
    where we applied Assumption~\ref{ass:krr}\ref{ass:krr_approximation}. The result is obtained by taking square roots. 
\end{proof}
% Note that Theorem~\ref{thm:krr_approximation} is a deterministic result. Next, we have a bound on the estimation error.

\krrestimation*
% \begin{restatable}[Estimation]{theorem}{krrestimation}\label{thm:krr_estimation}
%     Suppose that Assumption~\ref{ass:krr}\ref{ass:krr_estimation} holds. Then there is an event with probability at least \(1-\frac{\delta}{2}\) on which \(\lVert f_\gamma-\hat{f}_\gamma\rVert_2\leq\frac{1}{2}\sqrt{\epsilon}\). 
% \end{restatable}
\begin{proof}
    Using the closed form expressions of \(f_\gamma\) and \(\hat{f}_\gamma\), write
    \begin{alignat*}{2}
        \hat{f}_\gamma-f_\gamma&=(n\boldsymbol{\iota}_X^*\circ\boldsymbol{\iota}_X+\gamma\text{Id}_\mathcal{H})^{-1}\boldsymbol{\iota}_X^*\mathbf{y}-(n\boldsymbol{\iota}_X^*\circ\boldsymbol{\iota}_X+\gamma\text{Id}_\mathcal{H})^{-1}(n\boldsymbol{\iota}_X^*\circ\boldsymbol{\iota}_X+\gamma\text{Id}_\mathcal{H})f_\gamma\\
        &=(n\boldsymbol{\iota}_X^*\circ\boldsymbol{\iota}_X+\gamma\text{Id}_\mathcal{H})^{-1}(\boldsymbol{\iota}_X^*\mathbf{y}-n\boldsymbol{\iota}_X^*\circ\boldsymbol{\iota}_Xf_\gamma-\gamma f_\gamma)\\
        &=(n\boldsymbol{\iota}_X^*\circ\boldsymbol{\iota}_X+\gamma\text{Id}_\mathcal{H})^{-1}(\boldsymbol{\iota}_X^*\mathbf{y}-n\boldsymbol{\iota}_X^*\circ\boldsymbol{\iota}_Xf_\gamma-\iota^*(f^\star-\iota f_\gamma)).
    \end{alignat*}
    Here, we have
    \[\lVert(n\boldsymbol{\iota}_X^*\circ\boldsymbol{\iota}_X+\gamma\text{Id}_\mathcal{H})^{-1}\rVert_\text{op}\leq\frac{1}{\gamma},\]
    and so
    \begin{alignat*}{2}
        \lVert\hat{f}_\gamma-f_\gamma\rVert_\mathcal{H}&\leq\frac{1}{\gamma}\lVert\boldsymbol{\iota}_X^*\mathbf{y}-n\boldsymbol{\iota}_X^*\circ\boldsymbol{\iota}_Xf_\gamma-\iota^*(f^\star-\iota f_\gamma)\rVert_\mathcal{H}\\
        &=\frac{1}{\gamma}\left\lVert\frac{1}{n}\sum^n_{i=1}K(\mathbf{x}_i,\cdot)(y_i-f_\gamma(\mathbf{x}_i))-\mathbb{E}[K(\mathbf{x},\cdot)(f^\star(\mathbf{x})-f_\gamma(\mathbf{x}))]\right\rVert_\mathcal{H}.
    \end{alignat*}
    Here, define random variables \(Z,Z_i:\Omega\to\mathcal{H}\) by \(Z=K(\mathbf{x},\cdot)(f^\star(\mathbf{x})-f_\gamma(\mathbf{x}))\) and \(Z_i=K(\mathbf{x}_i,\cdot)(y_i-f_\gamma(\mathbf{x}_i))\). Then we have \(\mathbb{E}[Z_i]=\mathbb{E}[Z]\), and
    \[\lVert\hat{f}_\gamma-f_\gamma\rVert_\mathcal{H}\leq\frac{1}{\gamma}\left\lVert\frac{1}{n}\sum^n_{i=1}(Z_i-\mathbb{E}[Z_i])\right\rVert_\mathcal{H}.\]
    Hence, we can apply vector-valued Hoeffding's inequality (\ref{eqn:hoeffdingvector}). First note that, using the reproducing property and the Cauchy-Schwarz inequality,
    \begin{alignat*}{2}
        \lvert f_\gamma(\mathbf{x}_i)\rvert&=\lvert\langle f_\gamma,K(\mathbf{x}_i,\cdot)\rangle_\mathcal{H}\rvert\\
        &\leq\lVert f_\gamma\rVert_\mathcal{H}\lVert K(\mathbf{x}_i,\cdot)\rVert_\mathcal{H}\\
        &\leq\lVert f_\gamma\rVert_\mathcal{H}\\
        &=\lVert(\iota^*\circ\iota+\gamma\text{Id}_\mathcal{H})^{-1}\iota^*f^\star\rVert_\mathcal{H}\\
        &\leq\lVert(\iota^*\circ\iota+\gamma\text{Id}_\mathcal{H})^{-1}\rVert_\text{op}\lVert f^\star\rVert_2\\
        &\leq\frac{1}{\gamma},
    \end{alignat*}
    where we applied (\ref{ass:f^*bound}) on the last line. Then using (\ref{ass:ybound}), almost surely,
    \begin{alignat*}{2}
        \lVert Z_i\rVert_\mathcal{H}&=\lvert y_i-f_\gamma(\mathbf{x}_i)\rvert\lVert K(\mathbf{x}_i,\cdot)\rVert_\mathcal{H}\\
        &\leq(\lvert y_i\rvert+\lvert f_\gamma(\mathbf{x}_i)\rvert)\lVert K(\mathbf{x}_i,\cdot)\rVert_\mathcal{H}\\
        &\leq1+\frac{1}{\gamma}.
    \end{alignat*}
    We are now ready to apply vector-valued Hoeffding's inequality to obtain
    \begin{alignat*}{2}
        \mathbb{P}\left(\lVert\hat{f}_\gamma-f_\gamma\rVert_\mathcal{H}\geq\frac{1}{2}\sqrt{\epsilon}\right)&\leq\mathbb{P}\left(\left\lVert\sum^n_{i=1}(Z_i-\mathbb{E}[Z_i])\right\rVert_\mathcal{H}\geq\frac{1}{2}\gamma n\sqrt{\epsilon}\right)\\
        &\leq2\exp\left(-\frac{\gamma^2n^2\epsilon}{16n(1+\frac{1}{\gamma})^2}\right)\\
        &\leq\frac{\delta}{2}
    \end{alignat*}
    as required, where we applied Assumption~\ref{ass:krr}\ref{ass:krr_estimation}. 
\end{proof}

\section{Missing Details from Section~\ref{sec:neural_network}}
In this section, we provide all the missing details from Section~\ref{sec:neural_network}, including proofs.
\subsection{Index of Notations}\label{sec:index_of_notations}
% \shiva{Should Appedix B and C be before A. Also these noations are only for NN?}
In Table~\ref{tab:notation}, we collect the notations of all the objects used for the neural network part of this paper. The left-hand column shows the \textit{analytical} objects for which the weights have been integrated with respect to the initial, independent standard Gaussian distribution, and the right-hand column shows the same objects with dependence on the particular values of the weights \(W\), denoted with the subscript \(W\). Bold symbols indicate that evaluations on the samples \(\{(\mathbf{x}_i,y_i)\}^n_{i=1}\) took place.  

In Table~\ref{tab:gradient_flow}, we collect all the short-hands used for the objects along the gradient flow trajectories. The left-hand column shows the evolution of the quantities along the population trajectory, i.e., objects that depend on \(W(t)\), denoted with subscript \(t\) without the hat\(\enspace\hat{}\enspace\)symbol. The right-hand column shows the evolution of the quantities along the empirical trajectory, namely those that depend on \(\hat{W}(t)\), denoted with subscript \(t\) and the hat\(\enspace\hat{}\enspace\)symbol. 

In Table~\ref{tab:eigenfunctions}, we collect the notations that indicate projections of functions onto the eigenspace spanned by the top \(L\) eigenfunctions using the superscript \(L\) without the tilde\(\enspace\tilde{}\enspace\) symbol (left-hand column), and projections of functions onto the eigenspace spanned by all but the top \(L\) eigenfunctions using the superscript \(L\) and the tilde\(\enspace\tilde{}\enspace\) symbol (right-hand column). 

\begin{table}[htbp]
    \begin{center}
        \begin{tabular}{|c|c|c|}
            \hline
            & Analytical & Sampled Weights \\\hline
            \multirow{2}{*}{Network} & \multirow{2}{*}{n/a} & \(f_W:\mathbb{R}^d\rightarrow\mathbb{R}\) \\
            & & \(f_W(\mathbf{x})=\frac{1}{\sqrt{m}}\mathbf{a}\cdot\phi(W\mathbf{x})\) \\\hline
            \multirow{2}{*}{Network evaluation} & \multirow{2}{*}{n/a} & \(\mathbf{f}_W\in\mathbb{R}^n\)\\
            & & \(\mathbf{f}_W=(f_W(\mathbf{x}_1),...,f_W(\mathbf{x}_n))^\top\) \\\hline
            Noise variable & n/a & \(\xi_W=y-f_W(\mathbf{x}):\Omega\rightarrow\mathbb{R}\) \\\hline
            Noise vector & n/a & \(\boldsymbol{\xi}_W=\mathbf{y}-\mathbf{f}_W\in\mathbb{R}^n\) \\\hline
            Error function & n/a & \(\zeta_W=f^\star-f_W\in L^2(\rho_{d-1})\) \\\hline 
            Error vector & n/a & \(\boldsymbol{\zeta}_W=\mathbf{f}^\star-\mathbf{f}_W\in\mathbb{R}^n\) \\\hline
            \multirow{2}{*}{Pre-gradient function} & \(J:\mathbb{R}^d\rightarrow L^2(\mathcal{N})\) & \(J_W:\mathbb{R}^d\rightarrow\mathbb{R}^m\) \\
            & \(J(\mathbf{x})(\mathbf{w})=a(\mathbf{w})\phi'(\mathbf{w}\cdot\mathbf{x})\) & \(J_W(\mathbf{x})=\frac{1}{\sqrt{m}}\mathbf{a}\odot\phi'(W\mathbf{x})\) \\\hline
            \multirow{2}{*}{Pre-gradient matrix} & \(\mathbf{J}\in L^2(\mathcal{N})\times\mathbb{R}^n\) & \(\mathbf{J}_W\in\mathbb{R}^{m\times n}\) \\
            & \(\mathbf{J}(\mathbf{w})=a(\mathbf{w})\phi'(X\mathbf{w})\) & \(\mathbf{J}_W=\frac{1}{\sqrt{m}}\text{diag}[\mathbf{a}]\phi'(WX^\top)\) \\\hline
            \multirow{2}{*}{Gradient function} & \(G:\mathbb{R}^d\rightarrow L^2(\mathcal{N})\otimes\mathbb{R}^d\) & \(G_W=\nabla_Wf_W:\mathbb{R}^d\rightarrow\mathbb{R}^{m\times d}\) \\
            & \(G(\mathbf{x})(\mathbf{w})=J(\mathbf{x})(\mathbf{w})\mathbf{x}\) & \(G_W(\mathbf{x})=J_W(\mathbf{x})\mathbf{x}^\top\) \\\hline
            \multirow{2}{*}{Gradient matrix} & \(\mathbf{G}\in L^2(\mathcal{N})\times\mathbb{R}^d\times\mathbb{R}^n\) & \(\mathbf{G}_W\in\mathbb{R}^{md\times n}\) \\
            & \(\mathbf{G}(\mathbf{w})=\mathbf{J}(\mathbf{w})*X^\top\) & \(\mathbf{G}_W=\mathbf{J}_W*X^\top\) \\\hline
            \multirow{3}{*}{NTK} & \(\kappa:\mathbb{R}^d\times\mathbb{R}^d\rightarrow\mathbb{R}\) & \(\kappa_W:\mathbb{R}^d\times\mathbb{R}^d\rightarrow\mathbb{R}\) \\
            & \(\kappa(\mathbf{x},\mathbf{x}')=\langle G(\mathbf{x}),G(\mathbf{x}')\rangle_{\mathcal{N}\otimes\mathbb{R}^d}\) & \(\kappa_W(\mathbf{x},\mathbf{x}')=\langle G_W(\mathbf{x}),G_W(\mathbf{x}')\rangle_\text{F}\) \\
            & \(=\mathbf{x}\cdot\mathbf{x}'\mathbb{E}_\mathbf{w}[\phi'(\mathbf{w}\cdot\mathbf{x})\phi'(\mathbf{w}\cdot\mathbf{x}')]\) & \(=\frac{\mathbf{x}\cdot\mathbf{x}'}{m}\phi'(\mathbf{x}^\top W^\top)\phi'(W\mathbf{x}')\) \\\hline
            \multirow{3}{*}{NTK Matrix} & \(\mathbf{H}\in\mathbb{R}^{n\times n}\) & \(\mathbf{H}_W\in\mathbb{R}^{n\times n}\) \\
            & \(\mathbf{H}=\langle\mathbf{G},\mathbf{G}\rangle_{\mathcal{N}\otimes\mathbb{R}^d}=\) & \(\mathbf{H}_W=\mathbf{G}_W^\top\mathbf{G}_W=\) \\
            & \((XX^\top)\odot\mathbb{E}[\phi'(X\mathbf{w})\phi'(\mathbf{w}^\top X^\top)]\) & \(\frac{XX^\top}{m}\odot(\phi'(XW^\top)\phi'(WX^\top))\) \\\hline
            NTRKHS & \(\mathscr{H}\) & \(\mathscr{H}_W\) \\\hline
            Inclusion operator & \(\iota:\mathscr{H}\rightarrow L^2(\rho_{d-1})\) & \(\iota_W:\mathscr{H}_W\rightarrow L^2(\rho_{d-1})\) \\\hline
            Sampling operator& \(\boldsymbol{\iota}:\mathscr{H}\rightarrow\mathbb{R}^n\) & \(\boldsymbol{\iota}_W:\mathscr{H}_W\rightarrow\mathbb{R}^n\) \\\hline
            \multirow{2}{*}{NTK operator} & \(H:L^2(\rho_{d-1})\rightarrow L^2(\rho_{d-1})\) & \(H_W:L^2(\rho_{d-1})\rightarrow L^2(\rho_{d-1})\) \\
            & \(Hf(\mathbf{x})=\mathbb{E}[\kappa(\mathbf{x},\mathbf{x}')f(\mathbf{x}')]\) & \(H_Wf(\mathbf{x})=\mathbb{E}[\kappa_W(\mathbf{x},\mathbf{x}')f(\mathbf{x}')]\) \\\hline
            Eigenvalues of \(H\) & \(\lambda_1\geq\lambda_2\geq...\) & n/a \\\hline
            Eigenvalues of \(\mathbf{H}\), \(\mathbf{H}_W\) & \(\boldsymbol{\lambda}_1\geq...\geq\boldsymbol{\lambda}_n=\boldsymbol{\lambda}_{\min}\) & \(\boldsymbol{\lambda}_{W,1}\geq...\geq\boldsymbol{\lambda}_{W,n}=\boldsymbol{\lambda}_{W,\min}\) \\\hline
            Population Risk & \multicolumn{2}{|c|}{\(R:L^2(\rho_{d-1})\rightarrow\mathbb{R}\), \(R(f)=\mathbb{E}[(f(\mathbf{x})-y)^2]=\lVert f-f^\star\rVert^2_2+R(f^\star)\)}\\\hline
            Empirical risk & \multicolumn{2}{|c|}{\(\mathbf{R}:L^2(\rho_{d-1})\rightarrow\mathbb{R}\), \(\mathbf{R}(f)=\frac{1}{n}\sum^n_{i=1}(f(\mathbf{x}_i)-y_i)^2=\frac{1}{n}\lVert\mathbf{f}-\mathbf{y}\rVert^2_2\)}\\\hline
            \multirow{2}{*}{Population risk gradient} & \multirow{2}{*}{n/a} & \(\nabla_WR(f_W)\in\mathbb{R}^{m\times d}\) \\
            & & \(\nabla_WR(f_W)=-2\langle G_W,\zeta_W\rangle_2\) \\\hline
            \multirow{2}{*}{Empirical risk gradient} & \multirow{2}{*}{n/a} & \(\nabla_W\mathbf{R}(f_W)\in\mathbb{R}^{m\times d}\) \\
            & & \(\nabla_W\mathbf{R}(f_W)=-\frac{2}{n}\mathbf{G}_W\boldsymbol{\xi}_W\)\\\hline
        \end{tabular}
    \end{center}
    \caption{Our main notations. Bold symbols indicate evaluation on the samples \(\{(\mathbf{x}_i,y_i)\}^n_{i=1}\) and the subscript \(W\) denotes dependence on the weights \(\{\mathbf{w}_j\}_{j=1}^m\).}
    \label{tab:notation}
\end{table}

\begin{table}
    \begin{center}
        \begin{tabular}{|c|c|c|}
            \hline
            & Population Trajectory & Empirical Trajectory \\\hline
            Network & \(f_t=f_{W(t)}\) & \(\hat{f}_t=f_{\hat{W}(t)}\) \\\hline
            Network Evaluation & \(\mathbf{f}_t=\mathbf{f}_{W(t)}\) & \(\hat{\mathbf{f}}_t=\mathbf{f}_{\hat{W}(t)}\) \\\hline
            Noise Function & \(\xi_t=\xi_{W(t)}\) & \(\hat{\xi}_t=\xi_{\hat{W}(t)}\) \\\hline
            Noise vector & \(\boldsymbol{\xi}_t=\boldsymbol{\xi}_{W(t)}\) & \(\hat{\boldsymbol{\xi}}_t=\boldsymbol{\xi}_{\hat{W}(t)}\) \\\hline
            Error function & \(\zeta_t=\zeta_{W(t)}\) & \(\hat{\zeta}_t=\zeta_{\hat{W}(t)}\) \\\hline
            Error vector & \(\boldsymbol{\zeta}_t=\boldsymbol{\zeta}_{W(t)}\) & \(\hat{\boldsymbol{\zeta}}_t=\boldsymbol{\zeta}_{\hat{W}(t)}\) \\\hline
            Pre-Gradient Function & \(J_t=J_{W(t)}\) & \(\hat{J}_t=J_{\hat{W}(t)}\) \\\hline
            Pre-Gradient Matrix & \(\mathbf{J}_t=\mathbf{J}_{W(t)}\) & \(\hat{\mathbf{J}}_t=\mathbf{J}_{\hat{W}(t)}\) \\\hline
            Gradient function & \(G_t=G_{W(t)}\) & \(\hat{G}_t=G_{\hat{W}(t)}\) \\\hline
            Gradient matrix & \(\mathbf{G}_t=\mathbf{G}_{W(t)}\) & \(\hat{\mathbf{G}}_t=\mathbf{G}_{\hat{W}(t)}\) \\\hline
            NTK & \(\kappa_t=\kappa_{W(t)}\) & \(\hat{\kappa}_t=\kappa_{\hat{W}(t)}\) \\\hline
            NTK Gram Matrix & \(\mathbf{H}_t=\mathbf{H}_{W(t)}\) & \(\hat{\mathbf{H}}_t=\mathbf{H}_{\hat{W}(t)}\) \\\hline
            Inclusion Operator & \(\iota_t=\iota_{W(t)}\) & \(\hat{\iota}_t=\iota_{\hat{W}(t)}\) \\\hline
            Sampling Operator & \(\boldsymbol{\iota}_t=\boldsymbol{\iota}_{W(t)}\) & \(\hat{\boldsymbol{\iota}}_t=\boldsymbol{\iota}_{\hat{W}(t)}\) \\\hline
            NTK Operator & \(H_t=H_{W(t)}=\iota_t\circ\iota^\star_t\) & \(\hat{\boldsymbol{\iota}}_t\circ\hat{\boldsymbol{\iota}}^\star_t=\frac{1}{n^2}\hat{\mathbf{H}}_t\) \\\hline
            NTRKHS & \(\mathscr{H}_t=\mathscr{H}_{W(t)}\) & \(\hat{\mathscr{H}}_t=\mathscr{H}_{\hat{W}(t)}\) \\\hline
            Eigenvalues of \(\hat{\mathbf{H}}_t\) & n/a & \(\hat{\boldsymbol{\lambda}}_{t,1}\geq...\geq\hat{\boldsymbol{\lambda}}_{t,n}=\hat{\boldsymbol{\lambda}}_{t,\min}\) \\\hline
            Population Risk & \(R_t=R(f_t)\) & \(\hat{R}_t=R(\hat{f}_t)\) \\\hline
            Empirical Risk & \(\mathbf{R}_t=\mathbf{R}(f_t)\) & \(\hat{\mathbf{R}}_t=\mathbf{R}(\hat{f}_t)\) \\\hline
            Time Derivative of & \multirow{2}{*}{\(\frac{dW}{dt}=-\nabla_WR_t\)} & \multirow{2}{*}{\(\frac{d\hat{W}}{dt}=-\nabla_W\hat{\mathbf{R}}_t\)} \\
            Weights & & \\\hline
            Time Derivative of & \(\frac{df_t}{dt}(\mathbf{x})=\langle G_t(\mathbf{x}),\frac{dW}{dt}\rangle_\text{F}\) & \(\frac{d\hat{f}_t}{dt}(\mathbf{x})=\langle\hat{G}_t(\mathbf{x}),\frac{d\hat{W}}{dt}\rangle_\text{F}\) \\
            Network & \(=2H_t\zeta_t(\mathbf{x})\) & \(=\frac{2}{n}\langle\hat{G}_t(\mathbf{x}),\hat{\mathbf{G}}_t\hat{\boldsymbol{\xi}}_t\rangle_\text{F}\) \\\hline
            Time Derivative of & \(\frac{d\mathbf{f}_t}{dt}=(\nabla_W\mathbf{f}_t)^\top\text{vec}\left(\frac{dW_t}{dt}\right)\) & \(\frac{d\hat{\mathbf{f}}_t}{dt}=(\nabla_W\hat{\mathbf{f}}_t)^\top\text{vec}\left(\frac{d\hat{W}_t}{dt}\right)\) \\
            Network evaluation & \(=2\mathbf{G}_t^\top\text{vec}\left(\langle G_t,\zeta_t\rangle_2\right)\) & \(=\frac{2}{n}\hat{\mathbf{H}}_t\hat{\boldsymbol{\xi}}_t\) \\\hline
        \end{tabular}
    \end{center}
    \caption{Objects from Section~\ref{subsec:full_batch_gf} with time-dependence in gradient flow. As clear from the table entries, dependence on \(W(t)\) and \(\hat{W}(t)\) are denoted by subscript \(t\) and introduction of \(\enspace\hat{}\enspace\) for conciseness. }
    \label{tab:gradient_flow}
\end{table}
\begin{table}
    \begin{center}
        \begin{tabular}{|c|c|c|}
            \hline
            & Top \(L\) eigenfunctions & Remaining eigenfunctions\\\hline
            Network & \(f^L_t=\sum^L_{l=1}\langle f_t,\varphi_l\rangle_2\varphi_l\) & \(\tilde{f}^L_t=\sum^\infty_{l=L+1}\langle f_t,\varphi_l\rangle_2\varphi_l\) \\\hline
            Error function & \(\zeta^L_t=\sum^L_{l=1}\langle\zeta_t,\varphi_l\rangle_2\varphi_l\) & \(\tilde{\zeta}^L_t=\sum^\infty_{l=L+1}\langle\zeta_t,\varphi_l\rangle_2\varphi_l\) \\\hline
            Squared norm of & \multirow{2}{*}{\(\lVert\zeta^L_t\rVert_2^2=\sum^L_{l=1}\langle\zeta_t,\varphi_l\rangle_2^2\)} & \multirow{2}{*}{\(\lVert\tilde{\zeta}^L_t\rVert_2^2=\sum^\infty_{l=L+1}\langle\zeta_t,\varphi_l\rangle_2^2\)} \\
            error function & & \\\hline
            \multirow{2}{*}{Gradient function} & \(G^L_t=\nabla_Wf^L_t\) & \(\tilde{G}^L_t=\nabla_W\tilde{f}^L_t\) \\
            & \(=\sum^L_{l=1}\langle G_t,\varphi_l\rangle_2\varphi_l\) & \(=\sum^\infty_{l=L+1}\langle G_t,\varphi_l\rangle_2\varphi_l\) \\\hline
            NTK & \(\kappa^L_t(\mathbf{x},\mathbf{x}')=\langle G^L_t(\mathbf{x}),G^L_t(\mathbf{x}')\rangle_\text{F}\) & \(\tilde{\kappa}^L_t(\mathbf{x},\mathbf{x}')=\langle\tilde{G}^L_t(\mathbf{x}),\tilde{G}^L_t(\mathbf{x}')\rangle_\text{F}\) \\\hline
            Population risk & \(R^L_t=\lVert\zeta^L_t\rVert_2^2+R(f^\star)\) & \(\tilde{R}^L_t=\lVert\tilde{\zeta}^L_t\rVert_2^2+R(f^\star)\) \\\hline
            Risk gradient & \(\nabla_WR^L_t=-2\langle G^L_t,\zeta^L_t\rangle_2\) & \(\nabla_W\tilde{R}^L_t=-2\langle\tilde{G}^L_t,\tilde{\zeta}^L_t\rangle_2\) \\\hline
            Time derivative & \multirow{2}{*}{\(\frac{dW^L}{dt}=2\langle G^L_t,\zeta^L_t\rangle_2\)} & \multirow{2}{*}{\(\frac{d\tilde{W}^L}{dt}=2\langle\tilde{G}^L_t,\tilde{\zeta}^L_t\rangle_2\)} \\
            of weights & & \\\hline
        \end{tabular}
    \end{center}
    \caption{Objects from Sections~\ref{subsec:spectral} and \ref{subsec:full_batch_gf} that are projected onto different eigenspaces. The superscript \(L\) without \(\enspace\tilde{}\enspace\) denotes that a function is projected onto the subspace of \(L^2(\rho_{d-1})\) spanned by the first \(L\) eigenfunctions of \(H\), and \(\enspace\tilde{}\enspace\) denotes that a function is projected onto the subspace of \(L^2(\rho_{d-1})\) spanned by all but the first \(L\) eigenfunctions of \(H\).}
    \label{tab:eigenfunctions}
\end{table}

\subsection{NTK Theory of Two-Layer ReLU Networks}\label{sec:ntk_theory}
In this section, we present a brief development of the theory of neural tangent kernels (NTKs) specific to our model used in Section~\ref{sec:neural_network}. 
% \shiva{Inconsistent use of $m$ and $2m$.}

We will consider a two-layer fully-connected neural network with ReLU activation function, where \(m\in\mathbb{N}\) is the width of the hidden layer. Specifically, write \(\phi:\mathbb{R}\rightarrow\mathbb{R}\) for the ReLU function defined as \(\phi(z)=\max\{0,z\}\), and with a slight abuse of notation, write \(\phi:\mathbb{R}^{m}\rightarrow\mathbb{R}^{m}\) for the componentwise ReLU function, \(\phi(\mathbf{z})=\phi((z_1,...,z_m)^\top)=(\phi(z_1),...,\phi(z_m))^\top\). 

Denote by \(W\in\mathbb{R}^{m\times d}\) the weight matrix of the hidden layer, by \(\mathbf{w}_j\in\mathbb{R}^d,j=1,...,m\) the \(j^\text{th}\) neuron of the hidden layer and \(\mathbf{a}=(a_1,...,a_m)^\top\in\mathbb{R}^m\) the weights of the output layer. Then for \(\mathbf{x}=(x_1,...,x_d)^\top\in\mathbb{R}^d\), the output of the network is
\[f_W(\mathbf{x})=\frac{1}{\sqrt{m}}\mathbf{a}\cdot\phi\left(W\mathbf{x}\right)=\frac{1}{\sqrt{m}}\sum_{j=1}^ma_j\phi\left(\mathbf{w}_j\cdot\mathbf{x}\right)=\frac{1}{\sqrt{m}}\sum^m_{j=1}a_j\phi\left(\sum^d_{k=1}W_{jk}x_k\right).\]
For weights \(W\), we write \(\xi_W\) noise random variable and \(\zeta_W\) for the error respectively:
\[\xi_W=\xi_{f_W}=y-f_W(\mathbf{x}):\Omega\rightarrow\mathbb{R},\qquad\zeta_W=\zeta_{f_W}=f^\star-f_W\in L^2(\rho_{d-1}).\]
Further, we have the following vectors obtained by evaluation at the points \(\{(\mathbf{x}_i,y_i)\}_{i=1}^n\):
\[\mathbf{f}_W=(f_W(\mathbf{x}_1),...,f_W(\mathbf{x}_n))^\top\in\mathbb{R}^n,\qquad\boldsymbol{\xi}_W=\boldsymbol{\xi}_{f_W}=\mathbf{y}-\mathbf{f}_W,\qquad\boldsymbol{\zeta}_W=\boldsymbol{\zeta}_{f_W}=\mathbf{f}^\star-\mathbf{f}_W.\]
First note that, for any \(a\geq0\) and \(z\in\mathbb{R}\), \(\phi(az)=a\phi(z)\), a property called \textit{positive homogeneity}. 

The ReLU function \(\phi\) has gradient 0 for \(z<0\), gradient 1 for \(z>0\) and its gradient is undefined at \(z=0\). We extend this to a left-continuous function by defining \(\phi'(z)=\mathbf{1}\{z>0\}\), and treat it as the \say{gradient} of \(\phi\). For higher-dimensional quantities, we extend \(\phi'\) by applying the function componentwise again, i.e., \(\phi'(\mathbf{z})=\phi'((z_1,...,z_m)^\top)=(\phi'(z_1),...,\phi'(z_m))^\top\), via an abuse of notation. 
% We also define a function \(\Phi':\mathbb{R}^m\rightarrow\mathbb{R}^{m\times m}\) by \(\Phi'(\mathbf{z})=\text{diag}[\phi'(\mathbf{z})]\). Then for a vector \(\mathbf{z}\in\mathbb{R}^m\), the ReLU function \(\phi\) satisfies
% \[\phi(\mathbf{z})=\Phi'(\mathbf{z})\mathbf{z}=\phi'(\mathbf{z})\odot\mathbf{z}.\]
% Hence, we can write our network with weight matrix \(W\in\mathbb{R}^{m\times d}\) and \(\mathbf{x}\in\mathbb{R}^d\) as
% \[f_W(\mathbf{x})=\mathbf{a}\cdot\phi\left(\frac{1}{\sqrt{m}}W\mathbf{x}\right)=\frac{1}{\sqrt{m}}\mathbf{a}^\top\Phi'(W\mathbf{x})W\mathbf{x}.\]

We define the \textit{gradient function} \(G_W:\mathbb{R}^d\rightarrow\mathbb{R}^{m\times d}\) at \(W\) as:
\begin{alignat*}{3}
    \left[\nabla_Wf_W(\mathbf{x})\right]_{j,k}&=\frac{a_j}{\sqrt{m}}\phi'(\mathbf{w}_j\cdot\mathbf{x})x_k\in\mathbb{R}&&\text{for }j=1,...,m,k=1,...,d,\\
    G_{\mathbf{w}_j}(\mathbf{x})=\nabla_{\mathbf{w}_j}f_W(\mathbf{x})&=\frac{a_j}{\sqrt{m}}\phi'(\mathbf{w}_j\cdot\mathbf{x})\mathbf{x}\in\mathbb{R}^d&&\text{for }j=1,...,m,\\
    G_W(\mathbf{x})=\nabla_Wf_W(\mathbf{x})&=\frac{1}{\sqrt{m}}\left(\mathbf{a}\odot\phi'(W\mathbf{x})\right)\mathbf{x}^\top\in\mathbb{R}^{m\times d}.
\end{alignat*}
We also define the \textit{pre-gradient function} \(J_W:\mathbb{R}^d\rightarrow\mathbb{R}^m\) and \textit{pre-gradient matrix} \(\mathbf{J}_W\in\mathbb{R}^{m\times n}\) at \(W\) based on the sample \(X\) by the following:
\[J_W(\mathbf{x})=\frac{1}{\sqrt{m}}\mathbf{a}\odot\phi'(W\mathbf{x}),\qquad\mathbf{J}_W=\frac{1}{\sqrt{m}}\text{diag}[\mathbf{a}]\phi'(WX^\top).\]
Then note that \(G_W(\mathbf{x})=J_W(\mathbf{x})\mathbf{x}^\top\), and defining the \textit{gradient matrix} \(\mathbf{G}_W\vcentcolon=\mathbf{J}_W*X^\top\in\mathbb{R}^{md\times n}\) at \(W\), we have
\[[\mathbf{G}_W]_{d(j-1)+k,i}=[\mathbf{J}_W]_{j,i}X_{i,k}=\frac{a_j}{\sqrt{m}}\phi'(\mathbf{w}_j\cdot\mathbf{x}_i)(\mathbf{x}_i)_k,\]
i.e.,the \(i^\text{th}\) column of \(\mathbf{G}_W\) is the vectorization of \(\nabla_Wf_W(\mathbf{x}_i)\), and
\[[\nabla_Wf_W(\mathbf{x}_i)]_{j,k}=[\mathbf{G}_W]_{d(j-1)+k,i}.\]

\subsubsection{Neural Tangent Kernel}\label{subsec:ntk}
In this section, we collect various definitions and notations related to the \textit{neural tangent kernel} (NTK) \citep{jacot2018neural} of our network. The notation is consistent with those in Appendix~\ref{subsec:functions_operators}.

We define the \textit{neural tangent kernel} (NTK) \(\kappa_W:\mathbb{R}^d\times\mathbb{R}^d\rightarrow\mathbb{R}\) at \(W\) as the positive semi-definite kernel defined with the gradient function \(G_W=\nabla_Wf_W:\mathbb{R}^d\rightarrow\mathbb{R}^{m\times d}\) at \(W\) as the feature map:
\[\kappa_W(\mathbf{x},\mathbf{x}')=\langle G_W(\mathbf{x}),G_W(\mathbf{x}')\rangle_\text{F}=\frac{\mathbf{x}\cdot\mathbf{x}'}{m}\sum^m_{j=1}\phi'(\mathbf{w}_j\cdot\mathbf{x})\phi'(\mathbf{w}_j\cdot\mathbf{x}')=\frac{\mathbf{x}\cdot\mathbf{x}'}{m}\phi'(\mathbf{x}^\top W^\top)\phi'(W\mathbf{x}').\]
We also define the \textit{neural tangent kernel Gram matrix} (NTK Gram matrix) \(\mathbf{H}_W\in\mathbb{R}^{n\times n}\) at \(W\) as
\[\mathbf{H}_W=\mathbf{G}^\top_W\mathbf{G}_W=\begin{pmatrix}\kappa_W(\mathbf{x}_1,\mathbf{x}_1)&\dots&\kappa_W(\mathbf{x}_1,\mathbf{x}_n)\\\vdots&\ddots&\vdots\\\kappa_W(\mathbf{x}_n,\mathbf{x}_1)&\dots&\kappa_W(\mathbf{x}_n,\mathbf{x}_n)\end{pmatrix},\]
and write its eigenvalues as \(\boldsymbol{\lambda}_{W,1}\geq...\geq\boldsymbol{\lambda}_{W,n}=\boldsymbol{\lambda}_{W,\min}\) in decreasing order (with multiplicity). 

Then note that, by (\ref{eqn:kronecker_hadamard}), we have
\[\mathbf{H}_W=(\mathbf{J}_W*X^\top)^\top(\mathbf{J}_W*X^\top)=(XX^\top)\odot(\mathbf{J}_W^\top\mathbf{J}_W)=\frac{1}{m}(XX^\top)\odot(\phi'(XW^\top)\phi'(WX^\top)).\]
We can decompose the NTK as a sum of NTK's corresponding to each neuron. For each \(j=1,...,m\), define \(\kappa_{\mathbf{w}_j}:\mathbb{R}^d\times\mathbb{R}^d\rightarrow\mathbb{R}\) by
\[\kappa_{\mathbf{w}_j}(\mathbf{x},\mathbf{x}')=\frac{\mathbf{x}\cdot\mathbf{x}'}{m}\phi'(\mathbf{w}_j\cdot\mathbf{x})\phi'(\mathbf{w}_j\cdot\mathbf{x}').\]
The NTK matrix also decomposes similarly: 
\[\mathbf{H}_{\mathbf{w}_j}=\begin{pmatrix}\kappa_{\mathbf{w}_j}(\mathbf{x}_1,\mathbf{x}_1)&\dots&\kappa_{\mathbf{w}_j}(\mathbf{x}_1,\mathbf{x}_n)\\\vdots&\ddots&\vdots\\\kappa_{\mathbf{w}_j}(\mathbf{x}_n,\mathbf{x}_1)&\dots&\kappa_{\mathbf{w}_j}(\mathbf{x}_n,\mathbf{x}_n)\end{pmatrix}=\frac{1}{m}(XX^\top)\odot(\phi'(X\mathbf{w}_j^\top)\phi'(\mathbf{w}_jX^\top)).\]
Then we have
\[\kappa_W(\mathbf{x},\mathbf{x}')=\sum^m_{j=1}\kappa_{\mathbf{w}_j}(\mathbf{x},\mathbf{x}'),\qquad\mathbf{H}_W=\sum_{j=1}^m\mathbf{H}_{\mathbf{w}_j}.\]

We denote by \(\mathscr{H}_W\) the RKHS associated with \(\kappa_W\), and call it the \textit{neural tangent reproducing kernel Hilbert space} (NTRKHS) at \(W\). We denote the inner product in this Hilbert space by \(\langle\cdot,\cdot\rangle_{\mathscr{H}_W}\) and its corresponding norm by \(\lVert\cdot\rVert_{\mathscr{H}_W}\). 

% See that, for any \(f\in\mathscr{H}_W\), we have
% \begin{alignat*}{3}
%     \lVert f\rVert_2^2&=\mathbb{E}[\langle f,\kappa_W(\mathbf{x},\cdot)\rangle^2_{\mathscr{H}_W}]&&\text{by the reproducing property}\\
%     &\leq\lVert f\rVert^2_{\mathscr{H}_W}\mathbb{E}[\lVert\kappa_W(\mathbf{x},\cdot)\rVert^2_{\mathscr{H}_W}]\qquad&&\text{by the Cauchy-Schwarz inequality}\\
%     &=\lVert f\rVert^2_{\mathscr{H}_W}\mathbb{E}[\kappa_W(\mathbf{x},\mathbf{x})]&&\text{by the reproducing property}\\
%     &\leq\lVert f\rVert_{\mathscr{H}_W}^2\mathbb{E}\left[\frac{\lVert\mathbf{x}\rVert_2^2}{m}\sum^m_{j=1}\phi'(\mathbf{w}_j\cdot\mathbf{x})^2\right]\\
%     &\leq\lVert f\rVert_{\mathscr{H}_W}^2
% \end{alignat*}
% meaning we have \(\mathscr{H}_W\subseteq L^2(\rho_{d-1})\). 
We denote the \textit{inclusion operator} and its adjoint by
\[\iota_W:\mathscr{H}_W\rightarrow L^2(\rho_{d-1}),\qquad\iota^*_W:L^2(\rho_{d-1})\rightarrow\mathscr{H}_W.\]
% with operator norms \(\lVert\iota_W\rVert_\text{op}=\lVert\iota^\star_W\rVert_\text{op}=1\). We can easily find explicit integral expression for this adjoint. See that, for \(g\in\mathscr{H}_W\) and \(f\in L^2(\rho_{d-1})\),
% \[\langle\iota_Wg,f\rangle_2=\mathbb{E}_\mathbf{x}[g(\mathbf{x})f(\mathbf{x})]=\mathbb{E}_\mathbf{x}[\langle g,\kappa_W(\mathbf{x},\cdot)\rangle_{\mathscr{H}_W}f(\mathbf{x})]=\langle g,\mathbb{E}_\mathbf{x}[f(\mathbf{x})\kappa_W(\mathbf{x},\cdot)]\rangle_{\mathscr{H}_W},\]
% and so for \(f\in L^2(\rho_{d-1})\), 
% \[\iota_W^\star f(\cdot)=\mathbb{E}_\mathbf{x}[f(\mathbf{x})\kappa_W(\mathbf{x},\cdot)].\]
We also have the self-adjoint operator
\[H_W\vcentcolon=\iota_W\circ\iota^\star_W:L^2(\rho_{d-1})\rightarrow L^2(\rho_{d-1}).\]
% has the same analytical expression as \(\iota_W^\star\). 

Again, we consider the neuron-level decomposition. For each \(j=1,...,m\), denote by \(\mathscr{H}_{\mathbf{w}_j}\) the NTRKHS corresponding to the NTK \(\kappa_{\mathbf{w}_j}\). Then exactly analogously, we have
\[\iota_{\mathbf{w}_j}:\mathscr{H}_{\mathbf{w}_j}\rightarrow L^2(\rho_{d-1}),\quad\iota^\star_{\mathbf{w}_j}:L^2(\rho_{d-1})\rightarrow\mathscr{H}_{\mathbf{w}_j},\quad H_{\mathbf{w}_j}=\iota_{\mathbf{w}_j}\circ\iota^\star_{\mathbf{w}_j}:L^2(\rho_{d-1})\rightarrow L^2(\rho_{d-1}),\]
with \(\lVert\iota_{\mathbf{w}_j}\rVert_\text{op}=\lVert\iota^\star_{\mathbf{w}_j}\rVert_\text{op}=\frac{1}{\sqrt{m}}\) and
\[H_{\mathbf{w}_j}f(\cdot)=\iota^\star_{\mathbf{w}_j}f(\cdot)=\mathbb{E}_\mathbf{x}[f(\mathbf{x})\kappa_{\mathbf{w}_j}(\mathbf{x},\cdot)]\]
for \(f\in L^2(\rho_{d-1})\). Then
\[\sum^m_{j=1}H_{\mathbf{w}_j}f(\cdot)=\mathbb{E}_\mathbf{x}\left[f(\mathbf{x})\sum^m_{j=1}\kappa_{\mathbf{w}_j}(\mathbf{x},\cdot)\right]=\mathbb{E}_\mathbf{x}[f(\mathbf{x})\kappa_W(\mathbf{x},\cdot)]=H_Wf(\cdot),\]
so
\[H_W=\sum^m_{j=1}H_{\mathbf{w}_j}.\]
We denote the sampling operator and its adjoint based on the i.i.d. copies \(\{\mathbf{x}_i\}^n_{i=1}\) of \(\mathbf{x}\) by
\[\boldsymbol{\iota}_W:\mathscr{H}_W\to\mathbb{R}^n,\qquad\boldsymbol{\iota}^*_W:\mathbb{R}^n\to\mathscr{H}_W,\]
with \(\boldsymbol{\iota}_W\circ\boldsymbol{\iota}^*_W=\frac{1}{n^2}\mathbf{H}_W\) (c.f. Appendix~\ref{subsec:functions_operators}).

\subsubsection{Initialization and Analytical Counterparts}\label{subsec:initialization}
Recall that \(m\) is an even number; this was to facilitate the popular \textit{antisymmetric initialization trick} \citep[Section 6]{zhang2020type} (see also, for example, \citep[Section 2.3]{bowman2022spectral} and \citep[Eqn. (34) \& Remark 7(ii)]{montanari2022interpolation}). 
% The hidden layer weights are initialized by independent standard Gaussians, \([W(0)]_{j,k}\sim\mathcal{N}(0,1)\) for \(j=1,...,m\) and \(k=1,...,d\), i.e. for each \(j=1,...,m\), \(\mathbf{w}_j\in\mathbb{R}^d\), we have \(\mathbf{w}_j\sim\mathcal{N}(0,I_d)\). The output layer weights \(a_j,j=1,...,m\) are initialized from \(\text{Unif}\{-1,1\}\) and are kept fixed throughout training. This can be viewed as having a fixed function \(a:\mathbb{R}^d\rightarrow\{-1,1\}\) with \(a(\mathbf{w})\) for each \(\mathbf{w}\in\mathbb{R}^d\) drawn from \(\text{Unif}\{-1,1\}\) independently of each other. 

The hidden layer weights are initialized by independent standard Gaussians via the \textit{antisymmetric initialization scheme}, \([W(0)]_{j,k}\sim\mathcal{N}(0,1)\) for \(j=1,...,\frac{m}{2}\) and \(k=1,...,d\). In other words, for each \(j=1,...,\frac{m}{2}\), \(\mathbf{w}_j\in\mathbb{R}^d\), we have \(\mathbf{w}_j\sim\mathcal{N}(0,I_d)\). The output layer weights \(a_j,j=1,...,\frac{m}{2}\) are initialized from \(\text{Unif}\{-1,1\}\) and are kept fixed throughout training. Then, for \(j=\frac{m}{2}+1,...,m\), we let \(\mathbf{w}_j(0)=\mathbf{w}_{j-\frac{m}{2}}(0)\) and \(a_j=-a_{j-\frac{m}{2}}\). Then we define \(f_W=\frac{1}{\sqrt{2}}(f_{\mathbf{w}_1,...,\mathbf{w}_{m/2}}+f_{\mathbf{w}_{m/2+1},...,\mathbf{w}_m})\). This ensures that our network at initialization is exactly zero, i.e.,\(f_{W(0)}(\mathbf{x})=0\) for all \(\mathbf{x}\in\mathbb{S}^{d-1}\), while being able to carry out the analysis as if we had \(m\) independent neurons distributed as \(\mathcal{N}(0,I_d)\) at initialization. This is what we do henceforth. 

We define the analytical versions of the objects defined earlier by taking the expectation with respect to this initialization distribution of the weights. First, define the \textit{analytical pre-gradient function} \(J:\mathbb{R}^d\rightarrow L^2(\mathcal{N})\) and \textit{analytical pre-gradient matrix} \(\mathbf{J}\in L^2(\mathcal{N})\times\mathbb{R}^n\) as
\[J(\mathbf{x})(\mathbf{w})=a(\mathbf{w})\phi'(\mathbf{w}\cdot\mathbf{x}),\qquad\mathbf{J}(\mathbf{w})=a(\mathbf{w})\phi'(X\mathbf{w}).\]
Then define the \textit{analytical gradient function} \(G:\mathbb{R}^d\rightarrow L^2(\mathcal{N})\otimes\mathbb{R}^d\) and the \textit{analytical gradient matrix} \(\mathbf{G}\in L^2(\mathcal{N})\times\mathbb{R}^d\times\mathbb{R}^n\) by
\[G(\mathbf{x})(\mathbf{w})=J(\mathbf{x})(\mathbf{w})\mathbf{x}=a(\mathbf{w})\phi'(\mathbf{w}\cdot\mathbf{x})\mathbf{x},\qquad\mathbf{G}(\mathbf{w})=a(\mathbf{w})\phi'(X\mathbf{w})*X^\top.\]
Then we have, exactly analogously, the \textit{analytical NTK} \(\kappa:\mathbb{R}^d\times\mathbb{R}^d\rightarrow\mathbb{R}\)
\[\kappa(\mathbf{x},\mathbf{x}')=\langle G(\mathbf{x}),G(\mathbf{x}')\rangle_{\mathcal{N}\otimes\mathbb{R}^n}=\mathbf{x}\cdot\mathbf{x}'\mathbb{E}_{\mathbf{w}\sim\mathcal{N}(0,I_d)}[\phi'(\mathbf{w}\cdot\mathbf{x})\phi'(\mathbf{w}\cdot\mathbf{x}')]=\mathbb{E}_{W\sim W(0)}[\kappa_W(\mathbf{x},\mathbf{x}')]\]
and the \textit{analytical NTK matrix} \(\mathbf{H}\)
\[\mathbf{H}=\langle\mathbf{G},\mathbf{G}\rangle_{\mathcal{N}\otimes\mathbb{R}^d}=\begin{pmatrix}\kappa(\mathbf{x}_1,\mathbf{x}_1)&\dots&\kappa(\mathbf{x}_1,\mathbf{x}_n)\\\vdots&\ddots&\vdots\\\kappa(\mathbf{x}_n,\mathbf{x}_1)&\dots&\kappa(\mathbf{x}_n,\mathbf{x}_n)\end{pmatrix},\]
with its eigenvalues denoted as \(\boldsymbol{\lambda}_1\geq...\geq\boldsymbol{\lambda}_n=\boldsymbol{\lambda}_{\min}\). 

We also have the neuron-level decomposition again:
\[\kappa(\mathbf{x},\mathbf{x}')=m\mathbb{E}_{\mathbf{w}\sim\mathcal{N}(0,I_d)}[\kappa_\mathbf{w}(\mathbf{x},\mathbf{x}')],\qquad\mathbf{H}=m\mathbb{E}_{\mathbf{w}\sim\mathcal{N}(0,I_d)}[\mathbf{H}_\mathbf{w}]\]

Analogously to the development in Section~\ref{subsec:ntk}, we have a unique \textit{analytical neural tangent reproducing kernel Hilbert space} (analytical NTRKHS) \(\mathscr{H}\) with \(\kappa\) as its reproducing kernel and its inner product and norm denoted by \(\langle\cdot,\cdot\rangle_\mathscr{H}\) and \(\lVert\cdot\rVert_\mathscr{H}\). We also have the inclusion and sampling operators as well as their adjoints:
\[\iota:\mathscr{H}\rightarrow L^2(\rho_{d-1}),\quad\iota^\star:L^2(\rho_{d-1})\rightarrow\mathscr{H},\quad\boldsymbol{\iota}:\mathscr{H}\rightarrow\mathbb{R}^n,\quad\boldsymbol{\iota}^\star:\mathbb{R}^n\rightarrow\mathscr{H}\]
and denoting \(H\vcentcolon=\iota\circ\iota^\star:L^2(\rho_{d-1})\rightarrow L^2(\rho_{d-1})\), we have
\[Hf(\cdot)=\iota^\star f(\cdot)=\mathbb{E}[f(\mathbf{x})\kappa(\mathbf{x},\cdot)],\qquad\boldsymbol{\iota}\circ\boldsymbol{\iota}^\star=\frac{1}{n^2}\mathbf{H}.\]

\subsubsection{Spectral Theory for Neural Tangent Kernels}\label{subsec:spectral}
Consider \(\mathbf{x},\mathbf{x}'\in\mathbb{S}^{d-1}\). Note that, since \(\lVert\mathbf{x}\rVert_2=\lVert\mathbf{x}'\rVert_2=1\), there is always an orthonormal basis of \(\mathbb{R}^d\) such that with respect to this basis, 
\[\mathbf{x}=\begin{pmatrix}1\\0\\0\\\vdots\\0\end{pmatrix},\qquad\mathbf{x}'=\begin{pmatrix}\cos\theta\\\sin\theta\\0\\\vdots\\0\end{pmatrix},\qquad\text{where }\theta=\arccos(\mathbf{x}\cdot\mathbf{x}').\]
Then writing \(\mathbf{w}=(w_1,w_2,...,w_d)\) with respect to this basis, we still have that \(\mathbf{w}\sim\mathcal{N}(0,I_d)\) \citep[p.46, Proposition 3.3.2]{vershynin2018high}, and so \((w_1,w_2)\sim\mathcal{N}(0,I_2)\). In polar coordinates, we have that \((w_1,w_2)\) is distributed as \((r\cos\zeta,r\sin\zeta)\), where \(r^2\sim\chi^2(2)\) and \(\zeta\sim\text{Unif}[-\pi,\pi]\). Now see that
\begin{alignat*}{2}
    \kappa(\mathbf{x},\mathbf{x}')&=\mathbf{x}\cdot\mathbf{x}'\mathbb{E}_{\mathbf{w}\sim\mathcal{N}(0,I_d)}\left[\phi'(\mathbf{x}\cdot\mathbf{w})\phi'(\mathbf{x}'\cdot\mathbf{w})\right]\\
    &=\mathbf{x}\cdot\mathbf{x}'\mathbb{E}_{r,\zeta}\left[\mathbf{1}\left\{r\cos\zeta>0\right\}\mathbf{1}\left\{r\cos\zeta\cos\theta+r\sin\zeta\sin\theta>0\right\}\right]\\
    &=\mathbf{x}\cdot\mathbf{x}'\mathbb{E}_{\zeta}\left[\mathbf{1}\left\{\cos\zeta>0\right\}\mathbf{1}\left\{\cos(\zeta-\theta)>0\right\}\right]\\
    &=\frac{\mathbf{x}\cdot\mathbf{x}'}{2\pi}\int^{\frac{\pi}{2}}_{-\frac{\pi}{2}+\theta}d\zeta\\
    &=\mathbf{x}\cdot\mathbf{x}'\left(\frac{1}{2}-\frac{\theta}{2\pi}\right)\\
    &=\mathbf{x}\cdot\mathbf{x}'\left(\frac{1}{2}-\frac{\arccos(\mathbf{x}\cdot\mathbf{x}')}{2\pi}\right).
\end{alignat*}
So \(\kappa\) is clearly a continuous function, which means that the associated RKHS \(\mathscr{H}\) is separable \citep[p.130, Lemma 4.33]{steinwart2008support}. Hence, the self-adjoint operator \(H=\iota\circ\iota^\star:L^2(\rho_{d-1})\rightarrow L^2(\rho_{d-1})\) is compact \citep[p.127, Theorem 4.27]{steinwart2008support}. Now we apply spectral theory for compact, self-adjoint operators. By \citep[p.133, Theorem 6.7]{weidmann1980linear}, \(H\) has at most countably many eigenvalues that can only cluster at 0, and each non-zero eigenvalue has finite multiplicity. Also, for any eigenvalue \(\lambda\) of \(H\) with eigenvector \(\varphi\), we have
\[\lambda\lVert\varphi\rVert^2_2=\langle\lambda\varphi,\varphi\rangle_2=\langle H\varphi,\varphi\rangle_2=\lVert\iota^\star\varphi\rVert^2_2,\]
so \(\lambda\geq0\). We denote the eigenvalues in decreasing order with multiplicity by \(\lambda_1\geq\lambda_2\geq...\) with \(\lambda_l\rightarrow0\) as \(l\rightarrow\infty\) from above, whose corresponding eigenfunctions \(\varphi_l,l=1,2,...\) form an orthonormal basis of \(L^2(\rho_{d-1})\) \citep[p.443, Theorem 3.1]{lang1993real}. So by Parseval's equality \citep[p.38, Theorem 3.6]{weidmann1980linear}, for any \(f\in L^2(\rho_{d-1})\), we have
\[f=\sum^\infty_{l=1}\langle f,\varphi_l\rangle_2\varphi_l,\qquad\lVert f\rVert^2_2=\sum_{l=1}^\infty\langle f,\varphi_l\rangle_2^2,\qquad Hf=\sum_{l=1}^\infty\lambda_l\langle f,\varphi_l\rangle_2\varphi_l,\] 
which obviously has, as special cases, \(H\varphi_l=\lambda_l\varphi_l\) for all \(l=1,2,...\). 

For an arbitrary \(L\in\mathbb{N}\) and a function \(f\in L^2(\rho_{d-1})\), we denote by the superscript \(L\) in \(f^L\) the projection of \(f\) onto the subspace of \(L^2(\rho_{d-1})\) spanned by the first \(L\) eigenfunctions \(\varphi_1,...,\varphi_L\), and we denote by \(\tilde{f}^L\) the projection of \(f\) onto the subspace of \(L^2(\rho_{d-1})\) spanned by the remaining eigenfunctions \(\varphi_{L+1},\varphi_{L+2},...\). Then we have
\[f^L=\sum^L_{l=1}\langle f,\varphi_l\rangle_2\varphi_l,\quad\tilde{f}^L=\sum^\infty_{l=L+1}\langle f,\varphi_l\rangle_2\varphi_l,\quad f=f^L+\tilde{f}^L,\quad\lVert f\rVert_2^2=\lVert f^L\rVert_2^2+\lVert\tilde{f}^L\rVert_2^2.\]

We can also calculate the eigenvalues \(\lambda_l,l\in\mathbb{N}\) explicitly. Denoting by
\[\left(\frac{1}{2}\right)_r=\begin{cases}1&\text{for }r=0\\\frac{1}{2}\left(\frac{1}{2}+1\right)...\left(\frac{1}{2}+r-1\right)=\frac{\Gamma(\frac{1}{2}+r)}{\Gamma(\frac{1}{2})}=\frac{\Gamma(r)}{B(\frac{1}{2},r)}=\frac{(r-1)!}{B(\frac{1}{2},r)}&\text{for }r\geq1\end{cases}\]
the rising factorial (Pochhammer symbol) of \(\frac{1}{2}\), we expand out \(\kappa(\cdot,\cdot)\) as a Taylor series as follows:
\begin{alignat*}{2}
    \kappa(\mathbf{x},\mathbf{x}')&=\mathbf{x}\cdot\mathbf{x}'\left(\frac{1}{2}-\frac{\arccos(\mathbf{x}\cdot\mathbf{x}')}{2\pi}\right)\\
    &=\mathbf{x}\cdot\mathbf{x}'\left(\frac{1}{2}-\frac{1}{2\pi}\left(\frac{\pi}{2}-\sum^\infty_{r=0}\frac{(\frac{1}{2})_r}{r!+2rr!}(\mathbf{x}\cdot\mathbf{x}')^{2r+1}\right)\right)\\
    &=\frac{1}{4}\mathbf{x}\cdot\mathbf{x}'+\frac{1}{2\pi}(\mathbf{x}\cdot\mathbf{x}')^2+\frac{1}{2\pi}\sum^\infty_{r=1}\frac{(\mathbf{x}\cdot\mathbf{x}')^{2r+2}}{B(\frac{1}{2},r)r(1+2r)}.
\end{alignat*}
Recall that \(\rho_{d-1}\) denotes the uniform distribution on \(\mathbb{S}^{d-1}\). Let us denote by \(\sigma_{d-1}\) the Lebesgue measure on the unit sphere \(\mathbb{S}^{d-1}\), and by \(\lvert\mathbb{S}^{d-1}\rvert\) the surface area of \(\mathbb{S}^{d-1}\), so that
\[\rho_{d-1}=\frac{\sigma_{d-1}}{\lvert\mathbb{S}^{d-1}\rvert}.\]

In the following development of spherical harmonics theory, we mostly follow \citep{muller1998analysis}, though the key idea was borrowed from \citep{azevedo2014sharp}. 

For \(h=0,1,2,...\), denote by \(P_h(d;\cdot)\) the \textit{Legendre polynomial} of order \(h\) in \(d\) dimensions \citep[p.16, (\(\mathsection\)2.32)]{muller1998analysis},
\[P_h(d;z)=h!\Gamma\left(\frac{d-1}{2}\right)\sum^{\lfloor\frac{h}{2}\rfloor}_{r=0}\left(-\frac{1}{4}\right)^r\frac{(1-z^2)^rz^{h-2r}}{r!(h-2r)!\Gamma(r+\frac{d-1}{2})},\]
and by \(\mathcal{Y}_h(d)\) the \textit{space of spherical harmonics of order \(h\) in \(d\) dimensions} \citep[p.16, Definition 6]{muller1998analysis}. Then \(\mathcal{Y}_h(d)\) has the dimension \(N(d,h)\) given by \citep[p.28, Exercise 6]{muller1998analysis}
\[N(d,h)=\begin{cases}1&\text{for }h=0\\d&\text{for }h=1\\\frac{(2h+d-2)(h+d-3)!}{h!(d-2)!}&\text{for }h\geq2\end{cases}.\]
With a slight abuse of notation, define the function \(\kappa:[-1,1]\rightarrow\mathbb{R}\) by
\[\kappa(z)=z\left(\frac{1}{2}-\frac{\arccos(z)}{2\pi}\right)=\frac{z}{4}+\frac{z^2}{2\pi}+\frac{1}{2\pi}\sum^\infty_{r=1}\frac{z^{2r+2}}{B(\frac{1}{2},r)r(1+2r)},\]
so that \(\kappa(\mathbf{x},\mathbf{x}')=\kappa(\mathbf{x}\cdot\mathbf{x}')\). This is clearly bounded, so we can apply the Funk-Hecke formula \citep[p.30, Theorem 1]{muller1998analysis} to see that, for any spherical harmonic \(Y_h\in\mathcal{Y}_h(d)\) and any \(\mathbf{x}\in\mathbb{S}^{d-1}\), we have
\[\int\kappa(\mathbf{x},\mathbf{x}')Y_h(\mathbf{x}')d\sigma_{d-1}(\mathbf{x}')=\mu_hY_h(\mathbf{x}),\]
where
\begin{alignat*}{2}
    \mu_h&=\lvert\mathbb{S}^{d-2}\rvert\int_{-1}^1P_h(d;z)\kappa(z)(1-z^2)^{\frac{1}{2}(d-3)}dz\\
    &=\lvert\mathbb{S}^{d-2}\rvert\int_{-1}^1P_h(d;z)z\left(\frac{1}{2}-\frac{\arccos(z)}{2\pi}\right)(1-z^2)^{\frac{1}{2}(d-3)}dz\\
    &=\lvert\mathbb{S}^{d-2}\rvert\int^1_{-1}P_h(d;z)(1-z^2)^{\frac{1}{2}(d-3)}\left(\frac{z}{4}+\frac{z^2}{2\pi}+\frac{1}{2\pi}\sum^\infty_{r=1}\frac{z^{2r+2}}{B(\frac{1}{2},r)r(1+2r)}\right)dz\\
    &=\frac{\lvert\mathbb{S}^{d-2}\rvert}{4}\int^1_{-1}zP_h(d;z)(1-z^2)^{\frac{1}{2}(d-3)}dz\\
    &\quad+\frac{\lvert\mathbb{S}^{d-2}\rvert}{2\pi}\int^1_{-1}z^2P_h(d;z)(1-z^2)^{\frac{1}{2}(d-3)}dz\\
    &\qquad+\frac{\lvert\mathbb{S}^{d-2}\rvert}{2\pi}\sum^\infty_{r=1}\frac{1}{B(\frac{1}{2},r)r(1+2r)}\int^1_{-1}z^{2r+2}P_h(d;z)(1-z^2)^{\frac{1}{2}(d-3)}dz.
\end{alignat*}
If we divide both sides of the Funk-Hecke formula by \(\lvert\mathbb{S}^{d-1}\rvert\), we obtain
\[H(Y_h)(\mathbf{x})=\mathbb{E}_{\mathbf{x}'}[\kappa(\mathbf{x},\mathbf{x}')Y_h(\mathbf{x}')]=\int\kappa(\mathbf{x},\mathbf{x}')Y_h(\mathbf{x}')d\rho_{d-1}(\mathbf{x}')=\frac{\mu_h}{\lvert\mathbb{S}^{d-1}\rvert}Y_h(\mathbf{x}).\]
So for each \(h=0,1,2,...\), \(\frac{\mu_h}{\lvert\mathbb{S}^{d-1}\rvert}\) is an eigenvalue of \(H\) with multiplicity \(N(d,h)\) and eigenfunction \(Y_h\). We now take a closer look at \(\frac{\mu_h}{\lvert\mathbb{S}^{d-1}\rvert}\) for each value of \(h\) by applying the \textit{Rodrigues rule} \citep[p.22, Lemma 4 \& p.23, Exercise 1]{muller1998analysis}, which tells us that, for any \(f\in C^{(h)}[-1,1]\),
\begin{alignat*}{2}
    \int^1_{-1}f(z)P_h(d;z)(1-z^2)^{\frac{1}{2}(d-3)}dz&=\left(\frac{1}{2}\right)^h\frac{\Gamma\left(\frac{d-1}{2}\right)}{\Gamma\left(h+\frac{d-1}{2}\right)}\int^1_{-1}f^{(h)}(z)(1-z^2)^{h+\frac{1}{2}(d-3)}dz\\
    &=\frac{B(h,\frac{d-1}{2})}{2^h\Gamma(h)}\int^1_{-1}f^{(h)}(z)(1-z^2)^{h+\frac{1}{2}(d-3)}dz.
\end{alignat*}
We also use the following fact from \citep[p.7, (\(\mathsection\)1.35) \& (\(\mathsection\)1.36)]{muller1998analysis} that
\[\frac{\lvert\mathbb{S}^{d-2}\rvert}{\lvert\mathbb{S}^{d-1}\rvert}=\frac{\Gamma(\frac{d}{2})}{\sqrt{\pi}\Gamma(\frac{d-1}{2})}=\frac{\Gamma(\frac{1}{2})}{\sqrt{\pi}B(\frac{d-1}{2},\frac{1}{2})}=\frac{1}{B(\frac{d-1}{2},\frac{1}{2})}.\]
\begin{description}
    \item[\(h=0\):] In this case, \(P_h(d;z)=1\), so
    \[\mu_0=\lvert\mathbb{S}^{d-2}\rvert\int^1_{-1}\frac{z}{2}(1-z^2)^{\frac{1}{2}(d-3)}-\frac{z\arccos(z)}{2\pi}(1-z^2)^{\frac{1}{2}(d-3)}dz.\]
    Here, the first integrand \(\frac{z}{2}(1-z^2)^{\frac{1}{2}(d-3)}\) is an odd function, so the integral vanishes. For the second integral, we do integration by parts. Let
    \begin{alignat*}{3}
        u&=\arccos(z)&&\frac{du}{dz}=-\frac{1}{\sqrt{1-z^2}}\\
        \frac{dv}{dz}&=-z(1-z^2)^{\frac{1}{2}(d-3)}\qquad&&v=\frac{1}{d-1}(1-z^2)^{\frac{1}{2}(d-1)}. 
    \end{alignat*}
    Then
    \begin{alignat*}{2}
        \mu_0&=\frac{\lvert\mathbb{S}^{d-2}\rvert}{2\pi}\left[\frac{\arccos(z)}{d-1}(1-z^2)^{\frac{1}{2}(d-1)}\right]^1_{-1}+\frac{\lvert\mathbb{S}^{d-2}\rvert}{2\pi(d-1)}\int^1_{-1}(1-z^2)^{\frac{1}{2}d-1}dz\\
        &=\frac{\lvert\mathbb{S}^{d-2}\rvert}{2\pi(d-1)}B\left(\frac{1}{2},\frac{d}{2}\right).
    \end{alignat*}
    Hence, 
    \[\frac{\mu_0}{\lvert\mathbb{S}^{d-1}\rvert}=\frac{B(\frac{d}{2},\frac{1}{2})}{2\pi(d-1)B(\frac{d-1}{2},\frac{1}{2})}=\frac{\Gamma\left(\frac{d}{2}\right)^2}{2\pi(d-1)\Gamma\left(\frac{d+1}{2}\right)\Gamma\left(\frac{d-1}{2}\right)}.\]
    Here, if \(d\) is even, then
    \[\frac{\mu_0}{\lvert\mathbb{S}^{d-1}\rvert}=\frac{\left(\left(\frac{d}{2}-1\right)!\right)^22^{\frac{d}{2}}2^{\frac{d}{2}-1}}{2\pi(d-1)\sqrt{\pi}(d-1)!!(d-3)!!\sqrt{\pi}}=\left(\frac{(d-2)!!}{\pi(d-1)!!}\right)^2,\]
    and if \(d\) is odd, then
    \[\frac{\mu_0}{\lvert\mathbb{S}^{d-1}\rvert}=\frac{((d-2)!!\sqrt{\pi})^2}{2\pi(d-1)2^{d-1}(\frac{d-1}{2})!(\frac{d-3}{2})!}=\frac{(d-2)!!}{4(d-1)(d-1)!!(d-3)!!}=\left(\frac{(d-2)!!}{2(d-1)!!}\right)^2.\]
    \item[\(h=1\):] By applying the Rodrigues rule, we have
    \begin{alignat*}{2}
        \mu_1&=\lvert\mathbb{S}^{d-2}\rvert\int^1_{-1}z\left(\frac{1}{2}-\frac{\arccos(z)}{2\pi}\right)P_1(d;z)(1-z^2)^{\frac{1}{2}(d-3)}dz\\
        &=\frac{\lvert\mathbb{S}^{d-2}\rvert}{2}B\left(\frac{d-1}{2},1\right)\int^1_{-1}\left(\frac{1}{2}-\frac{\arccos(z)}{2\pi}+\frac{z}{2\pi\sqrt{1-z^2}}\right)(1-z^2)^{\frac{1}{2}(d-1)}dz.
    \end{alignat*}
    Here, in the last term, the integrand \(\frac{z(1-z^2)^{\frac{d-1}{2}}}{2\pi\sqrt{1-z^2}}\) is an odd function, so the integral vanishes. The first term is
    \[\frac{\lvert\mathbb{S}^{d-2}\rvert}{2}B\left(\frac{d-1}{2},1\right)\int^1_{-1}\frac{1}{2}(1-z^2)^{\frac{d-1}{2}}dz=\frac{\lvert\mathbb{S}^{d-2}\rvert}{4}B\left(\frac{d-1}{2},1\right)B\left(\frac{d+1}{2},\frac{1}{2}\right).\]
    The second term can be calculated by using integration by parts again:
    \begin{alignat*}{2}
        &-\frac{\lvert\mathbb{S}^{d-2}\rvert}{4\pi}B\left(\frac{d-1}{2},1\right)\int^1_{-1}\arccos(z)(1-z^2)^{\frac{d-1}{2}}dz\\
        &=-\frac{\lvert\mathbb{S}^{d-2}\rvert}{4\pi}B\left(\frac{d-1}{2},1\right)\frac{\pi^{3/2}\Gamma\left(\frac{d+1}{2}\right)}{2\Gamma\left(\frac{d}{2}+1\right)}\\
        &=-\frac{\lvert\mathbb{S}^{d-2}\rvert}{8}B\left(\frac{d-1}{2},1\right)B\left(\frac{d+1}{2},\frac{1}{2}\right).
    \end{alignat*}
    Hence,
    \[\mu_1=\frac{\lvert\mathbb{S}^{d-2}\rvert}{8}B\left(\frac{d-1}{2},1\right)B\left(\frac{d+1}{2},\frac{1}{2}\right),\]
    and so
    \[\frac{\mu_1}{\lvert\mathbb{S}^{d-1}\rvert}=\frac{B\left(\frac{d-1}{2},1\right)B\left(\frac{d+1}{2},\frac{1}{2}\right)}{8B\left(\frac{d-1}{2},\frac{1}{2}\right)}=\frac{1}{4d}.\]
    \item[\(h=2\):] By applying the Rodrigues rule, we have
    \begin{alignat*}{2}
        \mu_2&=\lvert\mathbb{S}^{d-2}\rvert\int^1_{-1}P_2(d;z)z\left(\frac{1}{2}-\frac{\arccos(z)}{2\pi}\right)(1-z^2)^{\frac{1}{2}(d-3)}dz\\
        &=\frac{\lvert\mathbb{S}^{d-2}\rvert B\left(2,\frac{d-1}{2}\right)}{4}\int^1_{-1}\left(\frac{1}{\pi\sqrt{1-z^2}}+\frac{z^2}{2\pi(1-z^2)^{3/2}}\right)(1-z^2)^{\frac{1}{2}(d+1)}dz\\
        &=\frac{\lvert\mathbb{S}^{d-2}\rvert B\left(2,\frac{d-1}{2}\right)}{4}\int^1_{-1}\frac{2-z^2}{2\pi}(1-z^2)^{\frac{d}{2}-1}dz\\
        &=\frac{\lvert\mathbb{S}^{d-2}\rvert B\left(2,\frac{d-1}{2}\right)}{8\pi}\int^1_{-1}(1-z^2)^{\frac{d}{2}-1}+(1-z^2)^{\frac{d}{2}}dz\\
        &=\frac{\lvert\mathbb{S}^{d-2}\rvert B\left(2,\frac{d-1}{2}\right)}{8\pi}\left(\frac{\sqrt{\pi}\Gamma\left(\frac{d}{2}\right)}{\Gamma\left(\frac{d+1}{2}\right)}+\frac{\sqrt{\pi}\Gamma\left(\frac{d}{2}+1\right)}{\Gamma\left(\frac{d+3}{2}\right)}\right)\\
        &=\frac{\lvert\mathbb{S}^{d-2}\rvert B\left(2,\frac{d-1}{2}\right)}{8\pi}\left(B\left(\frac{d}{2},\frac{1}{2}\right)+B\left(\frac{d}{2}+1,\frac{1}{2}\right)\right).
    \end{alignat*}
    So
    \[\frac{\mu_2}{\lvert\mathbb{S}^{d-1}\rvert}=\frac{B\left(\frac{d-1}{2},2\right)}{8\pi B\left(\frac{d-1}{2},\frac{1}{2}\right)}\left(B\left(\frac{d}{2},\frac{1}{2}\right)+B\left(\frac{d}{2}+1,\frac{1}{2}\right)\right).\]
    \item[Odd \(h\geq3\):] Recall that we have
    \begin{alignat*}{2}
        \mu_h&=\frac{\lvert\mathbb{S}^{d-2}\rvert}{4}\int^1_{-1}zP_h(d;z)(1-z^2)^{\frac{1}{2}(d-3)}dz\\
        &\quad+\frac{\lvert\mathbb{S}^{d-2}\rvert}{2\pi}\int^1_{-1}z^2P_h(d;z)(1-z^2)^{\frac{1}{2}(d-3)}dz\\
        &\qquad+\frac{\lvert\mathbb{S}^{d-2}\rvert}{2\pi}\sum^\infty_{r=1}\frac{1}{B(\frac{1}{2},r)r(1+2r)}\int^1_{-1}z^{2r+2}P_h(d;z)(1-z^2)^{\frac{1}{2}(d-3)}dz.
    \end{alignat*}
    By applying the Rodrigues rule to the first two terms, the \(h^\text{th}\) derivative vanishes, so the terms themselves vanish. By applying the Rodrigues rule to the summation term, for \(r<\frac{h}{2}-1\), the derivative vanishes, and for \(r\geq\frac{h}{2}-1\), the integrand becomes \(z^{2r+2-h}(1-z^2)^{h+\frac{d-3}{2}}\), which is an odd function since \(h\) is odd, so the integral vanishes. So \(\mu_h=0\).
    \item[Even \(h\geq4\):] Again, recall that we have
    \begin{alignat*}{2}
        \mu_h&=\frac{\lvert\mathbb{S}^{d-2}\rvert}{4}\int^1_{-1}zP_h(d;z)(1-z^2)^{\frac{1}{2}(d-3)}dz\\
        &\quad+\frac{\lvert\mathbb{S}^{d-2}\rvert}{2\pi}\int^1_{-1}z^2P_h(d;z)(1-z^2)^{\frac{1}{2}(d-3)}dz\\
        &\qquad+\frac{\lvert\mathbb{S}^{d-2}\rvert}{2\pi}\sum^\infty_{r=1}\frac{1}{B(\frac{1}{2},r)r(1+2r)}\int^1_{-1}z^{2r+2}P_h(d;z)(1-z^2)^{\frac{1}{2}(d-3)}dz.
    \end{alignat*}
    By applying the Rodrigues rule to the first two terms, the \(h^\text{th}\) derivative vanishes, so the terms themselves vanish. By applying the Rodrigues rule to the summation term, for \(r<\frac{h}{2}-1\), the derivative vanishes. By applying the Rodrigues rule to \(r\geq\frac{h}{2}-1\), we have
    \begin{alignat*}{2}
        &\int^1_{-1}z^{2r+2}P_h(d;z)(1-z^2)^{\frac{1}{2}(d-3)}dz\\
        &=\binom{2r+2}{h}\frac{h!B(h,\frac{d-1}{2})}{2^h\Gamma(h)}\int^1_{-1}z^{2r+2-h}(1-z^2)^{h+\frac{1}{2}(d-3)}dz\\
        &=\binom{2r+2}{h}\frac{hB(h,\frac{d-1}{2})}{2^h}\int^1_0u^{r+\frac{1}{2}-\frac{h}{2}}(1-u)^{h+\frac{1}{2}(d-3)}du\\
        &=\binom{2r+2}{h}\frac{hB(h,\frac{d-1}{2})}{2^h}B\left(r+\frac{3}{2}-\frac{h}{2},h+\frac{d-1}{2}\right).
    \end{alignat*}
    So
    \[\mu_h=\frac{\lvert\mathbb{S}^{d-2}\rvert h}{2^{h+1}\pi}B\left(h,\frac{d-1}{2}\right)\sum^\infty_{r=\frac{h}{2}-1}\frac{\binom{2r+2}{h}}{B(\frac{1}{2},r)r(1+2r)}B\left(r+\frac{3}{2}-\frac{h}{2},h+\frac{d-1}{2}\right).\]
\end{description}
To sum up, the eigenvalues \(\lambda_1\geq\lambda_2\geq...\) of \(H\) are
\[\frac{\mu_h}{\lvert\mathbb{S}^{d-1}\rvert}=\begin{cases}
    \left(\frac{(d-2)!!}{\pi(d-1)!!}\right)^2\text{ for even }d\text{ and }\left(\frac{(d-2)!!}{2(d-1)!!}\right)^2\text{ for odd }d&\text{for }h=0,\\
    \frac{1}{4d}&\text{for }h=1,\\
    \frac{B\left(\frac{d-1}{2},2\right)}{8\pi B\left(\frac{d-1}{2},\frac{1}{2}\right)}\left(B\left(\frac{d}{2},\frac{1}{2}\right)+B\left(\frac{d}{2}+1,\frac{1}{2}\right)\right)&\text{for }h=2,\\
    0&\text{for odd }h\geq3,\\
    \frac{hB\left(h,\frac{d-1}{2}\right)}{2^{h+1}\pi^2B(\frac{d-1}{2},\frac{1}{2})}\sum^\infty_{r=\frac{h}{2}-1}\frac{\binom{2r+2}{h}}{B(\frac{1}{2},r)r(1+2r)}B\left(r+\frac{3}{2}-\frac{h}{2},h+\frac{d-1}{2}\right)&\text{for even }h\geq4,
\end{cases}\]
with multiplicities \(1\) for \(h=0\), \(d\) for \(h=1\) and \(\frac{(2h+d-2)(h+d-3)!}{h!(d-2)!}\) for \(h\geq2\). 

\subsubsection{Full-Batch Gradient Flow}\label{subsec:full_batch_gf}
Our goal is to optimize for the weight matrix \(W\in\mathbb{R}^{m\times d}\) using full-batch gradient flow. We perform gradient flow with respect to both the empirical risk \(\mathbf{R}\) and the population risk \(R\), the latter obviously not possible in practice. 

Note that
\[\nabla_{f_W}R(f_W)=2(f_W-f^\star)=-2\zeta_W\in L^2(\rho_{d-1}),\quad\nabla_{\mathbf{f}_W}\mathbf{R}(f_W)=\frac{2}{n}(\mathbf{f}_W-\mathbf{y})=-\frac{2}{n}\boldsymbol{\xi}_W\in\mathbb{R}^n.\]
Using the chain rule and results from previous sections, we calculate the gradient of the risks as
\begin{alignat*}{2}
    \nabla_{\mathbf{w}_j}R(f_W)&=-\frac{2a_j}{\sqrt{m}}\mathbb{E}\left[\zeta_W(\mathbf{x})\phi'(\mathbf{w}_j\cdot\mathbf{x})\mathbf{x}\right]\in\mathbb{R}^d,\\
    \nabla_WR(f_W)&=\langle\nabla_{f_W}R,\nabla_Wf_W\rangle_2=-2\langle G_w,\zeta_W\rangle_2\\
    &=-\frac{2}{\sqrt{m}}\mathbb{E}[\zeta_W(\mathbf{x})(\mathbf{a}\odot\phi'(W\mathbf{x}))\mathbf{x}^\top]\in\mathbb{R}^{m\times d},\\
    \nabla_{\mathbf{w}_j}\mathbf{R}(f_W)&=-\frac{2a_j}{n\sqrt{m}}\sum^n_{i=1}\boldsymbol{\xi}_W\phi'(\mathbf{w}_j\cdot\mathbf{x}_i)\mathbf{x}_i\in\mathbb{R}^d,\\
    \nabla_W\mathbf{R}(f_W)&=\langle\nabla_{\mathbf{f}_W}\mathbf{R},\nabla_W\mathbf{f}_W\rangle_2=-\frac{2}{n}\mathbf{G}_W\boldsymbol{\xi}_W\\
    &=-\frac{2}{n\sqrt{m}}(\text{diag}[\mathbf{a}]\phi'(WX^\top))*X^\top)\boldsymbol{\xi}_W\in\mathbb{R}^{m\times d}.
\end{alignat*}
For \(t\geq0\), denote by \(W(t)\) and \(\hat{W}(t)\) the weight matrix at time \(t\) obtained by gradient flow with respect to \(R\) and \(\mathbf{R}\) respectively. They both start at random initialization \(W(0)\) as in Section \ref{subsec:initialization}, and are updated as follows:
\[\frac{dW}{dt}=-\nabla_WR(f_{W(t)})=2\langle G_{W(t)},\zeta_{W(t)}\rangle_2,\qquad\frac{d\hat{W}}{dt}=-\nabla_W\mathbf{R}(f_{\hat{W}(t)})=\frac{2}{n}\mathbf{G}_{\hat{W}(t)}\boldsymbol{\xi}_{\hat{W}(t)}.\]

For conciseness of notation, we denote the dependence on \(W(t)\) and \(\hat{W}(t)\) simply by the subscript \(t\) and the hat \(\hat{}\). So we write \(f_t\) and \(\hat{f}_t\) for \(f_{W(t)}\) and \(f_{\hat{W}(t)}\), \(\mathbf{f}_t\) and \(\hat{\mathbf{f}}_t\) for \(\mathbf{f}_{W(t)}\) and \(\mathbf{f}_{\hat{W}(t)}\), \(J_t\) and \(\hat{J}_t\) for \(J_{W(t)}\) and \(J_{\hat{W}(t)}\), \(\mathbf{J}_t\) and \(\hat{\mathbf{J}}_t\) for \(\mathbf{J}_{W(t)}\) and \(\mathbf{J}_{\hat{W}(t)}\), \(G_t\) and \(\hat{G}_t\) for \(G_{W(t)}\) and \(G_{\hat{W}(t)}\), \(\mathbf{G}_t\) and \(\hat{\mathbf{G}}_t\) for \(\mathbf{G}_{W(t)}\) and \(\mathbf{G}_{\hat{W}(t)}\), \(\kappa_t\) and \(\hat{\kappa}_t\) for \(\kappa_{W(t)}\) and \(\kappa_{\hat{W}(t)}\), \(\iota_t\) and \(\hat{\iota}_t\) for \(\iota_{W(t)}\) and \(\iota_{\hat{W}(t)}\), \(\boldsymbol{\iota}_t\) and \(\hat{\boldsymbol{\iota}}_t\) for \(\boldsymbol{\iota}_{W(t)}\) and \(\boldsymbol{\iota}_{\hat{W}(t)}\), \(H_t\) and \(\hat{H}_t\) for \(H_{W(t)}\) and \(H_{\hat{W}(t)}\), \(\mathbf{H}_t\) and \(\hat{\mathbf{H}}_t\) for \(\mathbf{H}_{W(t)}\) and \(\mathbf{H}_{\hat{W}(t)}\), \(\hat{\boldsymbol{\lambda}}_{t,1}\geq...\geq\hat{\boldsymbol{\lambda}}_{t,n}=\hat{\boldsymbol{\lambda}}_{t,\text{min}}\) for \(\boldsymbol{\lambda}_{\hat{W}(t),1}\geq...\geq\boldsymbol{\lambda}_{\hat{W}(t),n}=\boldsymbol{\lambda}_{\hat{W}(t),\text{min}}\), \(\xi_t\) and \(\hat{\xi}_t\) for \(\xi_{W(t)}\) and \(\xi_{\hat{W}(t)}\), \(\boldsymbol{\xi}_t\) and \(\hat{\boldsymbol{\xi}}_t\) for \(\boldsymbol{\xi}_{W(t)}\) and \(\boldsymbol{\xi}_{\hat{W}(t)}\), \(\zeta_t\) and \(\hat{\zeta}_t\) for \(\zeta_{W(t)}\) and \(\zeta_{\hat{W}(t)}\), \(\boldsymbol{\zeta}_t\) and \(\hat{\boldsymbol{\zeta}}_t\) for \(\boldsymbol{\zeta}_{W(t)}\) and \(\boldsymbol{\zeta}_{\hat{W}(t)}\), \(R_t\) and \(\hat{R}_t\) for \(R(f_t)\) and \(R(\hat{f}_t)\), and \(\mathbf{R}_t\) and \(\hat{\mathbf{R}}_t\) for \(\mathbf{R}(f_t)\) and \(\mathbf{R}(\hat{f}_t)\) (see Table \ref{tab:gradient_flow}).

Using the chain rule, we can also calculate the time derivative of the networks \(f_t\) and \(\hat{f}_t\), as well as the empirical evaluation \(\hat{\mathbf{f}}_t\) of \(\mathbf{f}_t\):
\begin{alignat*}{2}
    \frac{df_t}{dt}(\cdot)=-\frac{d\xi_t}{dt}(\cdot)=-\frac{d\zeta_t}{dt}(\cdot)&=\left\langle\nabla_Wf_t(\cdot),\frac{dW}{dt}\right\rangle_\text{F}\\
    &=2\left\langle G_t(\cdot),\langle G_t,\zeta_t\rangle_2\right\rangle_\text{F}\\
    &=2\mathbb{E}_\mathbf{x}[\langle G_t(\cdot),G_t(\mathbf{x})\rangle_\text{F}\zeta_t(\mathbf{x})]\\
    &=2H_t\zeta_t(\cdot)\in L^2(\rho_{d-1})\\
    \frac{d\hat{f}_t}{dt}(\cdot)=-\frac{d\hat{\xi}_t}{dt}(\cdot)=-\frac{d\hat{\zeta}_t}{dt}(\cdot)&=\left\langle\nabla_W\hat{f}_t(\cdot),\frac{d\hat{W}}{dt}\right\rangle_\text{F}=\frac{2}{n}\left\langle\hat{G}_t(\cdot),\hat{\mathbf{G}}_t\hat{\boldsymbol{\xi}}_t\right\rangle_\text{F}\in L^2(\rho_{d-1})\\
    \frac{d\mathbf{f}_t}{dt}=-\frac{d\boldsymbol{\xi}_t}{dt}=-\frac{d\boldsymbol{\zeta}_t}{dt}&=\left(\nabla_W\mathbf{f}_t\right)^\top\text{vec}\left(\frac{dW}{dt}\right)=2\mathbf{G}_t^\top\text{vec}\left(\langle G_t,\zeta_t\rangle_2\right)\in\mathbb{R}^n\\
    \frac{d\hat{\mathbf{f}}_t}{dt}=-\frac{d\hat{\boldsymbol{\xi}}_t}{dt}=-\frac{d\hat{\boldsymbol{\zeta}}_t}{dt}&=\left(\nabla_W\hat{\mathbf{f}}_t\right)^\top\text{vec}\left(\frac{d\hat{W}}{dt}\right)=\frac{2}{n}\hat{\mathbf{G}}_t^\top\hat{\mathbf{G}}_t\hat{\boldsymbol{\xi}}_t=\frac{2}{n}\hat{\mathbf{H}}_t\hat{\boldsymbol{\xi}}_t\in\mathbb{R}^n.
\end{alignat*}
Define \(W^L(0)=W(0)\) and \(\tilde{W}^L(0)=0\), so that \(W^L(0)+\tilde{W}^L(0)=W(0)\). See that
\[R_t=\lVert\zeta_t\rVert_2^2+R(f^\star)=\lVert\zeta^L_t\rVert_2^2+\lVert\tilde{\zeta}^L_t\rVert_2^2+R(f^\star)\]
where we used the \(\zeta^L_t=\sum^L_{l=1}\langle\zeta_t,\varphi_l\rangle_2\varphi_l\) and \(\tilde{\zeta}^L_t=\sum^\infty_{l=L+1}\langle\zeta_t,\varphi_l\rangle_2\varphi_l\) notation from Section \ref{subsec:spectral}. We denote the gradients of \(f^L_t\) and \(\tilde{f}^L_t\) with respect to the weights as
\[G^L_t=\nabla_Wf^L_t,\qquad\tilde{G}^L_t=\nabla_W\tilde{f}^L_t.\]
Then we can see that
\[G^L_t=\nabla_W\left(\sum^L_{l=1}\langle f_t,\varphi_l\rangle_2\varphi_l\right)=\sum^L_{l=1}\langle\nabla_Wf_t,\varphi_l\rangle_2\varphi_l=\sum^L_{l=1}\langle G_t,\varphi_l\rangle_2\varphi_l\]
so that
\begin{alignat*}{2}
    \kappa^L_t(\mathbf{x},\mathbf{x}')&=\langle G^L_t(\mathbf{x}),G^L_t(\mathbf{x}')\rangle_\text{F}\\
    &=\left\langle\sum^L_{l=1}\langle G_t,\varphi_l\rangle_2\varphi_l(\mathbf{x}),\sum^L_{l'=1}\langle G_t,\varphi_{l'}\rangle_2\varphi_{l'}(\mathbf{x}')\right\rangle_\text{F}\\
    &=\sum^L_{l,l'=1}\varphi_l(\mathbf{x})\varphi_{l'}(\mathbf{x}')\left\langle\langle G_t,\varphi_l\rangle_2,\langle G_t,\varphi_{l'}\rangle_2\right\rangle_\text{F}
\end{alignat*}
We also denote the projected risks as
\[R^L_t=\lVert\zeta^L_t\rVert^2_2+R(f^\star)\qquad\tilde{R}^L_t=\lVert\tilde{\zeta}^L_t\rVert_2^2+R(f^\star),\]
so that their gradients with respect to the weights are
\[\nabla_WR^L_t=-2\langle G^L_t,\zeta^L_t\rangle_2,\qquad\nabla_W\tilde{R}^L_t=-2\langle\tilde{G}^L_t,\tilde{\zeta}^L_t\rangle_2\]
and we have
\[\nabla_WR_t=\nabla_WR^L_t+\nabla_W\tilde{R}^L_t.\]
Then we perform gradient flow on each of the projections as follows:
\[\frac{dW^L}{dt}=-\nabla_WR^L_t=2\langle G^L_t,\zeta^L_t\rangle_2,\qquad\frac{d\tilde{W}^L}{dt}=-\nabla_W\tilde{R}^L_t=2\langle\tilde{G}^L_t,\tilde{\zeta}^L_t\rangle_2,\]
Then by using the decomposition of \(\nabla_WR_t=\nabla_WR^L_t+\nabla_W\tilde{R}^L_t\) from above, we can see that, for \(t\geq0\),
\[W(t)=\int^t_0\frac{dW}{dt}dt=\int^t_0\frac{dW^L}{dt}+\frac{d\tilde{W}^L}{dt}dt=W^L(t)+\tilde{W}^L(t).\]
For individual neurons in \(W^L(t)\), write \(\mathbf{w}^L_j(t)\), and likewise \(\tilde{\mathbf{w}}^L_j(t)\) for individual neurons in \(\tilde{W}^L(t)\). 

We define \(\kappa^L_t:\mathbb{R}^d\times\mathbb{R}^d\rightarrow\mathbb{R}\) by
\[\kappa^L_t(\mathbf{x},\mathbf{x}')=\langle G^L_t(\mathbf{x}),G^L_t(\mathbf{x}')\rangle_\text{F}.\]
Moreover, we denote the RKHS associated with \(\kappa^L_t\) as \(\mathscr{H}^L_t\), the associated inclusion operator as \(\iota^L_t:\mathscr{H}^L_t\rightarrow L^2(\rho_{d-1})\) and the associated operator as
\[H^L_t=\iota^L_t\circ(\iota^L_t)^\star:L^2(\rho_{d-1})\rightarrow L^2(\rho_{d-1}),\qquad H^L_tf(\mathbf{x})=\mathbb{E}_{\mathbf{x}'}[\kappa^L_t(\mathbf{x},\mathbf{x}')f(\mathbf{x}')].\]
It must be stressed that \(f_t^L=\sum^L_{l=1}\langle f_t,\varphi_l\rangle_2\varphi_l\) is not necessarily the same as \(f_{W^L(t)}\). Similarly, \(G^L_t\), \(\kappa^L_t\) and \(H^L_t\) are not necessarily the same as \(\nabla_Wf_{W^L(t)}\), \(\kappa_{W^L(t)}\) and \(H_{W^L(t)}\). 

\subsection{High Probability Results}\label{sec:high_probability}
Before we dive into our proofs, we first remark that our results are high-probability results, and the randomness comes from the sampling randomness of the data \(\{\mathbf{x}_i,y_i\}_{i=1}^n\) (or \(X\) and \(\mathbf{y}\)) and the random initialization of the neurons \(\{\mathbf{w}_j(0)\}_{j=1}^m\) (or the weight matrix \(W(0)\)). Since we are performing full-batch, deterministic gradient flow, once those are fixed, the trajectory of gradient flow is completely deterministic. Hence, it is often done in the literature that first all the results that hold on a single high-probability event are proved, and then those that follow in a deterministic way on this high-probability event are proved. In the literature, this is variously called \say{quasi-randomness} \citep[Section 3.1]{razborov2022improved}, a \say{good run} \citep[Definition 4.4]{frei2022benign} or a \say{good event} \citep[Section 4.1]{xu2023benign}. 

We also collect some high-probability results in this section. Then, overfitting, approximation and estimation results in Appendix~\ref{sec:overfitting}, Appendix~\ref{sec:approximation} and Appendix~\ref{sec:estimation} are proved in a deterministic fashion conditioned on the high-probability event of this section. Each of the high-probability results Lemmas~\ref{lem:probability_weights}, \ref{lem:probability_samples} and \ref{lem:probability_both} will yield a (high-probability) sub-event of the one produced by the previous result, and they will be denoted as \(E_1\supseteq E_2\supseteq E_3\). Our final event on which all of our result hold will have probability \(1-\delta\), where \(\delta\) is the failure probability. 

We start by collecting some preliminary non-random results that will be used throughout. 
\begin{lemma}\label{lem:non_random}
    We have the following results. 
    \begin{enumerate}[(i)]
        \item\label{H} The operator norm of \(H:L^2(\rho_{d-1})\rightarrow L^2(\rho_{d-1})\) is given by
        \[\lVert H\rVert_2=\lambda_1=\frac{1}{4d}.\]
        \item\label{H_W} For any weights \(W\in\mathbb{R}^{m\times d}\), we have
        \[\lVert H_W\rVert_2\leq\frac{1}{2d},\qquad\text{and}\qquad\lVert H_{\mathbf{w}_j}\rVert_2\leq\frac{1}{2md}.\]
        As a result, we also have, for all \(t\geq0\), 
        \[\lVert\nabla_{\mathbf{w}_j}R_t\rVert_2\leq\sqrt{\frac{2}{md}}\lVert\zeta_t\rVert_2.\]
        \item\label{isotropy} We have
        \[\mathbb{E}_{\mathbf{x},\mathbf{x}'}\left[(\mathbf{x}\cdot\mathbf{x}')^2\right]=\frac{1}{d}.\]
    \end{enumerate}
\end{lemma}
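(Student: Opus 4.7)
My plan for part~(i) is to draw directly on the explicit spherical-harmonic eigendecomposition of $H$ carried out in Appendix~\ref{subsec:spectral}, which uses the Funk--Hecke formula and Rodrigues' rule to produce a closed form for each eigenvalue indexed by harmonic degree $h$. The $h=1$ block is computed there to be exactly $1/(4d)$, so the task reduces to verifying that this value dominates all the other listed eigenvalues; since $H$ is compact and self-adjoint, its operator norm then equals this maximum. Odd $h \geq 3$ vanish by parity, so the only real work is to check $h=0$ and even $h \geq 2$. For $h=0$, the closed forms (for even and odd $d$) are ratios of double factorials which Stirling-type estimates bound above by $1/(2\pi d) < 1/(4d)$. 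For even $h \geq 2$, the Beta-function expression in Appendix~\ref{subsec:spectral} can be upper-bounded by its leading $r$-term and checked, via monotonicity in $h$, to stay strictly below $1/(4d)$; a handful of small-$d$ cases can be handled by direct substitution.

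For part~(ii), the plan is to pass to the single-neuron decomposition $H_W = \sum_{j=1}^m H_{\mathbf{w}_j}$ and establish the per-neuron bound $\|H_{\mathbf{w}_j}\|_2 \leq 1/(2md)$; the bound on $\|H_W\|_2$ then follows from the triangle inequality. Each single-neuron kernel factors as $\kappa_{\mathbf{w}_j}(\mathbf{x},\mathbf{x}') = \tfrac{1}{m}\langle \psi_j(\mathbf{x}),\psi_j(\mathbf{x}')\rangle$ with feature map $\psi_j(\mathbf{x}) = \mathbf{x}\,\mathbf{1}\{\mathbf{w}_j\cdot\mathbf{x} > 0\} \in \mathbb{R}^d$, so standard duality between an integral operator and its covariance implies that $H_{\mathbf{w}_j}$ shares its non-zero spectrum with the $d\times d$ matrix $A_j = \tfrac{1}{m}\mathbb{E}[\mathbf{x}\mathbf{x}^\top\mathbf{1}\{\mathbf{w}_j\cdot\mathbf{x} > 0\}]$. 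Choosing a basis that aligns $\mathbf{w}_j$ with the first coordinate, the sign-flip symmetries $x_i \mapsto -x_i$ for $i \geq 2$ on $\rho_{d-1}$ annihilate all off-diagonal entries of $A_j$, and the flip $x_1 \mapsto -x_1$ combined with $\mathbb{E}[x_i^2] = 1/d$ forces every diagonal entry to equal $1/(2md)$, yielding $A_j = I/(2md)$.

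The gradient bound in part~(ii) then follows from the explicit expression $\nabla_{\mathbf{w}_j}R_t = -(2a_j/\sqrt{m})\,\mathbb{E}[\zeta_t(\mathbf{x})\phi'(\mathbf{w}_j(t)\cdot\mathbf{x})\mathbf{x}]$: for any unit $\mathbf{u} \in \mathbb{R}^d$, Cauchy--Schwarz gives $|\mathbf{u}\cdot\nabla_{\mathbf{w}_j}R_t|^2 \leq \tfrac{4}{m}\|\zeta_t\|_2^2\,\mathbf{u}^\top(mA_j)\mathbf{u}$, and the previously established bound $\|mA_j\|_{\mathrm{op}} \leq 1/(2d)$ finishes it after supping over $\mathbf{u}$. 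Part~(iii) is a one-liner: expand $(\mathbf{x}\cdot\mathbf{x}')^2 = \sum_{i,k} x_i x_k x'_i x'_k$, apply independence of $\mathbf{x}$ and $\mathbf{x}'$ together with $\mathbb{E}[x_i x_k] = \delta_{ik}/d$ on $\mathbb{S}^{d-1}$, and collect the $d$ surviving diagonal terms to obtain $d \cdot (1/d)^2 = 1/d$. I expect the main obstacle to lie in part~(i), where certifying that no harmonic degree beats $h=1$ demands a careful but elementary comparison of Beta-function and double-factorial ratios; parts~(ii) and~(iii) reduce cleanly to the single covariance calculation above.
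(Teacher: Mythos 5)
Your part~(i) follows the paper's own route: explicit Funk--Hecke eigenvalues by harmonic degree, $h=1$ giving exactly $1/(4d)$ with multiplicity $d$, odd $h\geq3$ vanishing, and $h=0$ bounded below $1/(4d)$ by elementary estimates on the double-factorial ratio (the paper uses the telescoping inequality $\tfrac{a}{a+1}<\sqrt{\tfrac{a}{a+2}}$ rather than Stirling, but both work). One small honesty point: both you and the paper brush past the comparison for even $h\geq 2$; the paper literally just asserts it is ``clear,'' and your plan to actually compare the Beta-function series by its leading $r$-term is more work than the paper itself commits to. Part~(iii) is the same one-liner in both cases.

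Part~(ii) is a genuinely different argument. The paper passes through its new Lemma~\ref{lem:schur} (the integral-operator extension of the Hadamard/Schur bound), comparing $H_W$ against the pure dot-product integral operator $\Xi$, and then computes $\|\Xi\|_2 = 1/(2d)$ by another round of Funk--Hecke. You instead exploit that $\kappa_{\mathbf{w}_j}$ admits the finite-dimensional feature map $\psi_j(\mathbf{x}) = \tfrac{1}{\sqrt m}\,\mathbf{x}\,\mathbf{1}\{\mathbf{w}_j\cdot\mathbf{x}>0\}$, so that $H_{\mathbf{w}_j}=\Psi_j^\ast\Psi_j$ shares its nonzero spectrum with the $d\times d$ matrix $A_j=\Psi_j\Psi_j^\ast = \tfrac{1}{m}\mathbb{E}[\mathbf{x}\mathbf{x}^\top\mathbf{1}\{\mathbf{w}_j\cdot\mathbf{x}>0\}]$, which you then pin down as $\tfrac{1}{2md}I_d$ by the sign-flip symmetries of $\rho_{d-1}$. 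The bound $\|H_W\|_2\leq\tfrac{1}{2d}$ then drops out by the triangle inequality over the $m$ PSD neuron operators, and the gradient bound follows from the same Cauchy--Schwarz-plus-$A_j$ calculation. This route avoids Lemma~\ref{lem:schur} entirely (though the paper still needs that lemma elsewhere, e.g.\ for the operator concentration in Lemma~\ref{lem:probability_weights}), avoids a second pass through spherical harmonics, and in fact delivers the \emph{equality} $\|H_{\mathbf{w}_j}\|_2 = 1/(2md)$ rather than only an upper bound. It is a cleaner and more elementary argument for this specific claim; the paper's Schur-product machinery is the more general tool but is overkill here.
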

\begin{proof}
    \begin{enumerate}[(i)]
        \item Recall from Section~\ref{subsec:spectral} that the eigenvalues \(\lambda_1\geq\lambda_2\geq...\) of \(H\) are
        \[\frac{\mu_h}{\lvert\mathbb{S}^{d-1}\rvert}=\begin{cases}
            \left(\frac{(d-2)!!}{\pi(d-1)!!}\right)^2\text{ for even }d\text{ and }\left(\frac{(d-2)!!}{2(d-1)!!}\right)^2\text{ for odd }d&h=0,\\
            \frac{1}{4d}&h=1,\\
            \frac{B\left(\frac{d-1}{2},2\right)}{8\pi B\left(\frac{d-1}{2},\frac{1}{2}\right)}\left(B\left(\frac{d}{2},\frac{1}{2}\right)+B\left(\frac{d}{2}+1,\frac{1}{2}\right)\right)&h=2,\\
            0&\text{odd }h\geq3,\\
            \frac{hB\left(h,\frac{d-1}{2}\right)}{2^{h+1}\pi^2B(\frac{d-1}{2},\frac{1}{2})}\sum^\infty_{r=\frac{h}{2}-1}\frac{\binom{2r+2}{h}}{B(\frac{1}{2},r)r(1+2r)}B\left(r+\frac{3}{2}-\frac{h}{2},h+\frac{d-1}{2}\right)&\text{even }h\geq4,
        \end{cases}\]
        with multiplicities \(1\) for \(h=0\), \(d\) for \(h=1\) and \(\frac{(2h+d-2)(h+d-3)!}{h!(d-2)!}\) for \(h\geq2\). 
        
        Clearly, the values of \(\frac{\mu_h}{\lvert\mathbb{S}^{d-1}\rvert}\) for even \(h\geq2\) are smaller than those for \(h=0\) and \(h=1\). Moreover, see that, when \(d\) is odd, using the elementary inequality \(\frac{a}{a+1}<\sqrt{\frac{a}{a+2}}\),
        \[\frac{(d-2)!!}{(d-1)!!}=\frac{d-2}{d-1}\frac{d-4}{d-3}...\frac{3}{4}\frac{1}{2}<\sqrt{\frac{d-2}{d}}\sqrt{\frac{d-4}{d-2}}...\sqrt{\frac{3}{5}}\sqrt{\frac{1}{3}}=\frac{1}{\sqrt{d}},\]
        and when \(d\) is even, using the same elementary inequality,
        \[\frac{(d-2)!!}{\pi(d-1)!!}=\frac{1}{\pi}\frac{d-2}{d-1}\frac{d-4}{d-3}...\frac{4}{5}\frac{2}{3}<\frac{1}{\pi}\sqrt{\frac{d-2}{d}}...\sqrt{\frac{4}{6}}\sqrt{\frac{2}{4}}<\frac{1}{2\sqrt{d}}.\]
        Hence, we always have that \(\frac{\mu_0}{\lvert\mathbb{S}^{d-1}\rvert}<\frac{\mu_1}{\lvert\mathbb{S}^{d-1}\rvert}\), and so \(\lambda_1=...=\lambda_d=\frac{1}{4d}\), and \(\lambda_{d+1}=\frac{\mu_0}{\lvert\mathbb{S}^{d-1}\rvert}\). 
        
        Finally, since \(H\) is a self-adjoint (and therefore a normal) operator on \(L^2(\rho_{d-1})\), the operator norm of \(H\) coincides with the spectral radius \citep[p.127, Theorem 5.44]{weidmann1980linear}, meaning that
        \[\lVert H\rVert_2=\lambda_1=\frac{1}{4d}.\]
        \item We define linear operators \(\Xi,\tilde{\Xi}:L^2(\rho_{d-1})\rightarrow L^2(\rho_{d-1})\) by
        \[\Xi(f)(\mathbf{x})=\mathbb{E}_{\mathbf{x}'}[\mathbf{x}\cdot\mathbf{x}'f(\mathbf{x}')],\qquad\tilde{\Xi}(f)(\mathbf{x})=\frac{1}{m}\mathbb{E}_{\mathbf{x}'}[\mathbf{x}\cdot\mathbf{x}'f(\mathbf{x}')].\]
        Notice that \(H_W\) is given as the integral operator of the NTK \(\kappa_W\), which in turn is a tensor product of the dot product kernel, which is the associated kernel of \(\Xi\), and the kernel \((\mathbf{x},\mathbf{x}')\mapsto\frac{1}{m}\sum^m_{j=1}\phi'(\mathbf{w}_j\cdot\mathbf{x})\phi'(\mathbf{w}_j\cdot\mathbf{x}')\). Since the second kernel is bounded above by 1, Lemma~\ref{lem:schur} tells us that
        \[\lVert H_W\rVert_2\leq\lVert\Xi\rVert_2,\qquad\lVert H_{\mathbf{w}_j}\rVert_2\leq\lVert\tilde{\Xi}\rVert_2.\]
        Now, since \(\Xi\) and \(\tilde{\Xi}\) are self-adjoint (and therefore normal) operators, their operator norms are equal to their largest eigenvalues. We now use the Funk-Hecke formula \citep[p.30, Theorem 1]{muller1998analysis} again to see that the eigenvalues \(\tau_h\) and \(\tilde{\tau}_h\) of \(\Xi\) and \(\tilde{\Xi}\) are given by
        \[\tau_h=\frac{\lvert\mathbb{S}^{d-2}\rvert}{\lvert\mathbb{S}^{d-1}\rvert}\int^1_{-1}P_h(d;z)z(1-z^2)^{\frac{1}{2}(d-3)}dz.\]
        Here, note that \(P_0(d;z)=1\), so for \(h=0\), the integrand is an odd function, which gives \(\tau_0=0\). Moreover, using the Rodrigues rule, we can see that \(\tau_h=0\) for \(h\geq2\), because the \(h^\text{th}\) derivative of \(z\) is zero. Hence, using the Rodrigues rule, we can see that
        \begin{alignat*}{2}
            \lVert H_W\rVert_2&\leq\lVert\Xi\rVert_2\\
            &=\tau_1\\
            &=\frac{\lvert\mathbb{S}^{d-2}\rvert}{\lvert\mathbb{S}^{d-1}\rvert}\int^1_{-1}z^2(1-z^2)^{\frac{1}{2}(d-3)}dz\\
            &=\frac{\lvert\mathbb{S}^{d-2}\rvert}{2\lvert\mathbb{S}^{d-1}\rvert}B\left(\frac{d-1}{2},1\right)B\left(\frac{d+1}{2},\frac{1}{2}\right)\\
            &\leq\frac{1}{2d}.
        \end{alignat*}
        Similarly, we have
        \[\lVert H_{\mathbf{w}_j}\rVert_2\leq\lVert\tilde{\Xi}\rVert_2=\tilde{\tau}_1=\frac{1}{2md}.\]
        Applying the Cauchy-Schwarz inequality,
        \begin{alignat*}{3}
            \lVert\nabla_{\mathbf{w}_j}R_t\rVert_2&=2\lVert\langle G_{\mathbf{w}_j(t)},\zeta_t\rangle_2\rVert_2\\
            &=2\lVert\mathbb{E}[G_{\mathbf{w}_j(t)}(\mathbf{x})\zeta_t(\mathbf{x})]\rVert_2\\
            &=2\sqrt{\mathbb{E}_{\mathbf{x},\mathbf{x}'}\left[\left(G_{\mathbf{w}_j(t)}(\mathbf{x})\cdot G_{\mathbf{w}_j(t)}(\mathbf{x}')\right)\zeta_t(\mathbf{x})\zeta_t(\mathbf{x}')\right]}\\
            &=2\sqrt{\left\langle\zeta_t,H_{\mathbf{w}_j(t)}\zeta_t\right\rangle_2}\\
            &\leq2\lVert\zeta_t\rVert_2\sqrt{\lVert H_{\mathbf{w}_j(t)}\rVert_2}\\
            &\leq\sqrt{\frac{2}{md}}\lVert\zeta_t\rVert_2
        \end{alignat*}
        as required. 
        \item See that \(\sqrt{d}\mathbf{x}\) and \(\sqrt{d}\mathbf{x}'\) are independent isotropic random vectors \citep[p.45, Exercise 3.3.1]{vershynin2018high}, so by \citep[p.44, Lemma 3.2.4]{vershynin2018high}, we have that
        \[\mathbb{E}_{\mathbf{x},\mathbf{x}'}[(\mathbf{x}\cdot\mathbf{x}')^2]=\frac{1}{d^2}\mathbb{E}_{\mathbf{x},\mathbf{x}'}[((\sqrt{d}\mathbf{x})\cdot(\sqrt{d}\mathbf{x}'))^2]=\frac{1}{d^2}d=\frac{1}{d},\]
        as required.
    \end{enumerate}
\end{proof}

\subsubsection{Randomness due to Weight Initialization}\label{subsec:probability_weights}
We first collect a few results that weights at initialization satisfy with high probability. In these results, the only randomness comes from the weight initialization. 
\begin{lemma}\label{lem:probability_weights}
    If Assumption~\ref{ass:delta}\ref{ass:delta_approximation} is satisfied, there is an event \(E_1\) with \(\mathbb{P}(E_1)\geq1-\frac{\delta}{3}\) on which the following happen simultaneously. 
    \begin{enumerate}[(i)]
        % \item\label{w_j(0)lowerbound} The initial weights are lower bounded in norm: for all \(j=1,...,m\),
        % \[\lVert\mathbf{w}_j(0)\rVert_2\geq\sqrt{\frac{d}{2}}.\]
        \item\label{w_j(0)upperbound} The initial weights are upper bounded in norm: for all \(j=1,...,m\),
        \[\lVert\mathbf{w}_j(0)\rVert_2\leq\sqrt{5d+4\log m}.\]
        \item\label{H_0H} The initial NTK operator concentrates to the analytical NTK operator:
        \[\lVert H_0-H\rVert_2\leq\frac{5}{2}\sqrt{\frac{\log(2m)}{dm}}.\]
        \item\label{neurons_zero_overfitting} We have:
        \begin{alignat*}{2}
            \sup_{\mathbf{x}\in\mathbb{S}^{d-1}}&\left\lvert\left\{j\in\{1,...,m\}:\exists\mathbf{v}\in\mathbb{R}^d\textnormal{ with }\mathbf{v}\cdot\mathbf{x}=0\textnormal{ and }\lVert\mathbf{v}-\mathbf{w}_j(0)\rVert_2\leq32\sqrt{\frac{d}{m}}\right\}\right\rvert\\
            &\qquad\qquad\qquad\qquad\qquad\qquad\qquad\qquad\qquad\qquad\qquad\leq\sqrt{dm}(34+\sqrt{\log m}).
        \end{alignat*}
        \item\label{neurons_zero_approximation} We have
        \begin{alignat*}{2}
            \sup_{\mathbf{x}\in\mathbb{S}^{d-1}}&\left\lvert\left\{j\in\{1,...,m\}:\exists\mathbf{v}\in\mathbb{R}^d\text{ with }\mathbf{v}\cdot\mathbf{x}=0\text{ and }\lVert\mathbf{v}-\mathbf{w}_j(0)\rVert_2\leq\frac{2\sqrt{2}}{\sqrt{md}\lambda_\epsilon}\right\}\right\rvert\\
            &\qquad\qquad\qquad\qquad\qquad\qquad\qquad\qquad\qquad\qquad\qquad\leq\frac{\sqrt{m}}{\sqrt{d}\lambda_\epsilon}(3\sqrt{2}+\sqrt{\log m}).
        \end{alignat*}
    \end{enumerate}
\end{lemma}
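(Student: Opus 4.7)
I will allocate failure probability $\delta/12$ to each of the four claims and take a union bound at the end, using Assumption~\ref{ass:delta}\ref{ass:delta_approximation} ($e^{-d}\leq\delta/12$) as the key budgeting tool. Parts (i), (iii), and (iv) are routine Gaussian-concentration arguments, while Part (ii), the operator-norm concentration, is the real work.

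For \textbf{Part (i)}, I apply the $\chi^2$ tail (\ref{eqn:laurent1}) to each $\lVert\mathbf{w}_j(0)\rVert_2^2\sim\chi^2(d)$ with parameter $c=d+\log m$, then union-bound over $j=1,\dots,m$, giving total failure probability $\leq me^{-(d+\log m)}=e^{-d}\leq\delta/12$; the elementary estimate $2\sqrt{d(d+\log m)}\leq 2d+2\log m$ reduces the tail threshold to $5d+4\log m$. For \textbf{Parts (iii) and (iv)}, the condition ``$\exists\,\mathbf{v}\perp\mathbf{x}$ with $\lVert\mathbf{v}-\mathbf{w}_j(0)\rVert_2\leq r$'' is equivalent to $\lvert\mathbf{w}_j(0)\cdot\mathbf{x}\rvert\leq r$ (distance from $\mathbf{w}_j(0)$ to the hyperplane $\mathbf{x}^\perp$). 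Since $\mathbf{w}_j(0)\cdot\mathbf{x}\sim\mathcal{N}(0,1)$, the counting random variable $N_r(\mathbf{x})=\sum_j\mathbf{1}\{\lvert\mathbf{w}_j(0)\cdot\mathbf{x}\rvert\leq r\}$ has mean at most $2mr/\sqrt{2\pi}$ and fluctuates by $O(\sqrt{m\log(1/\eta)})$ with probability $\geq 1-\eta$ by Hoeffding (\ref{eqn:hoeffding}). To pass from fixed $\mathbf{x}$ to a supremum over $\mathbb{S}^{d-1}$, I cover the sphere with an $\eta'$-net of cardinality $(3/\eta')^d$ and exploit $\lvert\mathbf{w}_j(0)\cdot\mathbf{x}-\mathbf{w}_j(0)\cdot\mathbf{x}'\rvert\leq\eta'\lVert\mathbf{w}_j(0)\rVert_2$, with $\lVert\mathbf{w}_j(0)\rVert_2$ controlled by Part (i); choosing $\eta'\asymp r/\sqrt{d+\log m}$ keeps the inflated radius within a constant factor of $r$ and yields $\log(1/\eta)=O(d\log m)$, producing the $\sqrt{dm\log m}$ correction that appears as the $\sqrt{\log m}$ in both statements, with the expected count absorbed into the leading constants $34$ and $3\sqrt{2}$.

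\textbf{Part (ii)} is the main obstacle. Writing $H_0-H=\sum_{j=1}^m(H_{\mathbf{w}_j(0)}-\mathbb{E}[H_{\mathbf{w}_j(0)}])$ as a sum of i.i.d.\ mean-zero self-adjoint operators with $\lVert H_{\mathbf{w}_j(0)}\rVert_2\leq 1/(2md)$ (Lemma~\ref{lem:non_random}\ref{H_W}), a direct vector-valued Hoeffding (\ref{eqn:hoeffdingvector}) in the Hilbert--Schmidt space delivers a Hilbert--Schmidt-norm bound whose concentration dependence is $\log(1/\delta)$ rather than the sharper $\log(2m)$, and infinite-dimensional self-adjoint operators enjoy no clean Matrix-Chernoff analogue in operator norm. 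My plan is therefore to realise
$\lVert H_0-H\rVert_2=\sup_{\lVert f\rVert_2,\lVert g\rVert_2\leq 1}\bigl\lvert\tfrac{1}{m}\sum_{j=1}^m\Psi_{f,g}(\mathbf{w}_j(0))-\mathbb{E}[\Psi_{f,g}(\mathbf{w})]\bigr\rvert$
with $\Psi_{f,g}(\mathbf{w})=\mathbb{E}_{\mathbf{x},\mathbf{x}'}[f(\mathbf{x})g(\mathbf{x}')(\mathbf{x}\cdot\mathbf{x}')\phi'(\mathbf{w}\cdot\mathbf{x})\phi'(\mathbf{w}\cdot\mathbf{x}')]$ and to handle the right-hand side as a uniform empirical-process deviation in $\mathbf{w}$. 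The point is that each factor $\phi'(\mathbf{w}\cdot\mathbf{x})=\mathbf{1}\{\mathbf{w}\cdot\mathbf{x}>0\}$ is a halfspace indicator, so the unweighted family $\{\mathbf{w}\mapsto\phi'(\mathbf{w}\cdot\mathbf{x})\phi'(\mathbf{w}\cdot\mathbf{x}'):\mathbf{x},\mathbf{x}'\in\mathbb{S}^{d-1}\}$ has VC subgraph dimension $O(d)$ (intersection of two halfspaces), and averaging against $f(\mathbf{x})g(\mathbf{x}')(\mathbf{x}\cdot\mathbf{x}')$ with $\lVert f\rVert_2,\lVert g\rVert_2\leq 1$ preserves a VC-type complexity for the induced class $\{\Psi_{f,g}\}$. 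A Dudley/chaining argument (or, equivalently, an $\epsilon$-net over $(\mathbf{x},\mathbf{x}')$-pairs in $\mathbb{S}^{d-1}\times\mathbb{S}^{d-1}$), with the variance proxy controlled by $\mathbb{E}[(\mathbf{x}\cdot\mathbf{x}')^2]=1/d$ from Lemma~\ref{lem:non_random}\ref{isotropy}, then produces the uniform bound $\lVert H_0-H\rVert_2\leq(5/2)\sqrt{\log(2m)/(dm)}$ with failure probability $\leq\delta/12$, the covering entropy being absorbed by $e^{-d}\leq\delta/12$.

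Combining the four $\delta/12$ events by a union bound defines $E_1$, and $\mathbb{P}(E_1)\geq 1-\delta/3$ as required. The most delicate step is Part (ii), where the absence of an operator-norm Matrix-Chernoff inequality in infinite dimensions forces the detour through a halfspace-indicator VC argument; I expect the constant $5/2$ and the precise $\log(2m)$ dependence to be recovered only by tracking the Bernstein variance proxy carefully through the chaining.
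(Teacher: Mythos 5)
Parts (i), (iii), and (iv) of your plan coincide with the paper's proof: for (i) the same $\chi^2$ tail with $c = d + \log m$ and union bound over $j$; for (iii) and (iv) the same observation that $\exists\,\mathbf{v}\perp\mathbf{x}$ with $\lVert\mathbf{v}-\mathbf{w}_j(0)\rVert_2\le r$ iff $\lvert\mathbf{w}_j(0)\cdot\mathbf{x}\rvert\le r$, the same net over $\mathbb{S}^{d-1}$ of mesh proportional to $r/\sqrt{5d+4\log m}$ (which is why (i) must be proved first), and the same Hoeffding-at-net-points plus union bound over the cover, with the $2$-unit widening of the radius absorbing discretization error.

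Part (ii) is where your plan has a genuine gap, and it is precisely the step you flag as uncertain. You correctly set up $\lVert H_0-H\rVert_2$ as a supremum over $(f,g)$ in the $L^2$-unit ball of the bilinear form $\mathbb{E}_{\mathbf{x},\mathbf{x}'}[f(\mathbf{x})g(\mathbf{x}')(\mathbf{x}\cdot\mathbf{x}')D(\mathbf{x},\mathbf{x}')]$ with $D(\mathbf{x},\mathbf{x}')=\tfrac{1}{m}\sum_j g_{\mathbf{x},\mathbf{x}'}(\mathbf{w}_j(0))-\mathbb{E}_\mathbf{w}[g_{\mathbf{x},\mathbf{x}'}(\mathbf{w})]$, and you correctly identify that the halfspace-intersection class has VC complexity $O(d)$. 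But the assertion that "averaging against $f(\mathbf{x})g(\mathbf{x}')(\mathbf{x}\cdot\mathbf{x}')$ preserves a VC-type complexity for $\{\Psi_{f,g}\}$" is unjustified — $\{\Psi_{f,g}\}$ is indexed by the infinite-dimensional $L^2$-ball, not by $(\mathbf{x},\mathbf{x}')$-pairs, and a net over pairs does not control that supremum. If you instead split off the supremum of $D$ and apply Cauchy--Schwarz using $\mathbb{E}[(\mathbf{x}\cdot\mathbf{x}')^2]=1/d$, you obtain $\lVert H_0-H\rVert_2 \le \tfrac{1}{\sqrt d}\sup_{\mathbf{x},\mathbf{x}'}\lvert D(\mathbf{x},\mathbf{x}')\rvert$, and combined with the VC rate $\sup\lvert D\rvert \lesssim \sqrt{d\log m/m}$ this gives $O(\sqrt{\log m / m})$ — a factor $\sqrt{d}$ worse than the lemma's target $\tfrac{5}{2}\sqrt{\log(2m)/(dm)}$. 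The paper closes this $\sqrt{d}$ gap with Lemma~\ref{lem:schur}, an operator analogue of the Schur (Hadamard) product bound, exploiting the factorization of the kernel of $H_0-H$ as $(\mathbf{x}\cdot\mathbf{x}')\cdot D(\mathbf{x},\mathbf{x}')$: this reduces the operator norm to $\lVert\Xi\rVert_2\cdot\sup_\mathbf{x}\lvert D(\mathbf{x},\mathbf{x})\rvert$, where $\Xi$ is the dot-product integral operator with $\lVert\Xi\rVert_2\le 1/(2d)$ (Lemma~\ref{lem:non_random}\ref{H_W}), thereby extracting a full $1/d$ rather than $1/\sqrt d$. The remaining one-parameter empirical process in $\mathbf{x}$ the paper then handles by symmetrization, Massart's lemma via the growth function $\Pi_\mathcal{G}(m)\le(2m)^{2d}$, and McDiarmid — interchangeable with your proposed chaining, so that choice is fine. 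The missing idea is the Schur-multiplier reduction, which is what makes the isotropy factor appear as $1/d$ rather than $1/\sqrt d$ and keeps the index set finite-dimensional.
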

\begin{proof}
    \begin{enumerate}[(i)]
        % \item Note that, for each \(j=1,...,m\), \(\lVert\mathbf{w}_j(0)\rVert_2^2\sim\chi^2(d)\), so by (\ref{eqn:laurent2}), for all \(c>0\),
        % \[\mathbb{P}\left(\lVert\mathbf{w}_j(0)\rVert_2^2\leq d-2\sqrt{dc}\right)\leq e^{-c}.\]
        % With \(c=\frac{d}{16}\) and taking the square root, we have
        % \[\mathbb{P}\left(\lVert\mathbf{w}_j(0)\rVert_2\leq\sqrt{\frac{d}{2}}\right)\leq e^{-d/16},\]
        % and taking the union bound over the neurons, we have
        % \[\mathbb{P}\left(\lVert\mathbf{w}_j(0)\rVert_2\leq\sqrt{\frac{d}{2}}\enspace\text{for some }j\in\{1,...,m\}\right)\leq me^{-d/16}.\]
        % We note that \(me^{-d/16}<\frac{\delta}{6}\) by Assumption \ref{ass:relations}\ref{i}.
        \item Note that, for each \(j=1,...,m\), \(\lVert\mathbf{w}_j(0)\rVert_2^2\sim\chi^2(d)\), so by (\ref{eqn:laurent1}), for any \(c>0\),
        \[\mathbb{P}\left(\lVert\mathbf{w}_j(0)\rVert_2^2\geq d+2\sqrt{dc}+2c\right)\leq e^{-c}.\]
        Letting \(c=d+\log m\) and taking the square root, we have
        \begin{alignat*}{2}
            \mathbb{P}\left(\lVert\mathbf{w}_j(0)\rVert_2\geq\sqrt{5d+4\log m}\right)&\leq\mathbb{P}\left(\lVert\mathbf{w}_j(0)\rVert_2\geq\sqrt{3d+2\log m+2\sqrt{d^2+d\log m}}\right)\\
            &\leq e^{-d-\log m}\\
            &=\frac{e^{-d}}{m},
        \end{alignat*}
        and taking the union bound over the neurons, we have
        \[\mathbb{P}\left(\lVert\mathbf{w}_j(0)\rVert_2\geq\sqrt{5d+4\log m}\enspace\text{for some }j\in\{1,...,m\}\right)\leq e^{-d}.\]
        We note that \(e^{-d}\leq\frac{\delta}{12}\) by Assumption~\ref{ass:delta}\ref{ass:delta_approximation}.
        \item We start by defining, for each pair \(\mathbf{x},\mathbf{x}'\in\mathbb{S}^{d-1}\), a function \(g_{\mathbf{x},\mathbf{x}'}:\mathbb{R}^d\rightarrow\mathbb{R}\) as
        \[g_{\mathbf{x},\mathbf{x}'}(\mathbf{w})=\phi'(\mathbf{x}\cdot\mathbf{w})\phi'(\mathbf{w}\cdot\mathbf{x}')=\mathbf{1}\{\mathbf{x}\cdot\mathbf{w}>0\}\mathbf{1}\{\mathbf{w}\cdot\mathbf{x}'>0\}.\]
        The intuition behind the functions \(g_{\mathbf{x},\mathbf{x}'}\) is the following (see Figure \ref{Fgxx}). For each \(\mathbf{x}\in\mathbb{S}^{d-1}\), \(\mathbb{R}^d\) is cut into two disjoint halves by the hyperplane through the origin to which \(\mathbf{x}\) is a normal, which we denote by \(\mathbb{H}^d_\mathbf{x}\) and \(\tilde{\mathbb{H}}^d_\mathbf{x}\) with \(\mathbf{x}\in\mathbb{H}^d_\mathbf{x}\), and with \(\tilde{\mathbb{H}}^d_\mathbf{x}\) containing the hyperplane. If \(\mathbf{w}\in\mathbb{H}^d_\mathbf{x}\), then \(\phi'(\mathbf{x}\cdot\mathbf{w})=1\), and if \(\mathbf{w}\in\tilde{\mathbb{H}}^d_\mathbf{x}\), then \(\phi'(\mathbf{x}\cdot\mathbf{w})=0\). For each pair \(\mathbf{x},\mathbf{x}'\in\mathbb{S}^{d-1}\), the function \(g_{\mathbf{x},\mathbf{x}'}\) makes two such cuts, and thus is given by
        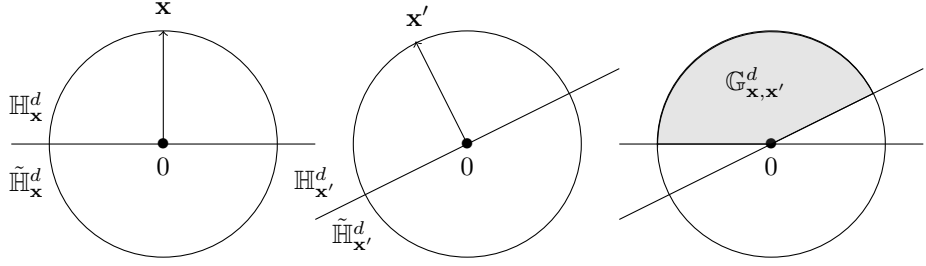
\begin{figure}[t]
            \begin{center}
                \begin{tikzpicture}
                    \draw (0,0) circle (1.5cm);
                    \draw (-2,0) -- (2,0);
                    \node at (0,0) {\textbullet};
                    \node at (0,-0.3) {0};
                    \draw[->] (0,0) -- (0,1.5);
                    \node at (0,1.8) {\(\mathbf{x}\)};
                    \node at (-1.8,0.5) {\(\mathbb{H}^d_\mathbf{x}\)};
                    \node at (-1.8,-0.5) {\(\tilde{\mathbb{H}}^d_\mathbf{x}\)};
                    \draw (4,0) circle (1.5cm);
                    \draw (2,-1) -- (6,1);
                    \node at (4,0) {\textbullet};
                    \node at (4,-0.3) {0};
                    \draw[->] (4,0) -- (3.32,1.36);
                    \node at (3.35, 1.7) {\(\mathbf{x}'\)};
                    \node at (2,-0.5) {\(\mathbb{H}^d_{\mathbf{x}'}\)};
                    \node at (2.5,-1.2) {\(\tilde{\mathbb{H}}^d_{\mathbf{x}'}\)};
                    \draw (8,0) circle (1.5cm);
                    \draw (6,-1) -- (10,1);
                    \draw (6,0) -- (10,0);
                    \node at (8,0) {\textbullet};
                    \node at (8,-0.3) {0};
                    \draw[fill=gray, fill opacity=0.2] (8,0) -- (9.34,0.67) arc[start angle=27, end angle=180, radius=1.5cm] -- (6.5,0) -- (8,0);
                    \node at (7.8,0.8) {\(\mathbb{G}^d_{\mathbf{x},\mathbf{x}'}\)};
                \end{tikzpicture}
            \end{center}
            \caption{In the third picture, the shaded region represents \(\mathbb{G}^d_{\mathbf{x},\mathbf{x}'}=\mathbb{H}^d_\mathbf{x}\cap\mathbb{H}^d_{\mathbf{x}'}\), and thus contain those \(\mathbf{w}\) such that \(g_{\mathbf{x},\mathbf{x}'}(\mathbf{w})=\phi'(\mathbf{x}\cdot\mathbf{w})\phi'(\mathbf{w}\cdot\mathbf{x}')=1\).}
            \label{Fgxx}
        \end{figure}
        \[g_{\mathbf{x},\mathbf{x}'}(\mathbf{w})=\begin{cases}1&\text{if }\mathbf{w}\in\mathbb{H}^d_\mathbf{x}\cap\mathbb{H}^d_{\mathbf{x}'}=\vcentcolon\mathbb{G}^d_{\mathbf{x},\mathbf{x}'}\\0&\text{if }\mathbf{w}\in\tilde{\mathbb{H}}^d_\mathbf{x}\cup\tilde{\mathbb{H}}^d_{\mathbf{x}'}\end{cases}.\]
        So \(g_{\mathbf{x},\mathbf{x}'}\) takes value 1 for at most half of \(\mathbb{R}^d\) (if \(\mathbf{x}=\mathbf{x}'\)) and takes value 0 for the rest of \(\mathbb{R}^d\). For example, if \(\mathbf{x}\cdot\mathbf{x}'=-1\), i.e.,\(\mathbf{x}\) and \(\mathbf{x}'\) are diametrically opposite on \(\mathbb{S}^{d-1}\), then \(\mathbb{G}^d_{\mathbf{x},\mathbf{x}'}=\emptyset\) and \(g_{\mathbf{x},\mathbf{x}'}\) is the zero function. We also define the following collections of sets:
        \[\mathcal{H}\vcentcolon=\left\{\mathbb{H}^d_\mathbf{x}:\mathbf{x}\in\mathbb{S}^{d-1}\right\}\qquad\mathcal{G}\vcentcolon=\left\{\mathbb{G}^d_{\mathbf{x},\mathbf{x}'}:\mathbf{x},\mathbf{x}'\in\mathbb{S}^{d-1}\right\}.\]
        So \(\mathcal{H}\) is the collection of half-spaces in \(\mathbb{R}^d\), and \(\mathcal{G}\) is the collection of intersections of two half-spaces in \(\mathbb{R}^d\). 
    
        The \textit{growth function} \(\Pi_\mathcal{G}:\mathbb{N}\rightarrow\mathbb{N}\) of \(\mathcal{G}\) is defined as \citep[p.38, Definition 3.3]{mohri2012foundations}, \citep[p.39, Definition 3.2]{vandegeer2000empirical}
        \begin{alignat*}{2}
            \Pi_\mathcal{G}(m)&=\max_{\mathbf{w}_1,...,\mathbf{w}_m\in\mathbb{R}^d}\left\lvert\left\{(g_{\mathbf{x},\mathbf{x}'}(\mathbf{w}_1),...,g_{\mathbf{x},\mathbf{x}'}(\mathbf{w}_m)):\mathbf{x},\mathbf{x}'\in\mathbb{S}^{d-1}\right\}\right\rvert\\
            &=\max_{\mathbf{w}_1,...,\mathbf{w}_m\in\mathbb{R}^d}\left\lvert\left\{\mathbb{G}\cap\{\mathbf{w}_1,...,\mathbf{w}_m\}:\mathbb{G}\in\mathcal{G}\right\}\right\rvert.
        \end{alignat*}
        The growth function \(\Pi_\mathcal{H}:\mathbb{N}\rightarrow\mathbb{N}\) of \(\mathcal{H}\) is similarly defined. Then by \citep[p.40, Example 3.7.4c]{vandegeer2000empirical}, we have
        \[\Pi_\mathcal{H}(m)\leq2^d\binom{m}{d}\leq(2m)^d,\]
        and noting that \(\mathcal{G}=\{\mathbb{H}_1\cap\mathbb{H}_2:\mathbb{H}_1,\mathbb{H}_2\in\mathcal{H}\}\), \citep[p.57, Exercise 3.15(a)]{mohri2012foundations} tells us that
        \[\Pi_\mathcal{G}(m)\leq(\Pi_\mathcal{H}(m))^2\leq(2m)^{2d}.\]
        Now, we let \(\{\varsigma_j\}^m_{j=1}\) be a \textit{Rademacher sequence}, i.e.,a sequence of independent random variables \(\varsigma_j\) with \(\mathbb{P}(\varsigma_j=1)=\mathbb{P}(\varsigma_k=-1)=\frac{1}{2}\). Then using an argument based on Massart's Lemma \citep[p.40, Corollary 3.1]{mohri2012foundations}, we can bound the Rademacher complexity by
        \[\mathbb{E}_{\varsigma_j,\mathbf{w}_j(0),j=1...,m}\left[\sup_{\mathbf{x},\mathbf{x}'}\frac{1}{m}\sum^m_{j=1}\varsigma_jg_{\mathbf{x},\mathbf{x}'}(\mathbf{w}_j(0))\right]\leq\sqrt{\frac{2\log\Pi_\mathcal{G}(m)}{m}}\leq2\sqrt{\frac{d\log(2m)}{m}}.\tag{*}\]
        We also define a function \(F:(\mathbb{R}^d)^m\rightarrow\mathbb{R}\) by
        \[F(\mathbf{w}_1,...,\mathbf{w}_m)=\sup_{\mathbf{x},\mathbf{x}'\in\mathbb{S}^{d-1}}\left\{\frac{1}{m}\sum^m_{j=1}g_{\mathbf{x},\mathbf{x}'}(\mathbf{w}_j)-\mathbb{E}_{\mathbf{w}\sim\mathcal{N}(0,I_d)}[g_{\mathbf{x},\mathbf{x}'}(\mathbf{w})]\right\}.\]
        Then for any \(j'\in\{1,...,m\}\) and any \(\mathbf{w}_1,...,\mathbf{w}_m,\mathbf{w}'_{j'}\), we have
        \begin{alignat*}{2}
            F(\mathbf{w}_1,...,\mathbf{w}_m)&=\sup_{\mathbf{x},\mathbf{x}'\in\mathbb{S}^{d-1}}\left\{\frac{1}{m}\sum^m_{j=1}g_{\mathbf{x},\mathbf{x}'}(\mathbf{w}_j)-\frac{1}{m}\sum_{j\neq j'}g_{\mathbf{x},\mathbf{x}'}(\mathbf{w}_j)-\frac{1}{m}g_{\mathbf{x},\mathbf{x}'}(\mathbf{w}'_{j'})\right.\\
            &\quad\left.+\frac{1}{m}\sum_{j\neq j'}g_{\mathbf{x},\mathbf{x}'}(\mathbf{w}_j)+\frac{1}{m}g_{\mathbf{x},\mathbf{x}'}(\mathbf{w}'_{j'})-\mathbb{E}_{\mathbf{w}\sim\mathcal{N}(0,I_d)}[g_{\mathbf{x},\mathbf{x}'}(\mathbf{w})]\right\}\\
            &\leq F(\mathbf{w}_1,...,\mathbf{w}_{j'-1},\mathbf{w}'_{j'},\mathbf{w}_{j'+1},...,\mathbf{w}_m)\\
            &\qquad+\frac{1}{m}\sup_{\mathbf{x},\mathbf{x}'\in\mathbb{S}^{d-1}}\left\{g_{\mathbf{x},\mathbf{x}'}(\mathbf{w}_{j'})-g_{\mathbf{x},\mathbf{x}'}(\mathbf{w}'_{j'})\right\}\\
            &\leq F(\mathbf{w}_1,...,\mathbf{w}_{j'-1},\mathbf{w}'_{j'},\mathbf{w}_{j'+1},...,\mathbf{w}_m)+\frac{1}{m},
        \end{alignat*}
        since \(g_{\mathbf{x},\mathbf{x}'}(\mathbf{w})\in\{0,1\}\). So
        \[\lvert F(\mathbf{w}_1,...,\mathbf{w}_m)- F(\mathbf{w}_1,...,\mathbf{w}_{j'-1},\mathbf{w}'_{j'},\mathbf{w}_{j'+1},...,\mathbf{w}_m)\rvert\leq\frac{1}{m}.\]
        Hence, we can apply McDiarmid's inequality (\ref{eqn:mcdiarmid}) to see that, for any \(c>0\),
        \[\mathbb{P}\left(F(\mathbf{w}_1(0),...,\mathbf{w}_m(0))\geq\mathbb{E}[F(\mathbf{w}_1(0),...,\mathbf{w}_m(0))]+c\right)\leq e^{-2c^2m}.\tag{**}\]
        Now, to bound \(\mathbb{E}[F(\mathbf{w}_1(0),...,\mathbf{w}_m(0))]\), we use symmetrization. Denote by \(\mathcal{F}\) the \(\sigma\)-algebra generated by \(\mathbf{w}_1(0),...,\mathbf{w}_m(0)\). Suppose we had another set \(\mathbf{w}'_1,...,\mathbf{w}'_m\) of independent copies from the distribution \(\mathcal{N}(0,I_d)\). Then for each pair \(\mathbf{x},\mathbf{x}'\in\mathbb{S}^{d-1}\), 
        \begin{alignat*}{2}
            \mathbb{E}\left[\frac{1}{m}\sum^m_{j=1}g_{\mathbf{x},\mathbf{x}'}(\mathbf{w}_j(0))\mid\mathcal{F}\right]&=\frac{1}{m}\sum^m_{j=1}g_{\mathbf{x},\mathbf{x}'}(\mathbf{w}_j(0))\\
            \mathbb{E}\left[\frac{1}{m}\sum^m_{j=1}g_{\mathbf{x},\mathbf{x}'}(\mathbf{w}'_j)\mid\mathcal{F}\right]&=\mathbb{E}_\mathbf{w}[g_{\mathbf{x},\mathbf{x}'}(\mathbf{w})],
        \end{alignat*}
        so
        \[\frac{1}{m}\sum^m_{j=1}g_{\mathbf{x},\mathbf{x}'}(\mathbf{w}_j(0))-\mathbb{E}_\mathbf{w}[g_{\mathbf{x},\mathbf{x}'}(\mathbf{w})]=\mathbb{E}\left[\frac{1}{m}\sum^m_{j=1}\left\{g_{\mathbf{x},\mathbf{x}'}(\mathbf{w}_j(0))-g_{\mathbf{x},\mathbf{x}'}(\mathbf{w}'_j)\right\}\mid\mathcal{F}\right].\]
        Hence
        \begin{alignat*}{2}
            \mathbb{E}\left[F(\mathbf{w}_1(0),...,\mathbf{w}_m(0))\right]&=\mathbb{E}\left[\sup_{\mathbf{x},\mathbf{x}'}\left\{\frac{1}{m}\sum^m_{j=1}g_{\mathbf{x},\mathbf{x}'}(\mathbf{w}_j(0))-\mathbb{E}_{\mathbf{w}\sim\mathcal{N}(0,I_d)}[g_{\mathbf{x},\mathbf{x}'}(\mathbf{w})]\right\}\right]\\
            &=\mathbb{E}\left[\sup_{\mathbf{x},\mathbf{x}'}\mathbb{E}\left[\frac{1}{m}\sum^m_{j=1}\left\{g_{\mathbf{x},\mathbf{x}'}(\mathbf{w}_j(0))-g_{\mathbf{x},\mathbf{x}'}(\mathbf{w}'_j)\right\}\mid\mathcal{F}\right]\right]\\
            &\leq\mathbb{E}\left[\mathbb{E}\left[\sup_{\mathbf{x},\mathbf{x}'}\frac{1}{m}\sum^m_{j=1}\left\{g_{\mathbf{x},\mathbf{x}'}(\mathbf{w}_j(0))-g_{\mathbf{x},\mathbf{x}'}(\mathbf{w}'_j)\right\}\mid\mathcal{F}\right]\right]\\
            &=\mathbb{E}\left[\sup_{\mathbf{x},\mathbf{x}'}\frac{1}{m}\sum^m_{j=1}\left\{g_{\mathbf{x},\mathbf{x}'}(\mathbf{w}_j(0))-g_{\mathbf{x},\mathbf{x}'}(\mathbf{w}'_j)\right\}\right],
        \end{alignat*}
        where the last line follows from the law of iterated expectations. Then noting that
        \[\sup_{\mathbf{x},\mathbf{x}'}\frac{1}{m}\sum^m_{j=1}\left\{g_{\mathbf{x},\mathbf{x}'}(\mathbf{w}_j(0))-g_{\mathbf{x},\mathbf{x}'}(\mathbf{w}'_j)\right\}\text{ and }\sup_{\mathbf{x},\mathbf{x}'}\frac{1}{m}\sum^m_{j=1}\varsigma_j\left\{g_{\mathbf{x},\mathbf{x}'}(\mathbf{w}_j(0))-g_{\mathbf{x},\mathbf{x}'}(\mathbf{w}'_j)\right\}\]
        have the same distribution, continuing our argument from above,
        \begin{alignat*}{2}
            \mathbb{E}\left[F(\mathbf{w}_1(0),...,\mathbf{w}_m(0))\right]&\leq\mathbb{E}\left[\sup_{\mathbf{x},\mathbf{x}'}\frac{1}{m}\sum^m_{j=1}\varsigma_j\left\{g_{\mathbf{x},\mathbf{x}'}(\mathbf{w}_j(0))-g_{\mathbf{x},\mathbf{x}'}(\mathbf{w}'_j)\right\}\right]\\
            &\leq\mathbb{E}\left[\sup_{\mathbf{x},\mathbf{x}'}\frac{1}{m}\sum^m_{j=1}\varsigma_jg_{\mathbf{x},\mathbf{x}'}(\mathbf{w}_j(0))+\sup_{\mathbf{x},\mathbf{x}'}\frac{1}{m}\sum^m_{j=1}\varsigma_jg_{\mathbf{x},\mathbf{x}'}(\mathbf{w}'_j)\right]\\
            &=2\mathbb{E}\left[\sup_{\mathbf{x},\mathbf{x}'}\frac{1}{m}\sum^m_{j=1}\varsigma_jg_{\mathbf{x},\mathbf{x}'}(\mathbf{w}_j(0))\right]\\
            &\leq4\sqrt{\frac{d\log(2m)}{m}},
        \end{alignat*}
        by the bound in (*). Hence, continuing from (**), for any \(c>0\),
        \[\mathbb{P}\left(F(\mathbf{w}_1(0),...,\mathbf{w}_m(0))\geq4\sqrt{\frac{d\log(2m)}{m}}+c\right)\leq e^{-2c^2m}.\]
        Letting \(c=\sqrt{\frac{d\log(2m)}{m}}\),
        \[\mathbb{P}\left(F(\mathbf{w}_1(0),...,\mathbf{w}_m(0))\geq5\sqrt{\frac{d\log(2m)}{m}}\right)\leq e^{-2d\log(2m)}=\frac{1}{(2m)^{2d}}.\]
        We note that \(\frac{1}{(2m)^{2d}}\leq e^{-d}\leq\frac{\delta}{12}\) by Assumption~\ref{ass:delta}\ref{ass:delta_approximation}. 

        Now we assume we are on the above high probability event on which \(F(\mathbf{w}_1(0),...,\mathbf{w}_m(0))\leq5\sqrt{\frac{d\log(2m)}{m}}\). We use the same linear operator \(\Xi\) as in the proof of Lemma~\ref{lem:non_random}\ref{H_W}, which we recall to be
        \[\Xi(f)(\mathbf{x})=\mathbb{E}_{\mathbf{x}'}[\mathbf{x}\cdot\mathbf{x}'f(\mathbf{x}')]\]
        and we also recall that \(\lVert\Xi\rVert_2\leq\frac{1}{2d}\). Applying Lemma~\ref{lem:schur}, we see that
        \begin{alignat*}{2}
            \lVert H_0-H\rVert_2&\leq\frac{1}{2d}\sup_{\mathbf{x}\in\mathbb{S}^{d-1}}\left(\frac{1}{m}\sum^m_{j=1}g_{\mathbf{x},\mathbf{x}}(\mathbf{w}_j(0))-\mathbb{E}_{\mathbf{w}\sim\mathcal{N}(0,I_d)}[g_{\mathbf{x},\mathbf{x}}(\mathbf{w})]\right)\\
            &\leq\frac{1}{2d}F(\mathbf{w}_1(0),...,\mathbf{w}_m(0))\\
            &\leq\frac{5}{2}\sqrt{\frac{\log(2m)}{dm}},
        \end{alignat*}
        as required.
        \item We use the net argument. We know that, by \citep[p.78, Corollary 4.2.13]{vershynin2018high}, the \(\frac{2}{\sqrt{5d+4\log m}}\sqrt{\frac{d}{m}}\)-covering number of \(\mathbb{S}^{d-1}\) is upper bounded by \(\left(\frac{\sqrt{m}}{\sqrt{d}}\sqrt{5d+4\log m}+1\right)^d\). Let \(\hat{\mathcal{C}}\) be such a cover of \(\mathbb{S}^{d-1}\). Also, for each \(\mathbf{z}\in\mathbb{S}^{d-1}\), define \(\hat{\mathcal{R}}_\mathbf{z}\subset\mathbb{R}^d\) by
        \[\hat{\mathcal{R}}_\mathbf{z}=\left\{\mathbf{x}\in\mathbb{R}^d:\lvert\mathbf{x}\cdot\mathbf{z}\rvert\leq34\sqrt{\frac{d}{m}}\right\}.\]
        Note that, for each \(j=1,...,m\) and each \(\mathbf{z}\in\hat{\mathcal{C}}\), the real-valued random variable \(\mathbf{z}\cdot\mathbf{w}_j(0)\) has distribution \(\mathcal{N}(0,1)\), since \(\lVert\mathbf{z}\rVert_2=1\) and \(\mathbf{w}_j(0)\sim\mathcal{N}(0,I_d)\). So
        \[\mathbb{P}\left(\mathbf{w}_j(0)\in\hat{\mathcal{R}}_\mathbf{z}\right)=\mathbb{P}\left(\lvert\mathbf{z}\cdot\mathbf{w}_j(0)\rvert\leq34\sqrt{\frac{d}{m}}\right)=\frac{1}{\sqrt{2\pi}}\int^{34\sqrt{\frac{d}{m}}}_{34\sqrt{\frac{d}{m}}}e^{-\frac{z^2}{2}}dz\leq34\sqrt{\frac{d}{m}}.\]
        Denote by \(\hat{\mathcal{J}}_\mathbf{z}\) the set of neurons that are in \(\hat{\mathcal{R}}_\mathbf{z}\). This is a random set, and we clearly have
        \[\hat{\mathcal{J}}_\mathbf{z}=\sum^m_{j=1}\mathbf{1}_{\hat{\mathcal{R}}_\mathbf{z}}(\mathbf{w}_j(0)).\]
        By Hoeffding's inequality (\ref{eqn:hoeffding}), for any \(c>0\), we have
        \[\mathbb{P}\left(\hat{\mathcal{J}}_\mathbf{z}\geq34\sqrt{dm}+c\right)\leq\mathbb{P}\left(\hat{\mathcal{J}}_\mathbf{z}-\sum^m_{j=1}\mathbb{P}\left(\mathbf{w}_j(0)\in\hat{\mathcal{R}}_\mathbf{z}\right)\geq c\right)\leq\exp\left(-\frac{2c^2}{m}\right).\]
        Letting \(c=\sqrt{md\log m}\), we have
        \[\mathbb{P}\left(\hat{\mathcal{J}}_\mathbf{z}\geq\sqrt{dm}\left(34+\sqrt{\log m}\right)\right)\leq\frac{1}{m^{2d}}.\]
        We take the union bound over all \(\mathbf{z}\in\hat{\mathcal{C}}\):
        \begin{alignat*}{2}
            &\mathbb{P}\left(\text{there exists }\mathbf{z}\in\hat{\mathcal{C}}\text{ such that }\hat{\mathcal{J}}_\mathbf{z}\geq\sqrt{dm}\left(34+\sqrt{\log m}\right)\right)\\
            &\leq\left(\frac{\sqrt{m}}{\sqrt{d}}\sqrt{5d+4\log m}+1\right)^d\frac{1}{m^{2d}}\\
            &\leq e^{-d}\\
            &\leq\frac{\delta}{12},
        \end{alignat*}
        where the last line follows by Assumption~\ref{ass:delta}\ref{ass:delta_approximation}. 

        Now suppose that we are on this high-probability event on which there does not exist \(\mathbf{z}\in\hat{\mathcal{C}}\) such that \(\hat{\mathcal{J}}_\mathbf{z}\geq\sqrt{dm}(34+\sqrt{\log m})\). Then for any \(\mathbf{x}\in\mathbb{S}^{d-1}\), denote by \(\mathbf{x}_0\) the element in the net \(\hat{\mathcal{C}}\) such that \(\lVert\mathbf{x}-\mathbf{x}_0\rVert_2\leq\frac{2}{\sqrt{5d+4\log m}}\sqrt{\frac{d}{m}}\). Then for any \(\mathbf{w}_j(0)\notin\hat{\mathcal{R}}_\mathbf{z}\), noting that part~\ref{w_j(0)upperbound} tells us that \(\lVert\mathbf{w}_j(0)\rVert_2\leq\sqrt{5d+4\log m}\), we have
        \[\lvert\mathbf{x}\cdot\mathbf{w}_j(0)\rvert\geq\lvert\mathbf{x}_0\cdot\mathbf{w}_j(0)\rvert-\lvert(\mathbf{x}-\mathbf{x}_0)\cdot\mathbf{w}_j(0)\rvert>34\sqrt{\frac{d}{m}}-2\sqrt{\frac{d}{m}}=32\sqrt{\frac{d}{m}}.\]
        Hence, for any \(\mathbf{x}\in\mathbb{S}^{d-1}\), we have at most \(\sqrt{dm}(34+\sqrt{\log m})\) neurons that satisfy \(\lvert\mathbf{x}\cdot\mathbf{w}_j(0)\rvert\leq32\sqrt{\frac{d}{m}}\). See that, for each \(\mathbf{x}\in\mathbb{S}^{d-1}\) and each \(j=1,...,m\), for there to exist a \(\mathbf{v}\in\mathbb{R}^d\) such that \(\mathbf{v}\cdot\mathbf{x}=0\) and \(\lVert\mathbf{v}-\mathbf{w}_j(0)\rVert_2\leq32\sqrt{\frac{d}{m}}\), a necessary condition is that \(\lvert\mathbf{x}\cdot\mathbf{w}_j(0)\rvert\leq32\sqrt{\frac{d}{m}}\), since
        \[\lvert\mathbf{x}\cdot\mathbf{w}_j(0)\rvert\leq\lvert(\mathbf{w}_j(0)-\mathbf{v})\cdot\mathbf{x}\rvert+\lvert\mathbf{v}\cdot\mathbf{x}\rvert\leq\lVert\mathbf{w}_j(0)-\mathbf{v}\rVert_2\leq32\sqrt{\frac{d}{m}}.\]
        Thus
        \begin{alignat*}{2}
            \sup_{\mathbf{x}\in\mathbb{S}^{d-1}}&\left\lvert\left\{j\in\{1,...,m\}:\exists\mathbf{v}\in\mathbb{R}^d\text{ with }\mathbf{v}\cdot\mathbf{x}=0\text{ and }\lVert\mathbf{v}-\mathbf{w}_j(0)\rVert_2\leq32\sqrt{\frac{d}{m}}\right\}\right\rvert\\
            &\qquad\qquad\qquad\qquad\qquad\qquad\qquad\qquad\qquad\qquad\qquad\leq\sqrt{dm}(34+\sqrt{\log m}).
        \end{alignat*}
        \item We follow a similar argument as in part~\ref{neurons_zero_overfitting}. We know that the \(\frac{2}{\sqrt{5d+4\log m}}\frac{\sqrt{2}}{\sqrt{md}\lambda_\epsilon}\)-covering number of \(\mathbb{S}^{d-1}\) is upper bounded by \(\left(\frac{\sqrt{md}\lambda_\epsilon}{\sqrt{2}}\sqrt{5d+4\log m}+1\right)^d\). Let \(\mathcal{C}\) be such a cover of \(\mathbb{S}^{d-1}\). Also, for each \(\mathbf{z}\in\mathbb{S}^{d-1}\), define \(\mathcal{R}_\mathbf{z}\subset\mathbb{R}^d\) by
        \[\mathcal{R}_\mathbf{z}=\left\{\mathbf{x}\in\mathbb{R}^d:\lvert\mathbf{x}\cdot\mathbf{z}\rvert\leq\frac{3\sqrt{2}}{\sqrt{md}\lambda_\epsilon}\right\}.\]
        Note that, for each \(j=1,...,m\) and each \(\mathbf{z}\in\mathcal{C}\), the real-valued random variable \(\mathbf{z}\cdot\mathbf{w}_j(0)\) has distribution \(\mathcal{N}(0,1)\), since \(\lVert\mathbf{z}\rVert_2=1\) and \(\mathbf{w}_j(0)\sim\mathcal{N}(0,I_d)\). So
        \[\mathbb{P}\left(\mathbf{w}_j(0)\in\mathcal{R}_\mathbf{z}\right)=\mathbb{P}\left(\lvert\mathbf{z}\cdot\mathbf{w}_j(0)\rvert\leq\frac{3\sqrt{2}}{\sqrt{md}\lambda_\epsilon}\right)=\frac{1}{\sqrt{2\pi}}\int^{\frac{3\sqrt{2}}{\sqrt{md}\lambda_\epsilon}}_{\frac{3\sqrt{2}}{\sqrt{md}\lambda_\epsilon}}e^{-\frac{z^2}{2}}dz\leq\frac{3\sqrt{2}}{\sqrt{md}\lambda_\epsilon}.\]
        Denote by \(\mathcal{J}_\mathbf{z}\) the set of neurons that are in \(\mathcal{R}_\mathbf{z}\). This is a random set, and we clearly have
        \[\mathcal{J}_\mathbf{z}=\sum^m_{j=1}\mathbf{1}_{\mathcal{R}_\mathbf{z}}(\mathbf{w}_j(0)).\]
        By Hoeffding's inequality (\ref{eqn:hoeffding}), for any \(c>0\), we have
        \[\mathbb{P}\left(\mathcal{J}_\mathbf{z}\geq\frac{3\sqrt{2}\sqrt{m}}{\sqrt{d}\lambda_\epsilon}+c\right)\leq\mathbb{P}\left(\mathcal{J}_\mathbf{z}-\sum^m_{j=1}\mathbb{P}\left(\mathbf{w}_j(0)\in\mathcal{R}_\mathbf{z}\right)\geq c\right)\leq\exp\left(-\frac{2c^2}{m}\right).\]
        Letting \(c=\frac{\sqrt{m\log m}}{\sqrt{d}\lambda_\epsilon}\), we have
        \[\mathbb{P}\left(\mathcal{J}_\mathbf{z}\geq\frac{\sqrt{m}}{\sqrt{d}\lambda_\epsilon}\left(3\sqrt{2}+\sqrt{\log m}\right)\right)\leq\frac{1}{m^{\frac{2}{d\lambda_\epsilon^2}}}.\]
        We take the union bound over all \(\mathbf{z}\in\mathcal{C}\):
        \begin{alignat*}{2}
            &\mathbb{P}\left(\text{there exists }\mathbf{z}\in\mathcal{C}\text{ such that }\mathcal{J}_\mathbf{z}\geq\frac{\sqrt{m}}{\sqrt{d}\lambda_\epsilon}\left(3\sqrt{2}+\sqrt{\log m}\right)\right)\\
            &\leq\left(\frac{\sqrt{md}\lambda_\epsilon}{\sqrt{2}}\sqrt{5d+4\log m}+1\right)^d\frac{1}{m^{\frac{2}{d\lambda_\epsilon^2}}}\\
            &\leq e^{-d}\\
            &\leq\frac{\delta}{12},
        \end{alignat*}
        where the last line follows by Assumption~\ref{ass:delta}\ref{ass:delta_approximation}. 

        Now suppose that we are on this high-probability event on which there does not exist \(\mathbf{z}\in\mathcal{C}\) such that \(\mathcal{J}_\mathbf{z}\geq\frac{\sqrt{m}}{\sqrt{d}\lambda_\epsilon}(3\sqrt{2}+\sqrt{\log m})\). Then for any \(\mathbf{x}\in\mathbb{S}^{d-1}\), denote by \(\mathbf{x}_0\) the element in the net \(\mathcal{S}\) such that \(\lVert\mathbf{x}-\mathbf{x}_0\rVert_2\leq\frac{2}{\sqrt{5d+4\log m}}\frac{\sqrt{2}}{\sqrt{md}\lambda_\epsilon}\). Then for any \(\mathbf{w}_j(0)\notin\mathcal{R}_\mathbf{z}\), noting that part~\ref{w_j(0)upperbound} tells us that \(\lVert\mathbf{w}_j(0)\rVert_2\leq\sqrt{5d+4\log m}\), we have
        \[\lvert\mathbf{x}\cdot\mathbf{w}_j(0)\rvert\geq\lvert\mathbf{x}_0\cdot\mathbf{w}_j(0)\rvert-\lvert(\mathbf{x}-\mathbf{x}_0)\cdot\mathbf{w}_j(0)\rvert>\frac{3\sqrt{2}}{\sqrt{md}\lambda_\epsilon}-\frac{\sqrt{2}}{\sqrt{md}\lambda_\epsilon}=\frac{2\sqrt{2}}{\sqrt{md}\lambda_\epsilon}.\]
        Hence, for any \(\mathbf{x}\in\mathbb{S}^{d-1}\), we have at most \(\frac{\sqrt{m}}{\sqrt{d}\lambda_\epsilon}(3\sqrt{2}+\sqrt{\log m})\) neurons that satisfy \(\lvert\mathbf{x}\cdot\mathbf{w}_j(0)\rvert\leq\frac{2\sqrt{2}}{\sqrt{md}\lambda_\epsilon}\). See that, for each \(\mathbf{x}\in\mathbb{S}^{d-1}\) and each \(j=1,...,m\), for there to exist a \(\mathbf{v}\in\mathbb{R}^d\) such that \(\mathbf{v}\cdot\mathbf{x}=0\) and \(\lVert\mathbf{v}-\mathbf{w}_j(0)\rVert_2\leq\frac{2\sqrt{2}}{\sqrt{md}\lambda_\epsilon}\), a necessary condition is that \(\lvert\mathbf{x}\cdot\mathbf{w}_j(0)\rvert\leq\frac{2\sqrt{2}}{\sqrt{md}\lambda_\epsilon}\), since
        \[\lvert\mathbf{x}\cdot\mathbf{w}_j(0)\rvert\leq\lvert(\mathbf{w}_j(0)-\mathbf{v})\cdot\mathbf{x}\rvert+\lvert\mathbf{v}\cdot\mathbf{x}\rvert\leq\lVert\mathbf{w}_j(0)-\mathbf{v}\rVert_2\leq\frac{2\sqrt{2}}{\sqrt{md}\lambda_\epsilon}.\]
        Thus
        \begin{alignat*}{2}
            \sup_{\mathbf{x}\in\mathbb{S}^{d-1}}&\left\lvert\left\{j\in\{1,...,m\}:\exists\mathbf{v}\in\mathbb{R}^d\text{ with }\mathbf{v}\cdot\mathbf{x}=0\text{ and }\lVert\mathbf{v}-\mathbf{w}_j(0)\rVert_2\leq\frac{2\sqrt{2}}{\sqrt{md}\lambda_\epsilon}\right\}\right\rvert\\
            &\qquad\qquad\qquad\qquad\qquad\qquad\qquad\qquad\qquad\qquad\qquad\leq\frac{\sqrt{m}}{\sqrt{d}\lambda_\epsilon}(3\sqrt{2}+\sqrt{\log m}).
        \end{alignat*}
    \end{enumerate}
    Now, the events of parts \ref{w_j(0)upperbound}, \ref{H_0H}, \ref{neurons_zero_overfitting} and \ref{neurons_zero_approximation} each have probability at least \(1-\frac{\delta}{12}\), so by union bound, the event \(E_1\) on which all of them happen simultaneously satisfies \(\mathbb{P}(E_1)\geq1-\frac{\delta}{3}\), as required.
\end{proof}

\subsubsection{Randomness due to Sampling of Data}\label{subsec:probability_samples}
We now state and prove a few results that the samples satisfy with high probability. In these results, the only randomness comes from the random sampling of the training data. 
\begin{lemma}\label{lem:probability_samples}
    If Assumptions~\ref{ass:delta}\ref{ass:delta_approximation} \& \ref{ass:delta_overfitting} are satisfied, there is an event \(E_2\subseteq E_1\) with \(\mathbb{P}(E_2)\geq1-\frac{2\delta}{3}\) on which the following happen simultaneously. 
    \begin{enumerate}[(i)] 
        \item\label{spectralnorm} The spectral norm of the data matrix is bounded above as follows:
        \[\lVert X\rVert_2\leq2\sqrt{\frac{n}{d}}.\]
        This implies that, for any weights \(W\in\mathbb{R}^{m\times d}\) with rows \(\mathbf{w}_j,j=1,...,m\), 
        \[\lVert\mathbf{G}_{\mathbf{w}_j}\rVert_2\leq2\sqrt{\frac{n}{md}},\qquad\lVert\mathbf{G}_W\rVert_2\leq2\sqrt{\frac{n}{d}}\qquad\text{and}\qquad\lVert\mathbf{H}_W\rVert_2\leq\frac{4n}{d}.\]
        \item\label{analyticalNTKmatrixeigenvalue} The minimum eigenvalue \(\boldsymbol{\lambda}_{\min}\) of the analytical NTK matrix, is bounded from below:
        \[\boldsymbol{\lambda}_{\min}\geq\frac{n}{5d}.\]
    \end{enumerate}
\end{lemma}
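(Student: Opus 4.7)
The plan is to extract both statements from a single application of a standard two-sided singular-value bound to $\sqrt{d}X$. Since the rows $\sqrt{d}\mathbf{x}_i$ are i.i.d., isotropic (Lemma~\ref{lem:non_random}\ref{isotropy}), and almost-surely of Euclidean norm $\sqrt{d}$, they are sub-Gaussian with a dimension-free constant. I would invoke Theorem~4.6.1 of \citet{vershynin2018high}---the same bound already used in the proof of Theorem~\ref{thm:krr_overfitting}---with $t=\sqrt{d}$ to obtain
\[\sqrt{n}-C\sqrt{d}\;\leq\;\sigma_{\min}(\sqrt{d}X)\;\leq\;\sigma_{\max}(\sqrt{d}X)\;\leq\;\sqrt{n}+C\sqrt{d}\]
simultaneously with probability at least $1-2e^{-d}$. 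Assumption~\ref{ass:delta}\ref{ass:delta_approximation} gives $2e^{-d}\leq\delta/6$, so intersecting this sample-only event with the weight-only event $E_1$ of Lemma~\ref{lem:probability_weights} yields an event $E_2\subseteq E_1$ with $\mathbb{P}(E_2)\geq 1-\delta/3-\delta/6\geq 1-2\delta/3$, as required.

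For part~(i), Assumption~\ref{ass:delta}\ref{ass:delta_overfitting} rearranges to $\sqrt{n}+C\sqrt{d}\leq 2\sqrt{n}$, giving $\lVert X\rVert_2\leq 2\sqrt{n/d}$ on $E_2$. To push this through to $\mathbf{G}_W$ and $\mathbf{H}_W$, I would use that $\mathbf{G}_W=\mathbf{J}_W*X^\top$ and the Khatri--Rao identity~(\ref{eqn:kronecker_hadamard}), which yields $\mathbf{H}_W=\mathbf{G}_W^\top\mathbf{G}_W=(XX^\top)\odot(\mathbf{J}_W^\top\mathbf{J}_W)$. Both factors are PSD, so the Hadamard-spectral bound~(\ref{eqn:schur}) gives
\[\lVert\mathbf{H}_W\rVert_2\;\leq\;\max_i(\mathbf{J}_W^\top\mathbf{J}_W)_{ii}\cdot\lVert XX^\top\rVert_2.\]
The diagonal entry $(\mathbf{J}_W^\top\mathbf{J}_W)_{ii}=\tfrac{1}{m}\sum_{j=1}^m\phi'(\mathbf{w}_j\cdot\mathbf{x}_i)^2$ is at most $1$ because $\phi'\in\{0,1\}$, and $\lVert XX^\top\rVert_2\leq 4n/d$; together this gives the claimed $4n/d$ bound on $\lVert\mathbf{H}_W\rVert_2$, with $\lVert\mathbf{G}_W\rVert_2\leq 2\sqrt{n/d}$ following by taking square roots. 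The single-neuron bounds are identical except that $(\mathbf{J}_{\mathbf{w}_j}^\top\mathbf{J}_{\mathbf{w}_j})_{ii}\leq 1/m$, which produces the extra $1/\sqrt{m}$ factor in $\lVert\mathbf{G}_{\mathbf{w}_j}\rVert_2$.

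For part~(ii), the strategy is to combine the matching lower singular-value estimate with positivity of the tail of the NTK's Taylor expansion. Writing $\kappa(\mathbf{x},\mathbf{x}')=\tfrac14\mathbf{x}\cdot\mathbf{x}'+\tfrac{1}{2\pi}\sum_{r=0}^{\infty}\tfrac{(1/2)_r}{r!+2rr!}(\mathbf{x}\cdot\mathbf{x}')^{2r+2}$ as in the proof of Theorem~\ref{thm:krr_overfitting}, the analytical NTK matrix decomposes as $\mathbf{H}=\tfrac14 XX^\top+\tfrac{1}{2\pi}\sum_{r\ge 0}c_r(XX^\top)^{\odot(2r+2)}$ with $c_r\geq 0$; every Hadamard power of a PSD matrix is PSD by the Schur product theorem, so the tail is PSD and $\boldsymbol{\lambda}_{\min}\geq\tfrac14\sigma_{\min}(X)^2$. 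On $E_2$, Assumption~\ref{ass:delta}\ref{ass:delta_overfitting} gives $\sigma_{\min}(\sqrt{d}X)\geq\sqrt{n}-C\sqrt{d}\geq\tfrac{2}{\sqrt 5}\sqrt{n}$, hence $\sigma_{\min}(X)^2\geq 4n/(5d)$ and $\boldsymbol{\lambda}_{\min}\geq n/(5d)$.

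No single step is really delicate; the only conceptual point is to apply Vershynin's bound \emph{once} to get upper and lower singular-value estimates on a common event, so that parts~(i) and~(ii) automatically share the same $E_2$. The remaining work is bookkeeping---Khatri--Rao plus Schur to pass from $\lVert X\rVert_2$ to $\lVert\mathbf{G}_W\rVert_2$ and $\lVert\mathbf{H}_W\rVert_2$ in~(i), and the NTK Taylor expansion plus the Schur product theorem for the eigenvalue lower bound in~(ii).
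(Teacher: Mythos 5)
Your proof is correct and follows essentially the same route as the paper: Vershynin's Theorem~4.6.1 for the spectral bounds on $\sqrt{d}X$, the Khatri--Rao identity together with the Hadamard-spectral bound~(\ref{eqn:schur}) for part~(i), and the NTK Taylor expansion plus the Schur product theorem for part~(ii). The one genuine (if small) departure is that you invoke the two-sided singular-value estimate in a single application, so both bounds hold on a common event of probability $1-2e^{-d}\geq 1-\delta/6$, yielding $\mathbb{P}(E_2)\geq 1-\delta/2$; the paper instead applies the upper and lower bounds as two separate events and union-bounds them, which costs an extra $\delta/6$ and lands exactly at $1-2\delta/3$. Either way the stated $1-2\delta/3$ is met. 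One cosmetic note: you cite Lemma~\ref{lem:non_random}\ref{isotropy} for the isotropy of $\sqrt{d}\mathbf{x}_i$, but that item is the scalar identity $\mathbb{E}[(\mathbf{x}\cdot\mathbf{x}')^2]=1/d$; the isotropy fact the argument actually needs is the one the paper cites from \citep[p.45, Exercise 3.3.1]{vershynin2018high}, and the sub-Gaussian constant comes from \citep[p.53, Theorem 3.4.6]{vershynin2018high}.
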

\begin{proof}
    \begin{enumerate}[(i)]
        \item We have that the rows of \(\sqrt{d}X\) are independent, and by \citep[p.45, Exercise 3.3.1]{vershynin2018high}, each row is isotropic. Moreover, each row has mean \(\mathbf{0}\), and has sub-Gaussian norm bounded by an absolute constant \(C_1>0\) independent of \(d\) \citep[p.53, Theorem 3.4.6]{vershynin2018high}, i.e.,\(\lVert\sqrt{d}\mathbf{x}_i\rVert_{\psi_2}\leq C_1\). Hence, by \citep[p.91, Theorem 4.6.1]{vershynin2018high}, there exists an absolute constant \(C_2>0\) such that for all \(t\geq0\),
        \[\mathbb{P}\left(\lVert\sqrt{d}X\rVert_2\geq\sqrt{n}+C_2C_1^2(\sqrt{d}+t)\right)\leq2e^{-t^2}.\]
        Then defining an absolute constant \(C\vcentcolon=2C_2C^2_1\), and noting that \(\sqrt{\frac{n}{d}}\geq2C\) by Assumption~\ref{ass:delta}\ref{ass:delta_overfitting},
        \begin{alignat*}{3}
            \mathbb{P}\left(\left\lVert X\right\rVert_2\geq2\sqrt{\frac{n}{d}}\right)&\leq\mathbb{P}\left(\lVert X\rVert_2\geq\sqrt{\frac{n}{d}}+2C_2C^2_1\right)\\
            &=\mathbb{P}\left(\lVert\sqrt{d}X\rVert_2\geq\sqrt{n}+2\sqrt{d}C_2C^2_1\right)\\
            &=2e^{-d}&&\text{letting }t=\sqrt{d}\text{ above}.
        \end{alignat*}
        We note that \(2e^{-d}\leq\frac{\delta}{6}\) by Assumption \ref{ass:delta}\ref{ass:delta_approximation}.

        For the next assertions on the high-probability event that \(\lVert X\rVert_2\leq2\sqrt{\frac{n}{d}}\), we see that
        \begin{alignat*}{3}
            \lVert\mathbf{G}_{\mathbf{w}_j}\rVert_2^2&=\lVert(\mathbf{J}_{\mathbf{w}_j}*X^\top)^\top(\mathbf{J}_{\mathbf{w}_j}*X^\top)\rVert_2\\
            &=\lVert(\mathbf{J}_{\mathbf{w}_j}^\top\mathbf{J}_{\mathbf{w}_j})\odot(XX^\top)\rVert_2&&\text{by (\ref{eqn:kronecker_hadamard})}\\
            &\leq\lVert X\rVert_2^2\max_{i\in\{1,...,n\}}\lvert[\mathbf{J}_{\mathbf{w}_j}^\top\mathbf{J}_{\mathbf{w}_j}]_{ii}\rvert&&\text{by (\ref{eqn:schur})}\\
            &\leq\frac{4n}{d}\max_{i\in\{1,...,n\}}\frac{1}{m}\phi'(\mathbf{w}_j\cdot\mathbf{x}_i)^2\qquad&&\text{by the above bound on }\lVert X\rVert_2\\
            &\leq\frac{4n}{dm}&&\text{since }\phi'(\mathbf{w}_j\cdot\mathbf{x}_i)^2\leq1,
        \end{alignat*}
        and by the same argument,
        \begin{alignat*}{3}
            \lVert\mathbf{G}_W\rVert_2^2&=\lVert(\mathbf{J}_W*X^\top)^\top(\mathbf{J}_W*X^\top)\rVert_2\\
            &=\lVert(\mathbf{J}_W^\top\mathbf{J}_W)\odot(XX^\top)\rVert_2&&\text{by (\ref{eqn:kronecker_hadamard})}\\
            &\leq\lVert X\rVert_2^2\max_{i\in\{1,...,n\}}\lvert[\mathbf{J}_W^\top\mathbf{J}_W]_{ii}\rvert&&\text{by (\ref{eqn:schur})}\\
            &\leq\frac{4n}{d}\max_{i\in\{1,...,n\}}\frac{1}{m}\sum^m_{j=1}\phi'(\mathbf{w}_j\cdot\mathbf{x}_i)^2\qquad&&\text{by the above bound on }\lVert X\rVert_2\\
            &\leq\frac{4n}{d}&&\text{since }\phi'(\mathbf{w}_j\cdot\mathbf{x}_i)^2\leq1.
        \end{alignat*}
        Lastly, 
        \begin{alignat*}{2}
            \lVert\mathbf{H}_W\rVert_2=\lVert\mathbf{G}_W^\top\mathbf{G}\rVert_2=\lVert\mathbf{G}_W\rVert_2^2\leq\frac{4n}{d}.
        \end{alignat*}
        \item Recall from Section \ref{subsec:spectral} the Taylor series expansion of \(\kappa\):
        \[\kappa(\mathbf{x},\mathbf{x}')=\frac{1}{4}\mathbf{x}\cdot\mathbf{x}'+\frac{1}{2\pi}\sum^\infty_{r=0}\frac{\left(\frac{1}{2}\right)_r}{r!+2rr!}(\mathbf{x}\cdot\mathbf{x}')^{2r+2}.\]
        Hence,
        \[\mathbf{H}=\frac{1}{4}XX^\top+\frac{1}{2\pi}\sum^\infty_{r=0}\frac{\left(\frac{1}{2}\right)_r}{r!+2rr!}\left(XX^\top\right)^{\odot(2r+2)}=\frac{1}{4}XX^\top+\frac{1}{2\pi}\left(\left(XX^\top\right)^{\odot2}+...\right),\]
        where the superscript \(\odot(2r+2)\) denotes the \((2r+2)\)-times Hadamard product. Here, \(XX^\top\) is clearly positive semi-definite, and by Schur product theorem \citep[p.479, Theorem 7.5.3]{horn2013matrix}, we know that Hadamard products of positive semi-definite matrices are positive semi-definite, so each summand is positive semi-definite, and so just considering the first term \(\frac{1}{4}XX^\top\) and denoting the minimum eigenvalue of \(XX^\top\) by \(\mu_{\min}\), we have \(\boldsymbol{\lambda}_{\min}\geq\frac{1}{4}\mu_{\min}\). But by \citep[p.91, Theorem 4.6.1]{vershynin2018high}, the singular value of \(\sqrt{d}X\) is lower bounded by \(\sqrt{n}-\frac{C}{2}(\sqrt{d}+t)\) with probability at least \(1-2e^{-t^2}\) for any \(t\geq0\), where \(C>0\) is an absolute constant. Letting \(t=\sqrt{d}\), the singular value of \(\sqrt{d}X\) is lower bounded by \(\sqrt{n}-C\sqrt{d}\geq\frac{2}{\sqrt{5}}\sqrt{n}\) (using Assumption~\ref{ass:delta}\ref{ass:delta_overfitting}) with probability at least \(1-2e^{-d}\). This means that, with probability at least \(1-2e^{-d}\), \(\mu_{\min}\geq\frac{4n}{5d}\). Hence \(\boldsymbol{\lambda}_{\min}\geq\frac{n}{5d}\). We note that, again, \(2e^{-d}\leq\frac{\delta}{6}\) by Assumption~\ref{ass:delta}\ref{ass:delta_approximation}. 
    \end{enumerate}
    The events of parts \ref{spectralnorm} and \ref{analyticalNTKmatrixeigenvalue} each have probability at least \(1-\frac{\delta}{6}\), so by the union bound, the event on which both parts are satisfied has probability at least \(1-\frac{\delta}{3}\). Now we look for the event \(E_2\subseteq E_1\) on which the events of this Lemma hold, and by union bound, we have \(\mathbb{P}(E_2)\geq1-\frac{2\delta}{3}\). 
\end{proof}

\subsubsection{Randomness due to both Weight Initialization and Sampling}\label{subsec:probability_both}
Finally, we present some results that hold with high probability, in which the randomness comes both from the weights and the samples. 
\begin{lemma}\label{lem:probability_both}
    We have the following high-probability events: 
    \begin{enumerate}[(i)]
        % \item\label{hatmathcalB_i} For each \(i=1,...,n\), define
        % \[\hat{\mathcal{B}}_i=\left\{j\in\{1,...,m\}:\exists\mathbf{v}\in\mathbb{R}^d\text{ with }\mathbf{v}\cdot\mathbf{x}_i=0\text{ and }\lVert\mathbf{v}-\mathbf{w}_j(0)\rVert_2\leq32\sqrt{\frac{d}{m}}\right\}.\]
        % Then for all \(i=1,...,n\),
        % \[\lvert\hat{\mathcal{B}}_i\rvert\leq33\sqrt{md}.\]
        % \item\label{mathcalB_i} For each \(i=1,...,n\), define
        % \begin{alignat*}{2}
        %     \mathcal{B}_i&=\left\{j\in\{1,...,m\}:\exists\mathbf{v}\in\mathbb{R}^d\text{ with }\mathbf{v}\cdot\mathbf{x}_i=0\text{ and }\lVert\mathbf{v}-\mathbf{w}_j(0)\rVert_2\leq\frac{4}{\lambda_\epsilon\sqrt{m}}\right\}.
        % \end{alignat*}
        % Then for all \(i=1,...,n\),
        % \[\lvert\mathcal{B}_i\rvert\leq\frac{5\sqrt{m}}{\lambda_\epsilon}.\]
        \item\label{initialNTKmatrixeigenvalue} If Assumptions~\ref{ass:delta}\ref{ass:delta_approximation} \& \ref{ass:delta_overfitting} are satisfied, the minimum eigenvalue of the initial NTK matrix is bounded from below with probability at least \(1-\frac{\delta}{6}\):
        \[\boldsymbol{\lambda}_{0,\min}\geq\frac{n}{10d}.\]
        \item\label{vstatistic} Define, for each \(u=1,...,U_\epsilon\), 
        \[V_u=\frac{1}{n^u}\mathbf{G}_0\mathbf{H}_0^{u-1}\boldsymbol{\xi}_0-\langle G_0,H_0^{u-1}\zeta_0\rangle_2.\]
        If all the conditions in Assumption~\ref{ass:delta} is satisfied, then with probability at least \(1-\frac{\delta}{6}\), for all \(u=1,...,U_\epsilon\), 
        \[\left\lVert V_u\right\rVert_\textnormal{F}<8\sqrt{\frac{\log(nu)}{\lfloor\frac{n}{u}\rfloor}}.\]
        \item\label{uniform_convergence} If all the conditions in Assumption~\ref{ass:delta} is satisfied, then we have
    \end{enumerate}
    Hence, if all the conditions in Assumption~\ref{ass:delta} are satisfied, then there is an event \(E_3\subseteq E_2\) with \(\mathbb{P}(E_3)\geq1-\delta\) on which parts \ref{initialNTKmatrixeigenvalue} and \ref{vstatistic} occur simultaneously. 
\end{lemma}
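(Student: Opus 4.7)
My plan is to prove parts (i) and (ii) as two independent high-probability events, each costing $\delta/6$ in failure probability, and then take $E_3$ to be their intersection with $E_2$, so that a union bound gives $\mathbb{P}(E_3^c)\leq\mathbb{P}(E_2^c)+\delta/6+\delta/6\leq\delta$. Only parts \ref{initialNTKmatrixeigenvalue} and \ref{vstatistic} are needed for the stated conclusion.

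For (i), I would fix the samples (restricting to $E_2$) and view $\mathbf{H}_0=\sum_{j=1}^m\mathbf{H}_{\mathbf{w}_j(0)}$ as a sum of $m$ independent PSD matrices with $\sum_{j=1}^m\mathbb{E}[\mathbf{H}_{\mathbf{w}_j(0)}]=\mathbf{H}$. On $E_2$, Lemma~\ref{lem:probability_samples}\ref{analyticalNTKmatrixeigenvalue} gives $\mu_{\min}=\boldsymbol{\lambda}_{\min}\geq n/(5d)$, and the single-neuron version of Lemma~\ref{lem:probability_samples}\ref{spectralnorm} gives the almost-sure bound $R=\lVert\mathbf{H}_{\mathbf{w}_j(0)}\rVert_2\leq 4n/(md)$. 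Inserting these into the Matrix Chernoff bound (\ref{eqn:matrix_chernoff}), the exponent becomes $\mu_{\min}/(2R)\geq m/40$, so $\mathbb{P}(\boldsymbol{\lambda}_{0,\min}\leq n/(10d)\mid X)\leq n(\sqrt{2}e)^{-m/40}$, which Assumption~\ref{ass:delta}\ref{ass:delta_overfitting} bounds by $\delta/6$.

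For (ii), the main step is to identify $V_u$ as a centered vector-valued V-statistic conditional on the initial weights, so that Proposition~\ref{prop:V_vector_Hoeffding} applies. The antisymmetric initialization gives $f_0\equiv 0$ and hence $\boldsymbol{\xi}_0=\mathbf{y}$, so entrywise expansion yields
\[
\frac{1}{n^u}\mathbf{G}_0\mathbf{H}_0^{u-1}\boldsymbol{\xi}_0=\frac{1}{n^u}\sum_{i_1,\ldots,i_u=1}^nG_0(\mathbf{x}_{i_1})\kappa_0(\mathbf{x}_{i_1},\mathbf{x}_{i_2})\cdots\kappa_0(\mathbf{x}_{i_{u-1}},\mathbf{x}_{i_u})y_{i_u},
\]
a V-statistic in the samples $\{(\mathbf{x}_i,y_i)\}_{i=1}^n$ with matrix-valued kernel. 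Conditional on the weights, iterated use of $\mathbb{E}[y\mid\mathbf{x}]=f^\star(\mathbf{x})=\zeta_0(\mathbf{x})$ together with the tower property unfolds the expectation of the summand into $\langle G_0,H_0^{u-1}\zeta_0\rangle_2$, so $V_u$ is exactly the V-statistic minus its mean. The symmetrized kernel is bounded by $C=1$ in Frobenius norm, uniformly in the weights, since $|y|\leq 1$, $|\kappa_0(\mathbf{x},\mathbf{x}')|\leq|\mathbf{x}\cdot\mathbf{x}'|\leq 1$ and $\lVert G_0(\mathbf{x})\rVert_\textnormal{F}^2=\frac{1}{m}\sum_j\phi'(\mathbf{w}_j\cdot\mathbf{x})^2\lVert\mathbf{x}\rVert_2^2\leq 1$. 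Proposition~\ref{prop:V_vector_Hoeffding} then yields a failure probability of at most $2/n$ per $u$ (with the prefactor $4$ relaxed to $8$ to absorb the regularity condition $2\sqrt{\log(nu)/\lfloor n/c\rfloor}\geq 1$ in its hypothesis), and a union bound over $u=1,\ldots,U_\epsilon$ costs $2U_\epsilon/n\leq\delta/6$ by Assumption~\ref{ass:delta}\ref{ass:delta_estimation}.

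The main obstacle is (ii): the kernel $\kappa_0$ of the V-statistic is itself random because it depends on the initial weights, so the V-statistic structure only materializes after conditioning on those weights. What makes the argument go through cleanly is that the almost-sure bound $C=1$ on the symmetrized kernel is deterministic in the weights, so Proposition~\ref{prop:V_vector_Hoeffding} applies conditionally with the same $C$ and the resulting tail bound lifts to an unconditional one by marginalizing. Setting $E_3=E_2\cap\{\text{event of (i)}\}\cap\{\text{event of (ii)}\}$ and applying the final union bound over the three failure events concludes the proof.
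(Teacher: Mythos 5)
Your proposal follows exactly the paper's proof: part (i) via Matrix Chernoff conditional on $E_2$ with $R=\lVert\mathbf{H}_{\mathbf{w}_j(0)}\rVert_2\leq 4n/(md)$ and $\mu_{\min}=\boldsymbol{\lambda}_{\min}\geq n/(5d)$, giving exponent $m/40$ and Assumption~\ref{ass:delta}\ref{ass:delta_overfitting} to close; part (ii) by expanding $\frac{1}{n^u}\mathbf{G}_0\mathbf{H}_0^{u-1}\boldsymbol{\xi}_0$ as a V-statistic conditional on the weights and invoking Proposition~\ref{prop:V_vector_Hoeffding} with a union bound over $u$; then $E_3=E_2\cap(\mathrm{i})\cap(\mathrm{ii})$. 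Your remark that the V-statistic structure materializes only after conditioning on $W(0)$, and that the deterministic bound on the kernel lets the conditional tail bound marginalize, is a useful clarification that the paper leaves implicit.

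One small but genuine slip: you take $C=1$. But the function fed to Proposition~\ref{prop:V_vector_Hoeffding} must be the \emph{centered} kernel $\bar{\Upsilon}_u$, whose Frobenius norm is bounded by the raw product term \emph{plus} the subtracted mean $\lVert\langle G_0,H_0^{u-1}\zeta_0\rangle_2\rVert_\mathrm{F}\leq\lVert H_0\rVert_2^{u-1/2}\lVert f^\star\rVert_2\leq 1$. That gives $C\leq 2$, and the threshold $8\sqrt{\log(nu)/\lfloor n/u\rfloor}$ in the lemma is exactly $4C$ with $C=2$ — it is not a relaxation of $4$ to absorb the regularity condition, as you suggest. (Your argument still reaches the stated conclusion because $4\sqrt{\cdot}\leq 8\sqrt{\cdot}$, but the premise $C=1$ is false.) You should also state that the regularity hypothesis of Proposition~\ref{prop:V_vector_Hoeffding} is being assumed or verified rather than implicitly ``absorbed''; the paper itself does not discharge that hypothesis, so you cannot dodge it by adjusting the prefactor.
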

\begin{proof}
    \begin{enumerate}[(i)]
        \item Recall from Section~\ref{subsec:initialization} that we have
        \[\mathbb{E}_{\mathbf{w}\sim\mathcal{N}(0,I_d)}\left[\mathbf{H}_\mathbf{w}\right]=\frac{1}{m}\mathbf{H}\qquad\text{and}\qquad\mathbf{H}_0=\sum^m_{j=1}\mathbf{H}_{\mathbf{w}_j(0)}.\]
        For each \(j=1,...,m\), apply (\ref{eqn:schur}), and note that \(\phi'(\mathbf{w}_j(0)\cdot\mathbf{x}_i)^2\leq1\) and apply Lemma~\ref{lem:probability_samples}\ref{spectralnorm} for \(\lVert X\rVert_2\) to see that
        \begin{alignat*}{2}
            \lVert\mathbf{H}_{\mathbf{w}_j(0)}\rVert_2&=\frac{1}{m}\left\lVert(XX^\top)\odot(\phi'(X\mathbf{w}_j(0)^\top)\phi'(\mathbf{w}_j(0)X^\top))\right\rVert_2\\
            &\leq\frac{\lVert X\rVert_2^2}{m}\max_{i\in\{1,...,n\}}\phi'(\mathbf{w}_j(0)\cdot\mathbf{x}_i)^2\\
            &\leq\frac{4n}{md}.
        \end{alignat*}
        Hence, recalling from Lemma~\ref{lem:probability_samples}\ref{analyticalNTKmatrixeigenvalue} that we have \(\boldsymbol{\lambda}_{\min}\geq\frac{n}{5d}\) and using the Matrix Chernoff inequality (\ref{eqn:matrix_chernoff}), we have
        \[\mathbb{P}\left(\boldsymbol{\lambda}_{0,\min}\leq\frac{n}{10d}\right)\leq\mathbb{P}\left(\boldsymbol{\lambda}_{0,\min}\leq\frac{\boldsymbol{\lambda}_{\min}}{2}\right)\leq n\left(\sqrt{2}e\right)^{-\frac{md\boldsymbol{\lambda}_{\min}}{8n}}\leq n\left(\sqrt{2}e\right)^{-\frac{m}{40}}.\]
        We note that \(n\left(\sqrt{2}e\right)^{-\frac{m}{40}}\leq\frac{\delta}{6}\) by Assumption~\ref{ass:delta}\ref{ass:delta_overfitting}.
        \item For each \(u=1,...,U_\epsilon\), we have
        \[\frac{1}{n^u}\mathbf{G}_0\mathbf{H}_0^{u-1}\boldsymbol{\xi}_0=\frac{1}{n^u}\sum^n_{i_1,...,i_u=1}G_0(\mathbf{x}_{i_1})[\mathbf{H}_0]_{i_1,i_2}...[\mathbf{H}_0]_{i_{u-1},i_u}y_{i_u}.\]
        Here, \([\mathbf{H}_0]_{i,i'}=\langle G_0(\mathbf{x}_i),G_0(\mathbf{x}_{i'})\rangle_\text{F}=\kappa_0(\mathbf{x}_i,\mathbf{x}_{i'})\), so
        \begin{alignat*}{2}
            \frac{1}{n^u}\mathbf{G}_0\mathbf{H}_0^{u-1}\boldsymbol{\xi}_0&=\frac{1}{n^u}\sum^n_{i_1,...,i_u=1}G_0(\mathbf{x}_{i_1})\kappa_0(\mathbf{x}_{i_1},\mathbf{x}_{i_2})...\kappa_0(\mathbf{x}_{i_{u-1}},\mathbf{x}_{i_u})y_{i_u}\\
            &=\frac{1}{n^u}\sum^n_{i_1,...,i_u=1}G_0(\mathbf{x}_{i_1})y_{i_u}\prod_{c=1}^{u-1}\kappa_0(\mathbf{x}_{i_c},\mathbf{x}_{i_{c+1}})
        \end{alignat*}
        Defining \(\Upsilon_u:(\mathbb{R}^d\times\mathbb{R})^{u}\rightarrow\mathbb{R}^{m\times d}\) as
        \[\Upsilon_u((\mathbf{x}_1,y_1),...,(\mathbf{x}_u,y_u))=G_0(\mathbf{x}_1)\prod^{u-1}_{c=1}\kappa_0(\mathbf{x}_c,\mathbf{x}_{c+1})y_u-\langle G_0,H_0^{u-1}\zeta_0\rangle_2,\]
        we clearly have \(\mathbb{E}[\Upsilon_u((\mathbf{x}_1,y_1),...,(\mathbf{x}_u,y_u))]=0\) and that
        \[\frac{1}{n^u}\mathbf{G}_0\mathbf{H}_0^{u-1}\boldsymbol{\xi}_0-\langle G_0\,H_0^{u-1}\zeta_0\rangle_2=\frac{1}{n^u}\sum_{i_1,...,i_u=1}\Upsilon_u((\mathbf{x}_{i_1},y_{i_1}),...,(\mathbf{x}_{i_u},y_{i_u})),\]
        i.e., we have a V-statistic (c.f. Section~\ref{subsec:uvstatistics}). We actually construct a symmetric version \(\bar{\Upsilon}_u:(\mathbb{R}^d\times\mathbb{R})^u\rightarrow\mathbb{R}^{m\times d}\) of \(\Upsilon_u\) by
        \[\bar{\Upsilon}_u((\mathbf{x}_1,y_1),...,(\mathbf{x}_u,y_u))=\frac{1}{u!}\sum_*\Upsilon_u((\mathbf{x}_{i_1},y_{i_1}),...,(\mathbf{x}_{i_u},y_{i_u})),\]
        where the sum \(\sum_*\) is over the \(u!\) permutations \(\{i_1,...,i_u\}\) of \(\{1,...,u\}\). Then it is easy to see that we still have \(\mathbb{E}[\bar{\Upsilon}_u]=0\) and
        \[V_u=\frac{1}{n^u}\mathbf{G}_0\mathbf{H}_0^{u-1}\boldsymbol{\xi}_0-\langle G_0,H_0^{u-1}\zeta_0\rangle_2=\frac{1}{n^u}\sum^n_{i_1,...,i_u=1}\bar{\Upsilon}_u((\mathbf{x}_{i_1},y_{i_1}),...,(\mathbf{x}_{i_u},y_{i_u})).\]
        % Let us denote the corresponding U-statistic as
        % \[U_u=\frac{1}{\binom{n}{u}}\sum_{1\leq i_1<...<i_u\leq n}\bar{\Upsilon}_u((\mathbf{x}_{i_1},y_{i_1}),...,(\mathbf{x}_{i_u},y_{i_u})).\]
        Note that we have, almost surely for all \(u\)-tuples \(((\mathbf{x}_1,y_1),...,(\mathbf{x}_u,y_u))\), 
        \begin{alignat*}{2}
            \lVert\bar{\Upsilon}_u((\mathbf{x}_1,y_1),...,(\mathbf{x}_u,y_u))\rVert_\text{F}&\leq\frac{1}{u!}\sum_*\lVert\Upsilon_u((\mathbf{x}_{i_1},y_{i_1}),...,(\mathbf{x}_{i_u},y_{i_u}))\rVert_\text{F}\\
            &\leq\lVert G_0(\mathbf{x}_0)\rVert_\text{F}\prod^{u-1}_{c=1}\lvert\kappa_0(\mathbf{x}_c,\mathbf{x}_{c+1})\rvert\lvert y_u\rvert+\lVert\langle G_0,H_0^{u-1}\zeta_0\rangle_2\rVert_\text{F}\\
            &\leq1+\sqrt{\langle H_0^u\zeta_0,H_0^{u-1}\zeta_0\rangle_2}\\
            &\leq1+\underbrace{\lVert H_0\rVert_2^{u-\frac{1}{2}}}_{\text{Lemma \ref{lem:non_random}\ref{H_W}}}\underbrace{\lVert f^*\rVert_2}_{\text{\ref{ass:f^*bound}}}\\
            &\leq1+\frac{1}{(2d)^{u-\frac{1}{2}}}\\
            &\leq2.
        \end{alignat*}
        Hence, from Proposition~\ref{prop:V_vector_Hoeffding}, 
        \[\mathbb{P}\left(\lVert V_u\rVert_\text{F}\geq8\sqrt{\frac{\log(nu)}{\lfloor\frac{n}{u}\rfloor}}\right)\leq\frac{2}{n}.\]
        Taking a union bound over \(u=1,...,U_\epsilon\), we have
        \[\mathbb{P}\left(\lVert V_u\rVert_\text{F}\geq8\sqrt{\frac{\log(nu)}{\lfloor\frac{n}{u}\rfloor}}\text{ for some }u=1,...,U_\epsilon\right)\leq\frac{2U_\epsilon}{n}.\]
        We note that \(\frac{2U_\epsilon}{n}\leq\frac{\delta}{6}\) by Assumption~\ref{ass:delta}\ref{ass:delta_estimation}. 
        \item 
    \end{enumerate}
    The events of parts \ref{initialNTKmatrixeigenvalue} and \ref{vstatistic} each have probabilities at least \(1-\frac{\delta}{6}\), so by union bound, \(E_3\subseteq E_2\) on which the events of this Lemma all hold satisfies \(\mathbb{P}(E_3)\geq1-\delta\). 
\end{proof}

\subsection{Proof of Overfitting}\label{sec:overfitting}
In this section, we assume that we are on the high-probability event \(E_3\) from Lemma~\ref{lem:probability_both} in Appendix~\ref{sec:high_probability}, and we show that the empirical risk \(\lVert\mathbf{y}-\hat{\mathbf{f}}_t\rVert_2=\lVert\hat{\boldsymbol{\xi}}_t\rVert_2\) is small. Our strategy will be to use real induction (c.f. Appendix~\ref{subsec:real_induction}) on \(t\) to get a bound on \(\lVert\hat{\boldsymbol{\xi}}_t\rVert_2\). To that end, we give the following definition. 
\begin{definition}\label{def:inductive}
    Define a subset \(\hat{S}\) of \([0,\infty)\) as the collection of \(t\in[0,\infty)\) such that, for each \(j=1,...,m\),
    \[\lVert\hat{\mathbf{w}}_j(t)-\hat{\mathbf{w}}_j(0)\rVert_2<32\sqrt{\frac{d}{m}}.\]
\end{definition}
Our goal is to show a bound on \(\lVert\hat{\boldsymbol{\xi}}_t\rVert_2\) as \(t\rightarrow\infty\). We first prove a few results that hold for \(t\in\hat{S}\).
\begin{lemma}\label{lem:overfitting}
    Suppose that Assumptions~\ref{ass:delta}\ref{ass:delta_approximation} \& \ref{ass:delta_overfitting} and \ref{ass:nm}\ref{ass:overfitting_m} are satisfied, and suppose that \(t\in\hat{S}\). 
    \begin{enumerate}[(i)]
        \item\label{hatH_tH_0} The spectral norm of the NTK matrix does not move much:
        \[\lVert\hat{\mathbf{H}}_t-\hat{\mathbf{H}}_0\rVert_2\leq\frac{4n(34+\sqrt{\log m})}{\sqrt{md}}.\]
        \item\label{nabla_WhatmathbfR_t} The minimum eigenvalue of \(\hat{\mathbf{H}}_t\) is bounded from below:
        \[\hat{\boldsymbol{\lambda}}_{t,\min}>\frac{n}{16d},\]
        which implies
        \[\lVert\nabla_W\hat{\mathbf{R}}_t\rVert_\textnormal{F}^2\geq\frac{1}{4n^2}\lVert\hat{\boldsymbol{\xi}}_t\rVert_2^2.\]
        \item\label{dhatxi_tdt} The gradient of the norm of the error vector is bounded from above by a negative number:
        \[\frac{d\lVert\hat{\boldsymbol{\xi}}_t\rVert_2}{dt}\leq-\frac{1}{8d}\lVert\hat{\boldsymbol{\xi}}_t\rVert_2.\]
        \item\label{xi_t} The norm of the error vector decays exponentially:
        \[\lVert\hat{\boldsymbol{\xi}}_t\rVert_2\leq\sqrt{n}\exp\left(-\frac{t}{8d}\right).\]
    \end{enumerate}
\end{lemma}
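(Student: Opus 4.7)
The plan is to establish parts~(i)–(iv) sequentially, since each builds on the previous. I work throughout on the event $E_3$ of Lemma~\ref{lem:probability_both}, which supplies $\hat{\boldsymbol{\lambda}}_{0,\min}\geq n/(10d)$, $\|X\|_2\leq 2\sqrt{n/d}$ (so $\|XX^\top\|_2\leq 4n/d$), and the sign-flip count bound of Lemma~\ref{lem:probability_weights}\ref{neurons_zero_overfitting}. The technical heart is part~(i); parts~(ii)–(iv) are mechanical consequences.

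For part~(i), the key observation is that $t\in\hat{S}$ forces every neuron to move by at most $32\sqrt{d/m}$. Hence, whenever $\phi'(\hat{\mathbf{w}}_j(\cdot)\cdot\mathbf{x}_i)$ flips between times~$0$ and~$t$, the segment joining $\hat{\mathbf{w}}_j(0)$ to $\hat{\mathbf{w}}_j(t)$ must cross $\{\mathbf{v}:\mathbf{v}\cdot\mathbf{x}_i=0\}$, producing a witness $\mathbf{v}$ within distance $32\sqrt{d/m}$ of $\hat{\mathbf{w}}_j(0)$. Lemma~\ref{lem:probability_weights}\ref{neurons_zero_overfitting} then caps the number of such flipping neurons per $\mathbf{x}_i$ by $\sqrt{dm}(34+\sqrt{\log m})$. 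Writing $\hat{\mathbf{H}}_t-\hat{\mathbf{H}}_0=(XX^\top)\odot(\hat{\mathbf{J}}_t^\top\hat{\mathbf{J}}_t-\hat{\mathbf{J}}_0^\top\hat{\mathbf{J}}_0)$ via the Khatri–Rao identity~(\ref{eqn:kronecker_hadamard}), I decompose the Gram difference into PSD pieces whose $(i,i)$ diagonal entries equal the per-sample flip count divided by $m$ (hence at most $\sqrt{d/m}(34+\sqrt{\log m})$) and apply~(\ref{eqn:schur}) with $\|XX^\top\|_2\leq 4n/d$ to reach the stated bound.

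For part~(ii), Weyl's inequality combined with part~(i) and Lemma~\ref{lem:probability_both}\ref{initialNTKmatrixeigenvalue} gives $\hat{\boldsymbol{\lambda}}_{t,\min}\geq n/(10d)-4n(34+\sqrt{\log m})/\sqrt{md}$, and Assumption~\ref{ass:nm}\ref{ass:overfitting_m} ensures the remainder is at least $n/(16d)$. The gradient bound then reduces to expanding $\|\nabla_W\hat{\mathbf{R}}_t\|_\textnormal{F}^2=(4/n^2)\hat{\boldsymbol{\xi}}_t^\top\hat{\mathbf{H}}_t\hat{\boldsymbol{\xi}}_t\geq(4/n^2)\hat{\boldsymbol{\lambda}}_{t,\min}\|\hat{\boldsymbol{\xi}}_t\|_2^2\geq\|\hat{\boldsymbol{\xi}}_t\|_2^2/(4nd)\geq\|\hat{\boldsymbol{\xi}}_t\|_2^2/(4n^2)$, the last step using $d\leq n$ from Assumption~\ref{ass:delta}\ref{ass:delta_overfitting}. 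For part~(iii), differentiating $\|\hat{\boldsymbol{\xi}}_t\|_2^2$ along the empirical gradient flow yields $\frac{d}{dt}\|\hat{\boldsymbol{\xi}}_t\|_2^2=-(4/n)\hat{\boldsymbol{\xi}}_t^\top\hat{\mathbf{H}}_t\hat{\boldsymbol{\xi}}_t\leq-(1/4d)\|\hat{\boldsymbol{\xi}}_t\|_2^2$ by part~(ii); the claim then follows by the chain rule. For part~(iv), Grönwall's inequality on $[0,t]$ (a subset of $\hat{S}$ by openness of $\hat{S}$ and $0\in\hat{S}$, or more precisely via the real induction to be used in Theorem~\ref{thm:overfitting_main}) converts the exponential decay of $\|\hat{\boldsymbol{\xi}}_t\|_2$ into the stated closed form, using $\|\hat{\boldsymbol{\xi}}_0\|_2\leq\sqrt{n}$ from $\hat{\mathbf{f}}_0=\mathbf{0}$ (antisymmetric initialization) together with~(\ref{ass:ybound}).

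The main obstacle is extracting the precise $1/\sqrt{md}$ factor in part~(i). A Frobenius-norm bound or the naive factorization $\hat{\mathbf{H}}_t-\hat{\mathbf{H}}_0=(\hat{\mathbf{G}}_t-\hat{\mathbf{G}}_0)^\top\hat{\mathbf{G}}_t+\hat{\mathbf{G}}_0^\top(\hat{\mathbf{G}}_t-\hat{\mathbf{G}}_0)$ is loose by a $\sqrt{d}$ or $\sqrt{m/d}$ factor; one must simultaneously leverage~(\ref{eqn:schur}) on a PSD sign-flip matrix with small diagonal and the operator-norm estimate $\|XX^\top\|_2\leq 4n/d$, while preventing the $O(1)$ diagonal of $\hat{\mathbf{J}}_0^\top\hat{\mathbf{J}}_0$ from contaminating the cross terms $\hat{\mathbf{J}}_0^\top(\hat{\mathbf{J}}_t-\hat{\mathbf{J}}_0)+(\hat{\mathbf{J}}_t-\hat{\mathbf{J}}_0)^\top\hat{\mathbf{J}}_0$. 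A careful symmetric rearrangement that absorbs these cross interactions into the flip-count-bounded PSD piece is the delicate step.
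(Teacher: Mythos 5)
Your plan for parts~(ii)--(iv) matches the paper essentially step for step: Weyl plus the paper's Lemma~\ref{lem:probability_both}\ref{initialNTKmatrixeigenvalue} gives the eigenvalue floor, the chain-rule computation $\frac{d}{dt}\lVert\hat{\boldsymbol{\xi}}_t\rVert_2^2 = -\tfrac{4}{n}\hat{\boldsymbol{\xi}}_t^\top\hat{\mathbf{H}}_t\hat{\boldsymbol{\xi}}_t$ yields the differential inequality, and Gr\"onwall together with $\lVert\hat{\boldsymbol{\xi}}_0\rVert_2 = \lVert\mathbf{y}\rVert_2\leq\sqrt{n}$ closes part~(iv). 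Your caveat about $[0,t]\subset\hat{S}$ is the right thing to worry about, and it is handled exactly as you describe by the real induction in the theorem that follows. For part~(i) you also pick the same structural ingredients as the paper: Khatri--Rao to convert $\hat{\mathbf{H}}_t-\hat{\mathbf{H}}_0$ into a Hadamard product with $XX^\top$, the flip-count bound from Lemma~\ref{lem:probability_weights}\ref{neurons_zero_overfitting} via the line-segment/witness argument, and the Schur--Hadamard bound~(\ref{eqn:schur}).

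The genuine gap is exactly the one you flag and then leave unresolved. You correctly observe that (\ref{eqn:schur}), as the paper states it, requires both factors to be PSD, whereas $\hat{\mathbf{J}}_t^\top\hat{\mathbf{J}}_t - \hat{\mathbf{J}}_0^\top\hat{\mathbf{J}}_0$ is symmetric but generally \emph{not} PSD (a difference of two PSD matrices). The inequality $\lVert A\odot M\rVert_2\leq\lVert A\rVert_2\max_i\lvert M_{ii}\rvert$ does fail for symmetric non-PSD $M$: a $2\times 2$ example is $M=\bigl(\begin{smallmatrix}0&b\\b&0\end{smallmatrix}\bigr)$, $A=\bigl(\begin{smallmatrix}1&c\\c&1\end{smallmatrix}\bigr)$, with $c,b\neq 0$, giving $\lVert A\odot M\rVert_2=\lvert cb\rvert>0$ but $\max_i\lvert M_{ii}\rvert=0$. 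So the pivotal step in part~(i) cannot simply be "apply~(\ref{eqn:schur})," and you recognize this. The problem is that your proposed fix---"decompose the Gram difference into PSD pieces" and perform "a careful symmetric rearrangement that absorbs these cross interactions"---is named but not produced. The natural PSD decomposition $\hat{\mathbf{J}}_t^\top\hat{\mathbf{J}}_t-\hat{\mathbf{J}}_0^\top\hat{\mathbf{J}}_0 = \Delta^\top\Delta + \hat{\mathbf{J}}_0^\top\Delta + \Delta^\top\hat{\mathbf{J}}_0$ (with $\Delta=\hat{\mathbf{J}}_t-\hat{\mathbf{J}}_0$) isolates a flip-count-controlled PSD term $\Delta^\top\Delta$ but leaves non-PSD cross terms; bounding these by $2\lVert\hat{\mathbf{G}}_0\rVert_2\lVert\hat{\mathbf{G}}_t-\hat{\mathbf{G}}_0\rVert_2$ gives a rate of order $n(\log m)^{1/4}/(d^{3/4}m^{1/4})$, which is \emph{not} the claimed $n\sqrt{\log m}/\sqrt{md}$ and becomes strictly worse as $m/d\to\infty$. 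So the rearrangement you defer to is not merely bookkeeping; it is unclear that it can recover the stated constant $4n(34+\sqrt{\log m})/\sqrt{md}$, and you haven't shown it can. As it happens, the paper's own proof does not carry out any such decomposition either---it applies~(\ref{eqn:schur}) directly to the non-PSD $\hat{\mathbf{J}}_t^\top\hat{\mathbf{J}}_t-\hat{\mathbf{J}}_0^\top\hat{\mathbf{J}}_0$---so both arguments are missing the same step, but you cannot cite the paper's move as cover for yours. (The weaker $d^{-3/4}m^{-1/4}$ rate does still suffice for the eigenvalue lower bound in part~(ii) under an assumption of the same asymptotic shape as Assumption~\ref{ass:nm}\ref{ass:overfitting_m}, only with worse constants, so the lemma can be repaired by restating~(i); but as written, your part~(i) is not proved.)
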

\begin{proof}
    \begin{enumerate}[(i)]
        \item See that, using (\ref{eqn:kronecker_hadamard}), (\ref{eqn:schur}) and Lemma~\ref{lem:probability_samples}\ref{spectralnorm}, 
        \begin{alignat*}{2}
            \lVert\hat{\mathbf{H}}_t-\hat{\mathbf{H}}_0\rVert_2&=\lVert\hat{\mathbf{G}}_t^\top\hat{\mathbf{G}}_t-\hat{\mathbf{G}}_0^\top\hat{\mathbf{G}}_0\rVert_2\\
            &=\lVert(\hat{\mathbf{J}}_t*X^\top)^\top(\hat{\mathbf{J}}_t*X^\top)-(\hat{\mathbf{J}}_0*X^\top)^\top(\hat{\mathbf{J}}_0*X^\top)\rVert_2\\
            &=\lVert(XX^\top)\odot(\hat{\mathbf{J}}_t^\top\hat{\mathbf{J}}_t-\hat{\mathbf{J}}_0^\top\hat{\mathbf{J}}_0)\rVert_2\\
            &\leq\frac{\lVert X\rVert_2^2}{m}\max_{i\in\{1,...,n\}}\left\lvert\phi'(\mathbf{x}_i^\top\hat{W}(t)^\top)\phi'(\hat{W}(t)\mathbf{x}_i)-\phi'(\mathbf{x}_i^\top W(0)^\top)\phi'(W(0)\mathbf{x}_i)\right\rvert\\
            &\leq\frac{4n}{dm}\max_{i\in\{1,...,n\}}\sum^m_{j=1}\left\lvert\phi'(\hat{\mathbf{w}}_j(t)\cdot\mathbf{x}_i)^2-\phi'(\mathbf{w}_j(0)\cdot\mathbf{x}_i)^2\right\rvert\\
            &=\frac{4n}{dm}\max_{i\in\{1,...,n\}}\sum^m_{j=1}\mathbf{1}\left\{\phi'(\hat{\mathbf{w}}_j(t)\cdot\mathbf{x}_i)\neq\phi'(\mathbf{w}_j(0)\cdot\mathbf{x}_i)\right\}.
        \end{alignat*}
        Here, for each \(i=1,...,n\) and \(j=1,...,m\), in order for \(\phi'(\hat{\mathbf{w}}_j(0)\cdot\mathbf{x}_i)\neq\phi'(\hat{\mathbf{w}}_j(t)\cdot\mathbf{x}_i)\), there must be some \(\mathbf{v}\in\mathbb{R}^d\) on the weight trajectory, such that \(\mathbf{v}\cdot\mathbf{x}_i=0\) and
        \[\lVert\mathbf{v}-\mathbf{w}_j(0)\rVert_2\leq32\sqrt{\frac{d}{m}}.\]
        But by Lemma~\ref{lem:probability_weights}\ref{neurons_zero_overfitting}, there only exist at most \(\sqrt{md}(34+\sqrt{\log m})\) neurons such that this happens. Hence,
        \[\lVert\hat{\mathbf{H}}_t-\hat{\mathbf{H}}_0\rVert_2\leq\frac{4n(34+\sqrt{\log m})}{\sqrt{md}}.\]
        \item See that
        \begin{alignat*}{3}
            \hat{\boldsymbol{\lambda}}_{t,\min}&=\inf_{\mathbf{v}\in\mathbb{S}^{n-1}}\lVert\hat{\mathbf{H}}_t\mathbf{v}\rVert_2\\
            &\geq\inf_{\mathbf{v}\in\mathbb{S}^{n-1}}\lVert\hat{\mathbf{H}}_0\mathbf{v}\rVert_2-\sup_{\mathbf{v}\in\mathbb{S}^{n-1}}\lVert(\hat{\mathbf{H}}_t-\hat{\mathbf{H}}_0)\mathbf{v}\rVert_2\\
            &\geq\hat{\boldsymbol{\lambda}}_{0,\min}-\lVert\hat{\mathbf{H}}_t-\hat{\mathbf{H}}_0\rVert_2\\
            &\geq\frac{n}{10d}-\frac{4n(34+\sqrt{\log m})}{\sqrt{md}}&&\text{by Lemma~\ref{lem:probability_both}\ref{initialNTKmatrixeigenvalue} \& part~\ref{hatH_tH_0}}\\
            &\geq\frac{n}{16d}&&\text{by Assumption~\ref{ass:nm}\ref{ass:overfitting_m}}
        \end{alignat*}
        as required. Then using this, see that
        \[\lVert\nabla_W\hat{\mathbf{R}}_t\rVert_\textnormal{F}^2=\frac{4}{n^2}\lVert\hat{\mathbf{G}}_t\hat{\boldsymbol{\xi}}_t\rVert_\text{F}^2=\frac{4}{n^2}\hat{\boldsymbol{\xi}}_t^\top\hat{\mathbf{G}}_t^\top\hat{\mathbf{G}}_t\hat{\boldsymbol{\xi}}_t=\frac{4}{n^2}\hat{\boldsymbol{\xi}}_t^\top\hat{\mathbf{H}}_t\hat{\boldsymbol{\xi}}_t\geq\frac{1}{4nd}\lVert\hat{\boldsymbol{\xi}}_t\rVert_2^2.\]
        \item Differentiate both sides of \(\hat{\mathbf{R}}_t=\frac{1}{n}\lVert\hat{\boldsymbol{\xi}}_t\rVert_2^2\) with respect to \(t\) and apply the chain rule to obtain
        \[\frac{d\hat{\mathbf{R}}_t}{dt}=\frac{2}{n}\lVert\hat{\boldsymbol{\xi}}_t\rVert_2\frac{d\lVert\hat{\boldsymbol{\xi}}_t\rVert_2}{dt}\qquad\implies\qquad\frac{d\lVert\hat{\boldsymbol{\xi}}_t\rVert_2}{dt}=\frac{n}{2\lVert\hat{\boldsymbol{\xi}}_t\rVert_2}\frac{d\hat{\mathbf{R}}_t}{dt}.\]
        We apply the chain rule and part~\ref{nabla_WhatmathbfR_t} to see that
        \[\frac{d\hat{\mathbf{R}}_t}{dt}=\left\langle\nabla_W\hat{\mathbf{R}}_t,\frac{d\hat{W}}{dt}\right\rangle_\text{F}=-\lVert\nabla_W\hat{\mathbf{R}}_t\rVert_\text{F}^2\leq-\frac{1}{4nd}\lVert\hat{\boldsymbol{\xi}}_t\rVert_2^2\]
        Hence, substituting into above,
        \[\frac{d\lVert\hat{\boldsymbol{\xi}}_t\rVert_2}{dt}\leq-\frac{1}{8d}\lVert\hat{\boldsymbol{\xi}}_t\rVert_2.\]
        \item We apply Gr\"onwall's inequality and the fact that \(\lVert\boldsymbol{\xi}_0\rVert_2=\lVert\mathbf{y}\rVert_2\leq\sqrt{n}\) to see that
        \[\lVert\hat{\boldsymbol{\xi}}_t\rVert_2\leq\lVert\boldsymbol{\xi}_0\rVert_2\exp\left(-\frac{t}{8d}\right)\leq\sqrt{n}\exp\left(-\frac{t}{8d}\right).\]
    \end{enumerate}
\end{proof}
Finally, we prove that \(\hat{S}\in[0,\infty)\) is inductive. Then we know from Appendix~\ref{subsec:real_induction} that \(\hat{S}=[0,\infty)\).
\begin{theorem}\label{thm:overfitting} 
    Suppose that Assumptions~\ref{ass:delta}\ref{ass:delta_approximation} \& \ref{ass:delta_overfitting} and \ref{ass:nm}\ref{ass:overfitting_m} are satisfied. Then \(\hat{S}\) is inductive.
\end{theorem}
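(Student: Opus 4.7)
The plan is to verify the three conditions (RI1)--(RI3) of real induction for $\hat{S}\cap[0,T]$ for an arbitrary $T>0$; since this yields $\hat{S}\cap[0,T]=[0,T]$ for every $T$, one concludes $\hat{S}=[0,\infty)$. The base case (RI1) is immediate: at $t=0$, $\hat{\mathbf{w}}_j(0)-\hat{\mathbf{w}}_j(0)=\mathbf{0}$ for every $j$, so the defining strict inequality trivially holds. For the right-extension property (RI2), if $t\in\hat{S}$ then each of the $m$ strict inequalities $\lVert\hat{\mathbf{w}}_j(t)-\hat{\mathbf{w}}_j(0)\rVert_2<32\sqrt{d/m}$ persists on a right-neighborhood of $t$ by continuity of the gradient-flow trajectory $s\mapsto\hat{\mathbf{w}}_j(s)$.

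The substantive step is (RI3): assume $[0,t)\subseteq\hat{S}$; one must show $t\in\hat{S}$. The key idea is to bound the cumulative displacement of each neuron by integrating a gradient-magnitude estimate that decays exponentially in time. Since every $s\in[0,t)$ lies in $\hat{S}$, Lemma~\ref{lem:overfitting}\ref{xi_t} gives $\lVert\hat{\boldsymbol{\xi}}_s\rVert_2\leq\sqrt{n}\,e^{-s/(8d)}$ throughout $[0,t)$. Combining the explicit expression for $\nabla_{\mathbf{w}_j}\hat{\mathbf{R}}_s$ (a sum of terms weighted by $\hat{\boldsymbol{\xi}}_s$), the bound $\lVert X\rVert_2\leq 2\sqrt{n/d}$ from Lemma~\ref{lem:probability_samples}\ref{spectralnorm}, and $|\phi'|\leq 1$ yields
\[
\Bigl\lVert\tfrac{d\hat{\mathbf{w}}_j}{ds}\Bigr\rVert_2 \;\leq\; \frac{2}{n\sqrt{m}}\,\lVert X\rVert_2\,\lVert\hat{\boldsymbol{\xi}}_s\rVert_2 \;\leq\; \frac{4}{\sqrt{md}}\,e^{-s/(8d)}.
\]
Integrating from $0$ to $t$,
\[
\lVert\hat{\mathbf{w}}_j(t)-\hat{\mathbf{w}}_j(0)\rVert_2 \;\leq\; \int_0^t \frac{4}{\sqrt{md}}\,e^{-s/(8d)}\,ds \;=\; 32\sqrt{\tfrac{d}{m}}\bigl(1-e^{-t/(8d)}\bigr) \;<\; 32\sqrt{\tfrac{d}{m}},
\]
which is precisely the defining inequality for $\hat{S}$ at $t$.

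The only real obstacle is the requirement that the inequality be \emph{strict} at the limit point $t$; this is exactly what the factor $1-e^{-t/(8d)}<1$ provides for any finite $t$, and it is ultimately a consequence of the minimum-eigenvalue bound $\hat{\boldsymbol{\lambda}}_{t,\min}\geq n/(16d)$ from Lemma~\ref{lem:overfitting}\ref{nabla_WhatmathbfR_t} propagated through the exponential decay of $\lVert\hat{\boldsymbol{\xi}}_s\rVert_2$. A minor technical point is that Lemma~\ref{lem:overfitting} is stated for $s\in\hat{S}$ only, so the exponential bound on $\lVert\hat{\boldsymbol{\xi}}_s\rVert_2$ is available a priori on $[0,t)$ rather than on all of $[0,t]$; however, the gradient-norm estimate above is a continuous function of $s$ on $[0,t]$ and the integrand differs on a measure-zero set at worst, so the displacement bound is unaffected.
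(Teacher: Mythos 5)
Your proof is correct and follows essentially the same route as the paper's: the crux in both is step (RI3), where the displacement $\lVert\hat{\mathbf{w}}_j(t)-\hat{\mathbf{w}}_j(0)\rVert_2$ is bounded by integrating the neuron-wise gradient norm against the exponential decay from Lemma~\ref{lem:overfitting}\ref{xi_t}, giving $32\sqrt{d/m}\bigl(1-e^{-t/(8d)}\bigr)<32\sqrt{d/m}$ exactly as in the paper. The one cosmetic difference is in (RI2), where you invoke continuity of the trajectory to propagate the strict inequalities to a right-neighborhood, whereas the paper constructs an explicit extension radius $\gamma_j=4d-\tfrac{\sqrt{md}}{8}\lVert\hat{\mathbf{w}}_j(T)-\mathbf{w}_j(0)\rVert_2$ from the same gradient bound and the crude estimate $\lVert\hat{\boldsymbol{\xi}}_t\rVert_2\le\sqrt{n}$; both arguments are valid, and your explicit restriction to compact intervals $[0,T]$ before invoking the real-induction principle is slightly more careful than the paper's.
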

\begin{proof}
    We prove each of (RI1), (RI2) and (RI3) in Appendix~\ref{subsec:real_induction} for the set \(\hat{S}\). 
    \begin{enumerate}[(R{I}1)]
        \item Obvious. 
        \item Fix some \(T\geq0\), and suppose that \(T\in\hat{S}\). Then we want to show that there exists some \(\gamma>0\) such that \([T,T+\gamma]\subseteq\hat{S}\). 
        Since \(T\in\hat{S}\), we have \(\lVert\hat{\mathbf{w}}_j(T)-\mathbf{w}_j(0)\rVert_2<32\sqrt{\frac{d}{m}}\) for each \(j=1,...,m\). Define
        \[\gamma_j=4d-\frac{\sqrt{md}\lVert\hat{\mathbf{w}}_j(T)-\mathbf{w}_j(0)\rVert_2}{8}.\]
        Then \(\gamma_j>0\), and for all \(t\in[T,T+\gamma_j]\),
        \begin{alignat*}{3}
            \lVert\hat{\mathbf{w}}_j(t)-\mathbf{w}_j(0)\rVert_2&\leq\lVert\hat{\mathbf{w}}_j(T)-\mathbf{w}_j(0)\rVert_2+\lVert\hat{\mathbf{w}}_j(t)-\hat{\mathbf{w}}_j(T)\rVert_2\\
            &=\lVert\hat{\mathbf{w}}_j(T)-\mathbf{w}_j(0)\rVert_2+\left\lVert\int^t_T\frac{d\hat{\mathbf{w}}_j}{dt}dt\right\rVert_2\\
            &\leq\lVert\hat{\mathbf{w}}_j(T)-\mathbf{w}_j(0)\rVert_2+\int^t_T\lVert\nabla_{\mathbf{w}_j}\hat{\mathbf{R}}_t\rVert_2dt\\
            &\leq\lVert\hat{\mathbf{w}}_j(T)-\mathbf{w}_j(0)\rVert_2+\frac{2}{n}\int^t_T\lVert\mathbf{G}_{\hat{\mathbf{w}}_j(t)}\hat{\boldsymbol{\xi}}_t\rVert_2dt\\
            &\leq\lVert\hat{\mathbf{w}}_j(T)-\mathbf{w}_j(0)\rVert_2+\frac{4}{\sqrt{mnd}}\int^t_T\lVert\hat{\boldsymbol{\xi}}_t\rVert_2dt&&\text{by Lemma~\ref{lem:probability_samples}\ref{spectralnorm}}\\
            &\leq\lVert\hat{\mathbf{w}}_j(T)-\mathbf{w}_j(0)\rVert_2+\frac{4(t-T)}{\sqrt{md}}\\
            &\leq\frac{1}{2}\lVert\hat{\mathbf{w}}_j(T)-\mathbf{w}_j(0)\rVert_2+16\sqrt{\frac{d}{m}}\\
            &<32\sqrt{\frac{d}{m}}.
        \end{alignat*}
        Now take \(\gamma=\min_{j\in\{1,...,m\}}\gamma_j\). Then \([T,T+\gamma]\subseteq\hat{S}\) as required. 
        \item Fix some \(T\geq0\) and suppose that \([0,T)\subseteq\hat{S}\). Then we want to show that \(T\in\hat{S}\). See that, for each \(j\in\{1,...,m\}\),
        \begin{alignat*}{3}
            \lVert\hat{\mathbf{w}}_j(T)-\mathbf{w}_j(0)\rVert_2&=\left\lVert\int^T_0\frac{d\hat{\mathbf{w}}_j}{dt}dt\right\rVert_2\\
            &=\left\lVert\int^T_0-\nabla_{\mathbf{w}_j}\hat{\mathbf{R}}_tdt\right\rVert_2\\
            &=\frac{2}{n}\left\lVert\int^T_0\mathbf{G}_{\hat{\mathbf{w}}_j(t)}\hat{\boldsymbol{\xi}}_tdt\right\rVert_2\\
            &\leq\frac{4}{\sqrt{mnd}}\int^T_0\lVert\hat{\boldsymbol{\xi}}_t\rVert_2dt&&\text{Lemma~\ref{lem:probability_samples}\ref{spectralnorm}}\\
            &<\frac{4}{\sqrt{md}}\int^T_0\exp\left(-\frac{t}{8d}\right)dt\quad&&\text{Lemma~\ref{lem:overfitting}\ref{xi_t}}\\
            &\leq32\sqrt{\frac{d}{m}}.
        \end{alignat*}
        So \(T\in\hat{S}\). 
    \end{enumerate}
    Since \(\hat{S}\) satisfies all of (RI1), (RI2) and (RI3), \(\hat{S}\) is inductive. 
\end{proof}
\overfitting*
\begin{proof}
    Theorem~\ref{thm:overfitting} implies that we can run gradient flow as long as we want and ensure that the empirical risk follows Lemma~\ref{lem:overfitting}\ref{xi_t}. 

    So only the last statement requires attention. We know from Lemma~\ref{lem:non_random}\ref{H} that the maximum value of \(\lambda_\epsilon\) is \(\frac{1}{4d}\), which means that the minimum value of \(T_\epsilon\) is \(8d\log\left(\frac{2}{\sqrt{\epsilon}}\right)\). Hence, 
    \[\mathbf{R}(\hat{f}_{T_\epsilon})\leq\exp\left(-2\log\left(\frac{2}{\sqrt{\epsilon}}\right)\right)=\frac{\epsilon}{4}\leq\epsilon\]
    as required. 
\end{proof}

\subsection{Proof of Small Approximation Error}\label{sec:approximation}
In this section, we assume that we are still on the high-probability event \(E_3\) from Lemma~\ref{lem:probability_both} in Appendix~\ref{sec:high_probability}, and we show that the approximation error \(\lVert f^\star-f_t\rVert_2=\lVert\zeta_t\rVert_2\) is small, i.e., less than our desired level \(\frac{1}{2}\sqrt{\epsilon}\), with the other \(\frac{1}{2}\sqrt{\epsilon}\) to come from the estimation error in Appendix~\ref{sec:estimation}. 

Our strategy will be to use real induction (c.f. Appendix~\ref{subsec:real_induction}) on \(t\) to get a bound on \(\lVert\zeta_t\rVert_2\leq\frac{1}{2}\sqrt{\epsilon}\) for some \(m\) that depends on \(\epsilon\). First, recalling the definition of \(L_\epsilon\) from (\ref{eqn:lambda_epsilon}), note that there exists some time \(T'_\epsilon\) (which may be \(\infty\)) defined as
\begin{equation}\label{eqn:T'_epsilon}
    T'_\epsilon=\min\{t\in\mathbb{R}_+:\lVert\zeta_t\rVert_2\leq2\lVert\tilde{\zeta}^{L_\epsilon}_t\rVert_2\},
\end{equation}
i.e., the first time that \(\lVert\zeta^{L_\epsilon}_t\rVert_2\) accounts for less than half of \(\lVert\zeta_t\rVert_2\). It may be that \(\lVert\zeta_t^{L_\epsilon}\rVert_2\) will never account for less than half of \(\lVert\zeta_t\rVert_2\), in which case we will have \(T'_\epsilon=\infty\). The purpose of \(T'_\epsilon\) is to ensure that we have approximation error bounded by \(\epsilon\) before we hit \(T'_\epsilon\), so it is no problem for \(T'_\epsilon\) to be infinite. 

\begin{definition}\label{def:induction_approximation}
    Define a subset \(S_\epsilon\) of \([0,T'_\epsilon]\) as the collection of \(t\in[0,T'_\epsilon]\) such that, for each \(j=1,...,m\), 
    \[\lVert\mathbf{w}_j(t)-\mathbf{w}_j(0)\rVert_2<\frac{2\sqrt{2}}{\lambda_\epsilon\sqrt{md}}.\]
\end{definition}
We first prove a few results that hold for \(t\in S_\epsilon\). 
\begin{lemma}\label{lem:approximation}
    Suppose that Assumption~\ref{ass:delta}\ref{ass:delta_approximation} and Assumption~\ref{ass:nm}\ref{ass:approximation_m} are satisfied, and that \(t\in S_\epsilon\). 
    \begin{enumerate}[(i)]
        \item\label{H_tH_0} We have
        \[\lVert H_t-H_0\rVert_2\leq\frac{1}{2\sqrt{md^3}\lambda_\epsilon}(3\sqrt{2}+\sqrt{\log m}).\]
        \item\label{nabla_WR_t} We have
        \[\lVert\nabla_WR_t\rVert_\textnormal{F}^2\geq\lambda_\epsilon\lVert\zeta_t\rVert_2^2.\]
        \item\label{dzeta_t} We have
        \[\frac{d\lVert\zeta_t\rVert_2}{dt}\leq-\frac{\lambda_\epsilon}{2}\lVert\zeta_t\rVert_2.\]
        \item\label{zeta_t} We have
        \[\lVert\zeta_t\rVert_2\leq\exp\left(-\frac{1}{2}\lambda_\epsilon t\right).\]
    \end{enumerate}
\end{lemma}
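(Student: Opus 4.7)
The four assertions cascade, so I would prove them in the stated order: (ii) uses (i), (iii) uses (ii), and (iv) follows by Grönwall from (iii). The bulk of the work is in (i); the rest are short consequences.

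For (i), I would replicate the strategy of Lemma~\ref{lem:overfitting}\ref{hatH_tH_0}, but lifted from sampled Gram matrices to integral operators on $L^2(\rho_{d-1})$. Writing the kernel difference as
\[\kappa_t(\mathbf{x},\mathbf{x}') - \kappa_0(\mathbf{x},\mathbf{x}') = (\mathbf{x}\cdot\mathbf{x}') \cdot \frac{1}{m}\sum_{j=1}^m \Bigl[\phi'(\mathbf{w}_j(t)\cdot\mathbf{x})\phi'(\mathbf{w}_j(t)\cdot\mathbf{x}') - \phi'(\mathbf{w}_j(0)\cdot\mathbf{x})\phi'(\mathbf{w}_j(0)\cdot\mathbf{x}')\Bigr],\]
I would invoke Lemma~\ref{lem:schur} with the dot-product kernel playing the role of $k_2$ (its associated operator $\Xi$ has $\lVert\Xi\rVert_2\leq 1/(2d)$, as shown in the proof of Lemma~\ref{lem:non_random}\ref{H_W}), splitting the activation-difference kernel into its two PSD summands so that the lemma's hypothesis is respected. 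This reduces the task to bounding $\sup_{\mathbf{x}\in\mathbb{S}^{d-1}}\frac{1}{m}\lvert\{j : \phi'(\mathbf{w}_j(t)\cdot\mathbf{x})\neq\phi'(\mathbf{w}_j(0)\cdot\mathbf{x})\}\rvert$. Since $t\in S_\epsilon$ caps the displacement of every neuron by $\frac{2\sqrt{2}}{\sqrt{md}\lambda_\epsilon}$, Lemma~\ref{lem:probability_weights}\ref{neurons_zero_approximation} controls this count uniformly by $\frac{\sqrt{m}}{\sqrt{d}\lambda_\epsilon}(3\sqrt{2}+\sqrt{\log m})$, and multiplication by $\frac{1}{2d}$ gives the claimed bound.

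For (ii), I would decompose $\zeta_t=\zeta_t^{L_\epsilon}+\tilde\zeta_t^{L_\epsilon}$ in the eigenbasis of $H$. Because $t\leq T'_\epsilon$, the defining relation (\ref{eqn:T'_epsilon}) forces $\lVert\tilde\zeta_t^{L_\epsilon}\rVert_2\leq\frac{1}{2}\lVert\zeta_t\rVert_2$, whence $\lVert\zeta_t^{L_\epsilon}\rVert_2^2\geq\frac{3}{4}\lVert\zeta_t\rVert_2^2$, and the spectral representation of $H$ with $\lambda_1\geq\dots\geq\lambda_{L_\epsilon}=\lambda_\epsilon$ yields $\langle\zeta_t,H\zeta_t\rangle_2\geq\frac{3\lambda_\epsilon}{4}\lVert\zeta_t\rVert_2^2$. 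Combining part (i) with Lemma~\ref{lem:probability_weights}\ref{H_0H} and Assumption~\ref{ass:nm}\ref{ass:approximation_m} produces $\lVert H_t-H\rVert_2\leq\lVert H_t-H_0\rVert_2+\lVert H_0-H\rVert_2\leq\lambda_\epsilon/2$, which upgrades the estimate to $\langle\zeta_t,H_t\zeta_t\rangle_2\geq\frac{\lambda_\epsilon}{4}\lVert\zeta_t\rVert_2^2$, and since $\lVert\nabla_WR_t\rVert_\text{F}^2=4\langle\zeta_t,H_t\zeta_t\rangle_2$, the desired lower bound follows. For (iii), differentiating $R_t=\lVert\zeta_t\rVert_2^2+R(f^\star)$ and using $\frac{dR_t}{dt}=-\lVert\nabla_WR_t\rVert_\text{F}^2$ together with (ii) yields the stated differential inequality. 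For (iv), Grönwall's inequality applied to (iii) gives $\lVert\zeta_t\rVert_2\leq\lVert\zeta_0\rVert_2\exp(-\lambda_\epsilon t/2)$, and the antisymmetric initialization forces $f_0\equiv 0$, so $\lVert\zeta_0\rVert_2=\lVert f^\star\rVert_2\leq 1$ by (\ref{ass:f^*bound}).

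The main obstacle is part (i): unlike Lemma~\ref{lem:overfitting}\ref{hatH_tH_0}, which only controls the Gram matrix at the $n$ sample points and reduces to a diagonal count over finitely many entries, here I need a uniform bound over the whole sphere $\mathbb{S}^{d-1}$ for an operator on $L^2(\rho_{d-1})$. This uses both the integral-operator Schur inequality of Lemma~\ref{lem:schur} (to pass from the kernel product to a diagonal count) and the covering argument of Lemma~\ref{lem:probability_weights}\ref{neurons_zero_approximation} (to bound the number of flipped neurons uniformly in $\mathbf{x}$); stitching these together while respecting the PSD hypothesis of Lemma~\ref{lem:schur} is the only real technical subtlety, since the remaining pieces are standard spectral and Grönwall arguments.
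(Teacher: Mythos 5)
Parts (ii)–(iv) of your outline track the paper closely and are essentially correct. For (ii) you read the stopping time $T'_\epsilon$ literally from eqn.~(\ref{eqn:T'_epsilon}) and deduce $\lVert\zeta_t^{L_\epsilon}\rVert_2^2\geq\frac{3}{4}\lVert\zeta_t\rVert_2^2$; the paper's proof in fact works with a different (and not equivalent) characterization of $T'_\epsilon$, namely $\lVert\zeta_t^{L_\epsilon}\rVert_2\leq\lVert\tilde\zeta_t^{L_\epsilon}\rVert_2$, giving the weaker $\frac{1}{2}$ — both are fine because Assumption~\ref{ass:nm}\ref{ass:approximation_m} absorbs the difference. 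Your decomposition $\lVert\nabla_WR_t\rVert_\text{F}^2=4\langle\zeta_t,H_t\zeta_t\rangle_2$ with $\lVert H_t-H\rVert_2\leq\lambda_\epsilon/2$ (actually $\leq\lambda_\epsilon/4$ under the assumption) recovers the claimed $\lVert\nabla_WR_t\rVert_\text{F}^2\geq\lambda_\epsilon\lVert\zeta_t\rVert_2^2$. Parts (iii) and (iv) are exactly as in the paper.

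The genuine gap is in part (i), precisely at the step where you ``split the activation-difference kernel into its two PSD summands so that the lemma's hypothesis is respected.'' You are right to be worried — the kernel $k_1(\mathbf{x},\mathbf{x}')=\frac{1}{m}\sum_j\bigl[\phi'(\mathbf{w}_j(t)\cdot\mathbf{x})\phi'(\mathbf{w}_j(t)\cdot\mathbf{x}')-\phi'(\mathbf{w}_j(0)\cdot\mathbf{x})\phi'(\mathbf{w}_j(0)\cdot\mathbf{x}')\bigr]$ is a difference of PSD kernels and is not itself PSD — but the fix you propose does not produce the bound you then claim. If you split $k_1=k_1^{(t)}-k_1^{(0)}$ into its two PSD pieces and apply Lemma~\ref{lem:schur} to each, you control $\lVert K^{(t)}\rVert_2$ and $\lVert K^{(0)}\rVert_2$ separately in terms of $\sup_\mathbf{x}k_1^{(t)}(\mathbf{x},\mathbf{x})=\sup_\mathbf{x}\frac{1}{m}\sum_j\phi'(\mathbf{w}_j(t)\cdot\mathbf{x})^2$ and the analogous quantity at time $0$, each of which is $\Theta(1)$ (roughly half the neurons are active at any $\mathbf{x}$). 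The triangle inequality then yields only $\lVert H_t-H_0\rVert_2\lesssim\frac{1}{2d}$, which does not depend on $m$ at all, does not involve the flip count, and is far too weak for Assumption~\ref{ass:nm}\ref{ass:approximation_m}. The reduction to bounding $\sup_\mathbf{x}\frac{1}{m}\lvert\{j:\phi'(\mathbf{w}_j(t)\cdot\mathbf{x})\neq\phi'(\mathbf{w}_j(0)\cdot\mathbf{x})\}\rvert$ does not follow from the split: that quantity is the sup of $\lvert k_1^{(t)}(\mathbf{x},\mathbf{x})-k_1^{(0)}(\mathbf{x},\mathbf{x})\rvert$, i.e.\ the diagonal of the \emph{difference}, and splitting loses exactly that cancellation. (For a sanity check that no ``Weyl-type'' rescue is available, observe that two PSD matrices can have identical diagonals yet arbitrarily separated spectra — e.g.\ $\begin{psmallmatrix}1&1\\1&1\end{psmallmatrix}$ and $\begin{psmallmatrix}1&-1\\-1&1\end{psmallmatrix}$ — so no bound on $\lVert K^{(t)}-K^{(0)}\rVert_2$ can be read off the diagonal difference alone.)

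For the record, the paper does not split either: it applies Lemma~\ref{lem:schur} \emph{directly} to the non-PSD difference kernel (both here and in Lemma~\ref{lem:probability_weights}\ref{H_0H}), obtaining $\lVert H_t-H_0\rVert_2\leq\frac{1}{2d}\sup_\mathbf{x}\lvert k_1(\mathbf{x},\mathbf{x})\rvert$, which is exactly $\frac{1}{2d}$ times the flip fraction. That step is itself outside the stated hypotheses of Lemma~\ref{lem:schur} (which, via its feature-map/Dirac argument, really does require $k_1$ PSD), so the paper's derivation is also on shaky ground at precisely this point. But your proposed repair is not a repair — it replaces the paper's unjustified step by a justified step that yields the wrong (too-weak) bound. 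To actually close the gap you would need an argument that exploits the difference structure more carefully, e.g.\ writing $\langle f,(H_t-H_0)f\rangle_2$ as a sum of $\lVert\mathbf{v}_j^t\rVert^2-\lVert\mathbf{v}_j^0\rVert^2$ over neurons, applying Cauchy--Schwarz via the auxiliary PSD operator $\tilde H_t$ with kernel proportional to $\sum_j\Delta_j(\mathbf{x})\Delta_j(\mathbf{x}')(\mathbf{x}\cdot\mathbf{x}')$ where $\Delta_j=\phi'(\mathbf{w}_j(t)\cdot\cdot)-\phi'(\mathbf{w}_j(0)\cdot\cdot)$; this yields a bound of order $\frac{1}{d}\sqrt{\text{flip fraction}}$, which is weaker than the paper's claim by a square root and would require modifying the assumptions downstream.
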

\begin{proof}
    \begin{enumerate}[(i)]
        \item First see that
        \begin{alignat*}{2}
            &(H_t-H_0)f(\mathbf{x})=\mathbb{E}_{\mathbf{x}'}\left[\left(\langle G_t(\mathbf{x}),G_t(\mathbf{x}')\rangle_\text{F}-\langle G_0(\mathbf{x}),G_0(\mathbf{x}')\rangle_\text{F}\right)f(\mathbf{x}')\right]\\
            &=\mathbb{E}_{\mathbf{x}'}\left[\frac{\mathbf{x}\cdot\mathbf{x}'}{m}\sum^m_{j=1}\left(\phi'(\mathbf{w}_j(t)\cdot\mathbf{x})\phi'(\mathbf{w}_j(t)\cdot\mathbf{x}')-\phi'(\mathbf{w}_j(0)\cdot\mathbf{x})\phi'(\mathbf{w}_j(0)\cdot\mathbf{x}')\right)f(\mathbf{x}')\right]. 
        \end{alignat*}
        We use the same linear operator \(\Xi\) as in the proof of Lemma~\ref{lem:non_random}\ref{H_W}, which we recall to be
        \[\Xi(f)(\mathbf{x})=\mathbb{E}_{\mathbf{x}'}[\mathbf{x}\cdot\mathbf{x}'f(\mathbf{x}')],\]
        and we also recall that \(\lVert\Xi\rVert_2\leq\frac{1}{2d}\). Now applying Lemma~\ref{lem:schur}, we see that
        \begin{alignat*}{2}
            \lVert H_t-H_0\rVert_2&\leq\frac{1}{2d}\sup_{\mathbf{x}\in\mathbb{S}^{d-1}}\left\lvert\frac{1}{m}\sum^m_{j=1}\left(\phi'(\mathbf{w}_j(t)\cdot\mathbf{x})^2-\phi'(\mathbf{w}_j(0)\cdot\mathbf{x})^2\right)\right\rvert\\
            &\leq\frac{1}{2d}\sup_{\mathbf{x}\in\mathbb{S}^{d-1}}\frac{1}{m}\sum^m_{j=1}\left\lvert\phi'(\mathbf{w}_j(t)\cdot\mathbf{x})^2-\phi'(\mathbf{w}_j(0)\cdot\mathbf{x})^2\right\rvert\\
            &=\frac{1}{2d}\sup_{\mathbf{x}\in\mathbb{S}^{d-1}}\frac{1}{m}\sum^m_{j=1}\mathbf{1}\left\{\phi'(\mathbf{w}_j(t)\cdot\mathbf{x})\neq\phi'(\mathbf{w}_j(0)\cdot\mathbf{x})\right\}.
        \end{alignat*}
        Here, for each \(j=1,...,m\), in order for \(\phi'(\mathbf{w}_j(t)\cdot\mathbf{x})\neq\phi'(\mathbf{w}_j(0)\cdot\mathbf{x})\), there must be some \(\mathbf{v}\in\mathbb{R}^d\) on the weight trajectory, such that \(\mathbf{v}\cdot\mathbf{x}=0\) and
        \[\lVert\mathbf{v}-\mathbf{w}_j(0)\rVert_2\leq\frac{2\sqrt{2}}{\lambda_\epsilon\sqrt{md}}.\]
        But by Lemma~\ref{lem:probability_weights}\ref{neurons_zero_approximation}, there only exist at most \(\frac{\sqrt{m}}{\sqrt{d}\lambda_\epsilon}(3\sqrt{2}+\sqrt{\log m})\) neurons such that this happens. Hence,
        \[\lVert H_t-H_0\rVert_2\leq\frac{1}{2\sqrt{md^3}\lambda_\epsilon}(3\sqrt{2}+\sqrt{\log m}).\]
        \item See that
        \begin{alignat*}{2}
            \lVert\nabla_WR_t\rVert_\text{F}^2&=\lVert2\langle G_t,\zeta_t\rangle_2\rVert^2_\text{F}\\
            &=4\langle\zeta_t,H_t\zeta_t\rangle_2\\
            &=4\langle\zeta_t,H\zeta_t\rangle_2+4\langle\zeta_t,(H_0-H)\zeta_t\rangle_2+4\langle\zeta_t,(H_t-H_0)\zeta_t\rangle_2\\
            &\geq\underbrace{4\langle\zeta_t,H\zeta_t\rangle_2}_{\text{(a)}}-\underbrace{4\lvert\langle\zeta_t,(H_0-H)\zeta_t\rangle_2\rvert}_{\text{(b)}}-\underbrace{4\lvert\langle\zeta_t,(H_t-H_0)\zeta_t\rangle_2\rvert}_{\text{(c)}}.
        \end{alignat*}
        We look at (a), (b) and (c) separately. 
        \begin{enumerate}[(a)]
            \item Recall that \(T'_\epsilon\) is defined as 
            \[T'_\epsilon=\min\{t\in\mathbb{R}_+:\lVert\zeta_t^{L_\epsilon}\rVert_2\leq\lVert\tilde{\zeta}_t^{L_\epsilon}\rVert_2\}=\min\{t\in\mathbb{R}_+:\lVert\zeta_t^{L_\epsilon}\rVert_2^2\leq\frac{1}{2}\lVert\zeta_t\rVert_2^2\}.\]
            Since \(t\leq T'_\epsilon\), we have
            \[4\langle\zeta_t,H\zeta_t\rangle_2=4\sum^\infty_{l=1}\lambda_l\langle\zeta_t,\varphi_l\rangle_2^2\geq4\sum^{L_\epsilon}_{l=1}\lambda_l\langle\zeta_t,\varphi_l\rangle_2^2\geq4\lambda_\epsilon\lVert\zeta^{L_\epsilon}_t\rVert_2^2\geq2\lambda_\epsilon\lVert\zeta_t\rVert_2^2.\]
            \item By the Cauchy-Schwarz inequality and Lemma~\ref{lem:probability_weights}\ref{H_0H},
            \[4\lvert\langle\zeta_t,(H_0-H)\zeta_t\rangle_2\rvert\leq4\lVert\zeta_t\rVert_2^2\lVert H_0-H\rVert_2\leq10\lVert\zeta_t\rVert_2^2\sqrt{\frac{\log(2m)}{md}}.\]
            \item By the Cauchy-Schwarz inequality and part~\ref{H_tH_0},
            \[4\lvert\langle\zeta_t,(H_t-H_0)\zeta_t\rangle_2\rvert\leq4\lVert\zeta_t\rVert_2^2\lVert H_t-H_0\rVert_2\leq\frac{2}{\sqrt{md^3}\lambda_\epsilon}(3\sqrt{2}+\sqrt{\log m})\lVert\zeta_t\rVert_2^2.\]
        \end{enumerate}
        Putting (a), (b) and (c) together and applying Assumption~\ref{ass:nm}\ref{ass:approximation_m} that
        \[\lambda_\epsilon\geq10\sqrt{\frac{\log(2m)}{md}}+\frac{2}{\sqrt{md^3}\lambda_\epsilon}(3\sqrt{2}+\sqrt{\log m}),\]
        we have
        \[\lVert\nabla_WR_t\rVert^2_\text{F}\geq\left(2\lambda_\epsilon-10\sqrt{\frac{\log(2m)}{md}}-\frac{2}{\sqrt{md^3}\lambda_\epsilon}(3\sqrt{2}+\sqrt{\log m})\right)\lVert\zeta_t\rVert^2_2\geq\lambda_\epsilon\lVert\zeta_t\rVert_2^2.\]
        \item Differentiate both sides of \(R_t=\lVert\zeta_t\rVert_2^2+R(f^\star)\) with respect to \(t\) and apply the chain rule to obtain
        \[\frac{dR_t}{dt}=2\lVert\zeta_t\rVert_2\frac{d\lVert\zeta_t\rVert_2}{dt}\quad\implies\quad\frac{d\lVert\zeta_t\rVert_2}{dt}=\frac{1}{2\lVert\zeta_t\rVert_2}\frac{dR_t}{dt}.\]
        We apply the chain rule and part~\ref{nabla_WR_t} to see that
        \[\frac{dR_t}{dt}=\left\langle\nabla_WR_t,\frac{dW}{dt}\right\rangle_\text{F}=-\lVert\nabla_WR_t\rVert_\text{F}^2\leq-\lambda_\epsilon\lVert\zeta_t\rVert_2^2.\]
        Hence, substituting this into above,
        \[\frac{d\lVert\zeta_t\rVert_2}{dt}\leq-\frac{\lambda_\epsilon}{2}\lVert\zeta_t\rVert_2.\]
        \item We apply Gr\"onwall's inequality and the fact that \(\lVert\zeta_0\rVert_2=\lVert f^\star\rVert_2\leq1\) to see that
        \[\lVert\zeta_t\rVert_2\leq\lVert\zeta_0\rVert_2\exp\left(-\frac{1}{2}\lambda_\epsilon t\right)\leq\exp\left(-\frac{1}{2}\lambda_\epsilon t\right).\]
    \end{enumerate}
\end{proof}
Finally, we prove that \(S_\epsilon\subseteq[0,T'_\epsilon]\) is inductive. Then we know from Appendix~\ref{subsec:real_induction} that \(S_\epsilon=[0,T'_\epsilon]\).
\begin{theorem}\label{thm:approximation}
    Suppose that Assumption~\ref{ass:delta}\ref{ass:delta_approximation} and Assumption~\ref{ass:nm}\ref{ass:approximation_m} are satisfied. Then \(S_\epsilon\subseteq[0,T'_\epsilon]\) is inductive. 
\end{theorem}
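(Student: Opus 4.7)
The plan is to mirror the proof of Theorem~\ref{thm:overfitting}, verifying each of (RI1), (RI2), (RI3) for the set $S_\epsilon \subseteq [0, T'_\epsilon]$. The three ingredients will be: the per-neuron gradient bound $\lVert \nabla_{\mathbf{w}_j} R_t \rVert_2 \leq \sqrt{2/(md)}\,\lVert \zeta_t \rVert_2$ from Lemma~\ref{lem:non_random}\ref{H_W}, the exponential approximation-error decay $\lVert \zeta_t \rVert_2 \leq \exp(-\lambda_\epsilon t / 2)$ from Lemma~\ref{lem:approximation}\ref{zeta_t} (valid whenever $t \in S_\epsilon$), and continuity of the gradient flow trajectory in $t$.

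First, (RI1) is immediate, since $\mathbf{w}_j(0) - \mathbf{w}_j(0) = 0 < \tfrac{2\sqrt{2}}{\lambda_\epsilon \sqrt{md}}$ for every $j$. For (RI2), I would fix $T \in S_\epsilon$ with $T < T'_\epsilon$ and, for each $j$, set
\[\gamma_j = \frac{\sqrt{md}\,\lambda_\epsilon}{4}\Paren{\frac{2\sqrt{2}}{\lambda_\epsilon \sqrt{md}} - \lVert \mathbf{w}_j(T) - \mathbf{w}_j(0) \rVert_2} > 0.\]
Using the triangle inequality, the identity $\tfrac{d\mathbf{w}_j}{dt} = -\nabla_{\mathbf{w}_j} R_t$, Lemma~\ref{lem:non_random}\ref{H_W}, and the crude bound $\lVert \zeta_t \rVert_2 \leq 1$ valid on $S_\epsilon$ (by Lemma~\ref{lem:approximation}\ref{zeta_t}), one obtains on $[T, T + \gamma_j]$ a bound of the form $\lVert \mathbf{w}_j(t) - \mathbf{w}_j(0) \rVert_2 \leq \lVert \mathbf{w}_j(T) - \mathbf{w}_j(0) \rVert_2 + (t-T)\sqrt{2/(md)}$, and the choice of $\gamma_j$ keeps this strictly below $\tfrac{2\sqrt{2}}{\lambda_\epsilon \sqrt{md}}$. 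Taking $\gamma = \min_j \gamma_j > 0$ gives $[T, T+\gamma] \cap [0, T'_\epsilon] \subseteq S_\epsilon$.

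For (RI3), the main step, I would fix $T \in (0, T'_\epsilon]$ with $[0, T) \subseteq S_\epsilon$, so that Lemma~\ref{lem:approximation}\ref{zeta_t} applies on $[0, T)$. Then for each $j = 1, \ldots, m$,
\begin{align*}
\lVert \mathbf{w}_j(T) - \mathbf{w}_j(0) \rVert_2
&= \left\lVert \int_0^T \frac{d\mathbf{w}_j}{dt} \, dt \right\rVert_2
\leq \int_0^T \lVert \nabla_{\mathbf{w}_j} R_t \rVert_2 \, dt \\
&\leq \sqrt{\frac{2}{md}} \int_0^T \lVert \zeta_t \rVert_2 \, dt
\leq \sqrt{\frac{2}{md}} \int_0^T \exp\Paren{-\tfrac{\lambda_\epsilon t}{2}} dt \\
&\leq \sqrt{\frac{2}{md}} \cdot \frac{2}{\lambda_\epsilon}\Paren{1 - e^{-\lambda_\epsilon T/2}}
< \frac{2\sqrt{2}}{\lambda_\epsilon \sqrt{md}},
\end{align*}
so $T \in S_\epsilon$. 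Since all three inductive conditions hold, real induction gives $S_\epsilon = [0, T'_\epsilon]$.

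The one delicate point is that (RI3) requires the exponential decay bound from Lemma~\ref{lem:approximation}\ref{zeta_t} to already be in force on $[0, T)$; this is exactly what the real-induction hypothesis $[0, T) \subseteq S_\epsilon$ supplies (since Lemma~\ref{lem:approximation} is stated for $t \in S_\epsilon$), closing the circularity. The strict inequality in the final line above, coming from the $1 - e^{-\lambda_\epsilon T / 2} < 1$ factor, is precisely what is needed to match the strict inequality in the definition of $S_\epsilon$; I expect this to be the only place where any care is required.
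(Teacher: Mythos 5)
Your proof follows the same three-step real-induction structure as the paper, using the same two ingredients in the same places: Lemma~\ref{lem:non_random}\ref{H_W} to bound $\lVert\nabla_{\mathbf{w}_j}R_t\rVert_2$ by $\sqrt{2/(md)}\lVert\zeta_t\rVert_2$, and Lemma~\ref{lem:approximation}\ref{zeta_t} inside the (RI3) integral to get the $2/\lambda_\epsilon$ factor. The only difference is the explicit form of $\gamma_j$ in (RI2): the paper takes $\gamma_j = \tfrac{1}{\lambda_\epsilon} - \tfrac{\sqrt{md}}{2\sqrt{2}}\lVert\mathbf{w}_j(T)-\mathbf{w}_j(0)\rVert_2$, while you take a rescaled version that works provided $\tfrac{\sqrt{2}\lambda_\epsilon}{4} < 1$ (which always holds since $\lambda_\epsilon \leq \lambda_1 = \tfrac{1}{4d}$); either choice yields the needed strict inequality. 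One small point of accounting: in (RI2) you attribute the crude bound $\lVert\zeta_t\rVert_2 \leq 1$ to Lemma~\ref{lem:approximation}\ref{zeta_t}, which technically requires $t \in S_\epsilon$ on the very interval you are trying to place inside $S_\epsilon$; the paper's write-up has the same superficial circularity, and it is harmless because $\lVert\zeta_t\rVert_2 \leq \lVert\zeta_0\rVert_2 \leq 1$ holds unconditionally along population gradient flow (the risk $R_t = \lVert\zeta_t\rVert_2^2 + R(f^\star)$ is nonincreasing), so the bound does not in fact need the lemma. In short: correct, and essentially the paper's proof modulo an equivalent constant.
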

\begin{proof}
    We prove each of (RI1), (RI2) and (RI3) for the set \(S_\epsilon\). 
    \begin{enumerate}[(R{I}1)]
        \item Obvious. 
        \item Fix some \(T\in[0,T'_\epsilon)\), and suppose that \(T\in S_\epsilon\). Then we want to show that there exists some \(\gamma>0\) such that \([T,T+\gamma]\subseteq S_\epsilon\). Since \(T\in S_\epsilon\), we have \(\lVert\mathbf{w}_j(T)-\mathbf{w}_j(0)\rVert_\text{F}<\frac{2\sqrt{2}}{\lambda_\epsilon\sqrt{md}}\) for each \(j=1,...,m\). Define
        \[\gamma_j=\frac{1}{\lambda_\epsilon}-\frac{\sqrt{md}\lVert\mathbf{w}_j(T)-\mathbf{w}_j(0)\rVert_\text{F}}{2\sqrt{2}}.\]
        Then \(\gamma_j>0\), and for all \(t\in[T,T+\gamma_j]\),
        \begin{alignat*}{2}
            \lVert\mathbf{w}_j(t)-\mathbf{w}_j(0)\rVert_\text{F}&\leq\lVert\mathbf{w}_j(T)-\mathbf{w}_j(0)\rVert_\text{F}+\lVert\mathbf{w}_j(T)-\mathbf{w}_j(t)\rVert_\text{F}\\
            &=\lVert\mathbf{w}_j(T)-\mathbf{w}_j(0)\rVert_\text{F}+\left\lVert\int^t_T\frac{d\mathbf{w}_j}{dt}dt\right\rVert_\text{F}\\
            &\leq\lVert \mathbf{w}_j(T)-\mathbf{w}_j(0)\rVert_\text{F}+\int^t_T\lVert\nabla_{\mathbf{w}_j}R_t\rVert_\text{F}dt\\
            &\leq\lVert \mathbf{w}_j(T)-\mathbf{w}_j(0)\rVert_\text{F}+\int^t_T\underbrace{\lVert\nabla_{\mathbf{w}_j}R_t\rVert_\text{F}}_{\text{Lemma~\ref{lem:non_random}\ref{H_W}}}dt\\
            &\leq\lVert \mathbf{w}_j(T)-\mathbf{w}_j(0)\rVert_\text{F}+\frac{\sqrt{2}}{\sqrt{md}}\underbrace{\int^t_T\lVert\zeta_t\rVert_2dt}_{\text{Lemma~\ref{lem:approximation}\ref{zeta_t}}}\\
            &\leq\lVert \mathbf{w}_j(T)-\mathbf{w}_j(0)\rVert_\text{F}+\frac{\sqrt{2}(t-T)}{\sqrt{md}}\\
            &\leq\frac{1}{2}\lVert \mathbf{w}_j(T)-\mathbf{w}_j(0)\rVert_\text{F}+\frac{\sqrt{2}}{\lambda_\epsilon\sqrt{md}}\\
            &<\frac{2\sqrt{2}}{\lambda_\epsilon\sqrt{md}}.
        \end{alignat*}
        Now take \(\gamma=\min_{j\in\{1,...,m\}}\gamma_j\). Then \([T,T+\gamma]\subseteq S_\epsilon\) as required.
        \item Fix some \(T\in(0,T'_\epsilon]\) and suppose that \([0,T)\subseteq S_\epsilon\). Then we want to show that \(T\in S_\epsilon\). See that, for each \(j\in\{1,...,m\}\),
        \begin{alignat*}{3}
            \lVert\mathbf{w}_j(T)-\mathbf{w}(0)\rVert_\text{F}&=\left\lVert\int^T_0\frac{d\mathbf{w}_j}{dt}dt\right\rVert_\text{F}\\
            &\leq\int^T_0\lVert\nabla_{\mathbf{w}_j}R_t\rVert_\text{F}dt\\
            &\leq\sqrt{\frac{2}{md}}\int^T_0\lVert\zeta_t\rVert_2dt&&\text{by Lemma~\ref{lem:non_random}\ref{H_W}}\\
            &<\sqrt{\frac{2}{md}}\int^T_0e^{-\frac{\lambda_\epsilon t}{2}}dt\quad&&\text{by Lemma~\ref{lem:approximation}\ref{zeta_t}}\\
            &\leq\frac{2\sqrt{2}}{\lambda_\epsilon\sqrt{md}}.
        \end{alignat*}
        Hence \(T\in S_\epsilon\) as required. 
    \end{enumerate}
    Since all of (RI1), (RI2) and (RI3) are satisfied, \(S_\epsilon\subseteq[0,T'_\epsilon]\) is inductive.
\end{proof}
Now we show that \(T'_\epsilon\) is large enough to ensure that \(T_\epsilon\vcentcolon=\frac{2}{\lambda_\epsilon}\log\left(\frac{2}{\sqrt{\epsilon}}\right)\leq T'_\epsilon\) such that, for all \(t\in[T_\epsilon,T'_\epsilon]\), the approximation error is below the desired level: \(\lVert\zeta_t\rVert_2\leq\frac{1}{2}\sqrt{\epsilon}\). 

\approximation*
\begin{proof}
    Recall from Section~\ref{subsec:full_batch_gf} that we had \(\tilde{R}^{L_\epsilon}_t=\lVert\tilde{\zeta}^{L_\epsilon}_t\rVert_2^2+R(f^\star)\), the population risk in this subspace. Differentiating both sides of this with respect to \(t\) using the chain rule gives us
    \[\frac{d\tilde{R}^{L_\epsilon}_t}{dt}=2\lVert\tilde{\zeta}^{L_\epsilon}_t\rVert_2\frac{d\lVert\tilde{\zeta}^{L_\epsilon}_t\rVert_2}{dt}\qquad\implies\qquad\frac{d\lVert\tilde{\zeta}^{L_\epsilon}_t\rVert_2}{dt}=\frac{1}{2\lVert\tilde{\zeta}^{L_\epsilon}_t\rVert_2}\frac{d\tilde{R}^{L_\epsilon}_t}{dt}.\]
    Here, see that, by the chain rule,
    \[\frac{d\tilde{R}^{L_\epsilon}_t}{dt}=\left\langle\nabla_W\tilde{R}^{L_\epsilon}_t,\frac{d\tilde{W}^{L_\epsilon}}{dt}\right\rangle_\text{F}=-\lVert\nabla_W\tilde{R}^{L_\epsilon}_t\rVert_\text{F}^2\leq0.\]
    Substituting this back into above, we know that \(\lVert\tilde{\zeta}^{L_\epsilon}_t\rVert_2\) is not increasing. Hence, by our choice of \(L_\epsilon\), 
    \[\lVert\tilde{\zeta}^{L_\epsilon}_t\rVert_2\leq\lVert\tilde{\zeta}^{L_\epsilon}_0\rVert_2\leq\frac{1}{4}\sqrt{\epsilon}\]
    for all \(t\geq0\). 

    Now, as we perform gradient flow from \(t=0\), we know that, by Lemma~\ref{lem:approximation}\ref{zeta_t},\[\lVert\zeta_t\rVert_2\leq\exp\left(-\frac{1}{2}\lambda_\epsilon t\right)\]
    up to \(T'_\epsilon\). Then for all \(t<T_\epsilon\), we have
    \[\lVert\zeta_t\rVert_2>\frac{1}{2}\sqrt{\epsilon}\geq2\lVert\tilde{\zeta}^{L_\epsilon}_0\rVert_2\geq2\lVert\tilde{\zeta}^{L_\epsilon}_t\rVert_2,\]
    which means \(t<T'_\epsilon\) and we can continue gradient flow with Lemma~\ref{lem:approximation}\ref{zeta_t} continuing to hold. After we have reached \(T_\epsilon\), i.e., for all \(t\in[T_\epsilon,T'_\epsilon]\), we have
    \[\lVert\zeta_t\rVert_2\leq\frac{1}{2}\sqrt{\epsilon}\]
    as required.
\end{proof}

\subsection{Proof of Small Estimation Error}\label{sec:estimation}
In this section, we assume that we are still on the high-probability event \(E_3\) of Appendix~\ref{sec:high_probability} with \(\mathbb{P}(E_3)\geq1-\delta\), which means that we can assume all the results from Appendix~\ref{sec:overfitting} and \ref{sec:approximation}.

First, we prove the following decomposition of the estimation error. 
\begin{lemma}\label{lem:estimation}
    For any integer \(U\geq2\) and for any \(T>0\), we have the following decomposition:
    \begin{alignat*}{2}
        \lVert\hat{f}_T-f_T\rVert_2&\leq\frac{1}{\sqrt{d}}\sum^U_{u=1}\frac{(2T)^u}{u!}\left\lVert\frac{1}{n^u}\mathbf{G}_0\mathbf{H}_0^{u-1}\boldsymbol{\xi}_0-\langle G_0,H_0^{u-1}\zeta_0\rangle_2\right\rVert_\textnormal{F}\\
        &\enspace+\frac{2T}{\sqrt{d}}\sup_{t\in[0,T]}\left\lVert\frac{1}{n}(\hat{\mathbf{G}}_t-\hat{\mathbf{G}}_0)\hat{\boldsymbol{\xi}}_t\right\rVert_\textnormal{F}+\frac{2T}{\sqrt{d}}\sup_{t\in[0,T]}\left\lVert\langle G_0-G_t,\zeta_t\rangle_2\right\rVert_\textnormal{F}\\
        &\quad+\frac{1}{\sqrt{d}}\sum^U_{u=2}\frac{(2T)^u}{n^uu!}\sup_{t\in[0,T]}\lVert\mathbf{G}_0\mathbf{H}_0^{u-2}(\hat{\mathbf{H}}_t-\mathbf{H}_0)\hat{\boldsymbol{\xi}}_t\rVert_\textnormal{F}\\
        &\quad\enspace+\frac{1}{\sqrt{d}}\sum^U_{u=2}\frac{(2T)^u}{u!}\sup_{t\in[0,T]}\lVert\langle G_0,H_0^{u-2}(H_0-H_t)\zeta_t\rangle_2\rVert_\textnormal{F}\\
        &\qquad+\frac{2^U}{\sqrt{d}}\left\lVert\int^T_0\int^{t_1}_0...\int^{t_{U-1}}_0\frac{1}{n^U}\mathbf{G}_0\mathbf{H}_0^{U-1}(\hat{\boldsymbol{\xi}}_{t_U}-\boldsymbol{\xi}_0)\right.\\
        &\qquad\enspace\left.-\langle G_0,H_0^{U-1}(\zeta_{t_U}-\zeta_0)\rangle_2dt_Udt_{U-1}...dt_1\right\rVert_\textnormal{F}.
    \end{alignat*}
\end{lemma}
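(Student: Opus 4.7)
The plan is to first reduce the $L^2$ difference of network outputs to the Frobenius difference of weights, and then iteratively peel off $U$ layers from the weight-trajectory difference by adding and subtracting values at $t = 0$. For the reduction, Cauchy–Schwarz on the fixed output layer (with $\|\mathbf{a}\|_2 = \sqrt{m}$) together with the 1-Lipschitzness of $\phi$ gives, for any two weight matrices $W_1, W_2$ and any $\mathbf{x} \in \mathbb{S}^{d-1}$,
$$|f_{W_1}(\mathbf{x}) - f_{W_2}(\mathbf{x})| = \tfrac{1}{\sqrt m}\bigl|\mathbf a \cdot (\phi(W_1\mathbf x) - \phi(W_2\mathbf x))\bigr| \leq \|(W_1 - W_2)\mathbf x\|_2.$$
Squaring, integrating against $\rho_{d-1}$, and using $\mathbb{E}[\mathbf{x}\mathbf{x}^\top] = \tfrac{1}{d}I_d$ yields $\|f_{W_1} - f_{W_2}\|_2 \leq \tfrac{1}{\sqrt d}\|W_1 - W_2\|_F$. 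Applied at time $T$ with $W_1 = \hat W(T)$, $W_2 = W(T)$, and rewriting the weight gap as the integral of the ODE derivatives,
$$\|\hat f_T - f_T\|_2 \;\leq\; \tfrac{1}{\sqrt d}\Bigl\|\!\int_0^T\!\bigl(\tfrac{2}{n}\hat{\mathbf G}_t\hat{\boldsymbol\xi}_t - 2\langle G_t,\zeta_t\rangle_2\bigr)\,dt\Bigr\|_F.$$

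Next, I would apply the identity $\hat{\mathbf G}_t\hat{\boldsymbol\xi}_t = \mathbf G_0\boldsymbol\xi_0 + (\hat{\mathbf G}_t - \mathbf G_0)\hat{\boldsymbol\xi}_t + \mathbf G_0(\hat{\boldsymbol\xi}_t - \boldsymbol\xi_0)$ on the empirical side and the symmetric $\langle G_t,\zeta_t\rangle_2 = \langle G_0,\zeta_0\rangle_2 + \langle G_t - G_0,\zeta_t\rangle_2 + \langle G_0, \zeta_t - \zeta_0\rangle_2$ on the population side. The two constant pieces integrate to $T$ and combine into the $u=1$ concentration-at-initialization contribution $\tfrac{2T}{\sqrt d}\|\tfrac{1}{n}\mathbf G_0\boldsymbol\xi_0 - \langle G_0,\zeta_0\rangle_2\|_F$; the $(\hat{\mathbf G}_t - \mathbf G_0)$ and $(G_t - G_0)$ pieces are dominated by the two stated $\sup$ terms with coefficient $2T$; and the remaining pieces use the dynamics $\hat{\boldsymbol\xi}_t - \boldsymbol\xi_0 = -\tfrac{2}{n}\int_0^t\hat{\mathbf H}_s\hat{\boldsymbol\xi}_s\,ds$ and $\zeta_t - \zeta_0 = -2\int_0^t H_s\zeta_s\,ds$ to become nested double integrals whose integrands are $\hat{\mathbf H}_s\hat{\boldsymbol\xi}_s$ (resp.\ $H_s\zeta_s$), preceded by $\mathbf G_0$ (resp.\ $G_0$). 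Applying the same three-way split to $\hat{\mathbf H}_s\hat{\boldsymbol\xi}_s = \mathbf H_0\boldsymbol\xi_0 + (\hat{\mathbf H}_s - \mathbf H_0)\hat{\boldsymbol\xi}_s + \mathbf H_0(\hat{\boldsymbol\xi}_s - \boldsymbol\xi_0)$ and symmetrically on the population side produces exactly the $u=2$ contributions in the statement, together with a triple nested integral promoted to level $3$.

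Iterating this procedure $U$ times, at level $u$ the constant piece produces $\tfrac{(2T)^u}{u!}\|\tfrac{1}{n^u}\mathbf G_0\mathbf H_0^{u-1}\boldsymbol\xi_0 - \langle G_0, H_0^{u-1}\zeta_0\rangle_2\|_F$ (the $2^u$ coming from one factor of $2/n$ or $2$ per ODE application, and $T^u/u!$ from $u$ nested integrations $\int_0^T\!\cdots\!\int_0^{t_{u-1}}dt_u\cdots dt_1$), the Gram-difference piece gives the two $\sup$ terms with coefficients $\tfrac{(2T)^u}{n^u u!}$ and $\tfrac{(2T)^u}{u!}$ respectively, and the residual is the next-level nested integral. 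After $U$ levels the residual is exactly the $U$-fold nested integral of $\tfrac{2^U}{n^U}\mathbf G_0\mathbf H_0^{U-1}(\hat{\boldsymbol\xi}_{t_U} - \boldsymbol\xi_0) - 2^U\langle G_0, H_0^{U-1}(\zeta_{t_U} - \zeta_0)\rangle_2$, matching the final term in the claimed decomposition. The main obstacle is not analytic but combinatorial: keeping the empirical and population sides symmetric so that at every level the two constants reassemble into the correct $\tfrac{1}{n^u}\mathbf G_0\mathbf H_0^{u-1}\boldsymbol\xi_0 - \langle G_0, H_0^{u-1}\zeta_0\rangle_2$ norm, and verifying that the ODE factors $2/n$ versus $2$ combine with the $u$ nested integrations to yield precisely the coefficients $\tfrac{(2T)^u}{u!}$ and $\tfrac{(2T)^u}{n^u u!}$ rather than something off by a factor; a clean formal induction of the form "after peeling $u$ levels, the bound equals the first $u$ stated contributions plus the canonical level-$(u+1)$ nested-integral residual" reduces each step to a direct substitution of the two derivative identities.
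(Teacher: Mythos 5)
Your proposal is correct and follows essentially the same route as the paper: reduce the $L^2$ gap to $\tfrac{1}{\sqrt d}\|\hat W(T)-W(T)\|_F$, write the weight gap as $\int_0^T(\tfrac{2}{n}\hat{\mathbf G}_t\hat{\boldsymbol\xi}_t - 2\langle G_t,\zeta_t\rangle_2)\,dt$, and peel off $U$ levels by repeatedly adding and subtracting the $t=0$ values (three-way splits on both the empirical and population sides) and substituting the ODEs $\tfrac{d\hat{\boldsymbol\xi}}{dt} = -\tfrac{2}{n}\hat{\mathbf H}_t\hat{\boldsymbol\xi}_t$, $\tfrac{d\zeta}{dt} = -2H_t\zeta_t$, closing the argument by induction on $U$. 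The only cosmetic difference is in the initial reduction: you apply Cauchy--Schwarz directly against the fixed output vector $\mathbf a$ (using $\|\mathbf a\|_2=\sqrt m$) before invoking the Lipschitzness of $\phi$, while the paper first splits over neurons by the triangle inequality and applies Cauchy--Schwarz at the end over the $m$ neuron-wise norms; both yield $\|f_{W_1}-f_{W_2}\|_2\le\tfrac{1}{\sqrt d}\|W_1-W_2\|_F$ via $\mathbb E[\mathbf x\mathbf x^\top]=\tfrac{1}{d}I_d$, and the rest of the argument is identical.
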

\begin{proof}
    We prove this by induction on \(U\). We first look at the base case \(U=2\). As noted before (e.g., in the proof of Lemma~\ref{lem:probability_samples}\ref{spectralnorm}), the vector \(\sqrt{d}\mathbf{x}\) is isotropic \citep[p.45, Exercise 3.3.1]{vershynin2018high}. Then see that
    \begin{alignat*}{2}
        \lVert\hat{f}_T-f_T\rVert_2&\leq\frac{1}{\sqrt{m}}\sum^m_{j=1}\sqrt{\mathbb{E}_\mathbf{x}[(\phi(\hat{\mathbf{w}}_j(T)\cdot\mathbf{x})-\phi(\mathbf{w}_j(T)\cdot\mathbf{x}))^2]}\qquad\text{triangle inequality}\\
        &\leq\frac{1}{\sqrt{m}}\sum^m_{j=1}\sqrt{\mathbb{E}_\mathbf{x}[((\hat{\mathbf{w}}_j(T)-\mathbf{w}_j(T))\cdot\mathbf{x})^2]}\\
        &=\frac{1}{\sqrt{dm}}\sum^m_{j=1}\sqrt{\mathbb{E}_\mathbf{x}[((\hat{\mathbf{w}}_j(T)-\mathbf{w}_j(T))\cdot(\sqrt{d}\mathbf{x}))^2]}\\
        &=\frac{1}{\sqrt{dm}}\sum^m_{j=1}\lVert\hat{\mathbf{w}}_j(T)-\mathbf{w}_j(T)\rVert_2\qquad\text{\citep[p.43, Lemma 3.2.3]{vershynin2018high}}\\
        &\leq\frac{1}{\sqrt{d}}\lVert\hat{W}(T)-W(T)\rVert_\text{F}\\
        &=\frac{1}{\sqrt{d}}\lVert\hat{W}(T)-W(0)-(W(T)-W(0))\rVert_\text{F}\\
        &=\frac{1}{\sqrt{d}}\left\lVert\int^T_0\frac{d\hat{W}}{dt}\Bigr|_{t_1}-\frac{dW}{dt}\Bigr|_{t_1}dt_1\right\rVert_\text{F}\\
        &=\frac{2}{\sqrt{d}}\left\lVert\int^T_0\frac{1}{n}\hat{\mathbf{G}}_{t_1}\hat{\boldsymbol{\xi}}_{t_1}-\frac{1}{n}\hat{\mathbf{G}}_0\hat{\boldsymbol{\xi}}_0+\frac{1}{n}\hat{\mathbf{G}}_0\hat{\boldsymbol{\xi}}_0-\langle G_0,\zeta_0\rangle_2\right.\\
        &\left.\qquad\qquad\qquad\qquad\qquad\qquad\qquad\qquad\qquad\qquad+\langle G_0,\zeta_0\rangle_2-\langle G_{t_1},\zeta_{t_1}\rangle_2dt\right\rVert_\text{F}\\
        &\leq\frac{2}{\sqrt{d}}\int^T_0\left\lVert\frac{1}{n}\mathbf{G}_0\boldsymbol{\xi}_0-\langle G_0,\zeta_0\rangle_2\right\rVert_\text{F}dt_1\\
        &\qquad+\frac{2}{\sqrt{d}}\left\lVert\int^T_0\frac{1}{n}\hat{\mathbf{G}}_{t_1}\hat{\boldsymbol{\xi}}_{t_1}-\frac{1}{n}\hat{\mathbf{G}}_0\hat{\boldsymbol{\xi}}_0+\langle G_0,\zeta_0\rangle_2-\langle G_{t_1},\zeta_{t_1}\rangle_2dt_1\right\rVert_\text{F}\\
        &\leq\frac{2T}{\sqrt{d}}\left\lVert\frac{1}{n}\mathbf{G}_0\boldsymbol{\xi}_0-\langle G_0,\zeta_0\rangle_2\right\rVert_\text{F}\\
        &\quad+\frac{2}{\sqrt{d}}\left\lVert\int^T_0\frac{1}{n}(\hat{\mathbf{G}}_{t_1}-\hat{\mathbf{G}}_0)\hat{\boldsymbol{\xi}}_{t_1}dt_1\right\rVert_\text{F}+\frac{2}{\sqrt{d}}\left\lVert\int^T_0\langle G_0-G_{t_1},\zeta_{t_1}\rangle_2dt_1\right\rVert_\text{F}\\
        &\qquad+\frac{2}{\sqrt{d}}\left\lVert\int^T_0\frac{1}{n}\mathbf{G}_0(\hat{\boldsymbol{\xi}}_{t_1}-\boldsymbol{\xi}_0)-\langle G_0,\zeta_{t_1}-\zeta_0\rangle_2dt_1\right\rVert_\text{F}\\
        &\leq\frac{2T}{\sqrt{d}}\left\lVert\frac{1}{n}\mathbf{G}_0\boldsymbol{\xi}_0-\langle G_0,\zeta_0\rangle_2\right\rVert_\text{F}\\
        &\quad+\frac{2T}{\sqrt{d}}\sup_{t\in[0,T]}\left\lVert\frac{1}{n}(\hat{\mathbf{G}}_t-\mathbf{G}_0)\hat{\boldsymbol{\xi}}_t\right\rVert_\text{F}+\frac{2T}{\sqrt{d}}\sup_{t\in[0,T]}\left\lVert\langle G_0-G_t,\zeta_t\rangle_2\right\rVert_\text{F}\\
        &\qquad+\frac{2}{\sqrt{d}}\left\lVert\int^T_0\frac{1}{n}\mathbf{G}_0(\hat{\boldsymbol{\xi}}_{t_1}-\boldsymbol{\xi}_0)-\langle G_0,\zeta_{t_1}-\zeta_0\rangle_2dt_1\right\rVert_\text{F}.\tag{*}
    \end{alignat*}
    Here, for the last term, 
    \begin{alignat*}{2}
        &\frac{2}{\sqrt{d}}\left\lVert\int^T_0\frac{1}{n}\mathbf{G}_0(\hat{\boldsymbol{\xi}}_{t_1}-\boldsymbol{\xi}_0)-\langle G_0,\zeta_{t_1}-\zeta_0\rangle_2dt_1\right\rVert_\text{F}\\
        &=\frac{2}{\sqrt{d}}\left\lVert\int^T_0\frac{1}{n}\mathbf{G}_0\left(\int^{t_1}_0\frac{d\hat{\boldsymbol{\xi}}}{dt_2}dt_2\right)-\left\langle G_0,\int^{t_1}_0\frac{d\zeta}{dt_2}dt_2\right\rangle_2dt_1\right\rVert_\text{F}\\
        &=\frac{2}{\sqrt{d}}\left\lVert-\int^T_0\frac{1}{n}\mathbf{G}_0\int^{t_1}_0\frac{2}{n}\hat{\mathbf{H}}_{t_2}\hat{\boldsymbol{\xi}}_{t_2}dt_2+\left\langle G_0,\int^{t_1}_02H_{t_2}\zeta_{t_2}dt_2\right\rangle_2dt_1\right\rVert_\text{F}\\
        &=\frac{4}{\sqrt{d}}\left\lVert\int^T_0\int^{t_1}_0\frac{1}{n^2}\mathbf{G}_0\hat{\mathbf{H}}_{t_2}\hat{\boldsymbol{\xi}}_{t_2}-\frac{1}{n^2}\mathbf{G}_0\mathbf{H}_0\boldsymbol{\xi}_0+\frac{1}{n^2}\mathbf{G}_0\mathbf{H}_0\boldsymbol{\xi}_0\right.\\
        &\qquad\left.-\langle G_0,H_0\zeta_0\rangle_2+\langle G_0,H_0\zeta_0\rangle_2-\langle G_0,H_{t_2}\zeta_{t_2}\rangle_2dt_2dt_1\right\rVert_\text{F}\\
        &\leq\frac{2T^2}{\sqrt{d}}\left\lVert\frac{1}{n^2}\mathbf{G}_0\mathbf{H}_0\boldsymbol{\xi}_0-\langle G_0,H_0\zeta_0\rangle_2\right\rVert_\text{F}\\
        &\quad+\frac{4}{\sqrt{d}}\left\lVert\int^T_0\int^{t_1}_0\frac{1}{n^2}\mathbf{G}_0\left[(\hat{\mathbf{H}}_{t_2}-\mathbf{H}_0)\hat{\boldsymbol{\xi}}_{t_2}+\mathbf{H}_0(\hat{\boldsymbol{\xi}}_{t_2}-\boldsymbol{\xi}_0)\right]\right.\\
        &\qquad\left.+\left\langle G_0,H_0(\zeta_0-\zeta_{t_2})+(H_0-H_{t_2})\zeta_{t_2}\right\rangle_2dt_2dt_1\right\rVert_\text{F}\\
        &\leq\frac{2T^2}{\sqrt{d}}\left\lVert\frac{1}{n^2}\mathbf{G}_0\mathbf{H}_0\boldsymbol{\xi}_0-\langle G_0,H_0\zeta_0\rangle_2\right\rVert_\text{F}\\
        &\quad+\frac{2T^2}{\sqrt{d}n^2}\sup_{t\in[0,T]}\left\lVert\mathbf{G}_0(\hat{\mathbf{H}}_t-\mathbf{H}_0)\hat{\boldsymbol{\xi}}_t\right\rVert_\text{F}+\frac{2T^2}{\sqrt{d}}\sup_{t\in[0,T]}\left\lVert\langle G_0,(H_0-H_t)\zeta_t\rangle_2\right\rVert_\text{F}\\
        &\qquad+\frac{4}{\sqrt{d}}\left\lVert\int^T_0\int^{t_1}_0\frac{1}{n^2}\mathbf{G}_0\mathbf{H}_0(\hat{\boldsymbol{\xi}}_{t_2}-\boldsymbol{\xi}_0)-\langle G_0,H_0(\zeta_{t_2}-\zeta_0)\rangle_2dt_2dt_1\right\rVert_\text{F}. 
    \end{alignat*}
    Now, putting this into (*), we have
    \begin{alignat*}{2}
        \lVert\hat{f}_T-f_T\rVert_2&\leq\frac{2T}{\sqrt{d}}\left\lVert\frac{1}{n}\mathbf{G}_0\boldsymbol{\xi}_0-\langle G_0,\zeta_0\rangle_2\right\rVert_\text{F}\\
        &\enspace+\frac{2T}{\sqrt{d}}\sup_{t\in[0,T]}\left\lVert\frac{1}{n}(\hat{\mathbf{G}}_t-\mathbf{G}_0)\hat{\boldsymbol{\xi}}_t\right\rVert_\text{F}+\frac{2T}{\sqrt{d}}\sup_{t\in[0,T]}\left\lVert\langle G_0-G_t,\zeta_t\rangle_2\right\rVert_\text{F}\\
        &\quad+\frac{2T^2}{\sqrt{d}}\left\lVert\frac{1}{n^2}\mathbf{G}_0\mathbf{H}_0\boldsymbol{\xi}_0-\langle G_0,H_0\zeta_0\rangle_2\right\rVert_\text{F}\\
        &\quad\enspace+\frac{2T^2}{\sqrt{d}n^2}\sup_{t\in[0,T]}\left\lVert\mathbf{G}_0(\hat{\mathbf{H}}_t-\mathbf{H}_0)\hat{\boldsymbol{\xi}}_t\right\rVert_\text{F}+\frac{2T^2}{\sqrt{d}}\sup_{t\in[0,T]}\left\lVert\langle G_0,(H_0-H_t)\zeta_t\rangle_2\right\rVert_\text{F}\\
        &\qquad+\frac{4}{\sqrt{d}}\left\lVert\int^T_0\int^{t_1}_0\frac{1}{n^2}\mathbf{G}_0\mathbf{H}_0(\hat{\boldsymbol{\xi}}_{t_2}-\boldsymbol{\xi}_0)-\langle G_0,H_0(\zeta_{t_2}-\zeta_0)\rangle_2dt_2dt_1\right\rVert_\text{F}\\
        &=\frac{1}{\sqrt{d}}\sum^2_{u=1}\frac{(2T)^u}{u!}\left\lVert\frac{1}{n^u}\mathbf{G}_0\mathbf{H}_0^{u-1}\boldsymbol{\xi}_0-\langle G_0,H_0^{u-1}\zeta_0\rangle_2\right\rVert_\text{F}\\
        &\enspace+\frac{2T}{\sqrt{d}}\sup_{t\in[0,T]}\left\lVert\frac{1}{n}(\hat{\mathbf{G}}_t-\mathbf{G}_0)\hat{\boldsymbol{\xi}}_t\right\rVert_\text{F}+\frac{2T}{\sqrt{d}}\sup_{t\in[0,T]}\left\lVert\langle G_0-G_t,\zeta_t\rangle_2\right\rVert_\text{F}\\
        &\quad+\frac{1}{\sqrt{d}}\sum^2_{u=2}\frac{(2T)^u}{n^uu!}\sup_{t\in[0,T]}\left\lVert\mathbf{G}_0\mathbf{H}_0^{u-2}(\hat{\mathbf{H}}_t-\mathbf{H}_0)\hat{\boldsymbol{\xi}}_t\right\rVert_\text{F}\\
        &\quad\enspace+\frac{1}{\sqrt{d}}\sum^2_{u=2}\frac{(2T)^u}{u!}\sup_{t\in[0,T]}\left\lVert\langle G_0,H_0^{u-2}(H_0-H_t)\zeta_t\rangle_2\right\rVert_\text{F}\\
        &\qquad+\frac{2^2}{\sqrt{d}}\left\lVert\int^T_0\int^{t_1}_0\frac{1}{n^2}\mathbf{G}_0\mathbf{H}_0^{2-1}(\hat{\boldsymbol{\xi}}_{t_2}-\boldsymbol{\xi}_0)-\langle G_0,H_0^{2-1}(\zeta_{t_2}-\zeta_0)\rangle_2dt_2dt_1\right\rVert_\text{F}.
    \end{alignat*}
    So the base case \(u=2\) holds. Suppose that the claim is true for \(u\), i.e.,the following holds:
    \begin{alignat*}{2}
        \lVert\hat{f}_T-f_T\rVert_2&\leq\frac{1}{\sqrt{d}}\sum^U_{u=1}\frac{(2T)^u}{u!}\left\lVert\frac{1}{n^u}\mathbf{G}_0\mathbf{H}_0^{u-1}\boldsymbol{\xi}_0-\langle G_0,H_0^{u-1}\zeta_0\rangle_2\right\rVert_\textnormal{F}\\
        &\enspace+\frac{2T}{\sqrt{d}}\sup_{t\in[0,T]}\left\lVert\frac{1}{n}(\hat{\mathbf{G}}_t-\hat{\mathbf{G}}_0)\hat{\boldsymbol{\xi}}_t\right\rVert_\textnormal{F}+\frac{2T}{\sqrt{d}}\sup_{t\in[0,T]}\left\lVert\langle G_0-G_t,\zeta_t\rangle_2\right\rVert_\text{F}\\
        &\quad+\frac{1}{\sqrt{d}}\sum^U_{u=2}\frac{(2T)^u}{n^uu!}\sup_{t\in[0,T]}\lVert\mathbf{G}_0\mathbf{H}_0^{u-2}(\hat{\mathbf{H}}_t-\mathbf{H}_0)\hat{\boldsymbol{\xi}}_t\rVert_\textnormal{F}\\
        &\quad\enspace+\frac{1}{\sqrt{d}}\sum^U_{u=2}\frac{(2T)^u}{u!}\sup_{t\in[0,T]}\lVert\langle G_0,H_0^{u-2}(H_0-H_t)\zeta_t\rangle_2\rVert_\textnormal{F}\\
        &\qquad+\frac{2^U}{\sqrt{d}}\left\lVert\int^T_0\int^{t_1}_0...\int^{t_{U-1}}_0\frac{1}{n^U}\mathbf{G}_0\mathbf{H}_0^{U-1}(\hat{\boldsymbol{\xi}}_{t_U}-\boldsymbol{\xi}_0)\right.\\
        &\qquad\enspace\left.-\langle G_0,H_0^{U-1}(\zeta_{t_U}-\zeta_0)\rangle_2dt_Udt_{U-1}...dt_1\right\rVert_\textnormal{F}.\tag{**}
    \end{alignat*}
    Consider the last term involving the norm of an integral:
    \begin{alignat*}{2}
        &\frac{2^U}{\sqrt{d}}\left\lVert\int^T_0\int^{t_1}_0...\int^{t_{U-1}}_0\frac{1}{n^U}\mathbf{G}_0\mathbf{H}_0^{U-1}(\hat{\boldsymbol{\xi}}_{t_U}-\boldsymbol{\xi}_0)-\langle G_0,H_0^{U-1}(\zeta_{t_U}-\zeta_0)\rangle_2dt_Udt_{U-1}...dt_1\right\rVert_\textnormal{F}\\
        &=\frac{2^U}{\sqrt{d}}\left\lVert\int^T_0\int^{t_1}_0...\int^{t_{U-1}}_0\frac{1}{n^U}\mathbf{G}_0\mathbf{H}_0^{U-1}\int^{t_U}_0\frac{d\hat{\boldsymbol{\xi}}_{t_{U+1}}}{dt_{U+1}}dt_{U+1}\right.\\
        &\qquad\left.-\left\langle G_0,H_0^{U-1}\int^{t_U}_0\frac{d\zeta}{dt_{U+1}}dt_{U+1}\right\rangle_2dt_Udt_{U-1}...dt_1\right\rVert_\text{F}\\
        &=\frac{2^{U+1}}{\sqrt{d}}\left\lVert\int^T_0\int^{t_1}_0...\int^{t_{U-1}}_0\int^{t_U}_0\frac{1}{n^{U+1}}\mathbf{G}_0\mathbf{H}_0^{U-1}\hat{\mathbf{H}}_{t_{U+1}}\hat{\boldsymbol{\xi}}_{t_{U+1}}\right.\\
        &\qquad\left.-\left\langle G_0,H^{U-1}_0H_{t_{U+1}}\zeta_{t_{U+1}}\right\rangle_2dt_{U+1}dt_Udt_{U-1}...dt_1\right\rVert_\text{F}\\
        &=\frac{2^{U+1}}{\sqrt{d}}\left\lVert\int^T_0...\int^{t_U}_0\frac{1}{n^{U+1}}\mathbf{G}_0\mathbf{H}_0^{U-1}(\hat{\mathbf{H}}_{t_{U+1}}-\mathbf{H}_0)\hat{\boldsymbol{\xi}}_{t_{U+1}}\right.\\
        &\enspace+\left.\frac{1}{n^{U+1}}\mathbf{G}_0\mathbf{H}_0^U(\hat{\boldsymbol{\xi}}_{t_{U+1}}-\boldsymbol{\xi}_0)+\frac{1}{n^{U+1}}\mathbf{G}_0\mathbf{H}_0^U\boldsymbol{\xi}_0-\langle G_0,H_0^U\zeta_0\rangle_2\right.\\
        &\quad\left.+\langle G_0,H_0^U(\zeta_0-\zeta_{t_{U+1}})\rangle_2+\langle G_0,H^{U-1}_0(H_0-H_{t_{U+1}})\zeta_{t_{U+1}}\rangle_2dt_{U+1}...dt_1\right\rVert_\text{F}\\
        &\leq\frac{(2T)^{U+1}}{\sqrt{d}(U+1)!}\sup_{t\in[0,T]}\left\lVert\frac{1}{n^{U+1}}\mathbf{G}_0\mathbf{H}_0^U\boldsymbol{\xi}_0-\langle G_0,H^U_0\zeta_0\rangle_2\right\rVert_\text{F}\\
        &\enspace+\frac{(2T)^{U+1}}{\sqrt{d}(U+1)!}\sup_{t\in[0,T]}\left\lVert\frac{1}{n^{U+1}}\mathbf{G}_0\mathbf{H}_0^{U-1}(\hat{\mathbf{H}}_t-\mathbf{H}_0)\hat{\boldsymbol{\xi}}_t\right\rVert_\text{F}\\
        &\quad+\frac{(2T)^{U+1}}{\sqrt{d}(U+1)!}\sup_{t\in[0,T]}\left\lVert\langle G_0,H_0^{U-1}(H_0-H_t)\zeta_t\rangle_2\right\rVert_\text{F}\\
        &\quad\enspace+\frac{2^{U+1}}{\sqrt{d}}\left\lVert\int^T_0...\int^{t_U}_0\frac{1}{n^{U+1}}\mathbf{G}_0\mathbf{H}_0^U(\hat{\boldsymbol{\xi}}_{t_{U+1}}-\boldsymbol{\xi}_0)-\langle G_0,H_0^U(\zeta_{t_{U+1}}-\zeta_0)\rangle_2dt_{U+1}...dt_1\right\rVert_\text{F}.
    \end{alignat*}
    Putting this into (**), we have
    \begin{alignat*}{2}
        \lVert\hat{f}_T-f_T\rVert_2&\leq\frac{1}{\sqrt{d}}\sum^{U+1}_{u=1}\frac{(2T)^u}{u!}\left\lVert\frac{1}{n^u}\mathbf{G}_0\mathbf{H}_0^{u-1}\boldsymbol{\xi}_0-\langle G_0,H_0^{u-1}\zeta_0\rangle_2\right\rVert_\textnormal{F}\\
        &\enspace+\frac{2T}{\sqrt{d}}\sup_{t\in[0,T]}\left\lVert\frac{1}{n}(\hat{\mathbf{G}}_t-\hat{\mathbf{G}}_0)\hat{\boldsymbol{\xi}}_t\right\rVert_\textnormal{F}+\frac{2T}{\sqrt{d}}\sup_{t\in[0,T]}\left\lVert\langle G_0-G_t,\zeta_t\rangle_2\right\rVert_\text{F}\\
        &\quad+\frac{1}{\sqrt{d}}\sum^{U+1}_{u=2}\frac{(2T)^u}{n^uu!}\sup_{t\in[0,T]}\lVert\mathbf{G}_0\mathbf{H}_0^{u-2}(\hat{\mathbf{H}}_t-\mathbf{H}_0)\hat{\boldsymbol{\xi}}_t\rVert_\textnormal{F}\\
        &\quad\enspace+\frac{1}{\sqrt{d}}\sum^{U+1}_{u=2}\frac{(2T)^u}{u!}\sup_{t\in[0,T]}\lVert\langle G_0,H_0^{u-2}(H_0-H_t)\zeta_t\rangle_2\rVert_\textnormal{F}\\
        &\qquad+\frac{2^{U+1}}{\sqrt{d}}\left\lVert\int^T_0...\int^{t_U}_0\frac{1}{n^{U+1}}\mathbf{G}_0\mathbf{H}_0^U(\hat{\boldsymbol{\xi}}_{t_{U+1}}-\boldsymbol{\xi}_0)\right.\\
        &\qquad\enspace\left.-\langle G_0,H_0^U(\zeta_{t_{U+1}}-\zeta_0)\rangle_2dt_{U+1}...dt_1\right\rVert_\text{F}.
    \end{alignat*}
    So by induction, the result of the lemma is proven. 
\end{proof}
We are finally ready to prove our estimation result. 
\estimation*
\begin{proof}
    We will use the decomposition in Lemma~\ref{lem:estimation} with \(T=T_\epsilon\) and \(U=U_\epsilon\). We will consider each term appearing in the decomposition separately. 
    \begin{enumerate}[(a)]
        \item See that
        \begin{alignat*}{2}
            &\frac{2^{U_\epsilon}}{\sqrt{d}}\left\lVert\int^{T_\epsilon}_0\int^{t_1}_0...\int^{t_{U_\epsilon-1}}_0\frac{1}{n^{U_\epsilon}}\mathbf{G}_0\mathbf{H}_0^{U_\epsilon-1}(\hat{\boldsymbol{\xi}}_{t_{U_\epsilon}}-\boldsymbol{\xi}_0)dt_{U_\epsilon}dt_{U_\epsilon-1}...dt_1\right\rVert_\textnormal{F}\\
            &\leq\frac{(2T_\epsilon)^{U_\epsilon}}{\sqrt{d}U_\epsilon!n^{U_\epsilon}}\underbrace{\lVert\mathbf{G}_0\rVert_2\lVert\mathbf{H}_0\rVert_2^{U_\epsilon-1}}_{\text{Lemma~\ref{lem:probability_samples}\ref{spectralnorm}}}\underbrace{\lVert\hat{\boldsymbol{\xi}}_{t_{U_\epsilon}}-\boldsymbol{\xi}_0\rVert_2}_{\text{Lemma~\ref{lem:overfitting}\ref{xi_t}}}\\
            &\leq\frac{(2T_\epsilon)^{U_\epsilon}}{\sqrt{d}U_\epsilon!n^{U_\epsilon}}\frac{2^{2U_\epsilon}n^{U_\epsilon}}{d^{U_\epsilon-\frac{1}{2}}}\\
            &=\frac{(8T_\epsilon)^{U_\epsilon}}{d^{U_\epsilon}U_\epsilon!}\\
            &\leq\frac{1}{14}\sqrt{\epsilon}
        \end{alignat*}
        by the definition of \(U_\epsilon\) (see eqn.~(\ref{eqn:U_epsilon})).
        \item See that
        \begin{alignat*}{2}
            &\frac{2^{U_\epsilon}}{\sqrt{d}}\left\lVert\int^{T_\epsilon}_0\int^{t_1}_0...\int^{t_{U_\epsilon-1}}_0\langle G_0,H_0^{U_\epsilon-1}(\zeta_{t_{U_\epsilon}}-\zeta_0)\rangle_2dt_{U_\epsilon}dt_{U_\epsilon-1}...dt_1\right\rVert_\text{F}\\
            &\leq\frac{(2T_\epsilon)^{U_\epsilon}}{\sqrt{d}U_\epsilon!}\lVert\langle G_0,H_0^{U_\epsilon-1}(\zeta_{t_{U_\epsilon}}-\zeta_0)\rangle_2\rVert_\text{F}\\
            &=\frac{(2T_\epsilon)^{U_\epsilon}}{\sqrt{d}U_\epsilon!}\sqrt{\langle H_0^{U_\epsilon}(\zeta_{t_{U_\epsilon}}-\zeta_0),H_0^{U_\epsilon-1}(\zeta_{t_{U_\epsilon}}-\zeta_0)\rangle_2}\\
            &\leq\frac{(2T_\epsilon)^{U_\epsilon}}{\sqrt{d}U_\epsilon!}\underbrace{\lVert H_0\rVert_2^{U_\epsilon-\frac{1}{2}}}_{\text{Lemma~\ref{lem:non_random}\ref{H_W}}}\underbrace{\lVert\zeta_{t_{U_\epsilon}}-\zeta_0\rVert_2}_{\text{Lemma~\ref{lem:approximation}\ref{zeta_t}}}\\
            &\leq\frac{(2T_\epsilon)^{U_\epsilon}}{\sqrt{d}U_\epsilon!}\frac{2}{(2d)^{U_\epsilon-\frac{1}{2}}}\\
            &=\frac{\sqrt{2}T_\epsilon^{U_\epsilon}}{d^{U_\epsilon}U_\epsilon!}\\
            &\leq\frac{1}{14}\sqrt{\epsilon},
        \end{alignat*}
        also by the definition of \(U_\epsilon\). 
        \item See that
        \begin{alignat*}{2}
            &\frac{1}{\sqrt{d}}\sum^{U_\epsilon}_{u=2}\frac{(2T_\epsilon)^u}{u!}\sup_{t\in[0,T_\epsilon]}\lVert\langle G_0,H_0^{u-2}(H_t-H_0)\zeta_t\rangle_2\rVert_\text{F}\\
            &=\frac{1}{\sqrt{d}}\sum^{U_\epsilon}_{u=2}\frac{(2T_\epsilon)^u}{u!}\sup_{t\in[0,T_\epsilon]}\sqrt{\langle H_0^{u-2}(H_t-H_0)\zeta_t,H_0^{u-1}(H_t-H_0)\zeta_t\rangle_2}\\
            &\leq\frac{1}{\sqrt{d}}\sum^{U_\epsilon}_{u=2}\frac{(2T_\epsilon)^u}{u!}\sup_{t\in[0,T_\epsilon]}\underbrace{\lVert\zeta_t\rVert_2}_{\text{Lemma~\ref{lem:approximation}\ref{zeta_t}}}\underbrace{\lVert H_0\rVert_2^{u-\frac{3}{2}}}_{\text{Lemma~\ref{lem:non_random}\ref{H_W}}}\underbrace{\lVert H_t-H_0\rVert_2}_{\text{Lemma~\ref{lem:approximation}\ref{H_tH_0}}}\\
            &\leq\frac{1}{\sqrt{d}}\sum^{U_\epsilon}_{u=2}\frac{(2T_\epsilon)^u}{u!}\frac{1}{(2d)^{u-\frac{3}{2}}}\frac{1}{2\sqrt{md^3}\lambda_\epsilon}(3\sqrt{2}+\sqrt{\log m})\\
            &=\frac{6+\sqrt{2\log m}}{\sqrt{md}\lambda_\epsilon}\sum^{U_\epsilon}_{u=2}\frac{T_\epsilon^u}{u!d^u}\\
            &\leq\frac{\sqrt{\epsilon}}{14},
        \end{alignat*}
        by Assumption \ref{ass:nm}\ref{ass:estimation_m}. 
        \item See that
        \begin{alignat*}{2}
            &\frac{1}{\sqrt{d}}\sum^{U_\epsilon}_{u=2}\frac{(2T_\epsilon)^u}{n^uu!}\sup_{t\in[0,T_\epsilon]}\lVert\mathbf{G}_0\mathbf{H}_0^{u-2}(\hat{\mathbf{H}}_t-\mathbf{H}_0)\hat{\boldsymbol{\xi}}_t\rVert_\text{F}\\
            &\leq\frac{1}{\sqrt{d}}\sum^{U_\epsilon}_{u=2}\frac{(2T_\epsilon)^u}{n^uu!}\sup_{t\in[0,T_\epsilon]}\underbrace{\lVert\mathbf{G}_0\rVert_2\lVert\mathbf{H}_0\rVert_2^{u-2}}_{\text{Lemma~\ref{lem:probability_samples}\ref{spectralnorm}}}\underbrace{\lVert\hat{\mathbf{H}}_t-\mathbf{H}_0\rVert_2}_{\text{Lemma~\ref{lem:overfitting}\ref{hatH_tH_0}}}\underbrace{\lVert\hat{\boldsymbol{\xi}}_t\rVert_2}_{\text{Lemma~\ref{lem:overfitting}\ref{xi_t}}}\\
            &\leq\frac{1}{\sqrt{d}}\sum^{U_\epsilon}_{u=2}\frac{(2T_\epsilon)^u}{n^uu!}\frac{2^{2u-3}n^{u-\frac{3}{2}}}{d^{u-\frac{3}{2}}}\frac{4n(34+\sqrt{\log m})}{\sqrt{md}}\sqrt{n}\\
            &=\frac{\sqrt{d}(34+\sqrt{\log m})}{2\sqrt{m}}\sum^{U_\epsilon}_{u=2}\frac{(8T_\epsilon)^u}{u!d^u}\\
            &\leq\frac{6+\sqrt{2\log m}}{\sqrt{md}\lambda_\epsilon}\sum^{U_\epsilon}_{u=2}\frac{T_\epsilon^u}{u!d^u}\\
            &\leq\frac{\sqrt{\epsilon}}{14},
        \end{alignat*}
        by Assumption \ref{ass:nm}\ref{ass:estimation_m}. 
        \item Note that
        \[\hat{\mathbf{J}}_t-\hat{\mathbf{J}}_0=\frac{1}{\sqrt{m}}\text{diag}[\mathbf{a}]\left(\phi'\left(\hat{W}(t)X^\top\right)-\phi'\left(\hat{W}(0)X^\top\right)\right)\in\mathbb{R}^{m\times n},\]
        and so for each \(i=1,...,n\), the squared Euclidean norm of the \(i^\text{th}\) column of \(\hat{\mathbf{J}}_t-\hat{\mathbf{J}}_0\) is
        \begin{alignat*}{2}
            &\left\lVert\frac{1}{\sqrt{m}}\text{diag}[\mathbf{a}]\left(\phi'(\hat{W}(t)\mathbf{x}_i)-\phi'(\hat{W}(0)\mathbf{x}_i)\right)\right\rVert_2^2\\
            &=\frac{1}{m}\sum^m_{j=1}a_j^2\left(\phi'(\hat{\mathbf{w}}_j(t)\cdot\mathbf{x}_i)-\phi'(\hat{\mathbf{w}}_j(0)\cdot\mathbf{x}_i)\right)^2\\
            &=\frac{1}{m}\sum^m_{j=1}\mathbf{1}\left\{\phi'(\hat{\mathbf{w}}_j(t)\cdot\mathbf{x}_i)\neq\phi'(\hat{\mathbf{w}}_j(0)\cdot\mathbf{x}_i)\right\}.
        \end{alignat*}
        Now we apply (\ref{eqn:kronecker_hadamard}), (\ref{eqn:schur}) and Lemma~\ref{lem:probability_samples}\ref{spectralnorm} to see that
        \begin{alignat*}{2}
            \lVert\hat{\mathbf{G}}_t-\hat{\mathbf{G}}_0\rVert_2^2&=\lVert((\hat{\mathbf{J}}_t-\hat{\mathbf{J}}_0)*X^\top)^\top((\hat{\mathbf{J}}_t-\hat{\mathbf{J}}_0)*X^\top)\rVert_2\\
            &=\lVert(XX^\top)\odot((\hat{\mathbf{J}}_t-\hat{\mathbf{J}}_0)^\top(\hat{\mathbf{J}}_t-\hat{\mathbf{J}}_0))\rVert_2^2\\
            &\leq\lVert X\rVert^2_2\max_{i\in\{1,...,n\}}\frac{1}{m}\sum^m_{j=1}\mathbf{1}\{\phi'(\hat{\mathbf{w}}_j(t)\cdot\mathbf{x}_i)\neq\phi'(\hat{\mathbf{w}}_j(0)\cdot\mathbf{x}_i)\}\\
            &\leq\frac{4n}{d}\max_{i\in\{1,...,n\}}\frac{1}{m}\sum^m_{j=1}\mathbf{1}\{\phi'(\hat{\mathbf{w}}_j(t)\cdot\mathbf{x}_i)\neq\phi'(\hat{\mathbf{w}}_j(0)\cdot\mathbf{x}_i)\}.
        \end{alignat*}
        Here, for each \(i=1,...,n\) and \(j=1,...,m\), in order for \(\phi'(\hat{\mathbf{w}}_j(t)\cdot\mathbf{x}_i)\neq\phi'(\hat{\mathbf{w}}_j(0)\cdot\mathbf{x}_i)\), there must be some \(\mathbf{v}\in\mathbb{R}^d\) on the weight trajectory, such that \(\mathbf{v}\cdot\mathbf{x}_i=0\) and
        \[\lVert\mathbf{v}-\mathbf{w}_j(0)\rVert_2\leq32\sqrt{\frac{d}{m}}.\]
        But by Lemma~\ref{lem:probability_weights}\ref{neurons_zero_overfitting}, there only exist at most \(\sqrt{md}(34+\sqrt{\log m})\) neurons such that this happens. Hence,
        \[\lVert\hat{\mathbf{G}}_t-\hat{\mathbf{G}}_0\rVert_2^2\leq\frac{4n(34+\sqrt{\log m})}{\sqrt{md}}.\]
        Taking the square root, we have
        \[\lVert\hat{\mathbf{G}}_t-\hat{\mathbf{G}}_0\rVert_2\leq\frac{2\sqrt{n(34+\sqrt{\log m})}}{(md)^{1/4}}.\]
        Now see that
        \begin{alignat*}{2}
            \frac{2T_\epsilon}{\sqrt{d}}\sup_{t\in[0,T_\epsilon]}\left\lVert\frac{1}{n}(\hat{\mathbf{G}}_t-\mathbf{G}_0)\hat{\boldsymbol{\xi}}_t\right\rVert_\text{F}&\leq\frac{2T_\epsilon}{n\sqrt{d}}\sup_{t\in[0,T_\epsilon]}\underbrace{\lVert\hat{\mathbf{G}}_t-\mathbf{G}_0\rVert_2}_{\text{above}}\underbrace{\lVert\hat{\boldsymbol{\xi}}_t\rVert_2}_{\text{Lemma~\ref{lem:overfitting}\ref{xi_t}}}\\
            &\leq\frac{2T_\epsilon}{m\sqrt{d}}\frac{2\sqrt{n(34+\sqrt{\log m})}}{(md)^{1/4}}\sqrt{n}\\
            &=\frac{4T_\epsilon\sqrt{34+\sqrt{\log m}}}{(md^3)^{1/4}}\\
            &\leq\frac{6+\sqrt{2\log m}}{\sqrt{md}\lambda_\epsilon}\sum^{U_\epsilon}_{u=2}\frac{T_\epsilon^u}{u!d^u}\\
            &\leq\frac{\sqrt{\epsilon}}{14},
        \end{alignat*}
        by Assumption~\ref{ass:nm}\ref{ass:estimation_m}. 
        \item Define an integral operator \(\tilde{H}_t:L^2(\rho_{d-1})\rightarrow L^2(\rho_{d-1})\) by
        \[\tilde{H}_t(f)(\mathbf{x})=\mathbb{E}_{\mathbf{x}'}[\langle(G_t-G_0)(\mathbf{x}),(G_t-G_0)(\mathbf{x}')\rangle_\text{F}f(\mathbf{x}')].\]
        An explicit expression for \(\tilde{H}_t(f)(\mathbf{x})\) is
        \[\mathbb{E}_{\mathbf{x}'}\left[\frac{\mathbf{x}\cdot\mathbf{x}'}{m}\sum^m_{j=1}\left(\phi'(\mathbf{w}_j(t)\cdot\mathbf{x})-\phi'(\mathbf{w}_j(0)\cdot\mathbf{x})\right)\left(\phi'(\mathbf{w}_j(t)\cdot\mathbf{x}')-\phi'(\mathbf{w}_j(0)\cdot\mathbf{x}')\right)f(\mathbf{x}')\right],\]
        and so by applying Lemma~\ref{lem:schur}, and recalling the linear operator \(\Xi:L^2(\rho_{d-1})\rightarrow L^2(\rho_{d-1})\) defined by \(\Xi(f)(x)=\mathbb{E}_{\mathbf{x}'}[\mathbf{x}\cdot\mathbf{x}'f(\mathbf{x}')]\) with \(\lVert\Xi\rVert_2\leq\frac{1}{2d}\), we have
        \begin{alignat*}{2}
            \lVert\tilde{H}_t\rVert_2&\leq\frac{1}{2d}\sup_{\mathbf{x}\in\mathbb{S}^{d-1}}\frac{1}{m}\sum^m_{j=1}\left(\phi'(\mathbf{w}_j(t)\cdot\mathbf{x})-\phi'(\mathbf{w}_j(0)\cdot\mathbf{x})\right)^2\\
            &=\frac{1}{2d}\sup_{\mathbf{x}\in\mathbb{S}^{d-1}}\frac{1}{m}\sum^m_{j=1}\mathbf{1}\left\{\phi'(\mathbf{w}_j(t)\cdot\mathbf{x})\neq\phi'(\mathbf{w}_j(0)\cdot\mathbf{x})\right\}.
        \end{alignat*}
        Here, for each \(j=1,...,m\), in order for \(\phi'(\mathbf{w}_j(t)\cdot\mathbf{x})\neq\phi'(\mathbf{w}_j(0)\cdot\mathbf{x})\), there must be some \(\mathbf{v}\in\mathbb{R}^d\) on the weight trajectory, such that \(\mathbf{v}\cdot\mathbf{x}=0\) and
        \[\lVert\mathbf{v}-\mathbf{w}_j(0)\rVert_2\leq\frac{2\sqrt{2}}{\lambda_\epsilon\sqrt{md}}.\]
        But by Lemma~\ref{lem:probability_weights}\ref{neurons_zero_approximation}, there only exist at most \(\frac{\sqrt{m}}{\sqrt{d}\lambda_\epsilon}(3\sqrt{2}+\sqrt{\log m})\) neurons such that this happens. Hence,
        \[\lVert\tilde{H}_t\rVert_2\leq\frac{1}{2\sqrt{m}d^{3/2}\lambda_\epsilon}(3\sqrt{2}+\sqrt{\log m}).\]
        Then see that
        \begin{alignat*}{2}
            \frac{2T_\epsilon}{\sqrt{d}}\sup_{t\in[0,T_\epsilon]}\lVert\langle G_t-G_0,\zeta_t\rangle_2\rVert_\text{F}&=\frac{2T_\epsilon}{\sqrt{d}}\sup_{t\in[0,T_\epsilon]}\lVert\mathbb{E}_\mathbf{x}[(G_t-G_0)(\mathbf{x})\zeta_t(\mathbf{x})]\rVert_\text{F}\\
            &=\frac{2T_\epsilon}{\sqrt{d}}\sup_{t\in[0,T_\epsilon]}\sqrt{\langle\zeta_t,\tilde{H}_t\zeta_t\rangle_2}\\
            &\leq\frac{2T_\epsilon}{\sqrt{d}}\sup_{t\in[0,T_\epsilon]}\underbrace{\sqrt{\lVert\tilde{H}_t\rVert_2}}_{\text{above}}\underbrace{\lVert\zeta_t\rVert_2}_{\text{Lemma~\ref{lem:approximation}\ref{zeta_t}}}\\
            &\\
            &\leq\frac{2T_\epsilon}{\sqrt{d}}\frac{1}{\sqrt{2}(md^3)^{1/4}\sqrt{\lambda_\epsilon}}\sqrt{3\sqrt{2}+\sqrt{\log m}}\\
            &=\frac{\sqrt{2}T_\epsilon\sqrt{3\sqrt{2}+\sqrt{\log m}}}{(md^5)^{1/4}\sqrt{\lambda_\epsilon}}\\
            &\leq\frac{6+\sqrt{2\log m}}{\sqrt{md}\lambda_\epsilon}\sum^{U_\epsilon}_{u=2}\frac{T_\epsilon^u}{u!d^u}\\
            &\leq\frac{\sqrt{\epsilon}}{14},
        \end{alignat*}
        by Assumption~\ref{ass:nm}\ref{ass:estimation_m}. 
        \item We have from Lemma~\ref{lem:probability_both}\ref{vstatistic} that \(\lVert V_u\rVert_\mathcal{H}\leq8\sqrt{\frac{\log(nu)}{\lfloor\frac{n}{u}\rfloor}}\) for all \(u=1,...,U_\epsilon\). Then see that
        \begin{alignat*}{2}
            \frac{1}{\sqrt{d}}\sum^{U_\epsilon}_{u=1}\frac{(2T_\epsilon)^u}{u!}\left\lVert\frac{1}{n^u}\mathbf{G}_0\mathbf{H}_0^{u-1}\boldsymbol{\xi}_0-\langle G_0,H_0^{u-1}\zeta_0\rangle_2\right\rVert_\textnormal{F}&=\frac{1}{\sqrt{d}}\sum^{U_\epsilon}_{u=1}\frac{(2T_\epsilon)^u}{u!}\lVert V_u\rVert_\text{F}\\
            &\leq\frac{8}{\sqrt{d}}\sum^{U_\epsilon}_{u=1}\frac{(2T_\epsilon)^u}{u!}\sqrt{\frac{\log(nu)}{\lfloor\frac{n}{u}\rfloor}}\\
            &\leq\frac{\sqrt{\epsilon}}{14}
        \end{alignat*}
        as required, where the last inequality follows by Assumption~\ref{ass:nm}\ref{ass:estimation_n}.
    \end{enumerate}
    Putting it all together, \(\lVert\hat{f}_{T_\epsilon}-f_{T_\epsilon}\rVert_2\) is bounded by a sum of seven terms each bounded by \(\frac{1}{14}\sqrt{\epsilon}\), so
    \[\lVert\hat{f}_{T_\epsilon}-f_{T_\epsilon}\rVert_2\leq\frac{\sqrt{\epsilon}}{2}\]
    as required. 
\end{proof}

\subsection{Putting it all Together: Generalization and Benign Overfitting}\label{sec:generalization}
Bringing together Theorem~\ref{thm:approximation_main} and Theorem~\ref{thm:estimation_main}, we have a generalization result. 
\generalization*
\begin{proof}
    We have the approximation-estimation decomposition from eqn. (\ref{eqn:decomp}):
    \[\lVert\hat{f}_{T_\epsilon}-f^\star\rVert_2\leq\lVert\hat{f}_{T_\epsilon}-f_{T_\epsilon}\rVert_2+\lVert\zeta_{T_\epsilon}\rVert_2.\]
    Here, Theorem~\ref{thm:approximation_main} gives us \(\lVert\zeta_{T_\epsilon}\rVert_2\leq\frac{\epsilon}{2}\), and Theorem~\ref{thm:estimation_main} gives us \(\lVert\hat{f}_{T_\epsilon}-f_{T_\epsilon}\rVert_2\leq\frac{\epsilon}{2}\). Thence we have
    \[\lVert\hat{f}_{T_\epsilon}-f^\star\rVert_2\leq\lVert\hat{f}_{T_\epsilon}-f_{T_\epsilon}\rVert_2+\lVert\zeta_{T_\epsilon}\rVert_2\leq\frac{\epsilon}{2}+\frac{\epsilon}{2}=\epsilon.\]
    Since, \(R(\hat{f}_{T_\epsilon})-R(f^\star)=\lVert\hat{f}_{T_\epsilon}-f^\star\rVert_2^2\), we get the claimed result.
\end{proof}
Finally, bringing together Theorem~\ref{thm:overfitting_main} and Theorem~\ref{thm:generalization}, we have the benign overfitting result. 
\benignoverfitting*
\begin{proof}
    This is an immediate corollary of Theorem~\ref{thm:overfitting_main} and Theorem~\ref{thm:generalization}. 
\end{proof}

\subsection{Additional Experimental Evaluations} \label{app:expts}
In this section, we provide additional experimental evaluations. 

\paragraph{Synthetic Data Experiments.} 
For the synthetic data experiments, we use \(d=3\), and the first eigenfunction of the NTK operator \(H\) as \(f^\star\), i.e., the spherical harmonic of order \(1\), obtained by the Rodrigues representation \citep[p.22, Lemma 4]{muller1998analysis} on the Legendre polynomials \citep[p.16, (\(\mathsection\)2.32) \& Lemma 2]{muller1998analysis} (see also Section~\ref{subsec:spectral}). For \(\mathbf{x}=(x_1,x_2,x_3)^\top\in\mathbb{R}^3\), we have: $f^\star(\mathbf{x})=P_1(3;x_3)=x_3$,
where we denoted by \(P_1(3;\cdot)\) the Legendre polynomial of order 1 in dimension 3. In other words, given a point on the sphere, \(f^\star\) simply maps it to the value of the third coordinate. By construction, this gives \(L_\epsilon=1\) and \(\lambda_\epsilon=\frac{1}{12}\) (c.f. eqn. (\ref{eqn:lambda_epsilon})). We use \(m=750000\). The $\mathbf{x}_i$'s are sampled uniformly from unit sphere. The $y_i$'s (the target variables during the training process) are constructed as $f^\star(\mathbf{x}_i)$ plus mean-zero Gaussian noise with standard deviation $0.2$.

 \begin{figure}[t]
 \begin{minipage}{0.45\textwidth}
     \includegraphics[width=\textwidth]{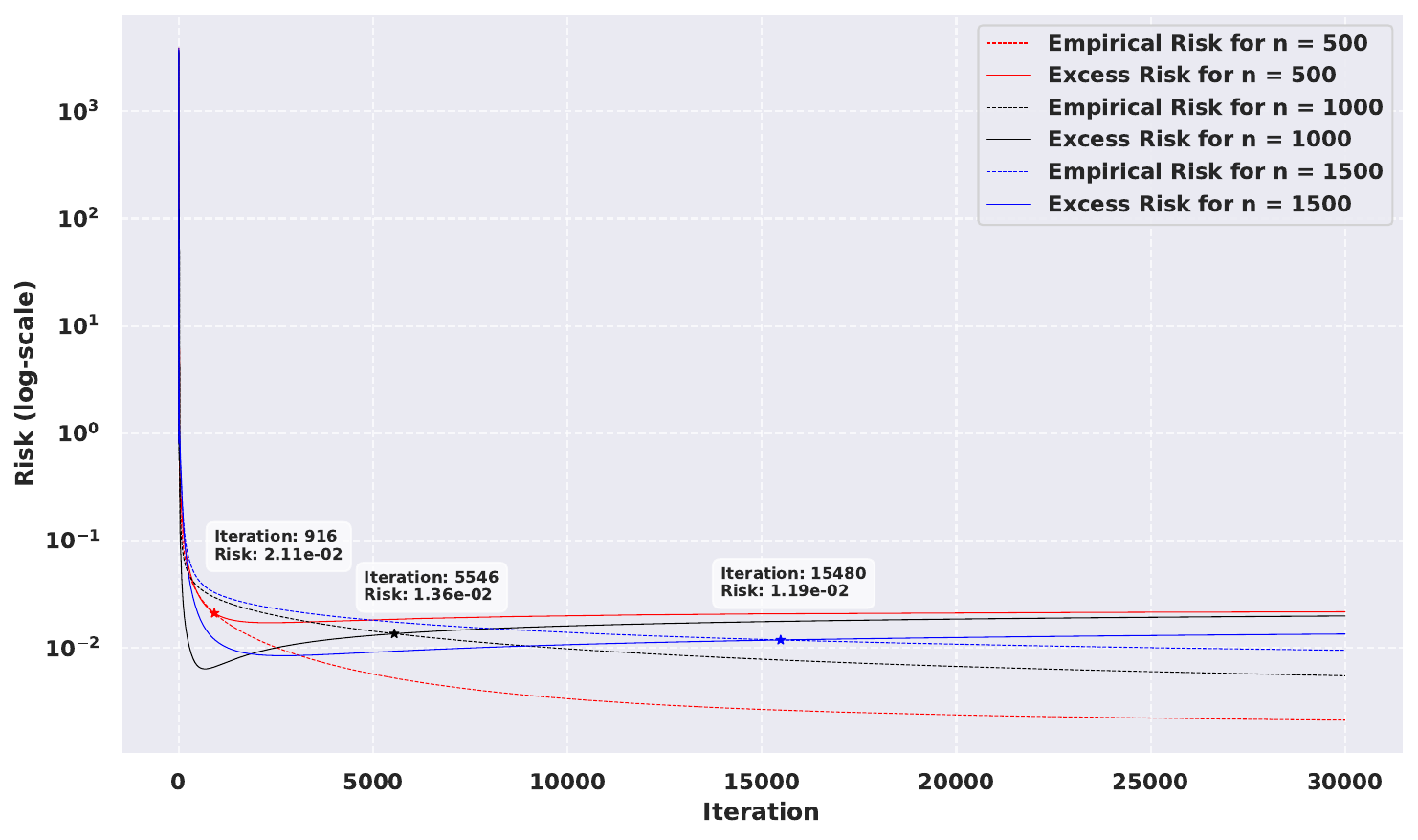}
     \caption{Synthetic Data Experiment: Risk vs.\ model complexity plot on synthetic data. Increasing both the sample size 
$n$ and the number of training iterations simultaneously allows for reduction of both empirical and excess risks.}
 \label{fig:syn}
     \end{minipage}
     \hspace*{1em}
 \begin{minipage}{0.45\textwidth}
            \centering \includegraphics[width=\textwidth]{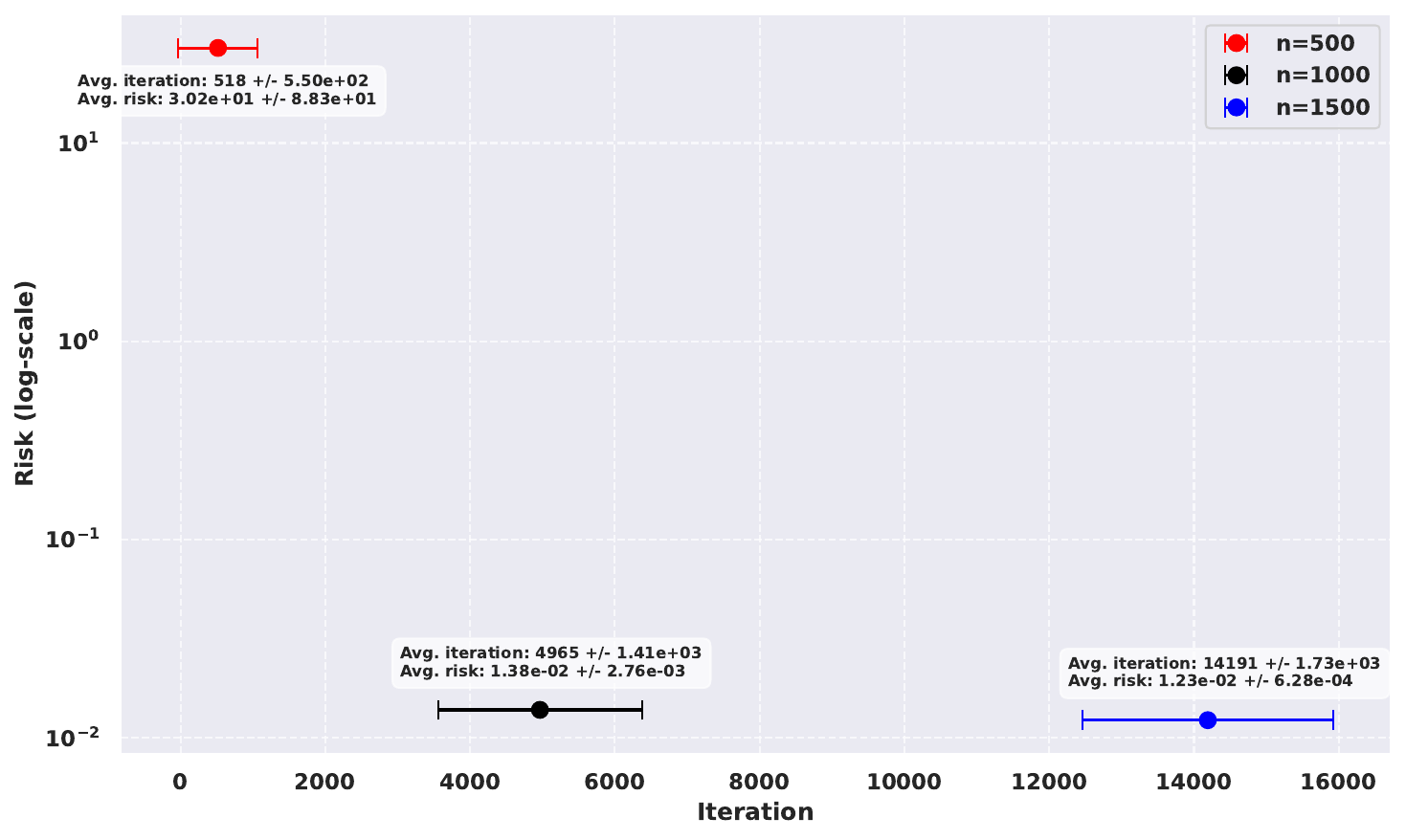}
                   \caption{Synthetic Data Experiment: The average iteration at which the excess risk crosses and stays over the empirical evaluated over $10$ runs with different random initializations to the neural network. The bars indicate the standard deviation on the iteration number. Note the clear shift to the right and down.} 
      \label{fig:average}
     \end{minipage}
    
 \end{figure}

In Figure~\ref{fig:syn}, we plot empirical (dashed) and excess (solid) risk curves against gradient descent iterations 
$T$ for various sample sizes $n$, using matching colors for each $n$. The results are similar to what we observed in Section~\ref{sec:experiments}. The empirical risk decreases with $T$, with smaller $n$ yielding stronger overfitting. Excess risk exhibits a U-shaped curve, first decreasing then increasing. The $\star$ markers denote the point where excess risk overtakes empirical risk and remains higher. These $\star$ markers shift lower and rightward as $n$ increases. This supports our theory that, with sufficient data and appropriate model complexity, both risks can be simultaneously minimized.\footnote{We could equally analyze the trough of the excess risk curve and reach the same conclusion; we focus on the crossover points for convenience, since both risks are equal at those points.} We also perform multiple runs, with different initializations. These results are presented in Figure \ref{fig:average}.

\paragraph{Experiments on Abalone Dataset.} We now discuss additional experiments on the Abalone dataset disuccsed in  Section~\ref{sec:experiments}. In Figure~\ref{fig:abalone}, we plot the risk vs.\ model complexity curves (with $m=10000$) by varying the noise levels. We add mean-zero Gaussian noise with standard deviation in $\{0.1, 0.2, 0.3 \}$ to the target variable in the training data. The results are consistent with our previous findings. As expected, for same $n$, across the various plots in Figure~\ref{fig:abalone}, we see that higher noise levels shift the crossing point (marked by $\star$) to later iterations.

In Figure~\ref{fig:abalonestd}, we show the result across multiple runs, with different random initializations to the neural network.

 \begin{figure}[t]
 \begin{minipage}{0.5\textwidth}
     \includegraphics[width=\textwidth]{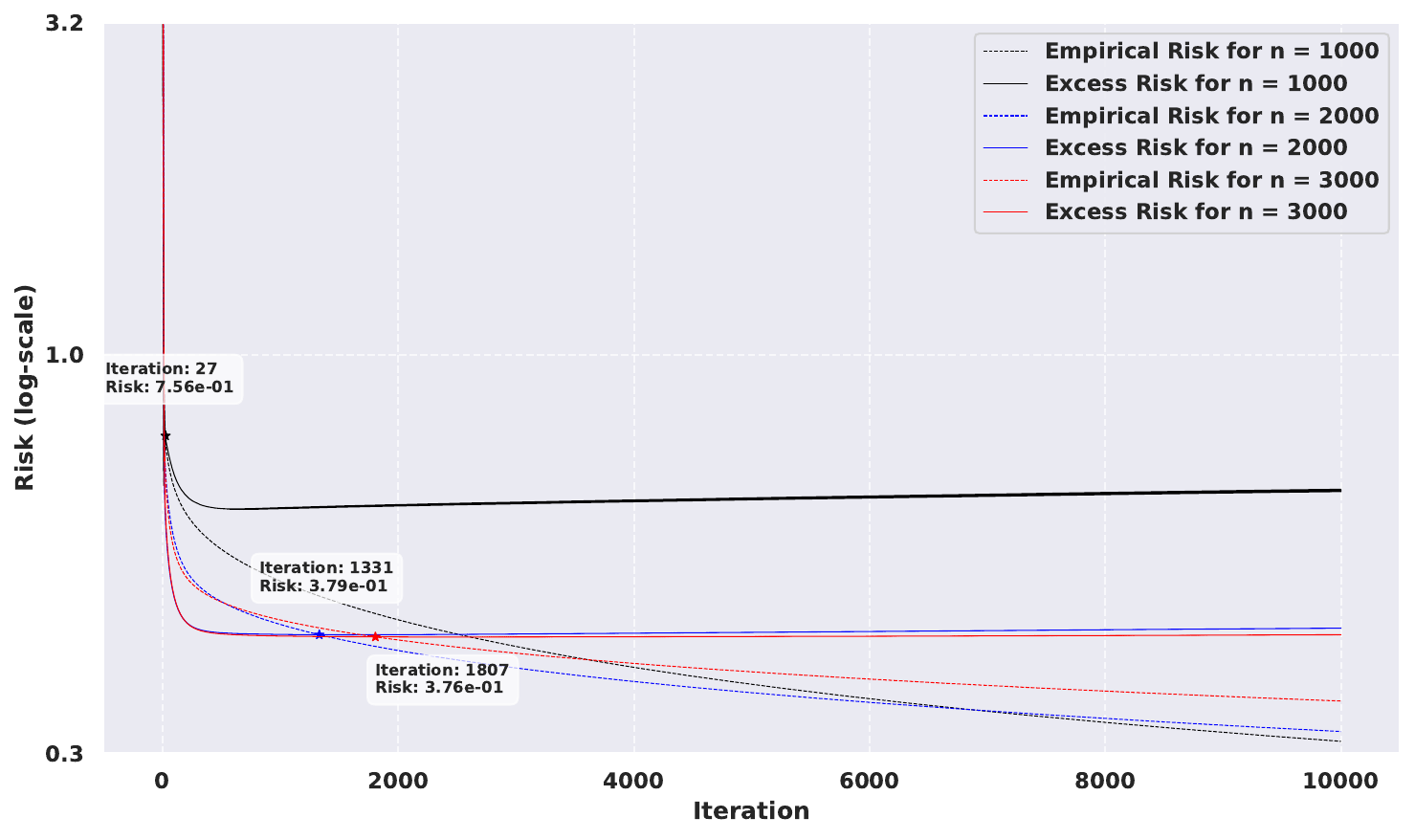}
\caption*{(a) Gaussian noise (mean-zero,  std.\ dev $0.1$)}
     \end{minipage}
     \hspace*{1em}
 \begin{minipage}{0.5\textwidth}
            \centering \includegraphics[width=\textwidth]{Abalone02.pdf}
            \caption*{(b) Gaussian noise (mean-zero,  std.\ dev $0.2$)}
     \end{minipage}
     \begin{center}
  \begin{minipage}{0.5\textwidth}
            \centering \includegraphics[width=\textwidth]{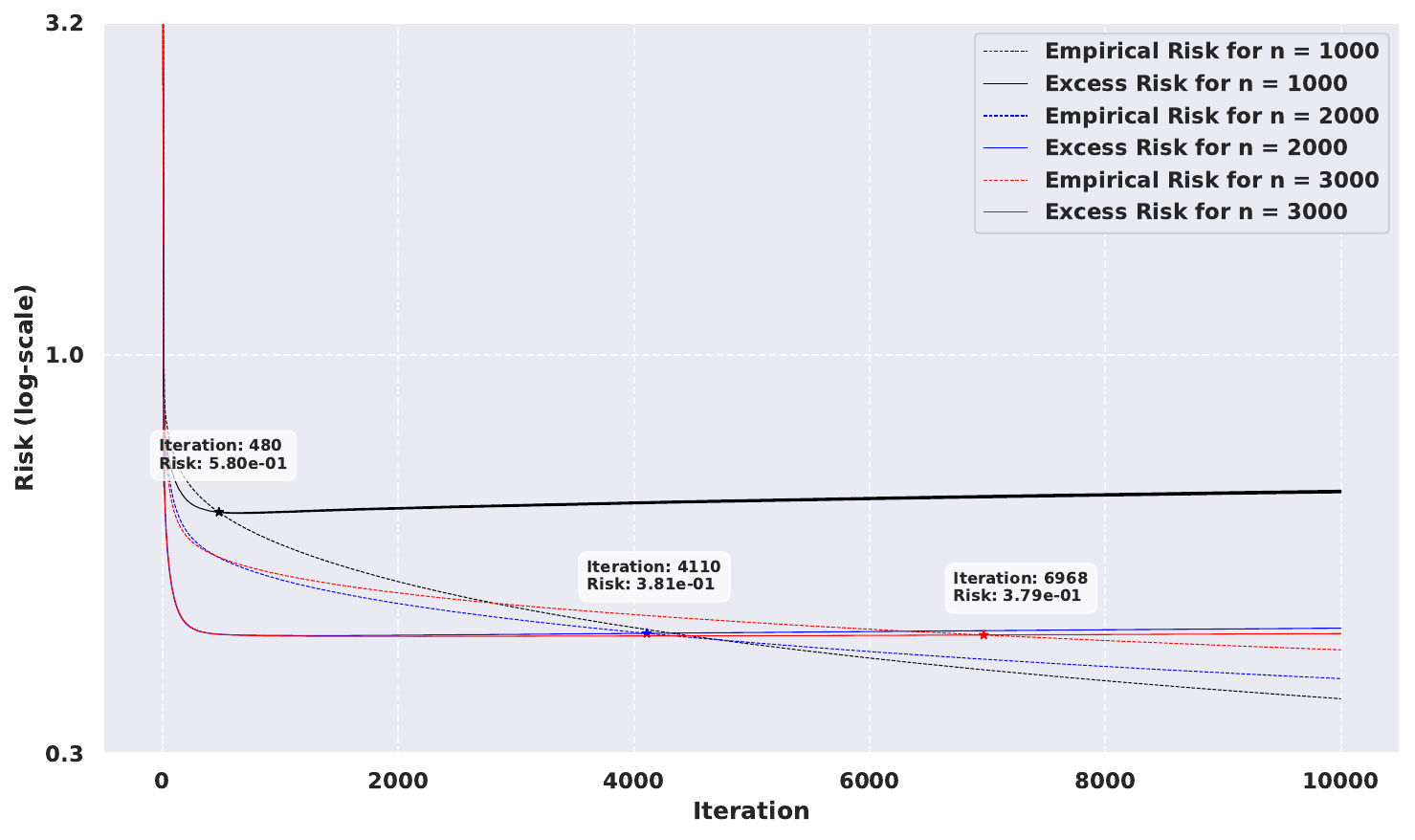}
            \caption*{(c) Gaussian noise (mean-zero,  std.\ dev $0.3$)}
     \end{minipage}   
     \end{center}
     \caption{Abalone Data Experiment: Results with varying noise levels. The figure (b) is duplicated from Section~\ref{sec:experiments}.}
     \label{fig:abalone}
 \end{figure}

\begin{figure}[t]
\begin{center}
     \includegraphics[width=0.5\textwidth]{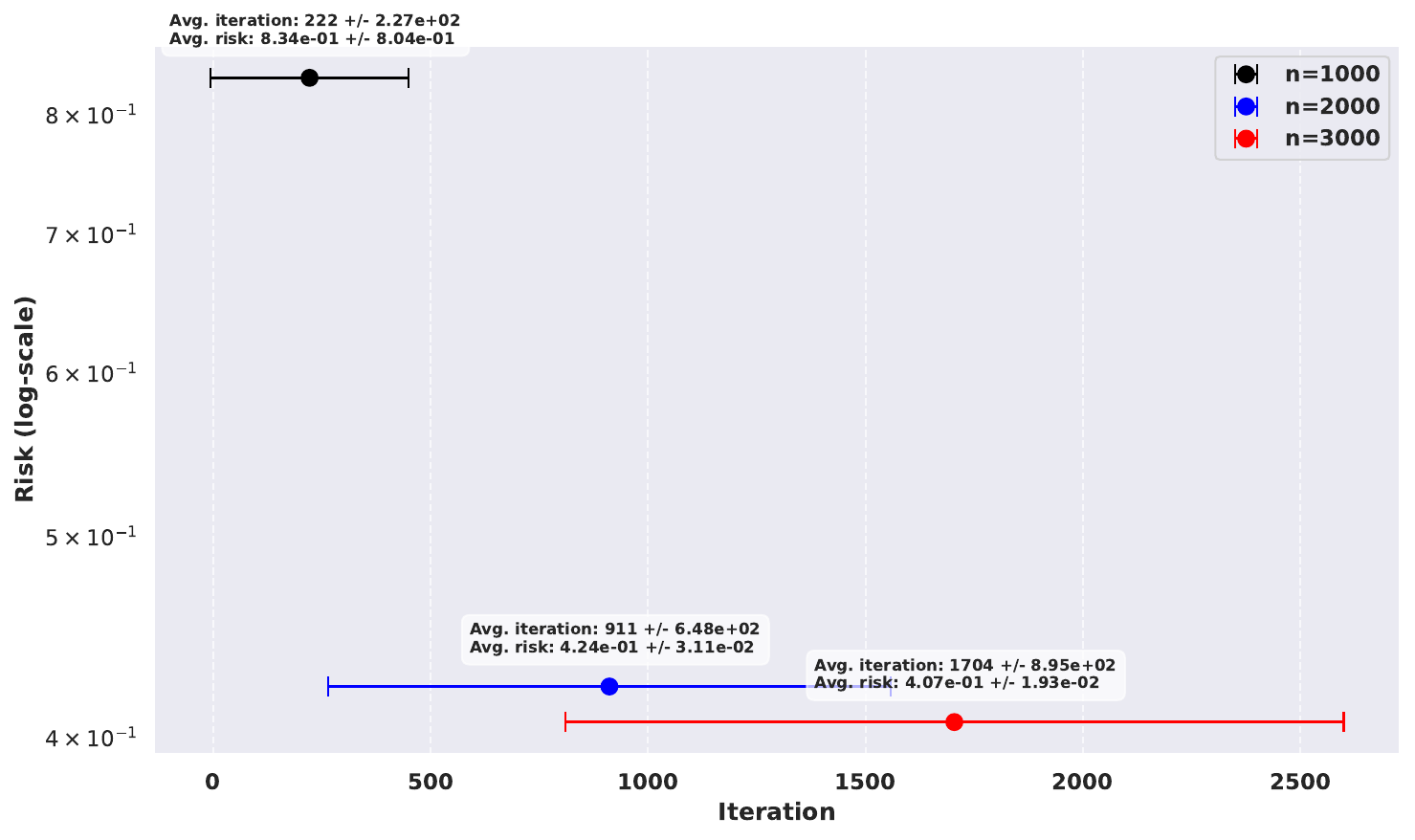}
\end{center}
\caption{Abalone Data Experiment: The average iteration at which the excess risk crosses and stays over the empirical evaluated over $10$ runs with different random initializations. This is for the setting discussed in Section~\ref{sec:experiments}, with Gaussian noise (mean-zero, std.\ dev $0.2$). Again notice the shift to
the right and down of where the crossing occurs.}
\label{fig:abalonestd}
    \end{figure}  

\begin{figure}[t]
\begin{center}
     \includegraphics[width=0.5\textwidth]
     {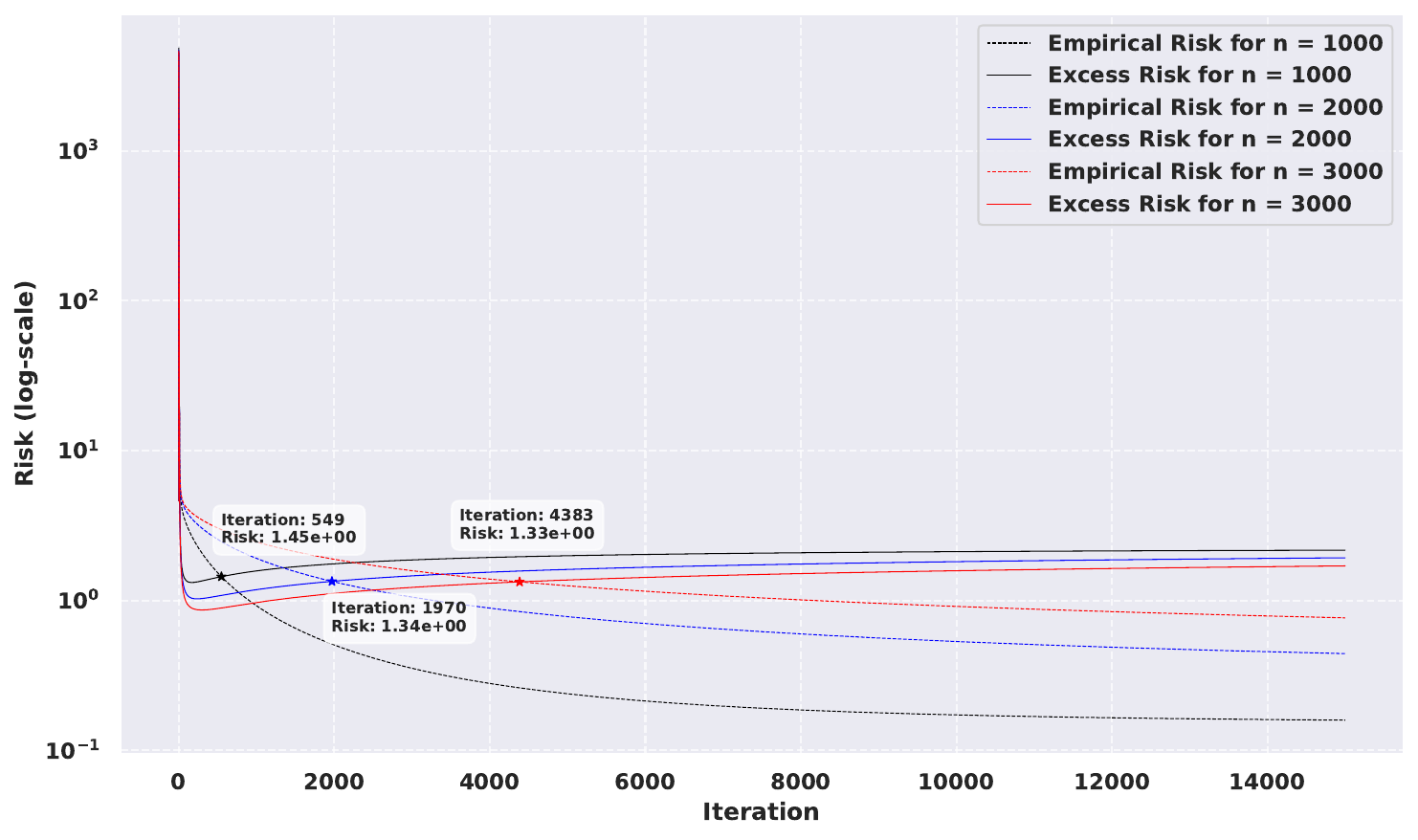}
\end{center}
\caption{Wine Data Experiment: Risk vs.\ model complexity plot with varying sample size $n$. We use $m=100000$.}
\label{fig:wine}
    \end{figure} 

\paragraph{Experiments on Wine Dataset.} For our next real data experiment, we use the Wine dataset~\citep{wine_109} where the input dimension $d=11$. The goal is to predict wine quality from various features. We standardized inputs and targets, and add Gaussian noise during training. Figure~\ref{fig:wine} shows the risk vs.\ model complexity plot, leading to the same conclusions as with our previous experiments.
 
\end{document}